\newcommand{\loc}{\mathrm{loc}}
\newcommand{\out}{\mathrm{out}}
\tikzset{
  tensor/.style={circle, draw, minimum size=1.2cm, inner sep=0pt},
  contraction/.style={rectangle, draw, minimum size=0.5cm, inner sep=0pt},
  leg/.style={thick},
}
\newcommand{\DEBUG}{1}
  \newcommand{\Snote}[1]{\textcolor{red}{[Sam: #1]}}
  \newcommand{\Inote}[1]{\textcolor{blue}{[Ittai: #1]}}
  \newcommand{\Snote}[1]{}
  \newcommand{\Inote}[1]{}
\theoremstyle{plain}
\newtheorem{theorem}{Theorem}[section]
\newtheorem{conjecture}{Conjecture}[section]
\newtheorem{lemma}[theorem]{Lemma}
\theoremstyle{definition}
\newtheorem{assumption}[theorem]{Assumption}
\theoremstyle{remark}
\newcommand\trace[1]{\mathop{\textup{tr}}\left[ {#1} \right]}
\newcommand\diag[1]{\mathop{\textup{diag}}\left( {#1} \right)}
\newcommand\ol[1]{\overline{#1}}
\newcommand\wt[1]{\widetilde{#1}}
\newcommand\abs[1]{{\left\lvert{#1}\right\rvert}}
\newcommand\Prob[2]{{\Pr_{#1}\left[ {#2} \right]}}
\newcommand\cProb[3]{{\Pr_{#1}\left[ \left. #2 \;\right\vert #3 \right]}}
\newcommand\Expect[2]{{\mathop{\mathbb{E}}_{#1}\left[ {#2} \right]}}
\newcommand{\Norm}[1]{\left\Vert{#1}\right\Vert}
\newcommand{\OpNorm}[1]{\left\|#1\right\|_{\mathrm{op}}}
\newcommand{\image}[1]{\mathrm{Im}\Paren{{#1}}}
\newcommand{\argmin}{\mathop{\textup{argmin}}}
\newcommand{\polylog}{\textup{polylog}}
\newcommand\poly{\textnormal{poly}}
\newcommand\defeq{\stackrel{\textup{def}}{=}}
\newcommand\set[1]{\left\{{#1}\right\}}
\newcommand{\bits}{\set{0, 1}}
\newcommand{\ind}{\mathds{1}}
\newcommand{\diff}{\mathrm{d}}
\newcommand{\sigmin}[1]{\sigma_{\textup{min}}\Paren{#1}}
\newcommand{\RIF}{\textup{RIF}}
\newcommand{\DRIF}{\textup{DRIF}}
\newcommand{\IF}{\textup{IF}}
\newcommand{\NS}{\textup{NS}}
\newcommand{\Cop}{C_{\textup{op}}}
\newcommand{\Clip}{C_{\textup{Lip}}}
\newcommand{\Cg}{C_\ell}
\newcommand{\ErrorTerm}[1]{\textbf{TODO}}
\newcommand{\iprod}[1]{\langle #1 \rangle}
\newcommand{\brac}[1]{[#1]}
\newcommand{\paren}[1]{(#1)}
\newcommand{\Iprod}[1]{\left\langle #1 \right\rangle}
\newcommand{\Brac}[1]{\left[#1\right]}
\newcommand{\Paren}[1]{\left(#1\right)}
\newcommand{\otilde}[1]{\wt{O}\Paren{{#1}}}
\def\vzero{{\bm{0}}}
\def\vone{{\bm{1}}}
\def\valpha{{\bm{\alpha}}}
\def\vbeta{{\bm{\beta}}}
\def\vgamma{{\bm{\gamma}}}
\def\vtheta{{\bm{\theta}}}
\def\vmu{{\bm{\mu}}}
\def\va{{\mathbf{a}}}
\def\vb{{\mathbf{b}}}
\def\vc{{\mathbf{c}}}
\def\ve{{\mathbf{e}}}
\def\vg{{\mathbf{g}}}
\def\vu{{\mathbf{u}}}
\def\vv{{\mathbf{v}}}
\def\vw{{\mathbf{w}}}
\def\vx{{\mathbf{x}}}
\def\vy{{\mathbf{y}}}
\def\vz{{\mathbf{z}}}
\def\mA{{\mathbf{A}}}
\def\mB{{\mathbf{B}}}
\def\mC{{\mathbf{C}}}
\def\mH{{\mathbf{H}}}
\def\mI{{\mathbf{I}}}
\def\mM{{\mathbf{M}}}
\def\mT{{\mathbf{T}}}
\def\mX{{\mathbf{X}}}
\def\mGamma{{\bm{\Gamma}}}
\def\mDelta{{\bm{\Delta}}}
\def\mPi{{\bm{\Pi}}}
\def\mSigma{{\bm{\Sigma}}}
\newcommand\E{\mathop{\mathbb{E}}}
\newcommand\R{\mathbb{R}}
\newcommand\Sbb{\mathbb{S}}
\newcommand\mcB{\mathcal{B}}
\newcommand\mcD{\mathcal{D}}
\newcommand\mcI{\mathcal{I}}
\newcommand\mcN{\mathcal{N}}
\newcommand\mcU{\mathcal{U}}
\newcommand\mcX{\mathcal{X}}
\newcommand{\Bernoulli}{\textup{Bern}}
\author[1]{Ittai Rubinstein}
\author[1]{Samuel B. Hopkins}
\affil[1]{EECS, MIT, Cambridge, MA}
\affil[ ]{\texttt{\{ittair, samhop\}@mit.edu}}
\title{On the Accuracy of Newton Step and Influence Function Data Attributions}
\date{May 15, 2026\\[0.5em]\small Accepted at ICML 2026}
\begin{document}
\maketitle

\begin{abstract}
Data attribution estimates how a trained model would change if a subset of training points were removed, and is a central primitive for tasks such as interpretability, data valuation, and machine unlearning. Despite its widespread use, our theoretical understanding of key data attribution methods---Influence Functions (IF) and a single Newton Step (NS)---remains limited: existing guarantees heavily rely on \emph{global} strong convexity and yield bounds with pessimistic dependence on the parameter dimension $d$ and the number of removed samples $k$.
We give a new analysis of IF and NS for convex ERM that replaces global assumptions with \emph{local} conditions: it suffices that the loss is strongly convex and sufficiently smooth only in a neighborhood of the first Newton step.
As a concrete validation, we analyze logistic regression with Gaussian features and show that our bounds capture the correct scaling up to polylogarithmic factors, yielding matching upper and lower bounds and explaining observed regimes in which NS is markedly more accurate than IF, thereby resolving open questions raised by~\cite{Koh2019}.
\end{abstract}

\newpage

\tableofcontents

\newpage

\section{Introduction}

Let $L = \ell_1 + \cdots + \ell_n : \Omega_\theta \rightarrow \R$ be an Empirical Risk Minimization (ERM) problem, where the contributions to the loss may represent the training samples of some supervised learning problem $\ell_i = \ell(f(\vtheta, x_i), y_i)$. Define
\[
\hat{\vtheta} \defeq \argmin_{\vtheta \in \Omega_\vtheta} \set{\sum_{i=1}^n \ell_i(\vtheta)}, \;\; \;\; \hat{\vtheta}_T \defeq \argmin_{\vtheta \in \Omega_\vtheta} \set{\sum_{i \notin T} \ell_i(\vtheta)}
\]

\emph{Data attribution} seeks to explain the dependence of $\hat{\vtheta}$ on the training data by predicting how $\hat{\vtheta}$ changes when removing a subset $T \subset [n]$ of the samples $\ell_i$~\cite{Ilyas2022,park2023trak}\footnote{
This formulation is also known as predictive data attribution or datamodeling.
For a broader overview of predictive, corroborative, and game-theoretic notions of data attribution, see~\cite{madry2024data}.
Some works further restrict the counterfactual predictor to be linear or additive in the samples, assigning individual contributions to model weights or predictions, but we use the subset-level formulation for maximal generality.
}.
We can answer such a query by fully retraining the model $\hat{\vtheta}$ with some samples removed, but this is often computationally intensive and lacks a closed form solution, which can be crucial for downstream tasks~\cite{madry2024data}.
Therefore, data attributions often utilize approximations, the most widespread of which are influence functions (IF), which use a first-order approximation to the effect of downweighting a set of samples, and single {Newton steps} (NS), which approximate the removal effect by taking a single step of Newton's algorithm from the point $\hat{\vtheta}$ (see Section~\ref{subsec:if_def}).

Data attribution is used in a wide variety of applications from cross validation~\cite{rad2018scalable,wilson2020approximate} and data valuation~\cite{jia2019towards,ghorbani2019data} to privacy and unlearning~\cite{Sekhari2021,neel2021descent,Suriyakumar2022,brown2024insufficient}.
Despite this, at present, our mathematical understanding of the performance of IF and NS is spotty at best, especially when $\hat{\vtheta}$ is high dimensional.
Thus in this paper we take a rigorous mathematical approach to the questions:
\begin{center}
    \begin{quote}
        \emph{When do IF and NS give good approximations of the data-removal effect? How does the approximation error scale with the total number of samples $n$, the problem dimension $d$ and number of samples removed $k=\abs{T}$?}
    \end{quote}
\end{center}

We analyze the accuracy of IF and NS in the analytically tractable setting of convex empirical risk minimization.
Beyond the direct application for convex learning problems (such as last-layer fine-tuning through logistic regression), understanding the convex setting is crucial since leading data attributions for non-convex settings often rely on heuristic reductions to data attribution for convex problems (e.g., TRAK uses a neural tangent kernel assumption to approximate the attribution of a deep neural network with the attribution of a related logistic regression~\cite{park2023trak}).
This is one motivation for our focus on uncovering the scaling laws for the convex setting -- we discuss several further motivations later.

Existing mathematical analyses of IF and NS data attributions~\cite{rad2018scalable,giordano2019swiss,Koh2019,wilson2020approximate} make progress towards a mathematical understanding of IF and NS, but remain unsatisfactory in at least two respects.
First, they rely on an unrealistic \emph{global} strong convexity assumption -- that both the loss $\sum_{i =1}^n \ell_i$ and the leave-$T$-out loss $\sum_{i \notin T} \ell_i$ are strongly convex for all $\theta$.
Even for convex learning problems, this level of strong convexity often does not hold in practice.\footnote{Some analyses require only that the loss is strongly convex in some subset of the optimization domain. Assumption 7 of~\cite{rad2018scalable} only require that the loss is strongly convex along the path between $\hat{\vtheta}$ and $\hat{\vtheta}_{T}$ and Assumption 1 of~\cite{wilson2020approximate} only require that the loss is strongly convex in a ball around $\hat{\vtheta}_{T}$.
However, these papers do not give an argument for why the loss should be more convex in this area than anywhere else in the optimization domain.
Therefore, without prior knowledge on $\hat{\vtheta}_{T}$, it is not clear how one could get a concrete guarantee beyond the one in Theorem~\ref{thm:existing}.}
Second, the quantitative bounds on approximation error for IF and NS scale poorly, both in this reliance on global strong convexity, and as high-degree polynomials in the problem dimension $d$ and with the number of samples removed $k$~\cite{rubinstein2025rescaled}.

Empirical work, e.g. \cite{Koh2019,rubinstein2025rescaled} suggests that IF and NS approximations are much more accurate than previously-proven bounds would suggest, and that NS often provides a much better approximation of removal effects than IF, but to the best of our knowledge, no previous quantitative analysis comes close to explaining this phenomenon.

In this work, we provide the first theoretical analysis whose guarantees are tight enough to explain \emph{why} and \emph{when} the NS data attribution outperforms IF.

\subsection{Data Attribution Models}
\label{subsec:if_def}
\subsubsection{Influence Functions (IF)} 
IF, also known as infinitesimal jackknife (IJ)~\cite{jaeckel1972infinitesimal} utilizes a first order approximation to the effect of down-weighting a sample on the model parameters. Defining $\vtheta_{(\cdot)} : \R^n \rightarrow \R^d$ as the function that takes a set of weights $\vw \in \R^n$, and optimizes the model on the weighted samples $\vtheta_\vw = \argmin_{\vtheta \in \Omega_\vtheta} \set{\sum_{i=1}^n w_i \ell_i(\vtheta)}$, IF employs a first order Taylor series in $\vw$.
The resulting estimated change in the model parameters is given by the following formula (see e.g., \cite{rousseeuw1986robust} for a derivation),
\[
\vtheta_{\vw}^\IF \coloneqq \vtheta_{\vw = \vone} + \frac{\partial \vtheta_\vw}{\partial \vw} \Paren{\vw - \vone} = \hat{\vtheta} + \mH^{-1} \sum_{i =1}^n \Paren{1 - w_i} \vg_i\,,
\]
where $\mH$ is the Hessian of the loss $L(\vtheta) = \sum_{i=1}^n \ell_i(\vtheta)$ evaluated at $\hat{\vtheta} = \vtheta_{\vw=\vone}$, and $\vg_i$ is the gradient of $\ell_i$ at $\hat{\vtheta}$.

Influence functions are ubiquitous due to their simplicity and applicability for downstream tasks~\cite{giordano2019swiss,broderick2020automatic} and were made more applicable to modern machine learning with fast algorithms for estimating them without explicitly inverting the Hessian~\cite{Koh2017,guo2021fastif,park2023trak,grosse2023studying}.

However, while IF data attribution appears good at capturing the qualitative behavior of a model's change when dropping a data point, it often misses the scale of the change and significantly underestimates removal effects, especially in the high-dimensional regime~\cite{Koh2019}.

\subsubsection{Single Newton Step (NS)} 
NS data attribution, dating to \cite{pregibon1981logistic} and explored recently by \cite{Koh2017,rad2018scalable,Koh2019,wilson2020approximate,huang2024approximations} offers somewhat higher accuracy than the IF approach.
Here we approximate $\hat{\vtheta}_T$ using a single step of the Newton algorithm initialized at $\hat{\vtheta}$:
\[
\vtheta_{\vw}^\NS = \hat{\vtheta} + \mH_\vw^{-1} \sum_{i=1}^n \Paren{1 - w_i} \vg_i\,,
\]
where $\mH_{\vw}$ is the Hessian of the weighted loss function $L_\vw = \sum_{i \in [n]} w_i \ell_i$.

NS appears almost equivalent to the influence function, except for the key difference that it takes into account the change to the Hessian due to dropping the samples.
Moreover, when the loss is a quadratic function of the model (e.g., in ordinary least squares regression), NS data attribution is precise (i.e., ${\vtheta}_\vw^\NS = \hat{\vtheta}_\vw$), since Newton iteration converges in a single step.

NS is rarely used at scale because it has much higher query complexity than IF: IF can reuse a single inverse Hessian across many drop-set queries, whereas NS must update the Hessian inverse for each query.
This extra cost can be worthwhile in smaller applications -- in their ``Group Influence'' paper, \cite{Koh2019} showed empirically that NS can be much more accurate than IF on real-world datasets.
Due to the structural similarity between IF and NS, NS also serves as a useful proxy for analyzing IF accuracy~\cite{rad2018scalable,giordano2019swiss,Koh2019}.

Despite this, prior to this work, our theoretical understanding of the accuracy of NS was very limited, and in particular we did not have a good explanation for the empirical observation of \cite{Koh2019} that NS is more accurate than IF.

\subsection{Our Results}

We analyze the accuracy of IF and NS in three settings, with three main results.
Our first result applies in the general setting of empirical risk minimization problems for any smooth loss $L$.
We place assumptions on the \emph{local} strong convexity and \emph{local} (higher-order) Lipschitzness of $L$ {only in a small neighborhood of the Newton step itself}, and show that under those assumptions the NS data model is quantitatively accurate.
We offer a quantitative statement and comparison to prior work later.

\begin{theorem}[NS Accuracy Under Local Conditions (Informal, see Theorem~\ref{thm:main})]
\label{thm:main_informal}
    If $L_{T} = \sum_{i \notin T} \ell_i$ is strongly convex in a neighborhood of the first Newton step starting from $\hat{\vtheta}$, and the Hessian of $L_{T}$ is (mildly) Lipschitz along the first Newton step, then the output of the first Newton step $\hat{\vtheta}_T^\NS$ is close to a local minimizer of the loss $L_T$.
    
    In particular, if $L_T$ is convex everywhere, then this minimizer is also the global minimizer, meaning that the output of the first Newton step is close to the full retrain $\hat{\vtheta}_T$.
\end{theorem}

Assumptions like local strong convexity or local Lipschitzness allow great generality across different loss functions and datasets, but can be difficult to interpret quantitatively, obscuring the significant differences observed in practice among various data models.
Our second main result addresses a much more concrete setting, where we can hope to prove sharp quantitative bounds on the accuracy of data models, using Theorem~\ref{thm:main_informal}.
We focus on logistic regression with Gaussian features as a deliberately simple toy model: the goal is not to model real data, but to obtain a concrete setting in which the abstract quantities in Theorem~\ref{thm:main_informal} can be evaluated and compared to prior bounds.
In this model we show that:

\begin{theorem}[NS is more accurate than IF for both average-case and adversarial drop-sets]
\label{thm:asymptotic_intro}
    Suppose $L$ is the empirical logistic loss corresponding to $n$ independent samples $(\vx_1,y_1),\ldots,(\vx_n,y_n) \in \R^d \times \{ 0, 1\}$ from a distribution $(\vx,y)$ with $\vx \sim N(0,I)$, with population loss minimizer $\vtheta^*$ having $\|\vtheta^*\| = \Theta(1)$.
    For $T \subseteq [n]$, let $\hat{\vtheta}_T^{\IF}$ be the \IF~estimate of $\hat{\vtheta}_T$ and let $\hat{\vtheta}_T^{\NS}$ be the \NS~estimate of $\hat{\vtheta}_T$.
    Finally, assume that $k, d \ll \sqrt{n}$.
    
    Then, the upper bounds given by Theorem~\ref{thm:main_informal}, are asymptotically tight up to $\polylog(n)$ factors. In particular, with high probability over the training data
    \[
    \Expect{T \in \binom{[n]}{k}}{\Norm{\hat{\vtheta}_T - \hat{\vtheta}_T^\NS}_2}
    = \wt{\Theta}\Paren{\frac{kd}{n^2}}\,, \quad
    \max_{T \in \binom{[n]}{k}}\set{\Norm{\hat{\vtheta}_T - \hat{\vtheta}_T^\NS}_2}
    = \wt{\Theta}\Paren{\frac{k^2 + kd}{n^2}}\,.
    \]

    and
    \[
    \begin{aligned}
    \Expect{T \in \binom{[n]}{k}}{\Norm{\hat{\vtheta}_T^\NS - \hat{\vtheta}_T^\IF}_2}
    &= \wt{\Theta}\Paren{\frac{k^{3/2}d^{1/2} + k^{1/2}d^{3/2}}{n^2}}\,,\quad
    \max_{T \in \binom{[n]}{k}}\set{\Norm{\hat{\vtheta}_T^\NS - \hat{\vtheta}_T^\IF}_2}
    &= \wt{\Theta}\Paren{\frac{k^2 + k^{1/2}d^{3/2}}{n^2}}\,.
    \end{aligned}
    \]

\end{theorem}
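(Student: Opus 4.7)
The plan is to combine Theorem~\ref{thm:main_informal} (for the NS error) with the resolvent identity
\[
\hat{\vtheta}_T^{\NS} - \hat{\vtheta}_T^{\IF} \;=\; \mH_T^{-1}\Paren{\sum_{i \in T} \ell_i''(\hat{\vtheta})\, x_i x_i^{\top}} \mH^{-1} \sum_{i \in T} \vg_i
\]
(for the IF--NS gap), reducing all four bounds to moment computations over the Gaussian features. Upper bounds come from operator-norm and Cauchy--Schwarz arguments; matching lower bounds from second-moment computations that rule out systematic cancellation of the dominant terms.

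As a first step I would establish concentration estimates that hold with high probability over the training data. Via standard Gaussian tail bounds, matrix Bernstein, and a net argument over subsets, I would show that $\mH \asymp n\, \mSigma_0$ for some well-conditioned PSD $\mSigma_0$ and $\mH_T \succeq c n I$ uniformly over all $|T| \le k$; that $\|\vg_i\|_2 \le \wt{O}(\sqrt d)$ uniformly in $i$; that $\Expect{T}{\Norm{\sum_{i\in T}\vg_i}_2^2} = \Theta(kd)$, using the first-order optimality $\sum_i \vg_i = \vzero$ together with $\sum_i \|\vg_i\|_2^2 = \Theta(nd)$; and that $\max_{|T|=k}\Norm{\sum_{i\in T}\vg_i}_2^2 = \wt{O}(k(k+d))$, by writing $\sum_{i\in T}\vg_i = X_T^{\top} s_T$ with $\|s_T\|_{\infty} \le 1$ and bounding $\|X_T\|_{\mathrm{op}}^2 \le \wt{O}(k+d)$. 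These estimates also verify the hypotheses of Theorem~\ref{thm:main_informal}.

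Second, I would plug everything in. Theorem~\ref{thm:main_informal} yields the quadratic-in-shift bound $\|\hat{\vtheta}_T - \hat{\vtheta}_T^{\NS}\|_2 \lesssim \|\hat{\vtheta}_T^{\NS} - \hat{\vtheta}\|_2^2$, which together with the concentration estimates gives the NS upper bounds $\wt{O}(kd/n^2)$ in expectation and $\wt{O}((k^2+kd)/n^2)$ in the worst case. The resolvent identity and operator-norm bounds give $\|\hat{\vtheta}_T^{\NS} - \hat{\vtheta}_T^{\IF}\|_2 \le \|\mH_T^{-1}\|_{\mathrm{op}} \cdot \Norm{\sum_{i\in T}\ell_i''\, x_i x_i^{\top}}_{\mathrm{op}} \cdot \Norm{\mH^{-1}\sum_{i\in T}\vg_i}_2$, which evaluates to $\wt{O}((k+d)\sqrt{kd}/n^2)$ in expectation (via Cauchy--Schwarz on the joint distribution of the two random factors across $T$) and to $\wt{O}((k+d)^{3/2}\sqrt k /n^2) = \wt{O}((k^2+k^{1/2}d^{3/2})/n^2)$ in the worst case.

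The main obstacle will be the matching lower bounds. Setting $\vdelta = \hat{\vtheta}_T^{\NS} - \hat{\vtheta}$, I would isolate the leading-order term in each expansion -- the second-order Taylor remainder $\tfrac 12 \mH_T^{-1}\sum_{i\notin T}\ell_i'''(\hat{\vtheta}^{\top} x_i y_i)(x_i^{\top}\vdelta)^2 x_i$ for NS, and the resolvent correction itself for IF--NS -- and compute its second moment from below, exhibiting a direction in which the term does not cancel on average over $T$. The NS case requires leveraging the non-degeneracy of the logistic third derivative; the IF--NS case requires evaluating the second moment of a quadratic form in the Gaussian features and matching it to the claimed order. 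The worst-case lower bounds additionally demand an explicit extremal construction of $T$, for instance greedily aligning a target direction with the gradients $\vg_i$ or with the top eigenvectors of $\sum_{i\in T}x_i x_i^{\top}$, incurring a logarithmic union-bound factor.
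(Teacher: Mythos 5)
Your proposal matches the paper's overall structure: verify concentration estimates, apply Theorem~\ref{thm:main_informal} with $C_h \approx n\Norm{\mH^{-1}\vg}_2^2$ and $\Cop \approx 1/n$ for the NS upper bound, use the resolvent identity $\hat{\vtheta}_T^{\NS} - \hat{\vtheta}_T^{\IF} = (\mH^{\setminus T})^{-1}\mH^T\mH^{-1}\vg^T$ with operator-norm bounds for the IF--NS upper bound, and use the sign of the logistic third derivative contracted along $\hat{\vtheta}$ for the NS lower bound. All of these are the paper's arguments, and they go through.

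Where I see a genuine gap is the IF--NS lower bound. You describe it as ``evaluating the second moment of a quadratic form in the Gaussian features,'' but the dominant scaling here does not come from a generic second-moment computation; it comes from a specific alignment that has to be identified. In the regime $k \ll d$, the matrix $\mH^T = \sum_{i\in T}\beta_i\vx_i\vx_i^\top$ has rank $\le k$, and a random unit vector would only pick up a $\sqrt{k/d}$ fraction of its mass. The crucial fact is that $\vg^T$ lies \emph{in the span of the same $\vx_i$, $i\in T$}, so the vector $\mH^{-1}\vg^T$ is essentially fully aligned with the top eigenspace of $\mH^T$, and you pick up a full factor of $d$ from $\mH^T$ rather than $\sqrt{kd}$. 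This is exactly what produces the $k^{1/2}d^{3/2}/n^2$ rate. The paper establishes it through a leave-one-out decomposition $\sum_{i,j\in T}\vv_{i,j}$ with $\vv_{i,j} \propto \beta_j\vx_j\vx_j^\top\mH^{-1}\vx_i$, a base case $\Norm{\vv_{i,i}}_2 \approx d^{3/2}/n$ (Lemma~\ref{lem:individual_effects}), and a decoupling lemma showing the diagonal terms $i=j$ dominate the second moment without cancellation (Lemmas~\ref{lem:individual_effects_do_not_cancel},~\ref{lem:individual_effects_do_not_cancel_better}). A direct attack on $\Expect{T}{\Norm{\mH^T\mH^{-1}\vg^T}_2^2}$ expands into a sum over index $4$-tuples $(i_1,i_2,j_1,j_2)$ with subtle cancellation structure, and without first isolating the diagonal LOO contribution it is not at all clear that the dominant $kd^3/n^4$ term survives. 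Relatedly, for the worst-case IF--NS lower bound your ``align with the gradients \emph{or} with the top eigenvectors'' is insufficient: the paper's construction takes $T = T_1 \cup T_2$ precisely so that \emph{both} $\Norm{\vg^{T_1}}_2 = \Omega(k)$ \emph{and} $\sigmin{\mH^{T_2}} = \Omega(k)$ hold simultaneously (and in the $k \gg d$ regime the second requires a full-rank random $T_2$, not alignment with top eigenvectors); optimizing either one alone does not give the claimed $k^2/n^2$ rate.
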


Theorem~\ref{thm:asymptotic_intro} bounds the accuracy of the NS estimate and the difference between the NS and IF estimates.
This naturally yields upper and lower bounds on the IF error itself via the triangle inequality
\[
\Norm{\hat{\vtheta}^{\IF}_T-\hat{\vtheta}^{\NS}_T}_2
-
\Norm{\hat{\vtheta}^{\NS}_T-\hat{\vtheta}_T}_2 \leq
\Norm{\hat{\vtheta}^{\IF}_T-\hat{\vtheta}_T}_2
\leq
\Norm{\hat{\vtheta}^{\IF}_T-\hat{\vtheta}^{\NS}_T}_2
+
\Norm{\hat{\vtheta}^{\NS}_T-\hat{\vtheta}_T}_2\,.
\]

In particular, whenever the NS error is smaller than the IF--NS gap, the IF error is essentially determined by the IF--NS gap and NS is genuinely closer to retraining.
In particular, this occurs whenever $d \gg k$, implying that NS is significantly more accurate in this regime.

We work in the $n \gg k^2,d^2$ regime to simplify our analysis.
We do not expect a phase transition to occur when this assumption is violated.
For random drop-sets, we expect the same scalings to remain tight even for larger $k,d$, up to minor changes detailed in Appendix~\ref{app:large_kd_conjectures}.

Although Theorem \ref{thm:asymptotic_intro} focuses on logistic regression for the sake of analytic tractability, little about our analysis is specialized to that setting.
We thus believe that our results provide insight into the scaling laws we can expect for the error of IF and NS approximations for a wide range of ERM-based estimators, and perhaps even beyond, to deeper and non-convex models.
Even restricted to the convex ERM setting, we expect that our new quantitative understanding of IF and NS approximation accuracy will lead to significant improvements in downstream applications like unlearning, differential privacy, and cross-validation, which rely on such quantitative error bounds.

\subsection{Related Work and Previous State-of-the-Art}
\label{subsec:existing_theory}

\paragraph{History and Applications}
Leave-$1$-out and leave-$k$-out models, and fast approximations thereof, have a long history in statistics, especially for robustness and sensitivity analysis and uncertainty quantification.
The jackknife \cite{tukey1958bias}, infinitesimal jackknife~\cite{jaeckel1972infinitesimal}, the bootstrap~\cite{efron1992bootstrap}, PRESS statistics~\cite{allen1974relationship}, and Cook's regression diagnostics~\cite{cook1977detection} are important examples of classical techniques for understanding robustness and uncertainty for estimators and regressors using the effect of one or more data point removals.

IF and NS methods originated in early work on robust statistics \cite{hampel1974influence,jaeckel1972infinitesimal,pregibon1981logistic}, with precursors like the \emph{von Mises expansion} dating to the 1940s \cite{mises1947asymptotic}.
IFs have historically seen broad use as a proof technique in mathematical statistics, \cite{hampel1974influence,pfanzagl1982contributions,bickel1993efficient,vanderVaart1998asymptotic}, but in this paper we are focused instead on their use as an algorithmic tool for data attribution.
In spite of this 60+ year history, no prior work analyzes the accuracy of IF or NS as data attributions in finite-sample (non-asymptotic) settings without unrealistic strong convexity assumptions; in this work we provide such an analysis, leading to the first quantitatively tight error bounds for IF and NS data attribution methods in a natural finite-sample learning problem.

Both IF and NS data attribution methods are now used in a wide variety of downstream applications.
Both are used for data attribution in neural networks \cite{park2023trak,Basu2021,Bae2022,engstrom2025optimizing,choe2024your,mlodozeniec2024influence}, albeit with significant questions remaining about their accuracy as a data attribution tool in the neural-net setting.
Beyond deep learning, data attribution via IF and NS is a core technique in \emph{machine unlearning} for convex risk minimization problems \cite{Sekhari2021,neel2021descent,Suriyakumar2022}, for robustness auditing/finding highly influential sets of samples \cite{broderick2020automatic,Rubinstein2025,huang2024approximations,hu2024most}, approximate cross-validation \cite{rad2018scalable,wilson2020approximate}, and even evaluation of model fairness \cite{ghosh2023influence}.
IF is additive in the samples, so the IF-approximate effect of dropping a set is the sum of leave-1-out effects -- this additivity property is crucial for some downstream applications.
But IF is typically much less accurate than NS; recent works such as \cite{lev2024approximate,rubinstein2025rescaled,zou2025newfluence,huang2024approximations} propose refinements of IF and NS which can improve the tradeoffs between running time and approximation accuracy.

\paragraph{State-of-the-Art: Accuracy Guarantees under Global Strong Convexity Assumptions}
Motivated by the extensive downstream applications, several recent works analyze the accuracy of IF and NS methods~\cite{rad2018scalable,Koh2019,giordano2019swiss,wilson2020approximate} for convex ERMs.
Each of these works uses slightly different assumptions and notations, but roughly speaking they all prove variations of the same high-level statement:

\begin{theorem}[Existing Theoretical Guarantees (Informal)]
\label{thm:existing_informal}
    If the loss function is ``sufficiently strongly convex'' over the entire optimization domain $\Omega_{\vtheta}$ and its Hessian is ``sufficiently Lipschitz'' in this domain, then the single Newton step is ``close'' to the global optimum.
\end{theorem}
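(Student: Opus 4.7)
The plan is to follow the classical Newton quadratic-convergence argument, made quantitative in terms of a strong-convexity constant $\mu$, a Hessian-Lipschitz constant $M$, and a uniform bound $G$ on the per-sample gradient norms $\Norm{\vg_i}_2$ at $\hat{\vtheta}$. The crucial starting observation is that since $\nabla L(\hat{\vtheta}) = \vzero$, the gradient of the leave-$T$-out loss at $\hat{\vtheta}$ telescopes as $\nabla L_T(\hat{\vtheta}) = -\sum_{i \in T} \vg_i$, so $\Norm{\nabla L_T(\hat{\vtheta})}_2 \leq kG$. Combining $\mu$-strong monotonicity of $\nabla L_T$ with $\nabla L_T(\hat{\vtheta}_T) = \vzero$ then gives the warm-start bound $\Norm{\hat{\vtheta} - \hat{\vtheta}_T}_2 \leq kG/\mu$ for free.

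Next I would execute the standard one-step Newton estimate. Writing $\hat{\vtheta}_T^{\NS} - \hat{\vtheta}_T = (\hat{\vtheta} - \hat{\vtheta}_T) - \mH_T(\hat{\vtheta})^{-1} \nabla L_T(\hat{\vtheta})$ and applying the integral form of Taylor's theorem,
\[
\nabla L_T(\hat{\vtheta}) = \int_0^1 \mH_T\Paren{\hat{\vtheta}_T + t(\hat{\vtheta} - \hat{\vtheta}_T)} (\hat{\vtheta} - \hat{\vtheta}_T)\, dt,
\]
the residual rewrites as $\mH_T(\hat{\vtheta})^{-1} \int_0^1 \Brac{\mH_T(\hat{\vtheta}) - \mH_T(\hat{\vtheta}_T + t(\hat{\vtheta} - \hat{\vtheta}_T))} (\hat{\vtheta} - \hat{\vtheta}_T)\, dt$. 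Hessian-Lipschitzness bounds the operator norm of the bracketed difference by $M(1-t)\Norm{\hat{\vtheta} - \hat{\vtheta}_T}_2$, while strong convexity gives $\OpNorm{\mH_T(\hat{\vtheta})^{-1}} \leq 1/\mu$. Integrating over $t$ and then chaining with the warm-start bound produces the quadratic estimate $\Norm{\hat{\vtheta}_T^{\NS} - \hat{\vtheta}_T}_2 \leq \tfrac{M}{2\mu}\Norm{\hat{\vtheta} - \hat{\vtheta}_T}_2^2 \leq \tfrac{M k^2 G^2}{2\mu^3}$, which is the quantitative form of ``close to the global optimum'' -- ``sufficiently'' strongly convex and ``sufficiently'' Lipschitz meaning, concretely, that $\mu$ is large and $M$ small relative to $kG$.

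The main obstacle, such as it is, lies not in the mathematics but in the bookkeeping across the various formulations used by \cite{rad2018scalable,giordano2019swiss,Koh2019,wilson2020approximate}, which state the hypotheses either globally on $\Omega_\vtheta$, locally along the segment from $\hat{\vtheta}$ to $\hat{\vtheta}_T$, or in a ball around $\hat{\vtheta}_T$. To recover each variant I would replace the global quantifiers above with hypotheses restricted to the relevant segment or ball and verify that neither the strong-monotonicity step nor the Taylor-expansion step probes points outside those sets; this is automatic because both arguments only ever evaluate $\mH_T$ along the segment joining $\hat{\vtheta}$ and $\hat{\vtheta}_T$. Beyond this, the only non-textbook ingredient is the telescoping identity $\nabla L_T(\hat{\vtheta}) = -\sum_{i \in T} \vg_i$, which is what lets the final error be written cleanly in terms of $k$ and the per-sample gradient scale rather than an opaque initial-gradient norm -- and it is precisely this factor of $k$ (rather than, say, $k^2$ or worse) that the subsequent sections of the paper aim to sharpen or circumvent without the global strong-convexity crutch.
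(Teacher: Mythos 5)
Your proof is correct and yields the same bound as the paper's Theorem~\ref{thm:existing}, but the decomposition is genuinely different. The paper applies the fundamental theorem of calculus along the Newton-step segment from $\hat{\vtheta}$ to $\hat{\vtheta}_T^\NS$, first bounding $\Norm{\vg_{\hat{\vtheta}_T^\NS}}_2 \leq \tfrac12 \Clip \Norm{\mH^{-1}\vg}_2^2$, and then converting this gradient bound to a parameter bound via a global strong-convexity factor $\Cop$. You instead apply Taylor's theorem centered at $\hat{\vtheta}_T$ and integrate along the segment $[\hat{\vtheta}_T, \hat{\vtheta}]$, obtaining the residual identity $\hat{\vtheta}_T^{\NS} - \hat{\vtheta}_T = \mH_T(\hat{\vtheta})^{-1}\int_0^1 [\mH_T(\hat{\vtheta}) - \mH_T(\vtheta(t))](\hat{\vtheta} - \hat{\vtheta}_T)\,dt$, which gives the quadratic contraction directly in parameter space; you then seed it with a strong-monotonicity warm-start bound $\Norm{\hat{\vtheta}-\hat{\vtheta}_T}_2 \leq kG/\mu$ rather than the paper's bound on the Newton step length $\Norm{\mH^{-1}\vg}_2 \leq \Cop k \Cg$. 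Under the translation $\mu = 1/\Cop$, $M = \Clip$, $G = \Cg$, both yield $\tfrac12 \Clip \Cop^3 k^2 \Cg^2$. The comparison is instructive precisely because it highlights why the paper chose its variant: your Taylor expansion is centered at the unknown quantity $\hat{\vtheta}_T$ and integrates over a segment that cannot be located without already knowing the answer, whereas the paper's segment $[\hat{\vtheta}, \hat{\vtheta}_T^\NS]$ is computable. For Theorem~\ref{thm:existing} this is cosmetic (global strong convexity covers both segments), but it is exactly the pivot that makes the paper's localized Theorem~\ref{thm:main} possible, where the assumptions are restricted to a ball around $\hat{\vtheta}_T^\NS$ and your segment would fall outside the certified region.
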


The details of Theorem~\ref{thm:existing_informal} and its proof vary slightly from paper to paper, but mostly follow a similar thread.
To make things concrete, we consider a specific instantiation of Theorem~\ref{thm:existing_informal}.

Denote the gradients of the individual samples by $\vg_i = \nabla \ell_i \vert_{\vtheta=\hat{\vtheta}}$ and the Hessian of the loss by $\mH_\vtheta = \sum_{i \notin T} \nabla^2 \ell_i \vert_{\vtheta}$.
Using these notations, previous analyses typically make Assumptions~\ref{ass:lipschitz},~\ref{ass:bounded_gradients} and~\ref{ass:global_strong_convexity}.

\begin{assumption}[Lipschitz Hessian]
    \label{ass:lipschitz}
    There exists a finite constant\footnote{Keeping with existing nomenclature, we use the term ``constant'' to mean that $\Clip$ does not depend on $\vtheta$, but it may still depend on $n, k, d$.} $\Clip < \infty$ such that
    \[
    \forall \vtheta, \vtheta^\prime \in \Omega_\theta \;\; \OpNorm{\mH_{\vtheta} - \mH_{\vtheta^\prime}} \leq \Clip \Norm{\vtheta - \vtheta^\prime}_2
    \]
\end{assumption}

\begin{assumption}[Bounded Individual Gradients]
    \label{ass:bounded_gradients}
    \[
    \Cg = \max_{i \in [n]} \set{\Norm{\vg_i}_2}
    \]
\end{assumption}

\begin{assumption}[Global Strong Convexity]
\label{ass:global_strong_convexity}
    There exists a finite constant $\Cop < \infty$ such that
    \[
    \forall \vtheta \in \Omega_\vtheta \;\; \OpNorm{\Paren{\sum_{i \notin T} \nabla^2\ell_i\vert_{\vtheta}}^{-1}} \leq \Cop 
    \]
\end{assumption}

Assumptions~\ref{ass:lipschitz} and~\ref{ass:bounded_gradients} suffice to (loosely) bound the norm of the gradient after one Newton step, and Assumption~\ref{ass:global_strong_convexity} gives us a quantitative way of converting this into a bound in parameter space.

\begin{theorem}[Existing Theoretical Guarantees]
\label{thm:existing}
Under Assumptions~\ref{ass:lipschitz},~\ref{ass:bounded_gradients} and~\ref{ass:global_strong_convexity}, the error of the single Newton step data attribution is bounded by
\[
\Norm{\hat{\vtheta}_T - \hat{\vtheta}^\NS_T}_2 = O(\Clip \Cop^3 k^2 \Cg^2)\,.
\]
\end{theorem}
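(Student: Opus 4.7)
}
The plan is to show that after the Newton step, the gradient of the leave-$T$-out loss is already quadratically small in the step size, and then convert this gradient bound into a parameter-space bound using global strong convexity. Concretely, write $L_T(\vtheta) = \sum_{i \notin T} \ell_i(\vtheta)$ and note the first-order optimality of the full ERM gives $\sum_{i=1}^n \vg_i = \vzero$, hence $\nabla L_T(\hat{\vtheta}) = -\sum_{i \in T} \vg_i$. By the very definition of the Newton step, $\mH_T\,(\hat{\vtheta}_T^\NS - \hat{\vtheta}) = \sum_{i \in T} \vg_i = -\nabla L_T(\hat{\vtheta})$, where $\mH_T = \mH_{\hat{\vtheta}}$ restricted to $i \notin T$.

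The first step is to Taylor-expand $\nabla L_T$ around $\hat{\vtheta}$ to second order and evaluate at $\hat{\vtheta}_T^\NS$:
\[
\nabla L_T(\hat{\vtheta}_T^\NS) \;=\; \nabla L_T(\hat{\vtheta}) + \mH_T(\hat{\vtheta}_T^\NS - \hat{\vtheta}) + R\,,
\]
where by Assumption~\ref{ass:lipschitz} the remainder satisfies $\Norm{R}_2 \le \tfrac{\Clip}{2}\Norm{\hat{\vtheta}_T^\NS - \hat{\vtheta}}_2^2$. The construction of the Newton step kills the first two terms, leaving $\Norm{\nabla L_T(\hat{\vtheta}_T^\NS)}_2 \le \tfrac{\Clip}{2}\Norm{\hat{\vtheta}_T^\NS - \hat{\vtheta}}_2^2$. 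The second step is to control the step size itself: using Assumptions~\ref{ass:bounded_gradients} and~\ref{ass:global_strong_convexity},
\[
\Norm{\hat{\vtheta}_T^\NS - \hat{\vtheta}}_2 \;=\; \Norm{\mH_T^{-1}\sum_{i \in T}\vg_i}_2 \;\le\; \Cop \cdot k\, \Cg\,,
\]
so that $\Norm{\nabla L_T(\hat{\vtheta}_T^\NS)}_2 \le \tfrac{\Clip}{2}\,\Cop^2 k^2\, \Cg^2$.

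The third step converts this gradient bound into a parameter-space bound. Writing $\nabla L_T(\hat{\vtheta}_T^\NS) - \nabla L_T(\hat{\vtheta}_T) = \bigl(\int_0^1 \mH_{\hat{\vtheta}_T + t(\hat{\vtheta}_T^\NS - \hat{\vtheta}_T)}\, \diff t\bigr)(\hat{\vtheta}_T^\NS - \hat{\vtheta}_T)$ and using $\nabla L_T(\hat{\vtheta}_T)=\vzero$ together with Assumption~\ref{ass:global_strong_convexity} to invert the averaged Hessian, we obtain $\Norm{\hat{\vtheta}_T^\NS - \hat{\vtheta}_T}_2 \le \Cop\, \Norm{\nabla L_T(\hat{\vtheta}_T^\NS)}_2$. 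Chaining the inequalities gives the stated $O(\Clip \Cop^3 k^2 \Cg^2)$ bound.

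There is no substantive obstacle here: the argument is the standard one-step Newton contraction analysis, and every ingredient (the second-order Taylor remainder, the crude triangle-inequality bound on the step size, and the global convexity-to-parameter conversion) is already given by the three assumptions. The only point requiring slight care is that Assumption~\ref{ass:global_strong_convexity} must apply uniformly along the segment between $\hat{\vtheta}_T$ and $\hat{\vtheta}_T^\NS$ so that the averaged Hessian is invertible with operator norm $\le \Cop$; since global strong convexity is assumed on all of $\Omega_\vtheta$, this is automatic.
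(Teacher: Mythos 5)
Your proposal is correct and follows essentially the same route as the paper's proof in Appendix~\ref{subsec:proof_of_existing}: bound the leave-$T$-out gradient at $\hat{\vtheta}_T^\NS$ using a first-order Taylor expansion whose remainder is controlled by the Hessian Lipschitz assumption, bound the Newton step size via $\|\mH^{-1}\vg\|\le \Cop k \Cg$, and convert the gradient bound into a parameter bound by inverting the averaged Hessian using global strong convexity. The paper writes the remainder explicitly as an integral and bounds it with the triangle inequality, whereas you invoke the second-order Taylor remainder directly, but this is a cosmetic difference — the decomposition, the three ingredients, and the final constant $\tfrac{1}{2}\Clip\Cop^3k^2\Cg^2$ all match.
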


For completeness, we prove Theorem~\ref{thm:existing} in Appendix~\ref{subsec:proof_of_existing}.

The biggest limitation of existing approaches is that they require a bound on the spectrum of the inverse Hessian on the entire optimization domain $\Omega_\vtheta$ (Assumption~\ref{ass:global_strong_convexity}).
While some variants (e.g. \cite{rad2018scalable}) relax this assumption to one about the spectrum of the inverse Hessian (after samples are removed) on a subset of the domain large enough to include all of $\hat{\vtheta}$, $\hat{\vtheta}^{\NS}$, and $\hat{\vtheta}_T$, this remains quite restrictive, unless one knows \emph{a priori} that $\hat{\vtheta}_T$ is close to $\hat{\vtheta}$ in the first place, which is exactly what data attribution methods aim to discover.

As a simple running example, consider a logistic regression with feature vectors $\vx_i \in \R^d$ and labels $y_i \in \bits$.
The Hessian of a logistic regression is given by $\mH = \sum_{i \in [n]} \beta_i \vx_i \vx_i^\intercal$, where $\beta_i(\vtheta) = \hat{y}_i (1 - \hat{y}_i)$ are the variances of this sample's prediction $\hat{y}_i = \frac{\exp(\vtheta^\intercal \vx_i)}{1 + \exp(\vtheta^\intercal \vx_i)}$.

In this case, $\OpNorm{\mH^{-1}_{\vtheta}}$ grows rapidly with the norm of the model $\Norm{\vtheta}_2$.
This is because $\beta_i = O(\exp(-\abs{\vtheta^\intercal \vx_i}))$, so fixing the direction of $\vtheta$ and taking its norm to infinity would almost surely cause the Hessian to decay exponentially. Therefore, if we perform our optimization over the domain $\Omega_\vtheta = \R^d$, then $\OpNorm{\mH^{-1}_{\vtheta}}$ is not bounded and Assumption~\ref{ass:global_strong_convexity} does not hold.

\paragraph{Global Strong Convexity via Regularization is Inadequate}
One approach to justifying this assumption~\cite{Koh2019,wilson2020approximate} is by adding a $L_2$ regularization term to the loss, which gives a global lower bound on the spectrum of the Hessian.
However, as Koh et al. note, the regularization coefficient used in practice tends to be very small, meaning that the resulting bound on $C_{\text{op}}$ is enormous, rendering the bound in Theorem~\ref{thm:asymptotic_existing} rather weak -- in particular, too weak to explain why NS still outperforms IF.
Quoting \cite{Koh2019}:

\begin{quote}
    \cite{Koh2019}: The constraint in Proposition 3 implies that up to $O(1/\lambda^3)$ terms, influence underestimates the Newton approximation. [...] However, $\lambda / \sigma_{\textup{max}}$ is quite small in our experiments [...] so the actual correlation of influence is better than predicted by this theory.
\end{quote}

This ``small $\lambda$'' phenomenon is borne out in theory as well as experiments: it is a classical result in learning theory~\cite{dobriban2018high} that when optimizing for test-loss, the regularization coefficient $\lambda$ scales with $\lambda = \Theta(\sigma^2 d/n) = O(d/n)$, where $\sigma$ is the ``signal-to-noise ratio'' of the model.
This introduces a ``hidden'' $n^3$-dependence into Theorem~\ref{thm:asymptotic_existing} -- even if we make favorable assumptions like $C_{\text{Lip}}, C_{\ell}, k, d \leq O(1)$ the bounds produced by theorems like Theorem~\ref{thm:asymptotic_existing} would not even converge when $n \rightarrow \infty$:
\[
\textup{Prior Bounds} \simeq {k^2 \times \Cg^2 \times \Clip \times \Cop^3} \approx \frac{k^2 d n}{n^3 \lambda^3} \approx \frac{k^2 n}{d^2}\,.
\]

Finally, we note that this $1/\lambda^3$ scaling can have a significant impact on downstream applications.
For instance, in their seminal ``Remember What You Want to Forget'' paper, in order to certifiably unlearn a set of points, Sekhari et al. must add enough noise to their model to mask the error of their data attribution algorithm~\cite[Lemma~3 and Theorem~3]{Sekhari2021}
and~\cite[Theorem~11]{zou2025certified}\footnote{The curvature dependence is suppressed in this theorem which states the one-step Newton neighborhood size as $\widetilde O(\sqrt{m^3/n})$, but expanding the constants in Lemma 26 and the control of the event $F_5$ in Lemma 29 reveals inverse-Hessian/regularization factors, including dependence on the order of $(\lambda\nu)^{-3}$.}, and as a result this added noise scales with $\lambda^{-3}$.
This further highlights the importance of not just having good data attribution methods, but also a good analysis of this data attribution method.

\subsection{Formal Version of Theorem~\ref{thm:main_informal}}
\label{subsubsec:main_result}
We turn to the formal version of our main theorem for general convex risk minimization (Theorem~\ref{thm:main}).
Comparing its informal version, Theorem~\ref{thm:main_informal}, to the existing theory (see Theorem~\ref{thm:existing_informal}), the biggest change is that we assume only that the Hessian is well-behaved \emph{in a small neighborhood of the first Newton step} and not over the entire optimization domain (or a subset of the domain whose radius is unknown without computing $\hat{\vtheta}_T$ itself).
At first this might seem like a small difference, but previous proof techniques break down without the global assumption which, as we have shown in Section~\ref{subsec:existing_theory}, often does not hold in practice and may be hard to verify.

The second difference between Theorem~\ref{thm:main_informal} and existing theory is that we will require a much milder Lipschitz assumption on the change in the Hessian.
Previous analyses require that the Hessian be Lipschitz in operator~\cite{Koh2019,wilson2020approximate} or $L_1$~\cite{giordano2019swiss} norms, resulting in bounds that scale poorly with $d$ and may be harder to certify algorithmically.
For Theorem~\ref{thm:main_informal}, it suffices to show that the Hessian is Lipschitz only in its change along a single direction, resulting in a tighter bound under a more easily verifiable assumption.

Let $T \subseteq[n]$ be a set of $k$ samples to be removed.
Let $\vg_\vtheta = \sum_{i \notin T} \nabla \ell_i \vert_{\vtheta}$ and $\mH_\vtheta = \sum_{i \notin T} \nabla^2 \ell_i \vert_{\vtheta}$ be the gradient and Hessian of the loss of the retained samples when evaluated at $\vtheta$.
When $\vtheta$ is not specified below, we will set $\vtheta = \hat{\vtheta}$ to be the global optimum of the loss $L$ before samples are removed ($\vg = \vg_{\vtheta = \hat{\vtheta}}$, $\mH = \mH_{\vtheta=\hat{\vtheta}}$).

Recall that the NS data attribution estimates that
\[
\hat{\vtheta}^\NS_T \defeq \hat{\vtheta} - \mH^{-1} \vg \approx \hat{\vtheta}_T\,.
\]

Our first assumption will be that the Hessian is lower-bounded within a neighborhood of $\hat{\vtheta}^\NS_T$.
We will allow this neighborhood to assume the shape of any ellipsoid.

More concretely, let $\mSigma$ be any positive-definite matrix (natural choices could be the identity matrix $\mSigma = \mI$ or the Hessian $\mSigma = \mH$ at the original model $\vtheta=\hat{\vtheta}$), and define $\Norm{\vv}_\mSigma \coloneqq \sqrt{\vv^\intercal \mSigma \vv}$.
Let $r > 0$ be any positive radius, and let $\mcB$ be the $\mSigma$ ball of radius $r > 0$ around $\hat{\vtheta}^\NS_T$
\[
\mcB = \mcB_{\mSigma, r} \defeq \left\{\hat{\vtheta}^\NS_T + \ve \mid \Norm{\ve}_\mSigma \leq r \right\}\,.
\]

\begin{assumption}[Strong Convexity in $\mcB$]
\label{ass:cop}

\[
\forall \vtheta\in\mcB \quad \mSigma^{-1/2}\mH_{\vtheta}\mSigma^{-1/2}\succeq \Cop^{-1}\mI\,.
\]
\end{assumption}

Assumption~\ref{ass:cop} avoids the aforementioned issues with the previous analyses by limiting our evaluation of the Hessian to just a neighborhood of the Newton step (thus avoiding issues with potentially large changes to the Hessian at $\vtheta$ far from $\hat{\vtheta}$).

The next assumption tells us that Hessian changes slowly along the first Newton step:
\begin{assumption}[Mildly Lipschitz Hessian]
\label{ass:ch}
$\forall t \in [0, 1],\, \vtheta = t\hat{\vtheta} + (1-t)\hat{\vtheta}_T^\NS \;\; \Norm{\Paren{\mH_{\vtheta} - \mH} \mH^{-1} \vg}_{\mSigma^{-1}} \leq C_h$
\end{assumption}

Finally, similar to some previous analyses (e.g.,~\cite[Condition~1]{giordano2019swiss}), we also require a condition on the relationship between the parameters in our bound:

\begin{assumption}
    \label{cond:circularity}
    $C_h \Cop < r$
\end{assumption}
Assumption~\ref{cond:circularity} encapsulates a non-trivial tradeoff, since on the one hand, increasing $r$ makes the right-hand-side of this condition larger, helping us satisfy it, but on the other hand, this also increases the domain over which $\Cop$ is maximized.

Under these assumptions, we can bound the error of the Newton step approximation:

\begin{theorem}[Main Result]
\label{thm:main}
    Under Assumptions~\ref{ass:cop},~\ref{ass:ch} and~\ref{cond:circularity}, the retained loss $L_T$ has a local minimizer $\hat{\vtheta}^{\loc}_T$ satisfying
    \[
    \Norm{{\hat{\vtheta}^{\loc}_T - \hat{\vtheta}_T^\NS}}_\mSigma \leq C_h \Cop\,.
    \]
    In particular, if $L_T$ is globally convex, then $\hat{\vtheta}^{\loc}_T = \hat{\vtheta}_T$ is the global minimizer of $L_T$, and the same bound holds for $\hat{\vtheta}_T$.
\end{theorem}

We prove Theorem~\ref{thm:main} in Section~\ref{sec:proofs} by combining ideas from self-concordance theory~\cite{bach2010self,hsu2024sample} and the analyses of~\cite{giordano2019swiss,wilson2020approximate}.
\subsection{Additional Data Attribution Methods}
\label{subsubsec:drif}
A series of recent works~\cite{hu2024most,huang2024approximations,rubinstein2025rescaled,zou2025newfluence} propose a middle-ground approach to achieving some of the added accuracy of $\NS$ data attribution with only a small computational overhead over $\IF$ and preserving the additive structure of $\IF$ that makes it invaluable for many downstream applications.
This approach, called \emph{Rescaled Influence Functions} ($\RIF$) in~\cite{rubinstein2025rescaled}, works by summing over the $\NS$ estimates of the leave-one-out (LOO) effects:
\[
\hat{\vtheta}^\RIF_T = \hat{\vtheta} + \sum_{i \in T} \Paren{\mH_{\setminus\set{i}}}^{-1} \vg_i\,.
\]

The $\RIF$ update can be computed efficiently for generalized linear models such as logistic regressions and piecewise linear models such as neural nets with ReLU activation using the Sherman-Morrison formula, because rather than invert the Hessian after dropping all samples in $T$, we just need inverse Hessian for single sample drops.
Empirically, this rescaling significantly improves the accuracy of the data attribution estimates compared to $\IF$ when tested using a wide variety of non-random sample dropping strategies (e.g., dropping only samples whose gradient has a positive inner product with a fixed direction), often achieving accuracy comparable with that of $\NS$~\cite{rubinstein2025rescaled}.

Our theoretical analysis has two contributions to this setting.
First, we show that $\RIF$ is almost as accurate as $\NS$ for adversarial data drops (explaining a corresponding empirical observation by~\cite{rubinstein2025rescaled}).
Second, we propose a mild correction to $\RIF$ which \emph{additionally} rescales the influence function by the inverse of the fraction of samples retained. We call this method \emph{Doubly Rescaled Influence Functions} ($\DRIF$).

\[
\hat{\vtheta}^\DRIF_T = \hat{\vtheta} + \frac{n}{n - k} \sum_{i \in T} \Paren{\mH_{\setminus\set{i}}}^{-1} \vg_i\,.
\]

This additional rescaling can clearly be applied efficiently and preserves the additive structure of $\IF$ and $\RIF$, which makes it useful for applications such as outlier detection.
We show that this additional rescaling suffices to achieve accuracy comparable to NS in both the random and adversarial drop-set settings.

\begin{theorem}[RIF and DRIF are almost as accurate as NS for adversarial drop-sets]
\label{thm:asymptotic_wc_intro}
    Under the assumptions of Theorem~\ref{thm:asymptotic_intro}, with high probability on the training samples
    \[
    \begin{aligned}
    \max_{T \in \binom{[n]}{k}}\set{\Norm{\hat{\vtheta}^\RIF_T - \hat{\vtheta}_T^\NS}_2}&= \wt{O}\Paren{\frac{kd + k^2}{n^2}} \, \quad    \max_{T \in \binom{[n]}{k}}\set{\Norm{\hat{\vtheta}^\DRIF_T - \hat{\vtheta}_T^\NS}_2}
    = \wt{O}\Paren{\frac{kd + k^2}{n^2}}\,.
    \end{aligned}
    \]
    
\end{theorem}

\begin{theorem}[The second rescaling is needed for large random drop-sets]
\label{thm:asymptotic_drif}
    Under the assumptions of Theorem~\ref{thm:asymptotic_intro}, with very high probability over the training set
    \[
    \begin{aligned}
    \Expect{T \in \binom{[n]}{k}}{\Norm{\hat{\vtheta}_T^\DRIF - \hat{\vtheta}_T^\NS}_2}
    &= \otilde{\frac{kd}{n^2}} \,, \quad
    \Expect{T \in \binom{[n]}{k}}{\Norm{\hat{\vtheta}_T^\RIF - \hat{\vtheta}_T^\DRIF}_2}
    = \wt{\Theta}\Paren{\frac{kd + k^{3/2}d^{1/2}}{n^2}}\,.
    \end{aligned}
    \]

\end{theorem}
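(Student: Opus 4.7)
The proof decomposes along the two claimed equalities.  Both rest on explicit formulas for the differences combined with expansions via the Sherman--Morrison and Woodbury identities, together with standard Gaussian feature concentration of the sort already needed for Theorem~\ref{thm:asymptotic_intro}.

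For the second bound (RIF vs DRIF), the difference is algebraic:
\[
\hat{\vtheta}_T^{\RIF} - \hat{\vtheta}_T^{\DRIF} = -\frac{k}{n-k} \sum_{i \in T} \mH_{\setminus\{i\}}^{-1} \vg_i.
\]
Since $\vg_i \propto \vx_i$ for logistic regression, Sherman--Morrison reduces each summand to $\mH_{\setminus\{i\}}^{-1}\vg_i = \mH^{-1}\vg_i/(1 - a_i)$, where $a_i := \beta_i \vx_i^{\intercal} \mH^{-1} \vx_i$ has typical size $\Theta(d/n)$ by Wishart concentration.  I would split $\sum_{i \in T} \mH_{\setminus\{i\}}^{-1}\vg_i = \mH^{-1}\sum_{i \in T}\vg_i + \sum_{i \in T} \frac{a_i}{1-a_i}\mH^{-1}\vg_i$ and handle each piece separately.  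The first, a mean-zero random subset sum (using $\sum_{i=1}^n \vg_i = \vzero$ from optimality of $\hat{\vtheta}$), has second moment $\Expect{T}{\Norm{\mH^{-1}\sum_{i\in T}\vg_i}_2^2} \asymp kd/n^2$ by direct computation, and hence expected norm $\wt{\Theta}(\sqrt{kd}/n)$ (the matching lower bound coming from Paley--Zygmund applied to the associated quadratic form); multiplying by $k/(n-k)$ yields the $k^{3/2} d^{1/2}/n^2$ term.  The second, ``leverage rescaling'' piece -- after tracking correlations between $a_i$ and the direction of $\mH^{-1}\vg_i$ -- contributes the residual $kd/n^2$ term.

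For the first bound (DRIF vs NS) I write
\[
\hat{\vtheta}_T^{\DRIF} - \hat{\vtheta}_T^{\NS} = \sum_{i \in T}\Paren{\frac{n}{n-k} \mH_{\setminus\{i\}}^{-1} - \mH_{\setminus T}^{-1}} \vg_i
\]
and expand $\mH_{\setminus T}^{-1}$ via Woodbury as $\mH^{-1} + \mH^{-1}\mX_T^{\intercal}\mM\mX_T \mH^{-1}$ with $\mM := (\mB_T^{-1} - \mX_T \mH^{-1}\mX_T^{\intercal})^{-1}$.  For $k, d \ll \sqrt{n}$, the $k \times k$ cross-leverage matrix $\mX_T \mH^{-1}\mX_T^{\intercal}$ is diagonally dominant: diagonal entries concentrate at $\Theta(d/n)$ and off-diagonal entries are $O(\sqrt{d}/n)$ by independence of distinct Gaussian $\vx_i$'s.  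The key structural point is that the $n/(n-k)$ prefactor in DRIF is calibrated precisely to absorb the diagonal-only part of the Woodbury correction; what remains is an off-diagonal cross-leverage contribution to $\mH_{\setminus T}^{-1}\sum_{i\in T}\vg_i$, which I would bound via Hanson--Wright-style concentration of the quadratic form $(\sum_{i\in T}\vg_i)^{\intercal} \mH^{-1}\mX_T^{\intercal} \mM \mX_T \mH^{-1} (\sum_{i\in T}\vg_i)$, yielding expected norm $\wt{O}(kd/n^2)$.

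The main obstacle is making the diagonal-versus-off-diagonal split of $\mM$ sharp enough.  A naive operator-norm bound on $\mM$ combined with the $\wt{\Theta}(\sqrt{kd}/n)$ estimate for $\Norm{\sum_{i\in T}\vg_i}_{\mH^{-1}}$ loses a factor of $k$ and reproduces only the adversarial rate of Theorem~\ref{thm:asymptotic_wc_intro}; the improved $kd/n^2$ rate requires a Neumann-series expansion of $\mM$ about its diagonal, with the leading term cancelling the $n/(n-k)$ DRIF rescaling exactly and the residual off-diagonal terms handled by Gaussian quadratic-form concentration.  All supporting inputs -- spectral closeness of $\mH$ to $n \E{\beta\vx\vx^{\intercal}}$, concentration of individual and cross-leverages, and Wishart concentration of $\mX_T \mH^{-1}\mX_T^{\intercal}$ -- are of the standard Gaussian-feature variety already deployed in the proof of Theorem~\ref{thm:asymptotic_intro}.
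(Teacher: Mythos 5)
Your handling of the RIF--DRIF difference is essentially sound and parallel to the paper's, though the algebraic route differs: the paper exploits the collinearity $\hat{\vtheta}_T^\DRIF - \hat{\vtheta} = \frac{n}{n-k}(\hat{\vtheta}_T^\RIF - \hat{\vtheta})$ directly, so that $\Norm{\hat{\vtheta}_T^\RIF - \hat{\vtheta}_T^\DRIF}_2 = \frac{k}{n}\Norm{\hat{\vtheta}_T^\DRIF - \hat{\vtheta}}_2$, and then bounds the latter by the Newton-step magnitude $\wt{\Theta}(\sqrt{kd}/n)$ (via the auxiliary gradient/Hessian bounds and Lemma~\ref{lem:newton_step_lower_bounds}), which is cleaner than your per-term Sherman--Morrison split and does not require the leverage-piece bookkeeping. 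One caveat: your claim that the leverage-rescaling piece $\frac{k}{n-k}\sum_{i\in T}\frac{a_i}{1-a_i}\mH^{-1}\vg_i$ contributes the residual $kd/n^2$ term does not check out on a back-of-envelope computation -- with $a_i\approx d/n$ and $\Norm{\mH^{-1}\vg^T}_2\approx\sqrt{kd}/n$ that piece is $\wt{O}(k^{3/2}d^{3/2}/n^3)=o(kd/n^2)$ under $n\gg k^2,d^2$, so it is not the source of the $kd/n^2$ contribution. This does not undermine the dominant rate, but the decomposition does not localize the error terms the way you describe.

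The DRIF--NS half is where the real gap lies, and your route diverges substantially from the paper's. The paper factors
\[
\hat{\vtheta}_T^\DRIF - \hat{\vtheta}_T^\NS = \frac{n}{n-k}\Paren{\mH^{\setminus T}}^{-1}\sum_{i\in T}\Paren{\mH^{T\setminus\{i\}} - p\,\mH^{\setminus\{i\}}}\Paren{\mH^{\setminus\{i\}}}^{-1}\vg_i,
\]
and the entire proof hinges on the fact that each $j$-contribution carries a factor $(\ind_{j\in T}-p)$ with mean zero, while the $i$-contributions sum a random subset of gradients that globally sum to zero. To exploit this double-mean-zero structure, the paper Poissonizes $T$ (so the drop indicators become independent), decouples the double sum into an $S$/$\ol S$ bipartition, and applies vector Bernstein. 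Your proposed route -- Woodbury-expand $\mH_{\setminus T}^{-1}$ and run a Neumann series around the diagonal of $\mM=(\mB_T^{-1}-\mX_T\mH^{-1}\mX_T^\intercal)^{-1}$ -- has two concrete problems. First, the claim that ``the $n/(n-k)$ prefactor [...] is calibrated precisely to absorb the diagonal-only part of the Woodbury correction'' is imprecise: the $i=j$ diagonal Woodbury terms applied to $\sum_j\vg_j$ cancel against the RIF Sherman--Morrison correction (so they explain RIF vs NS, not DRIF vs RIF), whereas the $n/(n-k)$ factor is tuned to cancel the \emph{mean} of the $i\neq j$ cross-leverage terms, which is $\approx p\,\mH^{-1}\sum_j\vg_j$. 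Conflating these misses the point that makes DRIF better than RIF. Second, the quantity you propose to control via Hanson--Wright, $(\sum_{i\in T}\vg_i)^\intercal\mH^{-1}\mX_T^\intercal\mM\mX_T\mH^{-1}(\sum_{i\in T}\vg_i)$, is a scalar quadratic form and not the vector norm $\Norm{\mH^{-1}\mX_T^\intercal\mM\mX_T\mH^{-1}\sum_j\vg_j}_2$ that actually appears; bounding one does not bound the other. You correctly flag that a naive operator-norm bound on $\mM$ is too loose, but the proposed refinement (Neumann about the diagonal) is not developed far enough to show it captures the cancellations the paper obtains with the $(\ind_{j\in T}-p)$ factorization, Poissonization, decoupling, and vector Bernstein.
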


Figure~\ref{fig:empirical_validation} gives an empirical validation of these scalings on synthetic and real data.

\begin{figure*}[t]
\centering
\begin{minipage}{0.48\textwidth}
    \centering
    \includegraphics[width=\linewidth]{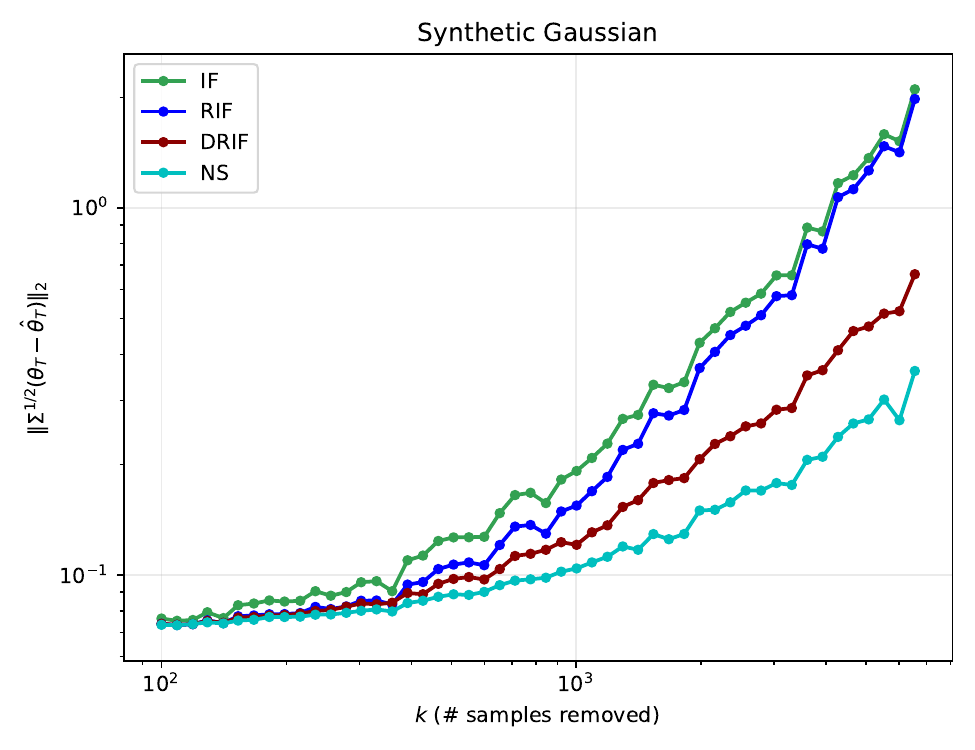}
    \vspace{-0.5em}
    
    {\small (a) Synthetic Gaussian logistic regression.}
\end{minipage}
\hfill
\begin{minipage}{0.48\textwidth}
    \centering
    \includegraphics[width=\linewidth]{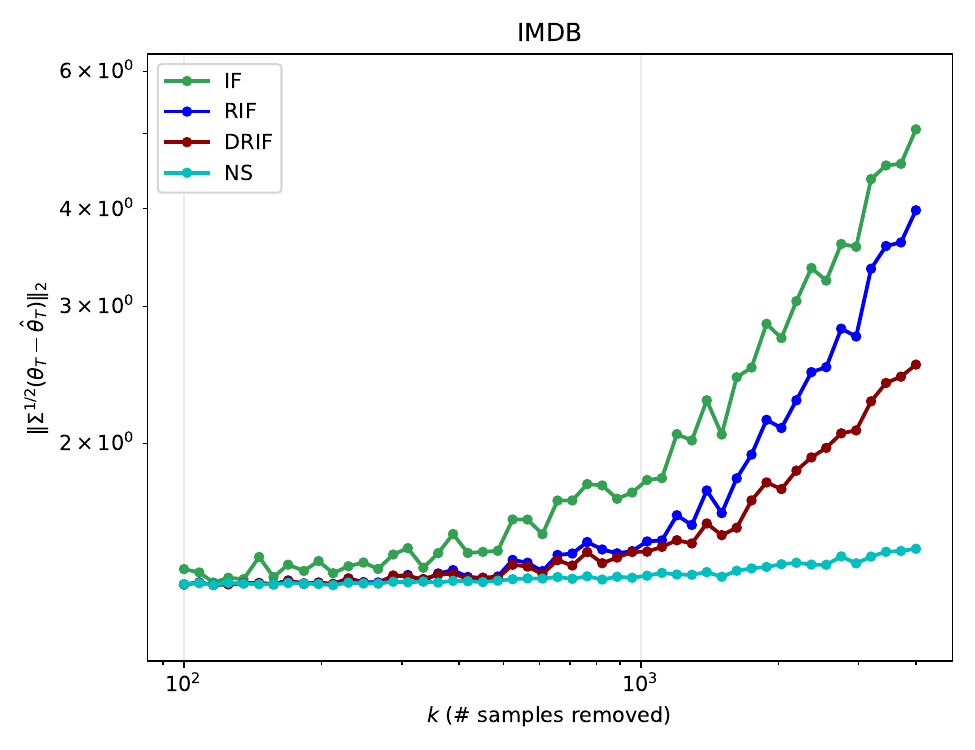}
    \vspace{-0.5em}
    
    {\small (b) IMDB sentiment analysis with frozen BERT embeddings.}
\end{minipage}
\vspace{-0.5em}
\caption{
Empirical validation of IF, NS, RIF, and DRIF. In both panels, the $x$-axis is the number $k$ of uniformly random training points removed, and the $y$-axis is error to full retraining in the Mahalanobis norm induced by the unnormalized second moment matrix $\mSigma=\mX^\top\mX$. \textbf{Left:} synthetic logistic regression with $n=2^{16}$ Gaussian covariates in dimension $d=512$ and $\vtheta^\star=e_1$. \textbf{Right:} logistic regression on frozen BERT embeddings of the binary IMDB movie-review sentiment dataset, generated as in~\cite{rubinstein2025rescaled}. In both experiments, NS and DRIF are substantially more accurate than IF and RIF, especially for larger removals.
}
\label{fig:empirical_validation}
\end{figure*}

\section{Notation}

\paragraph{General Notations}
We use lower-case Greek and Latin letters ($a, b, c, \alpha, \beta, \gamma$) to denote scalars and indices, bold lower-case letters ($\va, \vb, \vc, \valpha, \vbeta, \vgamma$) to denote vectors and bold upper-case letters ($\mA, \mB, \mC, \mGamma, \mPi$) to denote matrices and higher order tensors.

Moreover, when working with higher order tensors, we often want to view their product with the tensor of several vectors. We will denote this by $\mT(\vv_1, \vv_2, \ldots, \vv_l)$, and sometimes use $\cdot$ to denote existing dimensions that are not fixed.

\paragraph{Problem Settings}
Let $L:\R^d \rightarrow \R$ be a smooth loss function with gradient $\vg_\vtheta = \nabla L \vert_{\vtheta}$ and Hessian $\mH_\vtheta = \nabla^{\otimes 2} L \vert_{\vtheta}$, and let $\vtheta_0 \in \R^d$ be some initial point.

\paragraph{Asymptotics}
We use the notation $\widetilde{O}(\cdot)$, $\widetilde{\Theta}(\cdot)$, and $\widetilde{\Omega}(\cdot)$ to suppress polylogarithmic factors in $n$; that is, for a function $f(n)$, $\otilde{f(n)}$ denotes $O(f(n)\polylog(n))$.

\section{Proof of Theorem~\ref{thm:main}}
\label{sec:proofs}

\begin{figure}[!t]
    \centering
    \includegraphics[width=0.6\linewidth]{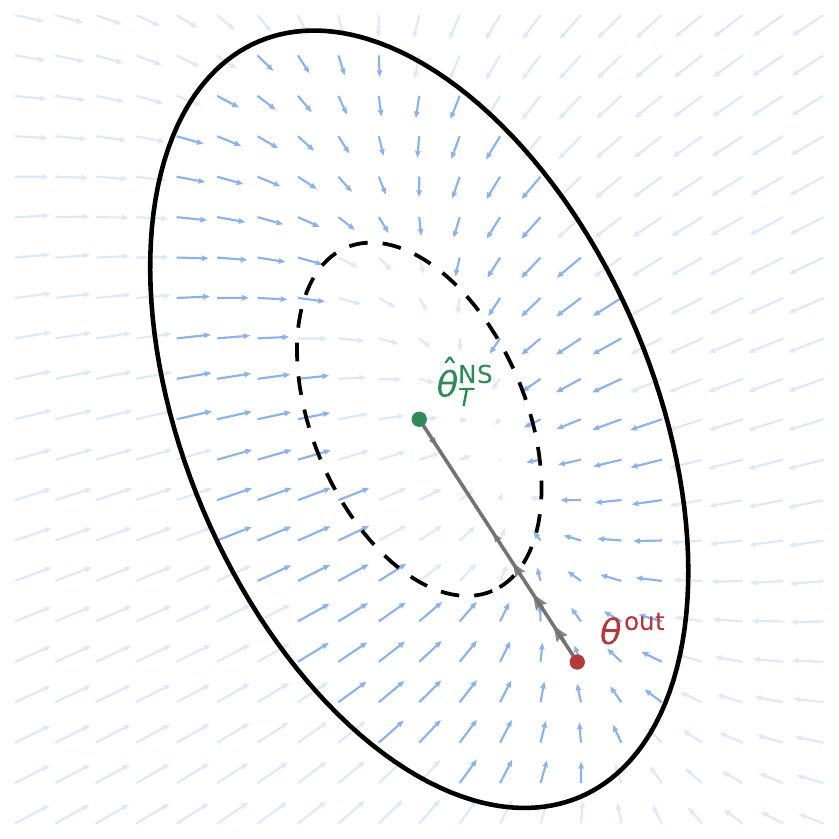}
    \caption{
    Diagrammatic illustration of the proof of Lemma~\ref{lem:local_minimum_exists}. 
    The outer ellipsoid represents the ball $\mathcal{B}$ centered at $\hat{\theta}^{\mathrm{NS}}_T$, 
    and the dashed inner ellipsoid represents the smaller radius $C_{\mathrm{op}}C_h$. 
    The point $\theta^{\mathrm{out}}$ lies in the shell between these two ellipsoids. 
    The blue arrows depict the gradient-descent field, with arrow lengths reflecting the local gradient magnitude. 
    The gray arrows along the segment from $\hat{\theta}^{\mathrm{NS}}_T$ to $\theta^{\mathrm{out}}$ show the projection 
    of that field onto the segment direction along the entire segment, emphasizing the key step of the proof: 
    the descent direction points inward along this radial direction, so a minimizer over $\overline{\mathcal{B}}$ 
    cannot lie in the shell and must instead lie in the interior.
    }
    \label{fig:local-minimum-shell}
\end{figure}

In this section, we prove Theorem~\ref{thm:main} which bounds the approximation error of NS data attributions.
Recall that our assumptions for Theorem~\ref{thm:main} were that the loss is strongly convex in a region $\mcB$ surrounding the NS data attribution $\hat{\vtheta}_T^\NS$ and that the Hessian did not change rapidly along the NS path $\brac{\hat{\vtheta}, \hat{\vtheta}_T^\NS}$, and our goal is to bound the distance between $\hat{\vtheta}_T^\NS$ and a local minimizer of $L_T$.

Our proof of Theorem~\ref{thm:main} will combine ideas from existing analyses of NS data attributions with ideas from self-concordance theory.
Our proof strategy will be to first bound the norm of the gradient at the end of the Newton step (Lemma~\ref{lem:bounded_gradient}), then use this to show $L_T$ has a local minimum in $\mcB$ and that this local minimum satisfies the claim of Theorem~\ref{thm:main} (Lemma~\ref{lem:local_minimum_exists}).

\subsection{Bounded Gradient at \texorpdfstring{$\hat{\vtheta}_T^\NS$}{theta NS}}
\begin{lemma}[Bounded Gradient at $\hat{\vtheta}_T^\NS$]
\label{lem:bounded_gradient}
    The loss gradient at $\hat{\vtheta}_T^\NS$ is bounded by 
    \[
    \Norm{\vg_{\vtheta = \hat{\vtheta}_T^\NS}}_{\Sigma^{-1}} \leq C_h\,.
    \]
\end{lemma}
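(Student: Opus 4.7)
The plan is to express the gradient at $\hat{\vtheta}_T^\NS$ as a telescoping expression that isolates the contribution of how much the Hessian changes along the Newton step, and then appeal directly to Assumption~\ref{ass:ch}. Concretely, I would start from the fundamental theorem of calculus applied to $\vtheta \mapsto \vg_\vtheta$ along the segment from $\hat{\vtheta}$ to $\hat{\vtheta}_T^\NS$. Writing $\vtheta_t = \hat{\vtheta} + t\Paren{\hat{\vtheta}_T^\NS - \hat{\vtheta}} = \hat{\vtheta} - t \mH^{-1} \vg$, this gives
\[
\vg_{\hat{\vtheta}_T^\NS} = \vg + \int_0^1 \mH_{\vtheta_t} \Paren{\hat{\vtheta}_T^\NS - \hat{\vtheta}} \, \diff t = \vg - \int_0^1 \mH_{\vtheta_t} \mH^{-1} \vg \, \diff t.
\]

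Next I would use the defining identity of the Newton step, namely $\mH \mH^{-1} \vg = \vg$, to add and subtract $\mH \mH^{-1} \vg$ inside the integrand. Since $\int_0^1 \mH \mH^{-1} \vg \, \diff t = \vg$, the constant term $\vg$ cancels and we are left with
\[
\vg_{\hat{\vtheta}_T^\NS} = -\int_0^1 \Paren{\mH_{\vtheta_t} - \mH} \mH^{-1} \vg \, \diff t.
\]
This is the key reduction: the gradient at the Newton iterate depends only on how the Hessian varies along the step relative to its value at $\hat{\vtheta}$.

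To finish, I would take the $\mSigma^{-1}$ (semi)norm on both sides and push it inside the integral via the standard Minkowski-type argument $\Norm{\int \vu_t \, \diff t}_{\mSigma^{-1}} \leq \int \Norm{\vu_t}_{\mSigma^{-1}} \, \diff t$. Each integrand $\Paren{\mH_{\vtheta_t} - \mH} \mH^{-1} \vg$ is exactly the quantity bounded by Assumption~\ref{ass:ch}, since $\vtheta_t$ lies on the Newton-step segment $\brac{\hat{\vtheta}, \hat{\vtheta}_T^\NS}$ parameterized by $t \in \brac{0,1}$. Therefore the integrand is pointwise at most $C_h$, and integrating over $t \in \brac{0,1}$ yields the desired bound $\Norm{\vg_{\hat{\vtheta}_T^\NS}}_{\mSigma^{-1}} \leq C_h$.

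There is no substantive obstacle here: the lemma is essentially a one-line consequence of the fundamental theorem of calculus together with the cancellation $\vg - \mH \mH^{-1} \vg = 0$ that defines the Newton step, and Assumption~\ref{ass:ch} was clearly designed to make the resulting integrand the right quantity to bound. The only subtlety to be careful about is that this step uses \emph{only} the Lipschitz-type assumption along the Newton path --- Assumption~\ref{ass:cop} and Condition~\ref{cond:circularity} play no role yet and will be needed later (in subsequent lemmas) to convert this gradient bound into a parameter-space bound on $\Norm{\hat{\vtheta}_T - \hat{\vtheta}_T^\NS}_\mSigma$.
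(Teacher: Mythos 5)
Your proof is correct and follows essentially the same route as the paper: apply the fundamental theorem of calculus to the gradient along the Newton-step segment, use the Newton identity $\mH\mH^{-1}\vg = \vg$ to cancel the leading term, and then bound the remaining integrand pointwise via Assumption~\ref{ass:ch} together with the triangle inequality for integrals. The only cosmetic difference is a sign (your expression carries an explicit minus sign the paper's display drops), which is immaterial once norms are taken.
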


Lemma~\ref{lem:bounded_gradient} and its proof are similar to analogous bounds in many of the previous analyses of NS data attribution:

\begin{proof}[Proof of Lemma~\ref{lem:bounded_gradient}]
From the Fundamental Theorem of Calculus (FTC), we have
\begin{align*}
\vg_{\hat{\vtheta}_T^\NS}
&= \vg_{\hat{\vtheta}} + \int_{\hat{\vtheta}}^{\hat{\vtheta}_T^\NS} \mH_{\vtheta} \diff \vtheta = \int_0^1
\Paren{
    \mH_{\vtheta = t\hat{\vtheta}_T^\NS + (1-t)\hat{\vtheta}}
    - \mH_{\hat{\vtheta}}
}
\Paren{\hat{\vtheta}_T^\NS - \hat{\vtheta}}
\diff t 
+ \underbrace{\vg_{\hat{\vtheta}} +
    \mH_{\hat{\vtheta}}
    \Paren{\hat{\vtheta}_T^\NS - \hat{\vtheta}}
}_{=0} \\
&= \int_0^1
\Paren{
    \mH_{\vtheta = t\hat{\vtheta}_T^\NS + (1-t)\hat{\vtheta}}
    - \mH_{\hat{\vtheta}}
}
\mH_{\hat{\vtheta}}^{-1}
\vg
\diff t .
\end{align*}

From Assumption~\ref{ass:ch}, the integrand on the right hand side of this equation has bounded $\mSigma^{-1}$ norm, so from the triangle inequality, we have
\begin{align*}
     \Norm{\int_0^1 \Paren{\mH_{t\hat{\vtheta}_T^\NS + (1-t) \hat{\vtheta}} - \mH_{\hat{\vtheta}}} \mH_{\hat{\vtheta}}^{-1} \vg \diff t }_{\mSigma^{-1}}\leq C_h\,.
\end{align*}

\end{proof}

\subsection{\texorpdfstring{$L_T$ has a Local Minimum $\vtheta_T^\loc \in \mcB$}{L T has a Local Minimum in B}}

\begin{lemma}
\label{lem:local_minimum_exists}
Under the assumptions of Theorem~\ref{thm:main}, there exists a point $\vtheta_T^\loc \in \mcB$ such that
\begin{itemize}
    \item $\vtheta_T^\loc$ is a local minimum of $L_T$, and
    \item $\Norm{\hat{\vtheta}_T^\NS - \vtheta_T^\loc}_{\mSigma} \leq C_h \Cop$.
\end{itemize}
\end{lemma}

Lemma~\ref{lem:local_minimum_exists} immediately implies Theorem~\ref{thm:main}.

\begin{proof}[Proof of Lemma~\ref{lem:local_minimum_exists}]

Consider the image of $L_T$ on the closed ball
\[
I = \image{L_T(\ol{\mcB})} \subseteq \R\,.
\]

From our assumption that $L_T$ is smooth (and in particular continuous) and the fact that $\ol{\mcB}$ is a compact set (it is bounded and closed in $\R^d$), we know that $I$ must be a closed interval.
In particular, $I$ has a minimum, implying that there exists at least one
\[
\vtheta^\loc_T \in \argmin_{\vtheta \in \mcB} \set{L_T(\vtheta)}
\]
that minimizes $L_T$ on that compact set.

If we can show that any $\vtheta^\loc_T$ which minimizes the loss in $\ol{\mcB}$ lies in the interior $\vtheta^\loc_T \notin \partial \mcB$, that would imply that $\vtheta^\loc_T$ is a local minimum of $L_T$, yielding our first claim (this is because the interior of $\mcB$ is open so by definition it must contain a sufficiently small neighborhood of $\vtheta^\loc_T$).
To conclude the proof, it will therefore suffice to show that any $\vtheta$ which minimizes $L_T$ on $\ol{\mcB}$ must lie within the stricter ball $\Norm{\vtheta - \hat{\vtheta}_T^\NS}_{\mSigma} \leq C_h \Cop < r$.



Let $\vtheta^\out$ be any point in the spherical shell $\Cop C_h < \Norm{\vtheta^\out - \hat{\vtheta}_T^\NS}_{\mSigma} \leq r$, and let $\vg^\out$ be the gradient of the loss at $\vtheta^\out$.
Applying FTC, we have
\[
\vg^\out - \vg_{\hat{\vtheta}_T^\NS} = \int_{\hat{\vtheta}_T^\NS}^{\vtheta^\out} \mH_{\vtheta} \diff \vtheta\,.
\]

Taking an inner product with the parameter shift $\vv \coloneqq \vtheta_T^\out - \hat{\vtheta}_T^\NS$, we have
\[
\iprod{\vv, \vg^\out - \vg_{\hat{\vtheta}_T^\NS}} = \int_{0}^{1} \vv^\intercal \mH_{\vtheta=(1-t) \hat{\vtheta}_T^\NS + t \vtheta^\out} \vv \diff t \,.
\]

From our strong convexity assumption (Assumption~\ref{ass:cop}) and from our assumption that $\vtheta_T^\out$ lives on the shell, we know that this integrand is bounded from below
\begin{equation}
    \label{eq:strong_convexity_ftc}
    \begin{aligned}
        \vv^\intercal \mH_{\vtheta} \vv
        &= \vv^\intercal \mSigma^{1/2} \Paren{\mSigma^{-1/2}
        \mH_{\vtheta} \mSigma^{-1/2}} \mSigma^{1/2} \vv  \geq \Cop^{-1} \vv^\intercal \mSigma \vv = \Cop^{-1} \Norm{\vv}_{\mSigma}^2
        > C_h \Norm{\vv}_{\mSigma}\,.
    \end{aligned}
\end{equation}

On the other hand, from Lemma~\ref{lem:bounded_gradient} and the Cauchy-Schwarz inequality, we know that
\begin{equation}
    \label{eq:bounded_vg}
    \begin{aligned}
        \iprod{\vv, \vg_{\hat{\vtheta}_T^\NS}}
        &= \iprod{\mSigma^{1/2} \vv,
        \mSigma^{-1/2} \vg_{\hat{\vtheta}_T^\NS}}  \geq - \Norm{\mSigma^{-1/2}\vg_{\hat{\vtheta}_T^\NS}}_2
        \Norm{\vv}_{\mSigma}
        \geq - C_h \Norm{\vv}_{\mSigma}
    \end{aligned}
\end{equation}

Combining equations~\eqref{eq:strong_convexity_ftc} and~\eqref{eq:bounded_vg} yields the lower bound
\[
\iprod{\vv, \vg^\out} \geq \Cop^{-1} \Norm{\vv}_{\mSigma}^2 - C_h \Norm{\vv}_{\mSigma} > 0\,.
\]

Therefore, setting
\[
\vtheta^\prime \coloneqq \hat{\vtheta}^\NS_T + \frac{\Cop C_h}{\Norm{\vv}_{\mSigma}} \vv\,,
\]
and applying FTC once more, we see that
\[
L_T(\vtheta^\prime) - L_T(\vtheta^\out) = -\int_{\frac{\Cop C_h}{\Norm{\vv}_{\mSigma}}}^{1}
\iprod{\vv,\vg_{\hat{\vtheta}_T^\NS + t\vv}} \diff t < 0\,.
\]

Therefore $\vtheta^\out$ cannot be the minimum of the loss $L_T$ on $\ol{\mcB}$ for any $\vtheta^\out$ in this shell, implying that $\vtheta^\loc_T$ must lie in the smaller ball $\Norm{\vtheta^\loc_T - \hat{\vtheta}^\NS_T}_\mSigma \leq \Cop C_h$ so it must be in the interior of $\mcB$, implying that it is indeed a local minimum of $L_T$ and yielding the desired result.

\end{proof}

\section{Discussion and Limitations}

Our work provides the first (to our knowledge) analysis of NS and IF data attribution methods which explains quantitatively the significant accuracy advantages of NS, in the context of a simple learning problem (such as logistic regression with Gaussian covariates). 
Under reasonable assumptions (see Theorem~\ref{thm:asymptotic_intro}), this analysis even shows almost-matching upper and lower bounds on the scaling rate of the errors of these approximations with respect to $k,n,$ and $d$.
We do so via the more-general Theorem~\ref{thm:main}, which shows how to analyze the NS approximation without appeal to non-local strong convexity assumptions.
Together, these results place IF and NS on firmer theoretical foundations.

\paragraph{Future Directions -- Machine Unlearning and Beyond}
Quantitative accuracy bounds for IF and NS are of more than theoretical interest.
One important application of IF and NS data attribution is machine unlearning, where the goal is to quickly ``remove'' samples from an already-learned model, e.g. to protect copyrighted material or respect ``right to be forgotten'' user requests \cite{Sekhari2021,neel2021descent,Suriyakumar2022}.
The dominant technique used in machine unlearning methods with provable guarantees is to first use a data attribution method to approximate the data-dropped model, then add noise to the resulting estimate to obtain a differential-privacy-like indistinguishability guarantee.
Crucially, the magnitude of this noise -- and hence the utility of the resulting model -- scales with the best available bound on the accuracy of the data attribution method used.
The $1/\lambda^3$ scaling of the best bounds prior to our work (where $\lambda$ is a global $\ell_2$ regularization parameter) is a significant bottleneck in machine unlearning \cite{Sekhari2021}.
So, we hope that the better bounds on accuracy of data attribution methods we derive here will lead to much better unlearning algorithms.

\paragraph{Limitations}
Our work is purely theoretical, aiming to explain a phenomenon observed in real-world data \cite{Koh2019,rubinstein2025rescaled}.
We focus on relatively simple settings -- convex empirical risk minimization, logistic regression, Gaussian data -- to sharpen the focus on the core phenomenon we study.
Although data attribution for convex models sometimes forms a core component of data attribution for neural nets, for non-convex losses, Theorem~\ref{thm:main} is an existence result: it shows that a local minimum lies near the first Newton step, but it does not guarantee that a particular optimization algorithm will find this local minimum, and our theorems also do not allow for other ``bells and whistles'' such as non-smooth regularizers \cite{Suriyakumar2022}.
Moreover, our characterization of NS and IF error scalings for Gaussian logistic regression problems in Theorem~\ref{thm:asymptotic_intro} applies only in the large-$n$ regime, requiring $n \gg d^2, k^2$; characterizing the scaling rates allowing $n \geq d$ is an interesting direction for future work.
Finally, our analysis focuses on the norms of the estimation errors, even though we may often also be interested in the direction of these errors.

\section*{Acknowledgments}
We thank Tamara Broderick, David R. Burt, Tin D. Nguyen, Yunyi Shen, and Vishwak Srinivasan for conversations about data dropping approximations, Sam Park for feedback on the presentation of this work, and Kevin Lucca for suggesting a decoupling technique to simplify some proofs in this work.

\bibliographystyle{alpha}
\bibliography{refs}

\appendix

\section{Proof Theorem~\ref{thm:existing}}

\label{subsec:proof_of_existing}

A common approach to proving Theorem~\ref{thm:existing}~\cite{Koh2019} is to bound the size of the loss gradient at the end of this first Newton step.
Let 
\[
\vg = \sum_{i \notin T} \nabla \ell_i \vert_{\vtheta = \hat{\vtheta}} = -\sum_{i \in T} \nabla \ell_i \vert_{\vtheta = \hat{\vtheta}}
\]
be the gradient of the loss at the starting point $\hat{\vtheta}$, and let 
\[
\vg^\NS = \sum_{i \notin T} \nabla \ell_i \vert_{\vtheta = \hat{\vtheta}_T^\NS}
\]
be the gradient of the loss at the end of the first Newton step.
By design of the Newton step, as long as the Hessian does not change too much, we expect $\vg^\NS$ to be small.
More concretely, we have
\[
\vg^\NS = \vg + \int_{\hat{\vtheta}}^{\hat{\vtheta}_T^\NS} \mH_{\vtheta}  \diff \vtheta = \underbrace{\vg + \int_{\hat{\vtheta}}^{\hat{\vtheta}_T^\NS} \mH_{\hat{\vtheta}}  \diff \vtheta}_{=\vzero} + \int_{\hat{\vtheta}}^{\hat{\vtheta}_T^\NS} \Paren{\mH_{\vtheta} - \mH_{\hat{\vtheta}}} \diff \vtheta\,,
\]
where the first term cancels precisely by design of the Newton step, and the second term is bounded by
\begin{align*}
    \Norm{\int_{\hat{\vtheta}}^{\hat{\vtheta}_T^\NS} \Paren{\mH_{\vtheta} - \mH_{\hat{\vtheta}}} \diff \vtheta}_2 &= \Norm{\int_{0}^{1} \Paren{\mH_{\vtheta=(1-t) \hat{\vtheta} + t \hat{\vtheta}_T^\NS} - \mH_{\hat{\vtheta}}} \Paren{\hat{\vtheta} - \hat{\vtheta}^\NS_T} \diff t}_2 \leq \; \textup{(triangle inequality)} \\
    &\leq \int_{0}^{1} \OpNorm{\Paren{\mH_{\vtheta=(1-t) \hat{\vtheta} + t \hat{\vtheta}_T^\NS} - \mH_{\hat{\vtheta}}}} \Norm{{\hat{\vtheta} - \hat{\vtheta}^\NS_T}}_2 \diff t \leq \; \textup{(Lipschitz assumption)}\\
    &\leq \int_0^1 \Clip t \Norm{\hat{\vtheta} - \hat{\vtheta}^\NS_T}_2^2 \diff t = \frac{1}{2} \Clip \Norm{\hat{\vtheta} - \hat{\vtheta}^\NS_T}_2^2 = \frac{1}{2} \Clip \Norm{\mH^{-1} \vg}_2^2
\end{align*}

Given a bound on the norm of the gradient at $\hat{\vtheta}_T^\NS$, we can deduce that $\hat{\vtheta}_T^\NS$ is close to the optimum by utilizing our global strong convexity assumption:
\begin{equation}
\label{eq:previous_analyses}
    \Norm{\hat{\vtheta}_T - \hat{\vtheta}_T^\NS}_2 \leq \max_{\vtheta \in \Omega_\vtheta} \set{\OpNorm{\mH^{-1}_\vtheta}} \Norm{\vg^\NS} \leq \frac{1}{2} \Clip \Cop \Norm{\mH^{-1} \vg}_2^2 \leq \frac{1}{2} \Clip \Cop^3 \Norm{\vg}_2^2 \leq \frac{\Clip \Cop^3 k^2 \Cg^2}{2}
\end{equation}
where $\Cg \coloneqq \max_{i \in [n]} \set{\Norm{\nabla \ell_i \vert_{\hat{\vtheta}}}_2}$ and the last two steps utilized the fact that $\OpNorm{\mH^{-1}} = \OpNorm{\mH_{\vtheta = \hat{\vtheta}^{-1}}} \leq \Cop$ and the triangle inequality.

\section{Asymptotic Analysis - Setting and Theorem Statements}
\label{app:asymptotic_assumptions}

Existing analyses of NS and IF are often parametrized as a function of complex quantities like $\Cop$ and $C_h$ above, making it difficult to compare their results.
To get a sense for the asymptotic behavior of Theorem~\ref{thm:main} and how it compares to previous results, we analyze their respective asymptotic behavior for well-behaved logistic regressions.

In particular, we analyze the existing bounds on the NS approximation error (Theorem~\ref{thm:existing}), our bound (Theorem~\ref{thm:main}) and the distance between IF and NS, for $L_2$ regularized logistic regressions with loss
\[
L(\vtheta; \mX, \vy) = \sum_{i \in [n]} \ell(\iprod{\vtheta, \vx_i}, y_i) + \frac{n \lambda}{2} \Norm{\vtheta}_2^2\,,
\]
where $\ell$ is the logistic loss.
We perform this analysis for the case of ``under-parametrized agnostic logistic regression with normally distributed features''.

More concretely, we assume that the samples are drawn i.i.d from some underlying distribution $(\vx, y) \sim \mcD$ over $\R^d \times \bits$ such that:

\begin{itemize}
    \item The problem is under-parametrized and the number of samples being removed is at relatively small $k, d \ll \sqrt{n}$.
    \item The features are iid normally distributed $\vx_i \sim \mcN\Paren{\vzero, \mI}$.
    \item The optimal model for the distribution,
    \[
    \vtheta^\star \defeq \argmin_{\vtheta \in \Omega_\vtheta}\set{\Expect{(\vx, y) \sim \mcD}{\ell(y, \iprod{\vtheta, \vx})} + \frac{\lambda}{2} \Norm{\vtheta}_2^2}\,,
    \]
    has norm $\Norm{\vtheta^\star}_2 = \Theta(1)$.
    We do not assume that the labels were generated by this model.
\end{itemize}

In the unregularized setting $\lambda=0$ used in the proofs of Theorems~\ref{thm:asymptotic_if},~\ref{thm:asymptotic_main} and~\ref{thm:asymptotic_drif_app}, first-order optimality gives
\[
\Expect{(\vx,y)\sim\mcD}{\Paren{\sigma(\iprod{\vtheta^\star,\vx})-y}\vx}=\vzero\,.
\]

These assumptions are meant to provide a toy model where we can analyze the behavior of Theorem~\ref{thm:main}, including Assumption~\ref{cond:circularity}, and read off how the resulting bounds scale with $n,k,$ and $d$.

Under the assumptions above, we can derive nearly matching upper and lower bounds for the existing bounds on NS accuracy and the distance between IF and NS estimates.
Throughout this asymptotic analysis, we will say that an event happens \emph{with very high probability} (wvhp) if it happens with all but negligible probability $n^{-\omega(1)}$, and we will use $\wt{\cdot}$ to suppress $\poly\Paren{\log(n), \log(d), \log(k)}$ factors.

\begin{theorem}[Asymptotic Analysis of Existing Bounds]
    \label{thm:asymptotic_existing}
    Under the assumptions above, with very high probability over the randomness of $\mX, \vy$,
    \[
    \Expect{T \in \binom{[n]}{k}}{\textup{Existing Bounds}} = \Omega\Paren{\frac{k^2 d}{n^2 \lambda^3}}\,.
    \]
\end{theorem}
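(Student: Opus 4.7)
}
The strategy is to exhibit explicit lower bounds for each of the three constants $\Cop$, $\Clip$, $\Cg$ appearing in the existing bound $O(\Clip \Cop^3 k^2 \Cg^2)$, and show that their product is $\Omega(k^2 d / (n^2 \lambda^3))$ with very high probability over the data. Since the bounds I derive hold uniformly in $T$ (only using that at least $n - k = \Theta(n)$ samples remain), the expectation over $T$ then inherits the same lower bound. I would organize the argument around these three factors in turn.

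\textbf{Lower bounding $\Cop$.} The leave-$T$-out Hessian is $\mH_\vtheta = \sum_{i \notin T} \beta_i(\vtheta^\intercal \vx_i) \vx_i \vx_i^\intercal + n\lambda \mI$, where $\beta_i(z) = \sigma(z)(1-\sigma(z))$ decays exponentially as $|z|\to\infty$. Fixing any direction $\vu$ with $\vu^\intercal \vx_i \neq 0$ for all $i$ (which holds almost surely), taking $\vtheta = R\vu$ and $R\to\infty$ causes the logistic contribution to vanish, so $\mH_{R\vu} \to n\lambda\mI$ and thus $\OpNorm{\mH_{R\vu}^{-1}} \to 1/(n\lambda)$. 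This forces $\Cop \geq 1/(n\lambda)$.

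\textbf{Lower bounding $\Clip$.} I would compare Hessians at two concrete points on a constant-length path, say $\vtheta_1 = \vzero$ and $\vtheta_2 = C \ve_1$ for a suitable constant $C$. Examining the $(1,1)$ entry,
\[
(\mH_{\vzero} - \mH_{C\ve_1})_{11} = \sum_{i \notin T} \Paren{\tfrac{1}{4} - \beta(C x_{i1})} x_{i1}^2,
\]
which is a sum of non-negative terms since $\beta \leq 1/4$. By Gaussian concentration, a constant fraction of samples satisfy $|x_{i1}| \geq 1$, and for these samples the coefficient $1/4 - \beta(C x_{i1})$ is bounded below by some constant (for $C$ large enough). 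Hence whp the $(1,1)$ entry is $\Omega(n)$, giving $\OpNorm{\mH_{\vzero} - \mH_{C\ve_1}} = \Omega(n)$ and, dividing by the path length $C$, $\Clip = \Omega(n)$.

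\textbf{Lower bounding $\Cg$.} Here $\Cg = \max_i \Norm{\sigma(-y_i \hat{\vtheta}^\intercal \vx_i) \vx_i}_2$. Standard ERM concentration with $n \gg d$ gives $\Norm{\hat\vtheta}_2 = \Theta(1)$ whp, so for each $i$, $\hat\vtheta^\intercal \vx_i$ is essentially Gaussian with $\Theta(1)$ variance. Thus a constant fraction of samples simultaneously satisfy $|\hat\vtheta^\intercal \vx_i| = O(1)$ (giving $\sigma(-y_i \hat\vtheta^\intercal \vx_i) = \Theta(1)$) and $\Norm{\vx_i}_2 = \Theta(\sqrt d)$; for such an $i$, $\Norm{\vg_i}_2 = \Omega(\sqrt d)$, so $\Cg^2 = \Omega(d)$. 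Combining the three bounds and observing none depends on $T$ (only on whether $T$ leaves at least $\Theta(n)$ samples, which always holds for $k \ll \sqrt n$),
\[
\Expect{T}{\Clip \Cop^3 k^2 \Cg^2} \;\geq\; \Omega\Paren{n \cdot \tfrac{1}{n^3 \lambda^3} \cdot k^2 \cdot d} \;=\; \Omega\Paren{\tfrac{k^2 d}{n^2 \lambda^3}}.
\]

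The main obstacle I anticipate is the $\Cg$ step: the gradient norms $\Norm{\vg_i}_2$ depend on $\hat\vtheta$, which itself depends on all samples $\vx_i$. Handling this cleanly likely requires either a standard leave-one-out decoupling (to write $\hat\vtheta = \hat\vtheta^{(-i)} + o(1)$ and condition on $\hat\vtheta^{(-i)}$) or invoking $\|\hat\vtheta - \vtheta^\star\| = o(1)$ whp so one can substitute $\vtheta^\star$ (which is data-independent) and transfer the Gaussian concentration. The $\Cop$ step is essentially tight and trivial, while the $\Clip$ path-construction argument is routine once one commits to a concrete pair of points.
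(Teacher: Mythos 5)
Your proposal is correct and follows the same high-level strategy as the paper: lower-bound each of $\Clip$, $\Cop$, and $\Cg$ separately (noting these bounds hold uniformly in $T$), then multiply. Your $\Cop$ argument (send $\|\vtheta\| \to \infty$ so the logistic contribution vanishes and only $n\lambda\mI$ survives) and your $\Cg$ argument (a constant fraction of samples has both $|\hat{\vtheta}^\intercal \vx_i| = O(1)$ and $\|\vx_i\|_2 = \Theta(\sqrt{d})$) are essentially what the paper does via Lemma~\ref{lem:existing_params}, including the same decoupling concern you flag, which the paper handles by passing through $\vtheta^\star$ as you suggest. The one substantive difference is in the $\Clip$ step: the paper appeals to the third-derivative concentration result (Lemma~\ref{lem:third-order-concentration}) together with $\Expect{}{\mT_\vtheta} \simeq n\,\vtheta^{\otimes 3}$ to get $\Clip = \Theta(n)$, whereas you give a more elementary direct comparison of the $(1,1)$ Hessian entry at $\vzero$ and $C\ve_1$. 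Your route is self-contained and avoids the machinery of Lemma~\ref{lem:third-order-concentration}, at the cost of yielding only the lower bound $\Clip = \Omega(n)$ rather than the two-sided $\Theta(n)$; since the theorem claims only $\Omega$, that is enough, and your version arguably makes the lower-bound nature of the argument cleaner.
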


We prove Theorem~\ref{thm:asymptotic_existing} in Section~\ref{app:asymptotic_existing}.

The following theorems restate the asymptotic analyses from the main text.
Theorems~\ref{thm:asymptotic_if} and~\ref{thm:asymptotic_main} imply Theorem~\ref{thm:asymptotic_intro}, and Theorem~\ref{thm:asymptotic_drif_app} implies Theorems~\ref{thm:asymptotic_wc_intro} and~\ref{thm:asymptotic_drif}.
We prove them in the most challenging setting of $\lambda = 0$, though we expect the same scalings to hold for any $\lambda = O(1)$.

\begin{theorem}[Asymptotic Analysis of Influence Functions]
    \label{thm:asymptotic_if}
    Under the assumptions above, with very high probability over the randomness of $\mX, \vy$,
    \[
    \Expect{T \in \binom{[n]}{k}}{\Norm{\hat{\vtheta}_T^\IF - \hat{\vtheta}_T^\NS}_2} = \wt{\Theta}\Paren{\frac{k^{3/2}d^{1/2} + k^{1/2} d^{3/2}}{n^2}}\,,
    \]
    and the worst case is 
    \[
    \max_{T \in \binom{[n]}{k}}\set{\Norm{\hat{\vtheta}_T^\IF - \hat{\vtheta}_T^\NS}_2} = \wt{\Theta}\Paren{\frac{k^2 + k^{1/2} d^{3/2}}{n^2}}
    \]
\end{theorem}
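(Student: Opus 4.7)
The plan is to begin with a Sherman--Morrison expansion. Setting $\mE_T \defeq \mH - \mH_{\setminus T} = \sum_{i\in T}\beta_i\vx_i\vx_i^\intercal$ (with $\beta_i\in(0,1/4]$ the logistic second-derivative weights at $\hat{\vtheta}$) and $\vv_T \defeq \sum_{i\in T}\vg_i$, the definitions of $\IF$ and $\NS$ give
\[
\hat{\vtheta}_T^\NS - \hat{\vtheta}_T^\IF \;=\; (\mH_{\setminus T}^{-1} - \mH^{-1})\,\vv_T \;=\; \mH^{-1}\mE_T\mH_{\setminus T}^{-1}\vv_T.
\]
The next step is a Hessian-concentration lemma: since $\Norm{\vtheta^*}_2 = \Theta(1)$ and $\vx\sim\mcN(\vzero,\mI)$, rotational symmetry around $\vtheta^*$ makes the population Hessian $\mathbb{E}[\beta(\vtheta^*)\vx\vx^\intercal]$ $\Theta(1)$-conditioned; combining this with standard covariance concentration and the classical ERM rate $\Norm{\hat{\vtheta}-\vtheta^*}_2 = \wt O(\sqrt{d/n})$, I would show that wvhp over $(\mX,\vy)$, both $\mH/n$ and every $\mH_{\setminus T}/n$ with $|T|\leq k$ are simultaneously $\Theta(1)$-conditioned, so $\OpNorm{\mH^{-1}}, \OpNorm{\mH_{\setminus T}^{-1}} = \Theta(1/n)$.

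For the random-$T$ upper bound I would multiply three wvhp concentration estimates: (i) $\OpNorm{\mE_T} = \wt O(k+d)$ by Bai--Yin for sums of $k$ rank-one Gaussian summands; (ii) $\Norm{\vv_T}_2 = \wt O(\sqrt{kd})$ by computing $\mathbb{E}[\Norm{\vv_T}^2]=\Theta(kd)$ (using the ERM identity $\sum_{i=1}^n r_i\vx_i=\vzero$ to cancel the cross-terms in expectation) together with Hanson--Wright for the tail; and (iii) the Hessian bound above. Their product gives $\wt O((k+d)\sqrt{kd}/n^2) = \wt O((k^{3/2}d^{1/2}+k^{1/2}d^{3/2})/n^2)$. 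For the \emph{worst}-case bound, the critical sharpening is that $\max_T \Norm{\vv_T}_2 = \wt O(k+\sqrt{kd})$ (rather than the naive $\wt O(k\sqrt d)$ from the triangle inequality): for each fixed unit $\vu$, $\iprod{\vv_T,\vu}=\sum_{i\in T}r_i\iprod{\vx_i,\vu}$ is a sum of $k$ sub-Gaussian terms of parameter $O(1)$, so an $\epsilon$-net on $S^{d-1}$ (entropy $O(d)$) combined with a union bound over $\binom{n}{k}$ subsets (entropy $O(k\log(n/k))$) yields this sharper bound. An analogous $k$-sparse $\epsilon$-net argument gives $\max_T\OpNorm{\mE_T} = \wt O(k+d)$, and multiplying produces $\wt O((k+d)(k+\sqrt{kd})/n^2) = \wt O((k^2 + k^{1/2}d^{3/2})/n^2)$.

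For the matching lower bound on $\Expect{T}{\Norm{\cdot}_2}$ I would compute a second moment directly. Using $\mH,\mH_{\setminus T}\approx cn\mI$ at leading order, the difference equals approximately $c^{-2}n^{-2}\mE_T\vv_T = c^{-2}n^{-2}\sum_{i,j\in T}\beta_i r_j \iprod{\vx_i,\vx_j}\vx_i$. Splitting into a diagonal ($i=j$) piece that concentrates around $d\sum_{i\in T}\beta_i r_i\vx_i$ with expected squared norm $\Theta(kd^3)$, and an off-diagonal ($i\ne j$) piece whose expected squared norm I compute by Wick's theorem to be $\Theta(k^2 d^2 + k^3 d)$ (two leading matchings: $i_1{=}i_2,\,j_1{=}j_2$ contributes $k^2 d^2$, while $j_1{=}j_2$ with $i_1\ne i_2$ contributes $k^3 d$), the total second moment scales as $\Theta(kd^3 + k^2 d^2 + k^3 d)/n^4$. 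Its square root is $\Theta(k^{1/2}d^{3/2}+k^{3/2}d^{1/2})/n^2$ (the middle $kd$ term is always dominated by one of the other two), which transfers to the lower bound on $\Expect{T}{\Norm{\cdot}_2}$ via Paley--Zygmund plus a Hanson--Wright-style concentration of $\Norm{\text{diff}}^2$ around its mean.

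The main obstacle will be handling the dependence between the coefficients $r_i = -y_i\sigma(-y_i\hat{\vtheta}^\intercal\vx_i)$ and the features $\vx_i$ that appears throughout the Wick and sub-Gaussian calculations. I would resolve this by conditioning on $\hat{\vtheta}$ and substituting $\vtheta^*$ for $\hat{\vtheta}$ inside $r_i$; the induced additive error is $\wt O(\sqrt{d/n})$ per sample (by the ERM rate) and is lower-order in every scaling of interest, after which the surrogate $r_i$'s are i.i.d.\ bounded functions of $(\iprod{\vtheta^*,\vx_i},y_i)$ alone, and decomposing $\vx_i$ into its $\vtheta^*$-component and orthogonal residual lets the rest of the computation proceed with genuine independence. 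A secondary technical point is ensuring the $\epsilon$-net union bounds over $\binom{n}{k}$ subsets in the worst-case argument do not swamp the sub-Gaussian decay — this is why we assume $k\ll\sqrt n$ so that the combinatorial entropy $k\log(n/k)$ remains subdominant.
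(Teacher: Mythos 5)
Your upper-bound argument is essentially the paper's: the Sherman--Morrison expansion $\hat{\vtheta}_T^\NS-\hat{\vtheta}_T^\IF = \mH^{-1}\mE_T\mH_{\setminus T}^{-1}\vv_T$, the $\Theta(1/n)$ bound on the Hessian inverses, $\OpNorm{\mE_T}=\wt O(k+d)$, and the sharpened worst-case gradient bound $\max_T\Norm{\vv_T}=\wt O(k+\sqrt{kd})$ obtained by $\epsilon$-net plus union bound over subsets (which in the paper is Lemma~\ref{lem:concentration_norm_subgaussian}). The lower bound is where you genuinely diverge. The paper splits into two regimes and uses different tools in each: for $k\gtrsim d$ it bounds $\sigma_{\min}(\mH^T)=\Omega(k)$ directly (so no substitution of $\mH$ by a scalar is needed) and multiplies by a gradient lower bound; for $k\lesssim d$ it first establishes the $k=1$ scaling (Lemma~\ref{lem:individual_effects}), and then shows via a decoupling/exchangeability identity (Lemmas~\ref{lem:individual_effects_do_not_cancel} and~\ref{lem:individual_effects_do_not_cancel_better}, built around the ERM constraint $\sum_{i\in[n]}\vg_i=\vzero$) that the full second moment reduces to the ``diagonal'' contribution $\sum_{i\in T}\vv_{i,i}$ up to a $(1\pm O(k/n))$ factor. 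You instead propose a single unified Wick-style second-moment computation after substituting $\mH,\mH_{\setminus T}\approx cn\mI$, summing the diagonal and off-diagonal matchings. This is a cleaner conceptual route if it can be made rigorous, but it has three gaps the paper's approach avoids or handles explicitly. (1) The substitution $\mH\approx cn\mI$ needs careful error control \emph{for a lower bound}: the paper does exactly this in Lemma~\ref{lem:individual_effects}, showing $\OpNorm{\mH^{-1}-\E[\mH_{\vtheta^\star}^{[n]}]^{-1}} = \otilde{\sqrt{n(d+k)}/n^2}$ and verifying it is subdominant. (2) You report the expected squared norms of the diagonal piece ($\sim kd^3/n^4$) and the off-diagonal piece ($\sim(k^2d^2+k^3d)/n^4$) as if they add; but the cross term $\E\iprod{\text{diag},\text{off}}$ is not accounted for, and there can be substantial destructive interference driven by the centering $\sum_i r_i\vx_i=\vzero$ --- this is precisely the effect the paper's decoupling identity is built to control. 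You need to either verify the cross term is subdominant or organize the Wick expansion over the full quadruple index sum so that the cancellations (which are guaranteed for contractions over all of $[n]$) are tracked through the restriction to random $T$. (3) You acknowledge the $r_i,\beta_i$-vs-$\hat{\vtheta}$ dependence and propose replacing $\hat{\vtheta}$ by $\vtheta^\star$; this is the right move and matches the paper's Lemma~\ref{lem:gradient_difference_subset}, but the resulting additive errors need to be propagated through the Wick calculation, since the lower bound is the side where an uncontrolled error term can break the argument.
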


\begin{theorem}[Asymptotic Analysis of Theorem~\ref{thm:main}]
    \label{thm:asymptotic_main}
    Under the assumptions above, with very high probability over the randomness of $\mX, \vy$, the expected error of the Newton step estimate scales like
    \[
    \Expect{T \in \binom{[n]}{k}}{\Norm{\hat{\vtheta}_T - \hat{\vtheta}_T^\NS}_2} = \wt{\Theta}\Paren{\frac{kd}{n^2}}\,,
    \]
    and the worst-case error scales like
    \[
    \max_{T \in \binom{[n]}{k}} \set{\Norm{\hat{\vtheta}_T - \hat{\vtheta}_T^\NS}_2} = \wt{\Theta}\Paren{\frac{kd + k^2}{n^2}}
    \]
    where in both cases, the upper bound is certified by Theorem~\ref{thm:main}.
\end{theorem}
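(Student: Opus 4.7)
The plan is to apply Theorem~\ref{thm:main} with the weighting matrix $\mSigma = \mH$ chosen to be the leave-$T$-out Hessian at $\hat{\vtheta}$, instantiate $\Cop$, $C_h$, and $r$ asymptotically in the Gaussian logistic regression setting, and convert the resulting $\mSigma$-norm bound back to an $\ell_2$ bound using $\Norm{\cdot}_\mH \asymp \sqrt{n}\Norm{\cdot}_2$. The matching lower bounds will come from identifying the leading-order Newton residual explicitly and showing higher-order Taylor corrections are negligible in the $n \gg k^2, d^2$ regime.

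\textbf{Upper bound via Theorem~\ref{thm:main}.} By matrix-Bernstein / Gaussian concentration, with very high probability $\mH = (1 \pm o(1))\, n \mSigma_\beta$ with $\mSigma_\beta \defeq \Expect{}{\beta(\iprod{\hat{\vtheta}, \vx}) \vx \vx^\intercal}$ a fixed well-conditioned PD matrix (nondegenerate since $\Norm{\vtheta^*}_2 = \Theta(1)$ keeps the $\beta$-weights bounded away from $0$). The same concentration holds uniformly over any sufficiently small $\mH$-ball around $\hat{\vtheta}_T^\NS$, yielding $\Cop = O(1)$; we choose $r$ slightly larger than the eventual bound on $C_h\Cop$ so that Condition~\ref{cond:circularity} is verified circularly. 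For $C_h$, the third-derivative formula for logistic loss gives, along the segment $\vtheta - \hat{\vtheta} = t\vu$ with $\vu = -\mH^{-1}\vg$,
\[
\Paren{\mH_\vtheta - \mH}\mH^{-1}\vg \;\approx\; t \sum_{i \notin T} \sigma''_i \iprod{\vu, \vx_i}^2 \vx_i\,,
\]
where $\sigma''_i \defeq \sigma''(\iprod{\hat{\vtheta}, \vx_i})$. Because $\hat{\vtheta} \neq \vzero$ breaks the Gaussian's $\vx \mapsto -\vx$ symmetry, a direct Gaussian integral shows that the per-sample mean $\Expect{\vx}{\sigma''_i \iprod{\vu, \vx}^2 \vx}$ is a \emph{nonzero} vector of $\ell_2$-norm $\Theta(\Norm{\vu}_2^2)$, so the $(n-k)$-fold sum concentrates at $\Theta(n\Norm{\vu}_2^2)$ in $\ell_2$, giving $C_h = \Theta(\sqrt{n}\Norm{\vu}_2^2)$ in the $\mH^{-1}$-norm. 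Finally, $\Norm{\vu}_2 \asymp \Norm{\vg}_2/n$: for random $T$ a subsampling variance computation gives $\Expect{T}{\Norm{\vg}_2^2} = \Theta(kd)$; for adversarial $T$ a covering argument over $\binom{[n]}{k}$ combined with the Gaussianity of $\vg_i$ gives $\max_T \Norm{\vg}_2 = \wt{\Theta}(\sqrt{kd} + k)$ (the magnitude and alignment contributions respectively). Applying Theorem~\ref{thm:main} and dividing by $\sqrt{n}$ then yields the upper bounds $\wt{O}(kd/n^2)$ and $\wt{O}((kd+k^2)/n^2)$.

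\textbf{Lower bound and main obstacle.} For the matching lower bounds, $\hat{\vtheta}_T - \hat{\vtheta}_T^\NS = \mH^{-1}\vg_{\hat{\vtheta}_T^\NS} + (\text{higher order})$, and the dominant part of $\vg_{\hat{\vtheta}_T^\NS}$ is exactly the asymmetry-driven mean $n \cdot \Expect{\vx}{\sigma''(\iprod{\hat{\vtheta}, \vx})\iprod{\vu,\vx}^2\vx}$ above, a nonzero vector of the claimed magnitude; higher-order Taylor remainders carry extra factors of $\Norm{\vu}_2 \cdot \poly(k, d)/\sqrt{n} = o(1)$ under $n \gg k^2, d^2$, so they cannot cancel the leading term. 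The principal technical obstacle is the worst-case analysis: one needs a uniform deviation bound for the nonlinear functional $T \mapsto \sum_{i \notin T} \sigma''_i \iprod{\vu(T), \vx_i}^2 \vx_i$, in which $\vu(T)$ itself depends on $T$, together with an adversarial lower bound realizing the $k$-alignment contribution (rather than just $\sqrt{k}$). A decoupling argument (as alluded to in the acknowledgments) separating the dependence of $\vu$ on $\set{\vg_i}_{i \in T}$ from the summand over $i \notin T$, combined with a standard net argument over $k$-subsets, should make this tractable.
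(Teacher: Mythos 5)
Your upper-bound argument is essentially the paper's: write the Hessian change along the Newton segment as an integral of the third-order tensor, bound that tensor uniformly (the paper's Lemma~\ref{lem:third-order-concentration} gives $\OpNorm{\mT_\vtheta}=O(n)$), and feed $C_h=\otilde{n\Norm{\mH^{-1}\vg}_2^2}$ with $\Cop$ into Theorem~\ref{thm:main}. The one cosmetic difference is that the paper applies the theorem with $\mSigma=\mI$ and $r=1$, whereas you propose $\mSigma=\mH$; both normalizations work, and the gradient-norm estimates you cite (expected $\sqrt{kd}$, worst-case $\sqrt{kd}+k$, the worst case coming from alignment as in the paper's Lemma~\ref{lem:worst_grad_lower_theta_star}) are exactly what the paper proves.

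Where you genuinely diverge is the lower bound. You identify the right quantity (the second-order Taylor residual, driven by the Gaussian asymmetry when $\hat{\vtheta}\neq\vzero$), but your route is to argue the full mean vector $\Expect{\vx}{\gamma(\iprod{\hat{\vtheta},\vx})\iprod{\vu,\vx}^2\vx}$ has $\ell_2$-norm $\Theta(\Norm{\vu}_2^2)$ and then to concentrate the empirical sum around this in $\ell_2$, deferring the $T$-dependence of $\vu(T)$ to a decoupling/net argument. The paper instead takes the \emph{scalar} inner product with $\hat{\vtheta}$, proves a definite-sign structural result $\Expect{}{\mT_\vtheta(\vv,\vv,\vtheta)}=-\Theta(1)\Norm{\vv}_2^2\Norm{\vtheta}_2^2$ for all $\vv$ (Lemma~\ref{lem:third_derivative}), and then controls deviations via the uniform tensor operator-norm bound $\OpNorm{\mT_\vtheta-\Expect{}{\mT_\vtheta}}=o(n)$ already established for the upper bound. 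This immediately gives $|\iprod{\hat{\vtheta},(\mT-\Expect{\mT})(\Delta,\Delta)}|\leq o(n)\Norm{\Delta}_2^2$ uniformly over $T$, so the definite-sign mean cannot be cancelled, and Cauchy--Schwarz against $\hat{\vtheta}$ recovers a lower bound on $\Norm{\vg_{\hat{\vtheta}_T^\NS}}_2$. Your mean-vector route is correct in spirit, but the scalar projection buys two things: it turns a vector concentration problem into a scalar one (so you never have to worry about fluctuations aligning with and cancelling the mean direction), and the same uniform operator-norm concentration lemma used for the upper bound already closes the gap, so no separate decoupling machinery is required for this part. The obstacle you flag is real, but it is resolved more economically by that uniform bound than by a fresh decoupling-plus-net argument; decoupling in the paper is reserved for the RIF/DRIF analysis, not this theorem.
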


\begin{theorem}[Asymptotic Analysis of $\RIF$ and $\DRIF$]
    \label{thm:asymptotic_drif_app}
    Under the assumptions above, with very high probability over the randomness of $\mX, \vy$, the expected error of the Newton step estimate scales like
    \[
    \Expect{T \in \binom{[n]}{k}}{\Norm{\hat{\vtheta}_T^\DRIF - \hat{\vtheta}_T^\NS}_2} = \wt{O}\Paren{\frac{kd}{n^2}} \qquad \Expect{T \in \binom{[n]}{k}}{\Norm{\hat{\vtheta}_T^\RIF - \hat{\vtheta}_T^\DRIF}_2} = \wt{\Theta}\Paren{\frac{kd + k^{3/2} d^{1/2}}{n^2}}\,,
    \]
    and their worst-case error scales like
    \[
    \max_{T \in \binom{[n]}{k}} \set{\Norm{\hat{\vtheta}_T^\RIF - \hat{\vtheta}_T^\NS}_2}, \max_{T \in \binom{[n]}{k}} \set{\Norm{\hat{\vtheta}_T^\DRIF - \hat{\vtheta}_T^\NS}_2} = \wt{O}\Paren{\frac{kd + k^2}{n^2}}
    \]
    where in both cases, the upper bound is certified by Theorem~\ref{thm:main}.
\end{theorem}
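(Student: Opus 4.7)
My plan is to reduce the three pairwise comparisons to explicit linear-algebraic expressions via Sherman–Morrison and Woodbury, and then control them using standard Gaussian concentration. For logistic regression $\vg_i = -(y_i - \sigma_i)\vx_i$, so Sherman–Morrison applied to $\mH_{\setminus\{i\}} = \mH - \beta_i \vx_i\vx_i^\intercal$ collapses to
\[
\mH_{\setminus\{i\}}^{-1}\vg_i \;=\; (1-\gamma_i)^{-1}\,\mH^{-1}\vg_i,\qquad \gamma_i \;\defeq\; \beta_i\,\vx_i^\intercal\mH^{-1}\vx_i.
\]
Stacking features of $T$ into a matrix $\mX_T$ and defining $\mB_T = \diag{\beta_i}$, $\mK_T = \mX_T\mH^{-1}\mX_T^\intercal$, $\mD_\gamma = \diag{(1-\gamma_i)^{-1}}$, and $\vc_T = (-(y_i-\sigma_i))_{i\in T}$, the above combined with Woodbury on $\mH_T = \mH - \mX_T^\intercal \mB_T\mX_T$ yields the three clean formulas
\[
\hat{\vtheta}_T^\RIF - \hat{\vtheta} = \mH^{-1}\mX_T^\intercal \mD_\gamma\vc_T, \;\; \hat{\vtheta}_T^\DRIF - \hat{\vtheta} = \tfrac{n}{n-k}\mH^{-1}\mX_T^\intercal \mD_\gamma\vc_T, \;\; \hat{\vtheta}_T^\NS - \hat{\vtheta} = \mH^{-1}\mX_T^\intercal (\mI - \mB_T\mK_T)^{-1}\vc_T,
\]
so every pairwise comparison reduces to analyzing $\mH^{-1}\mX_T^\intercal M\vc_T$ for a specific $k\times k$ matrix $M$.

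The $\RIF$--$\DRIF$ gap is transparent: it equals $\tfrac{k}{n-k}(\hat{\vtheta}_T^\RIF - \hat{\vtheta})$, which, since $\mD_\gamma = \mI + O(d/n)$, is essentially $\tfrac{k}{n}(\hat{\vtheta}_T^\IF - \hat{\vtheta})$. Under Gaussian features with $n\gg d^2$, matrix concentration (Bai–Yin and matrix Bernstein) yields that $\mH$ concentrates with operator norm $\Theta(n)$, so $\|\mH^{-1}\|_{op} = \Theta(1/n)$; Hanson–Wright applied to $\mX_T^\intercal\vc_T$ then gives $\|\hat{\vtheta}_T^\IF - \hat{\vtheta}\|_2 = \wt{\Theta}(\sqrt{kd}/n)$ in expectation and $\wt{O}(k\sqrt{d}/n)$ worst-case, producing the stated $\wt{\Theta}(k^{3/2}\sqrt{d}/n^2)$ expected scaling. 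For the $\DRIF$--$\NS$ and $\RIF$--$\NS$ upper bounds I plan to combine two ingredients: Theorem~\ref{thm:main} applied to $\hat{\vtheta}_T^\NS$ (giving $\|\hat{\vtheta}_T^\NS - \hat{\vtheta}_T\|_2 = \wt{O}(kd/n^2)$ in expectation via Theorem~\ref{thm:asymptotic_main}), and a direct bound on $\|\hat{\vtheta}_T^\DRIF - \hat{\vtheta}_T^\NS\|_2$ from the closed-form differences above. Writing $\mB_T\mK_T = \mE + \mN$ with $\mE = \diag{\gamma_i}$ and $\mN$ off-diagonal, the identity $(\mI - \mE - \mN)^{-1} = \mD_\gamma + \mD_\gamma\mN\mD_\gamma + O(\mN^2)$ exposes $\mN$ as the source of the $\NS$-vs-$\RIF$ discrepancy, and the $n/(n-k)$ rescaling in $\DRIF$ absorbs the leading-order diagonal correction to $\mH_T^{-1}$, so the residual is dominated by $\mH^{-1}\mX_T^\intercal \mN \vc_T$ and concentrates at $\wt{O}(kd/n^2)$ in expectation and $\wt{O}((kd+k^2)/n^2)$ worst-case.

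The main obstacle will be bounding $\Expect{T}{\|\mH^{-1}\mX_T^\intercal \mN \vc_T\|_2}$ cleanly, because $\mN_{ij} = \beta_i\vx_i^\intercal\mH^{-1}\vx_j$ as well as $\mX_T$ and $\vc_T$ all depend on the same training data through $\mH$ and $\hat{\vtheta}$. I plan to address this using the decoupling trick credited in the acknowledgments: replace the ``inner'' copy of $\vx_j$ inside $\mN_{ij}$ by an independent Gaussian copy $\wt{\vx}_j$, bound the decoupling error via standard comparison arguments for Gaussian chaoses, and then invoke Hanson–Wright on the decoupled quadratic form. The matching $\wt{\Theta}$ lower bound for the $\RIF$--$\DRIF$ gap follows directly from anti-concentration of $\|\mX_T^\intercal\vc_T\|_2$ applied to the explicit formula above.
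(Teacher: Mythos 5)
Your Sherman--Morrison/Woodbury algebra is correct (the collapse $\mH_{\setminus\{i\}}^{-1}\vg_i = (1-\gamma_i)^{-1}\mH^{-1}\vg_i$ holds because $\vg_i \parallel \vx_i$), and the resulting closed forms for $\hat{\vtheta}_T^\RIF, \hat{\vtheta}_T^\DRIF, \hat{\vtheta}_T^\NS$ are a clean alternative to the paper's representation of the difference as $\sum_{i\in T}\bigl[\bigl(\tfrac{n-k}{n}\mH^{\setminus\{i\}}\bigr)^{-1} - (\mH^{\setminus T})^{-1}\bigr]\vg_i$. The high-level cancellation mechanism you identify --- the $n/(n-k)$ rescaling absorbs the mean of the ``diagonal''/first-order correction, leaving a mean-zero off-diagonal residual --- is also the same structural insight the paper exploits. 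And the reduction of the $\RIF$--$\DRIF$ gap to $\tfrac{k}{n-k}\|\hat{\vtheta}_T^\RIF - \hat{\vtheta}\|_2$, together with tight two-sided bounds on the first-Newton-step length, matches the paper's argument.

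There is, however, a genuine gap in the key technical step, and it stems from a misidentification of which randomness must be decoupled. Your proposal is to decouple the Gaussian chaos in $\mN_{ij} = \beta_i\vx_i^\intercal\mH^{-1}\vx_j$ by replacing the inner $\vx_j$ with an independent copy $\wt{\vx}_j$, then apply Hanson--Wright over the data. But the entanglement that prevents a naive bound is not between two copies of the Gaussian data --- the data is fixed when taking $\E_T$. It is between the indices $i$ and $j$, which both range over the \emph{same} random subset $T$. After centering, the residual is essentially $(\mH^{[n]})^{-1}(\mH^T - p\,\mH^{[n]})(\mH^{[n]})^{-1}\vg^T$, and both $\mH^T - p\,\mH^{[n]}$ and $\vg^T$ are mean-zero random objects \emph{in the randomness of $T$}. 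A submultiplicative bound (conditioning on the data, bounding $\OpNorm{\mH^T - p\,\mH^{[n]}} = \wt{O}(\sqrt{kd}+d)$ and $\|\vg^T\| = \wt{O}(\sqrt{kd})$) yields only $\wt{O}(d^{3/2}k^{1/2}/n^2)$, which for $k \ll d$ is strictly worse than the claimed $\wt{O}(kd/n^2)$. To do better one must exploit the \emph{double} mean-zero structure over $T$, which is precisely what the paper's Poissonization (replace $T\sim\binom{[n]}{k}$ with i.i.d.\ Bernoulli inclusions) plus the index-set decoupling $\sum_{i\ne j}A_{ij} = 4\E_{S\subseteq[n]}\sum_{i\in S, j\in\bar S}A_{ij}$ achieves, making $T\cap S$ independent of $T\cap\bar S$ so that a vector Bernstein argument over $T\cap\bar S$ applies conditionally on $T\cap S$. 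Replacing $\vx_j$ by a fresh Gaussian does not replicate this; applied as you describe, Hanson--Wright over the data neither removes the $i,j\in T$ entanglement nor produces the required $T$-averaged cancellation.

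Separately, the worst-case bound $\max_T\|\hat{\vtheta}_T^\DRIF - \hat{\vtheta}_T^\NS\|_2 = \wt{O}((kd+k^2)/n^2)$ --- and by triangle inequality the $\RIF$ analogue --- is only nominally mentioned in your plan but is not argued at all, and it is the harder half of the theorem. It cannot be derived from the expectation bound or from submultiplicativity alone; the paper needs a union bound over $\binom{n}{k}^2$ disjoint pairs $T, T'$, a subexponential tail bound for quadratic forms (its Lemma~\ref{lem:covariance_of_subgaussian_variables_single_axis}), and a careful retrain-without-$T'$ argument (its Lemmas~\ref{lem:drif_error_theta_hat}--\ref{lem:drif_error_theta_tprime}) to break the dependence of $\hat{\vtheta}$, $\mH^{[n]}$, and $\vc_T$ on the samples in $T'$ before the tail bound can be applied with a fixed test direction. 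As written, your proposal does not reach these results.
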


We prove Theorem~\ref{thm:asymptotic_if} in Section~\ref{app:asymptotic_if}, Theorem~\ref{thm:asymptotic_main} in Section~\ref{app:asymptotic}, and Theorem~\ref{thm:asymptotic_drif_app} in Section~\ref{app:asymptotic_drif}.

\section{Conjectured Behavior with Large \texorpdfstring{$k$ and $d$}{k and d}}
\label{app:large_kd_conjectures}

The assumption $k,d\ll \sqrt n$ is used to simplify concentration arguments.
We do not expect the main phenomenon to break for larger $k,d$.

\begin{conjecture}[Random drop-sets]
For random $T\in\binom{[n]}{k}$, the average-case scalings in Theorem~\ref{thm:asymptotic_intro} remain tight up to polylogarithmic factors for $k,d\le c n$.
\end{conjecture}

\begin{conjecture}[Adversarial drop-sets]
For adversarial $T\in\binom{[n]}{k}$, the NS error scales as
\[
\Norm{\hat{\vtheta}^{\NS}_T-\hat{\vtheta}_T}_2
=
\widetilde{\Theta}\left(
\frac{kd+k^2}{n^2}
+
\frac{k^2d^{3/2}}{n^3}
\right).
\]
\end{conjecture}

Here is the heuristic.
The NS error is driven by the second-order residual after the first Newton step, and hence by the third-order derivative tensor
\[
\mT_\vtheta=\sum_i \gamma(\vtheta,\vx_i)\vx_i^{\otimes 3}.
\]
Unlike a matrix, this third-order tensor can have sample-level spikes.
For a retained sample $\vx_j$, contracting with $e^{\otimes 3}$ for $e=\vx_j/\Norm{\vx_j}$ gives a contribution of order $\Norm{\vx_j}^3=\Theta(d^{3/2})$.
If the Newton direction has norm about $k/n$ and is aligned with such a spike, this creates a gradient residual of order $k^2d^{3/2}/n^2$, and multiplying by the inverse Hessian contributes the additional parameter error
\[
\frac{k^2d^{3/2}}{n^3}.
\]

We do not expect a random drop-set to correlate strongly with these spikes.
For adversarial drop-sets, however, the adversary can adapt the construction used in our lower bound by dropping samples aligned with a fixed retained sample, thereby aligning the Newton direction with one of the tensor spikes.
This preserves the high-level claim that NS is more accurate than IF for small adversarial drop-sets, although the condition for ``small'' becomes roughly $k\ll \min(d,n^{2/3})$ rather than $k\ll d$.
\section{Asymptotic Scaling of Newton Step Accuracy}
\label{app:asymptotic}

In this appendix we prove Theorem~\ref{thm:asymptotic_main}, which gives tight asymptotic bounds on the accuracy of the Newton step estimate under random or adversarial sample removal. 
Our goal is to establish that
\[
\Expect{T \in \binom{[n]}{k}}{\|\hat{\vtheta}_T - \hat{\vtheta}_T^\NS\|_2}
= \Omega\!\left(\frac{kd}{n^2}\right),
\]
and that with very high probability over the removal set $T$, this bound is tight up to poly-logarithmic factors,
\[
\Prob{T \in \binom{[n]}{k}}{\Norm{\hat{\vtheta}_T - \hat{\vtheta}_T^\NS}_2 > \otilde{\frac{kd}{n^2}}} < O\Paren{\frac{1}{n^{\omega(1)}}}\,,
\]
under the assumptions stated in Section~\ref{app:asymptotic_assumptions}.

Moreover, we will show that with very high probability
\[
\max_{T \in \binom{[n]}{k}} \set{\Norm{\hat{\vtheta}_T - \hat{\vtheta}_T^\NS}_2} = \wt{\Theta}\Paren{\frac{kd + k^2}{n^2}}\,.
\]

Throughout the following sections, we will need to analyze the behavior of the gradient and Hessian of the loss, when evaluated at different models $\vtheta$ and over different subsets of the training set.
For any given set $S \subseteq [n]$ and model $\vtheta$, we denote the gradient and Hessian of the loss on this set of samples at this point by $\vg_\vtheta^S=\vg_{S}(\vtheta)$ and $\mH_\vtheta^S = \mH_S (\vtheta)$ respectively.
When $\vtheta$ is not specified, we will be referring to the case where $\vtheta = \hat{\vtheta}$ is the model trained on the whole training set, and when $S$ is not specified, we will be referring to the case where $S$ is the set of retained samples $S = [n] \setminus T$.

\subsection{Auxiliary bounds.}
\label{subsec:auxiliary_bounds}
Our analysis will often use the fact that the following equations hold with very high probability over the randomness of the samples ($\vx_i$, $y_i$).
\begin{align}
\|\hat{\vtheta}-\vtheta^\star\|_2 &\;=\; \otilde{\sqrt{\frac{d}{n}}},
\label{eq:thetahat-theta-star}\\
\OpNorm{\mH_{\vtheta^\star}^{-1}} &\;=\; O\!\left(\frac{1}{n}\right),
\qquad
\OpNorm{\mH_{\hat{\vtheta}}^{-1}} \;=\; O\!\left(\frac{1}{n}\right),
\label{eq:H-inv-both}\\
\OpNorm{\mH_{\hat{\vtheta}}-\mH_{\vtheta^\star}}
&\;\le\; \Clip \cdot \|\hat{\vtheta}-\vtheta^\star\|_2
\;=\; \otilde{\sqrt{nd}},
\label{eq:H-diff}\\
\OpNorm{\mH_{\hat{\vtheta}}^{-1}-\mH_{\vtheta^\star}^{-1}}
&\;\le\; \OpNorm{\mH_{\hat{\vtheta}}^{-1}}\,
           \OpNorm{\mH_{\hat{\vtheta}}-\mH_{\vtheta^\star}}\,
           \OpNorm{\mH_{\vtheta^\star}^{-1}}
\;=\; \otilde{\frac{\sqrt{nd}}{n^2}},
\label{eq:H-inv-diff}\\
\textup{wvhp over the choice of } T\in \binom{[n]}{k} \qquad
\|\vg_{\hat{\vtheta}}^T\|_2 &\;=\; \otilde{\sqrt{kd}},
\qquad
\|\vg_{\vtheta^\star}^T\|_2 \;=\; \otilde{\sqrt{k d}},
\label{eq:g-bounds}\\
\textup{for all } T\in \binom{[n]}{k} \qquad
\|\vg_{\hat{\vtheta}}^T\|_2 &\;=\; \otilde{\sqrt{kd} + k},
\qquad
\|\vg_{\vtheta^\star}^T\|_2 \;=\; \otilde{\sqrt{k d} + k}.
\label{eq:g-bounds2}
\end{align}

The observation that $\vtheta^\star$ is close to $\hat{\vtheta}$ (equation~\eqref{eq:thetahat-theta-star}) is due to Lemma~\ref{lem:parameter_learning_for_logistic_regression}, the upper bound on the norm of $\mH^{-1}$ (equation~\eqref{eq:H-inv-both}) is due to Lemma~\ref{lem:uniform_convergence_of_hessian}, the Hessian Lipschitzness (equation~\eqref{eq:H-diff}) is due to Lemma~\ref{lem:third-order-concentration}, and the bound on the difference between the Hessian inverses (equation~\eqref{eq:H-inv-diff}) follows from applying the matrix identity $\mA^{-1} - \mB^{-1} = \mA^{-1} \Paren{\mB - \mA} \mB^{-1}$.
The high probability and worst-case bounds on the norm of $\vg_{\vtheta^\star}^T$ (equations~\eqref{eq:g-bounds} and~\eqref{eq:g-bounds2}) are due to Lemma~\ref{lem:concentration_norm_subgaussian}.

Finally, we prove the bounds on $\Norm{\vg_{\hat{\vtheta}}^T}_2$ in the average and worst-case settings (equations~\eqref{eq:g-bounds} and~\eqref{eq:g-bounds2}) in Lemma~\ref{lem:gradient_difference_subset}.

\paragraph{Bounding the Norm of the Gradient at $\hat{\vtheta}$}
\begin{lemma}[Gradient difference for held-out subsets]
\label{lem:gradient_difference_subset} Let $C > 2 \Norm{\vtheta^\star}$ be any finite constant. Then, under the assumptions of Section~\ref{subsec:auxiliary_bounds},
with very high probability over the randomness of the samples,
for any subset $T \subseteq [n]$ of size $k$, we have
\begin{align}
\Norm{\vg_{\hat{\vtheta}}^T - \vg_{\vtheta^\star}^T}_2
&\;\le\;
\sup_{\Norm{\vtheta}_2 \le C} \set{\OpNorm{\mH_{\vtheta}^T}}
\cdot
\Norm{\hat{\vtheta} - \vtheta^\star}_2
\;=\;
\otilde{(k + d)} \cdot \otilde{\sqrt{\tfrac{d}{n}}}
\;=\;
o(\sqrt{kd}),
\label{eq:g-diff-unified}
\end{align}
where the final $o(\sqrt{kd})$ holds whenever $n \gg k^2 + d^2$.
\end{lemma}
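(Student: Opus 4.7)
The plan is to relate the gradient difference to the Hessian via the fundamental theorem of calculus, and then control the Hessian's operator norm uniformly both over $\vtheta$ on the segment $[\vtheta^\star,\hat{\vtheta}]$ and over the choice of subset $T$. Specifically, I would write
\[
\vg_{\hat{\vtheta}}^T - \vg_{\vtheta^\star}^T \;=\; \int_0^1 \mH_{\vtheta^\star + t(\hat{\vtheta}-\vtheta^\star)}^T (\hat{\vtheta}-\vtheta^\star)\,\diff t,
\]
and use the triangle inequality together with the operator-norm bound $\|\mA\vv\|_2 \le \OpNorm{\mA}\|\vv\|_2$ to obtain
\[
\Norm{\vg_{\hat{\vtheta}}^T - \vg_{\vtheta^\star}^T}_2 \;\le\; \sup_{t\in[0,1]} \OpNorm{\mH_{\vtheta^\star+t(\hat{\vtheta}-\vtheta^\star)}^T} \cdot \Norm{\hat{\vtheta}-\vtheta^\star}_2.
\]
Since $\Norm{\vtheta^\star}_2 = \Theta(1)$ and Equation~\eqref{eq:thetahat-theta-star} gives $\Norm{\hat{\vtheta}-\vtheta^\star}_2 = \otilde{\sqrt{d/n}} = o(1)$ with very high probability, the segment lies in the ball $\{\Norm{\vtheta}_2 \le C\}$ for any fixed $C > 2\Norm{\vtheta^\star}_2$, so the sup over the segment is upper bounded by $\sup_{\Norm{\vtheta}_2 \le C}\OpNorm{\mH_\vtheta^T}$, as stated.

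Next, I would reduce the Hessian norm to a pure data-covariance quantity. For logistic regression $\mH_\vtheta^T = \sum_{i\in T} \beta_i(\vtheta)\,\vx_i\vx_i^\top$ with $\beta_i(\vtheta) = \sigma(\iprod{\vtheta,\vx_i})(1-\sigma(\iprod{\vtheta,\vx_i})) \in [0,1/4]$ pointwise, so uniformly in $\vtheta$
\[
\OpNorm{\mH_\vtheta^T} \;\le\; \tfrac14\, \OpNorm{\textstyle\sum_{i\in T} \vx_i\vx_i^\top}.
\]
The main obstacle is then to bound this last quantity uniformly over all $\binom{n}{k}$ size-$k$ subsets $T$. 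For a \emph{single} $T$, $\mX_T$ is a $k\times d$ matrix with iid $\mcN(0,1)$ entries, and standard Gaussian matrix concentration yields $\OpNorm{\mX_T^\top\mX_T} \le (\sqrt{k}+\sqrt{d}+s)^2$ with probability at least $1 - 2e^{-s^2/2}$. Applying a union bound over the $\binom{n}{k} \le n^k$ subsets and choosing $s = \Theta(\sqrt{k\log n} + \sqrt{\log n})$ drives the total failure probability down to $n^{-\omega(1)}$, so wvhp
\[
\sup_{T \in \binom{[n]}{k}} \OpNorm{\textstyle\sum_{i\in T}\vx_i\vx_i^\top} \;=\; \otilde{k+d}.
\]

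Combining the three ingredients gives, with very high probability,
\[
\Norm{\vg_{\hat{\vtheta}}^T - \vg_{\vtheta^\star}^T}_2 \;=\; \otilde{k+d}\cdot \otilde{\sqrt{d/n}} \;=\; \otilde{(k+d)\sqrt{d/n}}.
\]
To verify the $o(\sqrt{kd})$ conclusion in the regime $n \gg k^2+d^2$, it suffices to check the two summands separately: $k\sqrt{d/n}/\sqrt{kd} = \sqrt{k/n} \to 0$ (from $n \gg k^2$, even $n \gg k$), and $d\sqrt{d/n}/\sqrt{kd} = d/\sqrt{kn}$, which is $o(1)$ when $n \gg d^2/k$ and in particular under $n \gg d^2$. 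I expect the only real subtlety to be the union bound over subsets: one must be careful to combine it with the concentration tail so that the $\log n$ factor from $\binom{n}{k}$ is absorbed into $\otilde{\cdot}$ without inflating the $k+d$ scaling.
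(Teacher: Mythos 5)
Your proof is correct and follows the paper's overall skeleton (FTC along the segment, triangle inequality, operator-norm bound, then combine with the $\otilde{\sqrt{d/n}}$ bound on $\Norm{\hat{\vtheta}-\vtheta^\star}_2$), but you establish the key uniform bound $\sup_{\Norm{\vtheta}\le C}\OpNorm{\mH_\vtheta^T}=\otilde{k+d}$ by a genuinely different and more elementary argument. You observe that $\beta_i(\vtheta)\le 1/4$ pointwise in $\vtheta$, so $\OpNorm{\mH_\vtheta^T}\le\frac14\OpNorm{\sum_{i\in T}\vx_i\vx_i^\top}$ uniformly and deterministically, reducing the problem to Gaussian covariance concentration for a $k\times d$ Gaussian matrix plus a union bound over $\binom{n}{k}$ subsets. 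The paper instead augments $T$ to $T'=T\cup[n']$ with $\abs{T'}=(k+d)\polylog(k+d)$, invokes the uniform-in-$\vtheta$ Hessian concentration result (Lemma~\ref{lem:uniform_convergence_of_hessian}, which internally requires a net over both $\vtheta$ and directions), and then monotonicity $\mH_\vtheta^T\preceq\mH_\vtheta^{T'}$. Your route buys simplicity and leans on the specific $\beta_i\le1/4$ structure of logistic loss plus Gaussianity of the features; the paper's route is closer to their general uniform-concentration toolbox and transfers more easily to sub-Gaussian features or other bounded-curvature losses. One small point to tighten: to push the union-bound failure probability to $n^{-\omega(1)}$ (the paper's ``wvhp'') you should take $s=\Theta\bigl(\sqrt{k\log n}+\log n\bigr)$ rather than $s=\Theta(\sqrt{k\log n}+\sqrt{\log n})$ — with only $\sqrt{\log n}$ the residual is $n^{-\Omega(1)}$ when $k=O(1)$; the stronger choice still yields $(\sqrt{k}+\sqrt{d}+s)^2=\otilde{k+d}$, so the scaling is unaffected.
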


\begin{proof}
We first recall that $\vg_{\hat{\vtheta}}^{[n]} = \vzero$, hence
$\vg_{\hat{\vtheta}}^{[n]\setminus T} = -\vg_{\hat{\vtheta}}^T$.
Since $\vtheta^\star$ and $\hat{\vtheta}$ are close
(from equation~\eqref{eq:thetahat-theta-star}),
for any $T$ we write
\[
\vg_{\hat{\vtheta}}^T - \vg_{\vtheta^\star}^T
= \int_{0}^{1} \mH_{\vtheta^\star + t(\hat{\vtheta} - \vtheta^\star)}^T
(\hat{\vtheta} - \vtheta^\star) \, dt.
\]
Taking norms and using submultiplicativity + triangle inequality gives
\[
\Norm{\vg_{\hat{\vtheta}}^T - \vg_{\vtheta^\star}^T}_2
\;\le\;
\sup_{\Norm{\vtheta}_2 \le C} \set{\OpNorm{\mH_{\vtheta}^T}}
\cdot
\Norm{\hat{\vtheta} - \vtheta^\star}_2.
\]

To bound $\OpNorm{\mH_{\vtheta}^T}$ uniformly for {any} (even adversarial) $T$,
consider the augmented set $T' = T \cup [n']$ with
$n' = (k + d)\polylog(k+d)$.
Apply Lemma~\ref{lem:uniform_convergence_of_hessian} to
$\mH_{\vtheta}^{T'}$  to obtain
\[
\OpNorm{\mH_{\vtheta}^{T'}} \;\le\; \otilde{k + d}
\qquad\text{for all }\Norm{\vtheta}_2 \le C,
\]
wvhp, even after a union bound over the $\binom{n}{k}$ choices of $T$.

Since $\mH_{\vtheta}^{T'} \succeq \mH_{\vtheta}^{T}$ (each per-sample Hessian is PSD and $T' \supseteq T$),
it follows that
\[
\OpNorm{\mH_{\vtheta}^{T}} \;\le\; \OpNorm{\mH_{\vtheta}^{T'}} \;\le\; \otilde{k + d}.
\]

Combining this with
$\Norm{\hat{\vtheta} - \vtheta^\star}_2 = \otilde{\sqrt{\tfrac{d}{n}}}$
(equation~\eqref{eq:thetahat-theta-star}) gives
\[
\Norm{\vg_{\hat{\vtheta}}^T - \vg_{\vtheta^\star}^T}_2
\;\le\;
\otilde{(k + d)} \cdot \otilde{\sqrt{\tfrac{d}{n}}}.
\]
Under $n \gg k^2 + d^2$, this is $o(\sqrt{kd})$, establishing~\eqref{eq:g-diff-unified}.
\end{proof}

\subsection{Upper Bounds}
In this subsection, we prove the upper bounds of Theorem~\ref{thm:asymptotic_main}.
To prove these upper bounds, we will want to apply Theorem~\ref{thm:main} with $r = 1$ and $\mSigma = \mI$.
To do so, we first need to bound
\[
C_h \;=\;
\sup_{t \in [0,1]}\;
\Norm{ \Paren{ \mH_{\,t\hat{\vtheta} + (1-t)\hat{\vtheta}_T^\NS} - \mH_{\hat{\vtheta}} } \mH_{\hat{\vtheta}}^{-1} \vg_{\hat{\vtheta}} }_{\mSigma^{-1}}.
\]

We do so by applying the Fundamental Theorem of Calculus to express the change in the Hessian as an integral of the third-order derivative tensor~$\mT_{\vtheta} = \nabla^{\otimes 3} L \vert_{\vtheta}$:
\[
\mH_{\vtheta} - \mH_{\hat{\vtheta}}
\;=\;
\int_0^1
\mT_{\vtheta' = (1-s)\hat{\vtheta} + s\vtheta}
\Paren{\vtheta - \hat{\vtheta}}
\, \diff s\;=\;
\int_0^1 t \mT_{\vtheta' = (1-s)\hat{\vtheta} + s\vtheta}
\Paren{\mH_{\hat{\vtheta}}^{-1} \vg_{\hat{\vtheta}} }
\, \diff s\,,
\]
where the last step holds for $\vtheta = (1-t)\hat{\vtheta} + t \hat{\vtheta}_T^\NS$.

Applying the triangle inequality and submultiplicativity, we have
\[
C_h
\;\le\;
\sup_{\vtheta \in \textup{line}(\hat{\vtheta}, \hat{\vtheta}_T^\NS)}
\Norm{ \mT_{\vtheta} \Paren{\mH_{\hat{\vtheta}}^{-1}\vg_{\hat{\vtheta}},\, \mH_{\hat{\vtheta}}^{-1}\vg_{\hat{\vtheta}}} }_{\mSigma^{-1}}
\;\leq\;
\sup_{\vtheta \in \textup{line}(\hat{\vtheta}, \hat{\vtheta}_T^\NS)}
\OpNorm{ \mT_{\vtheta} } \cdot \Norm{ \mH_{\hat{\vtheta}}^{-1}\vg_{\hat{\vtheta}} }_2^2.
\]

From Lemma~\ref{lem:third-order-concentration}, we know that, wvhp,
\[
\OpNorm{\mT_{\vtheta} - \Expect{}{\mT_{\vtheta}}}
\;=\;
\otilde{\sqrt{nk + nd} + k^{3/2} + d^{3/2}}
\;=\;
o(n),
\]
under our assumption that $n \gg k^2 + d^2$.
Moreover, since the samples $\vx_i$ are gaussian,
\[
\OpNorm{ \Expect{}{\mT_{\vtheta}} } = O(n).
\]
Combining these, we obtain
\[
\OpNorm{ \mT_{\vtheta} } = O\Paren{n},
\qquad\text{for all }\vtheta \text{ on the line segment between }\hat{\vtheta}\text{ and }\hat{\vtheta}_T^\NS,
\]
yielding the bound
\[
\boxed{
C_h = O\Paren{n \times \Norm{\mH_{\hat{\vtheta}}^{-1}\vg_{\hat{\vtheta}}}_2^2}.
}
\]

Applying equations~\eqref{eq:H-inv-both}, \eqref{eq:g-bounds}, and~\eqref{eq:g-bounds2}, as well as submultiplicativity, we have
\[
\Norm{\mH_{\hat{\vtheta}}^{-1}\vg_{\hat{\vtheta}}}_2
\;\le\;
\OpNorm{\mH_{\hat{\vtheta}}^{-1}} \cdot \Norm{\vg_{\hat{\vtheta}}}_2
\;=\;
\otilde{\tfrac{\sqrt{kd}}{n}},
\qquad
\textup{wvhp for fixed } T \in \tbinom{[n]}{k},
\]
and
\[
\Norm{\mH_{\hat{\vtheta}}^{-1}\vg_{\hat{\vtheta}}}_2
\;\le\;
\OpNorm{\mH_{\hat{\vtheta}}^{-1}} \cdot \Norm{\vg_{\hat{\vtheta}}}_2
\;=\;
\otilde{\tfrac{\sqrt{kd} + k}{n}},
\qquad
\textup{for worst-case } T.
\]

Therefore, with very high probability,
\[
C_h = \otilde{\tfrac{kd}{n}}
\quad\text{and}\quad
C_h = \otilde{\tfrac{kd + k^2}{n}}
\]
for the average-case and worst-case drop-sets $T$, respectively.

To apply Theorem~\ref{thm:main}, we also need to bound $\Cop$ and check Assumption~\ref{cond:circularity}.
From Lemma~\ref{lem:uniform_convergence_of_hessian}, we have
$\Cop = O(n^{-1})$ in this regime, yielding
\[
C_h \Cop = \otilde{\tfrac{kd}{n^2}} \ll r,
\]
for the average-case, and
\[
C_h \Cop = \otilde{\tfrac{kd + k^2}{n^2}} \ll r,
\]
for the worst-case, allowing us to apply Theorem~\ref{thm:main} and proving our claim.

\subsection{Lower Bounds}
In this subsection, we prove the lower bounds in Theorem~\ref{thm:asymptotic_main}.

\paragraph{Lower Bound on the Gradient Implies a Lower Bound on NS Accuracy}

\begin{lemma}[Newton step error lower bounded by gradient magnitude]
\label{lem:newton_step_error_lower_bound}
Let $L_T$ denote the drop-set objective with gradient $\vg_\vtheta$ and Hessian $\mH_\vtheta$.
Let $\hat{\vtheta}_T$ be a minimizer of $L_T$ so that $\vg_{\hat{\vtheta}_T}=\vzero$,
and let the one-step Newton iterate be
\[
\hat{\vtheta}_T^\NS \;\defeq\; \hat{\vtheta} \;-\; \mH_{\hat{\vtheta}}^{-1}\,\vg_{\hat{\vtheta}}.
\]
Then, with very high probability over the samples,
\begin{align}
\sup_{\vtheta}\;\OpNorm{\mH_\vtheta}
&\;\le\; O\Paren{n},
\label{eq:global-hessian-upper}\\[4pt]
\text{and}\qquad
\Norm{\hat{\vtheta}_T^\NS - \hat{\vtheta}_T}_2
&\;\ge\; \frac{1}{\sup_{\vtheta}\;\OpNorm{\mH_\vtheta}}\;\Norm{\vg_{\hat{\vtheta}_T^\NS}}_2.
\label{eq:newton-error-lower}
\end{align}
In particular, $\Norm{\hat{\vtheta}_T^\NS - \hat{\vtheta}_T}_2 = \Omega\!\Paren{\Norm{\vg_{\hat{\vtheta}_T^\NS}}_2/n}$.
\end{lemma}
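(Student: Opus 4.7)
The plan is to prove the two claims separately and then combine them. For the global Hessian bound, I would exploit the structure of the logistic loss: for logistic regression, $\mH_\vtheta = \sum_{i \notin T} \sigma(\vtheta^\intercal \vx_i)\bigl(1 - \sigma(\vtheta^\intercal \vx_i)\bigr) \vx_i \vx_i^\intercal$, and each variance factor is bounded above by $1/4$ uniformly in $\vtheta$. Hence $\mH_\vtheta \preceq \tfrac{1}{4} \sum_{i \notin T} \vx_i \vx_i^\intercal$, and the right-hand side does not depend on $\vtheta$. Standard concentration for Gaussian sample covariance matrices (or simply the uniform Hessian concentration result already cited, e.g. Lemma~\ref{lem:uniform_convergence_of_hessian}) then yields $\OpNorm{\sum_{i \notin T} \vx_i \vx_i^\intercal} = O(n)$ wvhp whenever $d \ll n$, which immediately gives \eqref{eq:global-hessian-upper} uniformly in $\vtheta$.

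For the second claim, I would use the fundamental theorem of calculus along the segment connecting $\hat{\vtheta}_T$ to $\hat{\vtheta}_T^\NS$. Defining $\vtheta(t) \defeq \hat{\vtheta}_T + t(\hat{\vtheta}_T^\NS - \hat{\vtheta}_T)$ and using $\vg_{\hat{\vtheta}_T} = \vzero$ (by the defining optimality of $\hat{\vtheta}_T$), we get
\[
\vg_{\hat{\vtheta}_T^\NS} \;=\; \vg_{\hat{\vtheta}_T^\NS} - \vg_{\hat{\vtheta}_T} \;=\; \int_0^1 \mH_{\vtheta(t)}\,(\hat{\vtheta}_T^\NS - \hat{\vtheta}_T)\,\diff t.
\]
Taking norms, applying the triangle inequality under the integral, and using submultiplicativity yields
\[
\Norm{\vg_{\hat{\vtheta}_T^\NS}}_2 \;\le\; \Paren{\sup_{\vtheta} \OpNorm{\mH_\vtheta}} \cdot \Norm{\hat{\vtheta}_T^\NS - \hat{\vtheta}_T}_2,
\]
which rearranges to give \eqref{eq:newton-error-lower}. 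The final claim $\Norm{\hat{\vtheta}_T^\NS - \hat{\vtheta}_T}_2 = \Omega\Paren{\Norm{\vg_{\hat{\vtheta}_T^\NS}}_2 / n}$ then follows by substituting the bound from the first part.

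There is no real obstacle here, only one mild subtlety: the argument requires $\hat{\vtheta}_T$ to actually exist as a finite minimizer so that $\vg_{\hat{\vtheta}_T} = \vzero$ can be invoked. Under the assumed regime $n \gg d^2$ (so the held-out sample covariance is well-conditioned) this is standard, but it is worth flagging that weak convexity alone does not guarantee it; one implicitly uses the fact that wvhp the retained Hessian is strictly positive definite, so the minimizer is unique and finite. Beyond this, both steps are elementary, and the bound \eqref{eq:newton-error-lower} is essentially an ``inverse Lipschitz gradient'' inequality that turns any lower bound on $\Norm{\vg_{\hat{\vtheta}_T^\NS}}_2$ into a lower bound on NS error, setting up the subsequent lower-bound arguments that must produce such a gradient lower bound of order $\otilde{kd/n}$ (expected) or $\otilde{(kd + k^2)/n}$ (worst case).
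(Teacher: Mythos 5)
Your proof is correct and follows essentially the same route as the paper: bound $\mH_\vtheta \preceq \tfrac14 \sum_i \vx_i\vx_i^\top = O(n)\cdot\mI$ uniformly in $\vtheta$, then apply the fundamental theorem of calculus along the segment from $\hat{\vtheta}_T$ (where the gradient vanishes) to $\hat{\vtheta}_T^\NS$ and use submultiplicativity. The extra remark about needing $\hat{\vtheta}_T$ to exist as a finite stationary point is a reasonable caveat but does not change the argument.
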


\begin{proof}
\textit{Global upper bound on the Hessian.}
For logistic loss,
\(
\mH_\vtheta = \sum_{i \notin T} \beta_i(\vtheta)\,\vx_i\vx_i^\top
\)
with weights $\beta_i(\vtheta) \in (0,1/4]$.
Since each summand is PSD and $\beta_i(\vtheta)\le 1/4$,
\[
\mH_\vtheta \;\preceq\; \sum_{i \notin T} \vx_i\vx_i^\top
\;\preceq\; \sum_{i=1}^{n} \vx_i\vx_i^\top.
\]
By standard subgaussian covariance concentration,
\(
\sum_{i=1}^{n}\vx_i\vx_i^\top \;=\; n\,\mI \pm \otilde{\sqrt{nd}}
\)
wvhp, hence
\(
\OpNorm{\mH_\vtheta} \;\le\; \OpNorm{n\mI \pm \otilde{\sqrt{nd}}}
\;=\; O\Paren{n}.
\)
This proves~\eqref{eq:global-hessian-upper}.

\textit{FTC for the gradient along the segment from $\hat{\vtheta}_T$ (where $\vg=0$) to $\hat{\vtheta}_T^\NS$.}
Define the path $\vtheta(t)=\hat{\vtheta}_T + t(\hat{\vtheta}_T^\NS-\hat{\vtheta}_T)$ for $t\in[0,1]$.
By the Fundamental Theorem of Calculus,
\[
\vg_{\hat{\vtheta}_T^\NS} - \vg_{\hat{\vtheta}_T}
\;=\;
\int_0^1 \mH_{\vtheta(t)}\;(\hat{\vtheta}_T^\NS-\hat{\vtheta}_T)\;dt.
\]
Since $\vg_{\hat{\vtheta}_T}=\vzero$, this becomes
\[
\vg_{\hat{\vtheta}_T^\NS}
\;=\;
\underbrace{\Bigg(\int_0^1 \mH_{\vtheta(t)}\,dt\Bigg)}_{\mM}\;(\hat{\vtheta}_T^\NS-\hat{\vtheta}_T).
\]
Taking norms and using submultiplicativity,
\[
\Norm{\vg_{\hat{\vtheta}_T^\NS}}_2
\;\le\;
\sup_{t\in[0,1]}\OpNorm{\mH_{\vtheta(t)}}\;\cdot\;\Norm{\hat{\vtheta}_T^\NS-\hat{\vtheta}_T}_2
\;\le\;
O\paren{n}\;\Norm{\hat{\vtheta}_T^\NS-\hat{\vtheta}_T}_2,
\]
where the last inequality uses~\eqref{eq:global-hessian-upper}.
Rearranging yields~\eqref{eq:newton-error-lower}.
\end{proof}

\paragraph{Lower-Bounding the Gradient after NS}
\begin{lemma}[One-step Newton residual is quadratically large]
\label{lem:newton_lower_bound_one_step}
Let $L_T$ be the drop-set objective and define the one-step Newton iterate
\[
\hat{\vtheta}_T^\NS \;\defeq\; \hat{\vtheta} \;-\; \mH_{\hat{\vtheta}}^{-1}\,\vg_{\hat{\vtheta}},
\qquad
\Delta \;\defeq\; \mH_{\hat{\vtheta}}^{-1}\,\vg_{\hat{\vtheta}}.
\]
Assume the conditions of Lemma~\ref{lem:third_derivative} and Section~\ref{subsec:auxiliary_bounds} hold, $n \gg k^2 + d^2$, and that the step is sufficiently small:
\[
\Norm{\Delta}_2 \;\le\; \frac{c_0}{\polylog(n)}
\]
for a sufficiently small absolute constant $c_0>0$.
Then, with very high probability over the samples, for all $T \in \binom{[n]}{k}$,
\[
\iprod{\hat{\vtheta}, \vg_{\hat{\vtheta}_T^\NS}}
\;\le\;
-\,c_1\,n\,\Norm{\Delta}_2^{2},
\]
for an absolute constant $c_1>0$ depending only on the bounds $c<C$ from Lemma~\ref{lem:third_derivative}.
Consequently, using Cauchy--Schwarz and $\Norm{\hat{\vtheta}}_2=\Theta(1)$ (by Lemma~\ref{lem:parameter_learning_for_logistic_regression} and the assumption $\Norm{\vtheta^\star}_2=\Theta(1)$),
\[
\Norm{\vg_{\hat{\vtheta}_T^\NS}}_2
\;\ge\;
\frac{1}{\Norm{\hat{\vtheta}}_2}\,\abs{\iprod{\hat{\vtheta},\vg_{\hat{\vtheta}_T^\NS}}}
\;=\;
\Omega\!\Paren{n\,\Norm{\Delta}_2^{2}}.
\]
\end{lemma}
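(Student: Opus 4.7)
By the Newton-step identity $\mH_{\hat{\vtheta}}\Delta = \vg_{\hat{\vtheta}}$, expanding the drop-set gradient $\vg$ to third order around $\hat{\vtheta}$ cancels the zeroth- and first-order terms, yielding
\begin{align*}
\vg_{\hat{\vtheta}_T^\NS} \;=\; \vg_{\hat{\vtheta}}-\mH_{\hat{\vtheta}}\Delta+\tfrac{1}{2}\mT_{\hat{\vtheta}}(\Delta,\Delta,\cdot)+R_4 \;=\; \tfrac{1}{2}\mT_{\hat{\vtheta}}(\Delta,\Delta,\cdot)+R_4,
\end{align*}
where $R_4$ is the integral Taylor remainder involving the fourth-derivative tensor of $L_T$. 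Contracting with $\hat{\vtheta}$, the lemma reduces to showing $\tfrac{1}{2}\mT_{\hat{\vtheta}}(\Delta,\Delta,\hat{\vtheta}) + \iprod{\hat{\vtheta},R_4} \le -c_1 n\Norm{\Delta}_2^2$; the stated lower bound on $\Norm{\vg_{\hat{\vtheta}_T^\NS}}_2$ then follows immediately from Cauchy--Schwarz together with $\Norm{\hat{\vtheta}}_2 = \Theta(1)$ (Lemma~\ref{lem:parameter_learning_for_logistic_regression}).

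The crux is the directional negativity of $\mT_{\hat{\vtheta}}(\Delta,\Delta,\hat{\vtheta})$. For logistic loss, $\ell'''(z,y)$ depends only on $z=\iprod{\vtheta,\vx}$, and $\ell'''(z)\cdot z$ is sign-definite (non-positive). A direct Gaussian computation---decomposing $\vx$ along $\vtheta^\star$ and its orthogonal complement, then using the moments of $z\sim N(0,\Norm{\vtheta^\star}_2^2)$---shows that both the parallel and perpendicular components of $\vu$ contribute negatively, yielding $\E\brac{\mT_{\vtheta^\star}^{[n]}(\vu,\vu,\vtheta^\star)} \le -c\,n\,\Norm{\vu}_2^2$ for every direction $\vu$, with $c>0$ depending only on $\Norm{\vtheta^\star}_2 = \Theta(1)$. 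This computation is presumably the content of Lemma~\ref{lem:third_derivative}.

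Three error sources then bridge this population statement to the empirical $\mT_{\hat{\vtheta}}^{[n]\setminus T}(\Delta,\Delta,\hat{\vtheta})$: (i) the shift $\vtheta^\star \mapsto \hat{\vtheta}$, controlled by Lipschitzness of $\mT$ and~\eqref{eq:thetahat-theta-star}; (ii) the restriction $[n]\mapsto [n]\setminus T$, handled uniformly over $T$ by factoring $\abs{\mT_{\hat{\vtheta}}^T(\Delta,\Delta,\hat{\vtheta})} \le \max_{i \in T}\abs{\iprod{\vx_i,\hat{\vtheta}}}\cdot\Delta^\top\Paren{\sum_{i\in T}\vx_i\vx_i^\top}\Delta$ and applying the augmented-set bound of Lemma~\ref{lem:gradient_difference_subset} to the inner PSD quadratic form, giving $\otilde{k+d}\Norm{\Delta}_2^2$; and (iii) empirical concentration of $\mT^{[n]}$ around its mean via Lemma~\ref{lem:third-order-concentration}, contributing $\otilde{\sqrt{n(k+d)}+(k+d)^{3/2}}\Norm{\Delta}_2^2$. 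Under $n \gg k^2+d^2$ each is $o(n\Norm{\Delta}_2^2)$, leaving a leading term $\le -\tfrac{c}{2}n\Norm{\Delta}_2^2$.

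Finally, the remainder $R_4$ is controlled using boundedness of $\ell''''$: a standard covariance concentration argument yields $\OpNorm{\mathcal{T}^{(4)}_\vtheta} = O(n)$ uniformly in $\vtheta$, so $\abs{\iprod{\hat{\vtheta},R_4}} = O(n\Norm{\Delta}_2^3)$, which the smallness hypothesis $\Norm{\Delta}_2 \le c_0/\polylog(n)$---with $c_0$ absorbing the poly-log factors accumulated in (i)--(iii)---renders strictly smaller than the leading $cn\Norm{\Delta}_2^2$. The main obstacle is step (ii): ensuring the bound survives the worst-case choice of $T$ despite $\mT$ (unlike the Hessian) not being a PSD-summed quantity. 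The key trick is to bound the triple contraction through the PSD quadratic form $\Delta^\top(\sum_{i\in T}\vx_i\vx_i^\top)\Delta$ rather than through $\OpNorm{\mT^T}$ directly, which allows the same augmented-set argument already used for the Hessian in Lemma~\ref{lem:gradient_difference_subset} to deliver a uniform-in-$T$ bound.
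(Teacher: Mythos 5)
Your proof is correct and reaches the same conclusion, but by a technically different route than the paper. Three distinguishing features:

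\paragraph{Decomposition of the residual.} You Taylor-expand $\vg$ to third order around the fixed point $\hat{\vtheta}$, keeping a fourth-order integral remainder $R_4$; the paper instead uses the second-order integral (mean-value) remainder form $\vg_{\hat{\vtheta}_T^\NS} = \int_0^1 (1-s)\,\mT_{\vtheta(s)}(\Delta,\Delta)\,ds$ in which the third derivative is evaluated \emph{along the path}. The paper then never needs the fourth-derivative tensor at all: the difference $\mT_{\vtheta(s)}-\mT_{\hat{\vtheta}}$ (equivalently, your $R_4$) is absorbed into a small ``alignment correction'' $\abs{\iprod{\hat{\vtheta}-\vtheta(s),\,\E[\mT_{\vtheta(s)}](\Delta,\Delta)}} \le O(n)\,s\Norm{\Delta}_2^3$. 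Your route requires the additional input $\OpNorm{\mathcal{T}^{(4)}_\vtheta}=O(n)$. This is available---Lemma~\ref{lem:higher_order_moments} with $t=4$ gives $\max_\ve\sum_i\iprod{\vx_i,\ve}^4 = O(n+d^2) = O(n)$ under $n\gg d^2$, and boundedness of $\ell''''$ plus Banach's theorem for symmetric multilinear forms upgrades the diagonal contraction bound to the operator norm---but the paper never has to state or prove it, so your version adds a small technical debt.

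\paragraph{Handling the drop-set $T$.} You subtract the $T$-contribution explicitly, bounding the scalar contraction $\mT_{\hat{\vtheta}}^T(\Delta,\Delta,\hat{\vtheta})$ through the PSD quadratic form $\Delta^\top(\sum_{i\in T}\vx_i\vx_i^\top)\Delta$ (with a factor of $\max_i\abs{\iprod{\vx_i,\hat{\vtheta}}}=\otilde{1}$ and a factor $\sup_z\abs{\gamma(z)}=O(1)$), then apply an augmented-set covariance concentration to get $\otilde{k+d}\Norm{\Delta}_2^2$ uniformly over $T$. The paper instead bakes the removal in from the start: it defines $\mT_\vtheta = \nabla^{\otimes 3}L_T\vert_\vtheta$ over the retained samples, uses Lemma~\ref{lem:third_derivative} to control $\E[\mT_\vtheta]$ over $n-k=\Theta(n)$ samples, and invokes the uniform-over-$T$ variant of Lemma~\ref{lem:third-order-concentration}. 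Both arguments land in the same place; yours is a bit more explicit and reuses the trick from Lemma~\ref{lem:gradient_difference_subset}, while the paper's is more unified because the concentration lemma already comes in a leave-$T$-out form. (One nit: what you cite from Lemma~\ref{lem:gradient_difference_subset} is a bound on $\OpNorm{\mH_\vtheta^T}$, not on $\OpNorm{\sum_{i\in T}\vx_i\vx_i^\top}$; since $\beta_i\le 1/4$ the latter can only be larger, so you cannot deduce it directly, but the same augmented-set union bound with plain subgaussian covariance concentration gives the needed $\otilde{k+d}$.)

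\paragraph{Shift $\vtheta^\star\to\hat{\vtheta}$.} You correct the negativity from Lemma~\ref{lem:third_derivative} (stated at $\vtheta^\star$, or rather at a fixed $\vtheta$) to the empirical $\hat{\vtheta}$ via Lipschitzness of $\mT$; the paper applies the lemma directly at $\vtheta(s)$, using $\Norm{\vtheta(s)}_2=\Theta(1)$, and corrects the inner-product vector from $\vtheta(s)$ to $\hat{\vtheta}$ rather than shifting the tensor argument. Same effect, mild repackaging.

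Overall: your proof works and captures the essential mechanism (directional negativity of the expected third derivative, concentration, smallness of $\Norm{\Delta}_2$). The paper's version is marginally leaner because the integral remainder form lets it dodge the fourth-order tensor entirely, while your version requires one more concentration input that, fortunately, is already a corollary of lemmas elsewhere in the paper.
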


\begin{proof}
By the Fundamental Theorem of Calculus applied to the gradient along the ray
$\vtheta(t) \defeq \hat{\vtheta} - t\Delta$,
\[
\vg_{\hat{\vtheta}_T^\NS} - \vg_{\hat{\vtheta}}
\;=\;
\int_{0}^{1} \mH_{\vtheta(t)}\,(-\Delta)\,dt
\;=\;
-\!\int_{0}^{1} \!\big(\mH_{\vtheta(t)}-\mH_{\hat{\vtheta}}\big)\Delta\,dt \;-\; \underbrace{\big(\mH_{\hat{\vtheta}}\Delta\big)}_{=\vg_{\hat{\vtheta}}}.
\]
Applying the Fundamental Theorem of Calculus again to the Hessian,
\[
\mH_{\vtheta(t)} - \mH_{\hat{\vtheta}}
\;=\;
- \int_{0}^{t} \mT_{\vtheta(s)}(\Delta)\,ds,
\qquad
\text{where } \mT_{\vtheta} \defeq \nabla^{\otimes 3}L_T\vert_{\vtheta}.
\]
Substituting and swapping the integrals yields the standard second-order remainder form for the gradient after one Newton step:
\[
\vg_{\hat{\vtheta}_T^\NS}
\;=\;
\int_{0}^{1} (1-s)\,\mT_{\vtheta(s)}\!\Paren{\Delta,\Delta}\,ds,
\qquad
\text{where } \vtheta(s)=\hat{\vtheta}-s\Delta.
\]
Taking the inner product with $\hat{\vtheta}$ and using linearity,
\[
\iprod{\hat{\vtheta},\vg_{\hat{\vtheta}_T^\NS}}
\;=\;
\int_0^1 (1-s)\;\iprod{\hat{\vtheta}, \mT_{\vtheta(s)}\!\Paren{\Delta,\Delta}}\,ds.
\]

\textit{Sign and size of the integrand.}
By Lemma~\ref{lem:third_derivative}, for each $s \in [0,1]$,
\[
\iprod{\vtheta(s),\Expect{}{\mT_{\vtheta(s)}\!}\Paren{\Delta,\Delta}} = \Expect{}{\mT_{\vtheta(s)}\!}\Paren{\Delta,\Delta, \vtheta(s)}
\;=\;
-\,\Theta(n)\,\Norm{\Delta}_2^2\,\Norm{\vtheta(s)}_2^{2}.
\]
Since $\Norm{\hat{\vtheta}}_2=\Theta(1)$ and $\Norm{\Delta}_2 \le c_0/\polylog(n)$,
we have $\Norm{\vtheta(s)}_2 = \Theta(1)$ uniformly in $s$, so
\[
\iprod{\vtheta(s),\Expect{}{\mT_{\vtheta(s)}\!}\Paren{\Delta,\Delta}}
\;\le\;
-\,c\,n\,\Norm{\Delta}_2^{2}
\]
for some absolute constant $c>0$.
We now relate the inner product with $\hat{\vtheta}$ to that with $\vtheta(s)$:
\[
\abs{\,\iprod{\hat{\vtheta}-\vtheta(s),\Expect{}{\mT_{\vtheta(s)}\!}\Paren{\Delta,\Delta}\!}\,}
\;\le\;
\Norm{\hat{\vtheta}-\vtheta(s)}_2\;\OpNorm{\Expect{}{\mT_{\vtheta(s)}}}\;\Norm{\Delta}_2^{2}
\;\le\;
O(n)\,s\,\Norm{\Delta}_2^{3},
\]
using $\OpNorm{\Expect{}{\mT_{\vtheta}}}=O(n)$ and $\Norm{\hat{\vtheta}-\vtheta(s)}_2=s\Norm{\Delta}_2$.
Thus,
\[
\iprod{\hat{\vtheta},\Expect{}{\mT_{\vtheta(s)}\!}\Paren{\Delta,\Delta}\!}
\;\le\;
-\,c\,n\,\Norm{\Delta}_2^{2} \;+\; O(n)\,\Norm{\Delta}_2^{3}.
\]
Next, by Lemma~\ref{lem:third-order-concentration} and $n \gg k^2+d^2$,
\[
\OpNorm{\mT_{\vtheta(s)}-\Expect{}{\mT_{\vtheta(s)}}}=\otilde{\sqrt{nk+nd}+k^{3/2}+d^{3/2}}=o(n),
\]
hence
\[
\abs{\,\iprod{\hat{\vtheta},\big(\mT_{\vtheta(s)}-\Expect{}{\mT_{\vtheta(s)}}\big)\!\Paren{\Delta,\Delta}}\,}
\;\le\;
\Norm{\hat{\vtheta}}_2\;o(n)\;\Norm{\Delta}_2^{2}
\;=\;
o(n)\,\Norm{\Delta}_2^{2}.
\]
Combining the expectation term, the alignment correction, and the concentration error,
for sufficiently small $c_0$ we obtain uniformly in $s$,
\[
\iprod{\hat{\vtheta},\mT_{\vtheta(s)}\!\Paren{\Delta,\Delta}}
\;\le\;
-\frac{c}{2}\,n\,\Norm{\Delta}_2^{2}.
\]

\textit{Integrating along the path.}
Therefore,
\[
\iprod{\hat{\vtheta},\vg_{\hat{\vtheta}_T^\NS}}
\;=\;
\int_0^1 (1-s)\;\iprod{\hat{\vtheta},\mT_{\vtheta(s)}\!\Paren{\Delta,\Delta}}\,ds
\;\le\;
-\frac{c}{2}\,n\,\Norm{\Delta}_2^{2}\;\int_0^1 (1-s)\,ds
\;=\;
-\,\frac{c}{4}\,n\,\Norm{\Delta}_2^{2}.
\]
Renaming constants yields the claimed bound with $c_1=c/4$.
Finally, applying Cauchy--Schwarz gives
\[
\Norm{\vg_{\hat{\vtheta}_T^\NS}}_2
\ge \Norm{\hat{\vtheta}}_2^{-1}\abs{\iprod{\hat{\vtheta},\vg_{\hat{\vtheta}_T^\NS}}}
= \Omega\Paren{n\Norm{\Delta}_2^{2}}\,,
\]
since $\Norm{\hat{\vtheta}}_2=\Theta(1)$ by
Lemma~\ref{lem:parameter_learning_for_logistic_regression}.
\end{proof}

\paragraph{Lower-Bounding the Norm of the First Newton Step}

\begin{lemma}[Average-case gradient norm at $\vtheta^\star$]
\label{lem:avg_grad_lower_theta_star}
Under the assumptions of Theorem~\ref{thm:asymptotic_main}, with very high probability over the samples,
\[
\Expect{T \in \binom{[n]}{k}}{\Norm{\vg_{\vtheta^\star}^T}_2^2} \;=\; \Omega(k d)\,.
\]
\end{lemma}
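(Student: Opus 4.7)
The plan is to compute the expectation directly using linearity and inclusion probabilities for a uniform $k$-subset, then lower bound the resulting sum of squared per-sample gradient norms. Let $\vh_i \defeq \nabla \ell_i \vert_{\vtheta = \vtheta^\star}$, so that $\vg_{\vtheta^\star}^T = \sum_{i \in T} \vh_i$. Working in the $\lambda = 0$ regime, first-order optimality of $\vtheta^\star$ on the \emph{population} loss gives $\E[\vh_i] = \vzero$, and the $\vh_i$ are i.i.d.\ across $i$. Since $\Pr[i \in T] = k/n$ and $\Pr[\{i,j\} \subset T] = \frac{k(k-1)}{n(n-1)}$ for $i \neq j$,
\[
\Expect{T \in \binom{[n]}{k}}{\Norm{\vg_{\vtheta^\star}^T}_2^2} = \frac{k}{n}\Paren{1 - \frac{k-1}{n-1}} \sum_{i=1}^n \Norm{\vh_i}_2^2 + \frac{k(k-1)}{n(n-1)} \Norm{\sum_{i=1}^n \vh_i}_2^2.
\]
Both terms are non-negative, and since $k^2 \ll n$ the coefficient of the first is $(1 - o(1))k/n$. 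It therefore suffices to show $\sum_i \Norm{\vh_i}_2^2 = \Omega(nd)$ wvhp over the samples.

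To lower bound $\E \Norm{\vh_i}_2^2$, write $\vh_i = s_i \vx_i$ where $s_i \defeq -y_i/(1 + e^{y_i z_i})$ and $z_i \defeq \iprod{\vtheta^\star, \vx_i}$. Decompose $\vx_i = \vx_i^{\parallel} + \vx_i^{\perp}$ along and perpendicular to $\vtheta^\star$. Gaussianity of $\vx_i$ makes $\vx_i^{\perp}$ independent of $(z_i, y_i)$, hence of $s_i$, so
\[
\E \Norm{\vh_i}_2^2 \geq \E\Brac{s_i^2 \Norm{\vx_i^{\perp}}_2^2} = (d-1)\, \E[s_i^2].
\]
Since $s_i^2 \geq (1 + e^{|z_i|})^{-2}$ and $z_i \sim \mcN(0, \Norm{\vtheta^\star}_2^2)$ with $\Norm{\vtheta^\star}_2 = \Theta(1)$, the event $\{|z_i| \leq 1\}$ has constant probability, giving $\E[s_i^2] = \Omega(1)$ and therefore $\E \Norm{\vh_i}_2^2 = \Omega(d)$.

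To finish, I would concentrate $\sum_i \Norm{\vh_i}_2^2$ around its mean. Since $\Norm{\vh_i}_2^2 \leq \Norm{\vx_i}_2^2$ and $\Norm{\vx_i}_2^2 \sim \chi^2_d$ is sub-exponential with mean $d$, Bernstein's inequality for sub-exponential random variables yields $\sum_i \Norm{\vh_i}_2^2 \geq \tfrac{1}{2} \E \sum_i \Norm{\vh_i}_2^2 = \Omega(nd)$ with probability at least $1 - \exp(-\Omega(n)) = 1 - n^{-\omega(1)}$. Combining, $\Expect{T}{\Norm{\vg_{\vtheta^\star}^T}_2^2} = \Omega(kd)$ wvhp over the samples, as claimed.

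The main (and really only) obstacle is the lower bound $\E[s_i^2] = \Omega(1)$ in a way that avoids any assumption on the conditional law of $y_i \mid \vx_i$. The uniform pointwise bound $s_i^2 \geq (1 + e^{|z_i|})^{-2}$ sidesteps this, since it depends only on the Gaussian projection $z_i$ (a constant-variance Gaussian), at which point positivity of $\E[(1+e^{|z_i|})^{-2}]$ is immediate. Everything else is a routine second-moment computation plus a standard $\chi^2$ concentration bound.
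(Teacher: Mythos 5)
Your proposal follows essentially the same route as the paper's proof: compute $\Expect{T}{\Norm{\vg_{\vtheta^\star}^T}_2^2}$ exactly via inclusion probabilities for a uniform $k$-subset, observe that the off-diagonal piece is harmless, and reduce to showing the diagonal sum $\sum_i \Norm{\vh_i}_2^2 = \Omega(nd)$ wvhp. Your exact identity
\[
\Expect{T}{\Norm{\vg^T}_2^2} = \tfrac{k}{n}\bigl(1 - \tfrac{k-1}{n-1}\bigr)\sum_i \Norm{\vh_i}_2^2 + \tfrac{k(k-1)}{n(n-1)}\Norm{\textstyle\sum_i\vh_i}_2^2
\]
is the same expansion the paper carries out with the matrix $A_{i,j} = \iprod{\vu_i,\vu_j}$; where the paper writes the second term as an $\pm\otilde{dk^2/n}$ error (and so needs $k^2 \ll n$ to absorb it), you simply note it is non-negative and drop it, which is slightly cleaner. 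The lower bound on the diagonal piece is the same idea in both (restrict to the constant-probability event $|\iprod{\vtheta^\star,\vx_i}|\leq 1$ on which the per-sample gradient has magnitude $\Omega(1)\cdot\Norm{\vx_i}_2$), and you are a bit more explicit about the final Bernstein concentration step that the paper elides.

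There is one slip in your middle paragraph, though. You write $\E[s_i^2\Norm{\vx_i^\perp}_2^2] = (d-1)\E[s_i^2]$, justified by the claim that ``Gaussianity of $\vx_i$ makes $\vx_i^\perp$ independent of $(z_i,y_i)$, hence of $s_i$.'' That independence is false in the paper's agnostic setting: the label $y_i$ may depend on the full vector $\vx_i$, not only on $z_i = \iprod{\vtheta^\star,\vx_i}$, so $s_i$ and $\vx_i^\perp$ are not independent. Your own pointwise bound $s_i^2 \geq (1+e^{|z_i|})^{-2}$ is the fix, but you must apply it \emph{before} factorizing: since $(1+e^{|z_i|})^{-2}$ is a function of $z_i$ alone, it genuinely is independent of $\vx_i^\perp$, giving $\E[s_i^2\Norm{\vx_i^\perp}_2^2] \geq \E[(1+e^{|z_i|})^{-2}]\cdot(d-1) = \Omega(d)$. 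As written, the inequality chain uses the pointwise bound only to evaluate $\E[s_i^2]$ after having already (incorrectly) pulled out the $\Norm{\vx_i^\perp}_2^2$ factor. This is easily repaired with the reordering above, and your closing remark shows you understand the underlying issue, but the displayed chain should be corrected.
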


\begin{proof}
Let $\vu_i = \vg_{\vtheta^\star}^i$ denote the sample gradients at $\vtheta^\star$, and let $\vu = \frac{1}{n} \sum_{i \in [n]} \vu_i$ be their empirical mean.

Because of our definition of $\vtheta^\star$ as the distribution optimum, we know that the expectation of $\vu$ is $\vzero$, and that it is the sum over $n$ iid subgaussian random variables.
Therefore, wvhp
\[
\Norm{\vu}_2 = \otilde{\sqrt{\frac{d}{n}}}\,.
\]

Let $A_{i, j} = \iprod{\vu_i, \vu_j}$ be the pairwise inner products of the gradients.
We have
\[
\sum_{i, j \in [n]} A_{i, j} = \Norm{\sum_{i \in [n]} \vu_i}_2^2 = \otilde{d n}\,.
\]

Therefore,
\[
\sum_{i \neq j} A_{i, j} = \otilde{d n} - \sum_{i \in [n]} A_{i, i}\,.
\]

Applying this to our setting, we have
\begin{align*}
    \Expect{T \in \binom{[n]}{k}}{\Norm{\vg_{\vtheta^\star}^T}_2^2} &= \Expect{T \in \binom{[n]}{k}}{\sum_{i, j \in T} A_{i, j}} =\\
    &= k \times \Expect{i \in [n]}{A_{i, i}} + k (k-1) \times \Expect{i \neq j}{A_{i, j}} =\\
    &=\Paren{1 - \frac{k-1}{n-1}} k \times \Expect{i \in [n]}{A_{i, i}} \pm \otilde{\frac{n d k^2}{n^2}}\,.
\end{align*}

Finally, note that $\Norm{\vu_i}_2^2 \geq \beta_i(\vtheta^\star)^2 \Norm{\vx_i}_2^2$, so we have
\begin{align*}
    \Expect{i \in [n]}{A_{i, i}} &= \Expect{i \in [n]}{\Norm{\vu_i}_2^2} = \frac{1}{n} \trace{\sum_{i \in [n]} \vg_i \vg_i^\intercal} \geq \frac{1}{n} \trace{\sum_{i \in [n]} \beta_i(\vtheta^\star)^2 \vx_i \vx_i^\intercal} \geq\\
    &\geq\frac{1}{n} \trace{\sum_{i \in [n]} e^{-4} \ind_{\abs{\iprod{\vtheta^\star, \vx_i}} \leq 1} \vx_i \vx_i^\intercal}  = \Omega(d)\,.
\end{align*}

Therefore,
\[
\Expect{T \in \binom{[n]}{k}}{\Norm{\vg_{\vtheta^\star}^T}_2^2} = \Omega\Paren{kd} \pm \otilde{\frac{k^2 d}{n}} = \Omega(kd)\,.
\]
\end{proof}

\begin{lemma}[Worst-case gradient norm at $\vtheta^\star$]
\label{lem:worst_grad_lower_theta_star}
Under the same assumptions as Lemma~\ref{lem:avg_grad_lower_theta_star}, wvhp over the randomness of the samples, there exists (data-dependent) $T \in \binom{[n]}{k}$ such that,
\[
\Norm{\vg_{\vtheta^\star}^T}_2^2 \;=\; \Omega(k^2 + kd).
\]
\end{lemma}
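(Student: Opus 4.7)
The plan is to combine two independent lower bounds. First, since the squared gradient norm averages to $\Omega(kd)$ over random $T$ (Lemma~\ref{lem:avg_grad_lower_theta_star}), the maximum over $T$ is at least $\Omega(kd)$; this handles the $kd$ term ``for free.'' The remaining work is to exhibit a \emph{specific} $T$ for which $\|\vg_{\vtheta^\star}^T\|_2^2 = \Omega(k^2)$, by choosing $k$ samples whose per-sample gradients all project in the same direction with constant magnitude.

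Concretely, write $\vu_i \defeq \nabla \ell_i \vert_{\vtheta^\star} = c_i \vx_i$ where, for logistic loss, $c_i = -y_i \sigma(-y_i \iprod{\vtheta^\star, \vx_i})$. I would fix an arbitrary unit vector $\vv$ (for instance $\vv = \vtheta^\star/\Norm{\vtheta^\star}_2$, but any fixed direction independent of the samples works) and study the random variables $Z_i \defeq \iprod{\vv, \vu_i} = c_i \iprod{\vv, \vx_i}$. Since $\Norm{\vtheta^\star}_2 = \Theta(1)$, the projection $\iprod{\vtheta^\star, \vx_i}$ is a standard Gaussian up to a constant, so with constant probability the event
\[
\mcE_i \;\defeq\; \set{\,|\iprod{\vtheta^\star,\vx_i}| \le 1 \;\text{and}\; \iprod{\vv, \vx_i} \in [1,2]\,}
\]
holds. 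On $\mcE_i$, we have $|c_i| \ge \sigma(-1) = \Omega(1)$ and $\iprod{\vv, \vx_i} \ge 1$; after conditioning on the sign of $y_i$ (each sign occurs with probability bounded away from $0$ and $1$ for logistic-style labels), we can ensure $Z_i$ has a definite sign, say $Z_i \ge c_1$, for some absolute constant $c_1 > 0$. A standard Chernoff bound then shows that with very high probability at least $\Omega(n)$ indices $i$ satisfy $Z_i \ge c_1$. Since $k \ll \sqrt{n} \ll n$, we may pick $T$ to be any $k$ such indices.

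For this $T$,
\[
\Norm{\vg_{\vtheta^\star}^T}_2 \;\ge\; \iprod{\vv, \vg_{\vtheta^\star}^T} \;=\; \sum_{i \in T} Z_i \;\ge\; c_1 k,
\]
so $\Norm{\vg_{\vtheta^\star}^T}_2^2 = \Omega(k^2)$. Combining with $\max_T \Norm{\vg_{\vtheta^\star}^T}_2^2 \ge \E_T \Norm{\vg_{\vtheta^\star}^T}_2^2 = \Omega(kd)$ from Lemma~\ref{lem:avg_grad_lower_theta_star} yields the desired $\Omega(k^2 + kd)$ bound.

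The main technical step is the Chernoff/concentration argument that $|\set{i : Z_i \ge c_1}| = \Omega(n)$ holds with very high probability, since the lemma must hold for \emph{all} drop sets simultaneously and is stated wvhp over the sample randomness. This is routine because $\{Z_i\}$ are i.i.d.\ and each event $\set{Z_i \ge c_1}$ has constant probability, but one must be slightly careful that the direction $\vv$ is fixed independently of the data (which it is, since $\vtheta^\star$ is a population quantity). Everything else is just bookkeeping: the $kd$ term is delivered by a previous lemma, and the $k^2$ term by deterministic pigeonholing among $\Omega(n)$ ``well-aligned'' samples.
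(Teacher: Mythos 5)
Your proof follows essentially the same route as the paper's: use Lemma~\ref{lem:avg_grad_lower_theta_star} (via $\max \ge \E$) to harvest the $\Omega(kd)$ term, and separately exhibit $\Omega(n)$ samples whose gradients all project with the same sign and constant magnitude onto a fixed direction, then pick $k$ of them to get $\Omega(k^2)$. The paper splits into $k\le d$ and $k>d$ cases rather than combining the two bounds via $\max(a,b)\ge(a+b)/2$, but that is only a presentational difference.

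One thing to tighten: your illustrative choice $\vv = \vtheta^\star/\Norm{\vtheta^\star}_2$ is incompatible with your own event $\mcE_i$. If $\vv$ is parallel to $\vtheta^\star$, then $\iprod{\vtheta^\star,\vx_i} = \Norm{\vtheta^\star}_2\,\iprod{\vv,\vx_i}$, so $\iprod{\vv,\vx_i}\ge 1$ forces $\abs{\iprod{\vtheta^\star,\vx_i}}\ge\Norm{\vtheta^\star}_2$, which contradicts $\abs{\iprod{\vtheta^\star,\vx_i}}\le 1$ once $\Norm{\vtheta^\star}_2>1$ (allowed under $\Norm{\vtheta^\star}_2=\Theta(1)$), making $\mcE_i$ empty. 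The paper avoids this by taking the test direction \emph{orthogonal} to $\vtheta^\star$, so the two Gaussian projections controlling the residual magnitude and the alignment direction are independent and the constant-probability argument goes through cleanly. Your disclaimer that ``any fixed direction works'' is nearly right, but you should either pick $\vv\perp\vtheta^\star$ outright or argue explicitly that for a generic fixed $\vv$ (with a nonzero orthogonal component) the event $\mcE_i$ still has constant probability; the parallel example should be dropped.
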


\begin{proof}
If $k \leq d$, then the claim follows trivially from Lemma~\ref{lem:avg_grad_lower_theta_star}.
Consider the cases where $k > d$.

Fix any unit vector $\vu \in \R^d$ perpendicular to $\vtheta^\star$.
Write
\[
r_i \defeq \sigma(\iprod{\vtheta^\star,\vx_i}) - y_i,
\qquad
\vg_{\vtheta^\star}^i = r_i \vx_i.
\]

By Gaussian anti-concentration and the fact that $\vu \perp \vtheta^\star$, there exist constants $c_x,C_\theta,p_0>0$ such that
\[
\Pr\Paren{\abs{\iprod{\vx_i,\vu}} \geq c_x,\; \abs{\iprod{\vx_i,\vtheta^\star}} \leq C_\theta} \geq p_0.
\]
On this event,
\[
\sigma(\iprod{\vtheta^\star,\vx_i}) \in [c_r,1-c_r]
\]
for a constant $c_r>0$. Since $y_i\in\bits$, this implies
\[
\abs{r_i}=\abs{\sigma(\iprod{\vtheta^\star,\vx_i})-y_i}\geq c_r.
\]
Therefore, for every sample satisfying the event above,
\[
\abs{\iprod{\vg_{\vtheta^\star}^i,\vu}}
=
\abs{r_i\iprod{\vx_i,\vu}}
\geq c_r c_x.
\]

Let $\mcI$ be the set of samples satisfying the event above. By concentration, $\abs{\mcI}=\Omega(n)$ wvhp. Among these samples, at least one of the two signs of $\iprod{\vg_{\vtheta^\star}^i,\vu}$ occurs on $\Omega(n)$ samples. Select $T$ to be any $k$ samples with this common sign. Then
\[
\abs{\iprod{\vg_{\vtheta^\star}^T,\vu}}
=
\abs{\sum_{i\in T}\iprod{\vg_{\vtheta^\star}^i,\vu}}
\geq k c_r c_x,
\]
and hence $\Norm{\vg_{\vtheta^\star}^T}_2=\Omega(k)$.
\end{proof}

\begin{lemma}[Newton step lower bounds: average and worst case]
\label{lem:newton_step_lower_bounds}
Let $\Delta \defeq \mH^{-1}\vg$ denote the Newton step for $L_T$ (the loss after dropping a set of samples $T$) at $\hat{\vtheta}$.
Under the assumptions of Section~\ref{subsec:auxiliary_bounds},
\[
\Norm{\Delta}_2 \;\ge\; \frac{1}{O\Paren{n}}\;\Norm{\vg_{\hat{\vtheta}}^T}_2.
\]
Consequently, combining with Lemmas~\ref{lem:avg_grad_lower_theta_star} and~\ref{lem:worst_grad_lower_theta_star} and
$\vg_{\hat{\vtheta}}^T = \vg_{\vtheta^\star}^T \pm o(\sqrt{kd})$ (Lemma~\ref{lem:gradient_difference_subset}), we have wvhp:
\begin{align*}
\textup{(average-case random T)}\qquad
\Expect{T \in \binom{[n]}{k}}{\Norm{\Delta}_2^2} &\;=\; \Omega\!\Paren{\frac{kd}{n^2}},\\
\textup{(worst-case adversarial T)}\qquad
\max_{T \in \binom{[n]}{k}}\set{\Norm{\Delta}_2^2} &\;=\; \Omega\!\Paren{\frac{k^2 + kd}{n^2}}.
\end{align*}
\end{lemma}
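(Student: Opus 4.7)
The plan is to establish the first displayed inequality $\Norm{\Delta}_2 \ge \Norm{\vg_{\hat{\vtheta}}^T}_2 / O(n)$ directly, and then combine it with the two gradient lower bounds at $\vtheta^\star$ (Lemmas~\ref{lem:avg_grad_lower_theta_star} and~\ref{lem:worst_grad_lower_theta_star}) translated back to $\hat{\vtheta}$ via the uniform slack of Lemma~\ref{lem:gradient_difference_subset}. For the first inequality, I would rearrange the Newton-step identity $\mH\,\Delta = \vg$ to $\Norm{\vg}_2 \le \OpNorm{\mH}\cdot\Norm{\Delta}_2$. Since $\hat{\vtheta}$ is the full-data minimizer, $\sum_{i\in[n]}\nabla\ell_i\vert_{\hat{\vtheta}} = \vzero$, so $\vg = -\vg_{\hat{\vtheta}}^T$ and $\Norm{\vg}_2 = \Norm{\vg_{\hat{\vtheta}}^T}_2$. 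Equation~\eqref{eq:global-hessian-upper} of Lemma~\ref{lem:newton_step_error_lower_bound} then supplies the global operator-norm upper bound $\OpNorm{\mH} = O(n)$ with very high probability, which immediately yields the inequality.

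For the average-case conclusion, I would expand
\[
\Norm{\vg_{\hat{\vtheta}}^T}_2^2 \;\ge\; \Norm{\vg_{\vtheta^\star}^T}_2^2 \;-\; 2\,\Norm{\vg_{\vtheta^\star}^T}_2 \,\Norm{\vg_{\hat{\vtheta}}^T - \vg_{\vtheta^\star}^T}_2
\]
and take expectation over $T \in \binom{[n]}{k}$. Lemma~\ref{lem:avg_grad_lower_theta_star} supplies the main term $\Expect{T}{\Norm{\vg_{\vtheta^\star}^T}_2^2} = \Omega(kd)$; the matching upper bound $\Expect{T}{\Norm{\vg_{\vtheta^\star}^T}_2^2} = \otilde{kd}$ follows from the high-probability subgaussian bound~\eqref{eq:g-bounds} (together with the worst-case bound~\eqref{eq:g-bounds2} to handle the negligible failure set), and Cauchy--Schwarz then gives $\Expect{T}{\Norm{\vg_{\vtheta^\star}^T}_2} = \otilde{\sqrt{kd}}$; finally, Lemma~\ref{lem:gradient_difference_subset} provides the uniform slack $\Norm{\vg_{\hat{\vtheta}}^T - \vg_{\vtheta^\star}^T}_2 = o(\sqrt{kd})$. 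The cross term is thus $o(kd)$, so $\Expect{T}{\Norm{\vg_{\hat{\vtheta}}^T}_2^2} = \Omega(kd)$, and dividing by $O(n^2)$ using the first inequality yields the claimed $\Omega(kd/n^2)$.

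For the worst-case conclusion, I would take the specific $T^\star$ witnessing Lemma~\ref{lem:worst_grad_lower_theta_star}, giving $\Norm{\vg_{\vtheta^\star}^{T^\star}}_2 = \Omega(k+\sqrt{kd})$, and apply the triangle inequality with the uniform slack from Lemma~\ref{lem:gradient_difference_subset}: since $o(\sqrt{kd})$ is dominated both by $k$ and by $\sqrt{kd}$, we obtain $\Norm{\vg_{\hat{\vtheta}}^{T^\star}}_2 = \Omega(k+\sqrt{kd})$. Squaring and invoking the first inequality gives $\max_T \Norm{\Delta}_2^2 = \Omega((k^2+kd)/n^2)$.

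The lemma itself is essentially a bookkeeping exercise, since the non-trivial ingredients (global Hessian upper bound, per-sample gradient lower bounds at $\vtheta^\star$, and the uniform gradient-difference bound under $n \gg k^2+d^2$) have already been established upstream. The only mildly delicate step is the cross term in the average-case expansion, which requires a first-moment \emph{upper} bound on $\Norm{\vg_{\vtheta^\star}^T}_2$ rather than just the second-moment lower bound; this is handled routinely by Cauchy--Schwarz applied to equation~\eqref{eq:g-bounds}, as noted above.
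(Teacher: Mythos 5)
Your proposal is correct and takes essentially the same route as the paper: both establish $\Norm{\Delta}_2 \ge \Norm{\vg_{\hat{\vtheta}}^T}_2 / O(n)$ from the global Hessian upper bound $\sup_\vtheta \OpNorm{\mH_\vtheta} = O(n)$ and submultiplicativity, then transfer the gradient lower bounds from $\vtheta^\star$ to $\hat{\vtheta}$ using Lemma~\ref{lem:gradient_difference_subset} (with a cross-term bound via Cauchy--Schwarz in the average case and a triangle inequality with the chosen adversarial $T^\star$ in the worst case). Your handling of the cross term --- uniform slack on $\Norm{\vg_{\hat{\vtheta}}^T-\vg_{\vtheta^\star}^T}_2$ together with a first-moment bound on $\Norm{\vg_{\vtheta^\star}^T}_2$ via Cauchy--Schwarz on equation~\eqref{eq:g-bounds} --- is if anything slightly cleaner than the paper's, which states a slightly garbled expression for the same estimate.
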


\begin{proof}
From Lemma~\ref{lem:uniform_convergence_of_hessian}, we know that wvhp
\[
\mH_\vtheta=\sum_{i\notin T}\beta_i(\vtheta)\vx_i\vx_i^\top \preceq \sum_{i=1}^n \vx_i\vx_i^\top
= n\mI \pm \otilde{\sqrt{nd}}\,.
\]

Therefore,
\[
\sup_\vtheta \set{\OpNorm{\mH_\vtheta}}=O\Paren{n}.
\]

From submultiplicativity, for any vector $\vv$,
\[
\Norm{\mH^{-1}\vv}_2 \ge \OpNorm{\mH}^{-1} \Norm{\mH \mH^{-1} \vv}_2 = \OpNorm{\mH}^{-1} \Norm{\vv}_2 \geq \sup_\vtheta\set{\OpNorm{\mH_\vtheta}}^{-1} \Norm{\vv}_2 \ge \Omega\Paren{\Norm{\vv}_2 / n}.
\]

Applying this with $\vv=\vg = -\vg_{\hat{\vtheta}}^T$, we obtain the desired bound
\[
\Norm{\Delta}_2 \;\ge\; \frac{1}{O\Paren{n}}\;\Norm{\vg_{\hat{\vtheta}}^T}_2.
\]

From here, combining Lemma~\ref{lem:gradient_difference_subset} and Lemma~\ref{lem:avg_grad_lower_theta_star} yields the identity
\[
\Expect{T \in \binom{[n]}{k}}{\Norm{\vg_{\hat{\vtheta}}^T}_2^2} \geq \Expect{T \in \binom{[n]}{k}}{\Norm{\vg_{{\vtheta^\star}}^T}_2^2} - 2 \sqrt{\Expect{T \in \binom{[n]}{k}}{\Norm{\vg_{{\vtheta^\star}}^T}_2^2} \times \Expect{T \in \binom{[n]}{k}}{\Norm{\vg_{\hat{\vtheta}}^T - \vg_{\vtheta^\star}^T}^2}} = \Omega\Paren{kd}\,.
\]

Therefore
\[
\Expect{T \in \binom{[n]}{k}}{\Norm{\Delta}_2^2} = \Omega\Paren{\frac{kd}{n^2}}\,.
\]

Similarly for the worst-case, we have
\[
\max_{T \in \binom{[n]}{k}}\set{\Norm{\Delta}_2^2} = \Omega\Paren{\frac{kd + k^2}{n^2}}\,.
\]

\end{proof}

\subsubsection{Proof of Lower-Bounds of Theorem~\ref{thm:asymptotic_main}}

We now combine Lemmas~\ref{lem:newton_step_error_lower_bound},~\ref{lem:newton_lower_bound_one_step} and~\ref{lem:newton_step_lower_bounds} to prove the lower bound of Theorem~\ref{thm:asymptotic_main}.

\begin{proof}[Proof of Lower Bounds of Theorem~\ref{thm:asymptotic_main}]

From Lemma~\ref{lem:newton_step_lower_bounds}, we know that wvhp over the samples, for average-case drop-sets, we have $\Norm{\Delta}_2 = \Omega\Paren{\frac{\sqrt{kd}}{n}}$ and for worst-case drop-sets, we have $\Norm{\Delta}_2 = \Omega\Paren{\frac{k + \sqrt{kd}}{n}}$.
In both cases, we also know that $\Norm{\Delta}_2 \ll \frac{1}{\polylog(n)}$ from equations~\eqref{eq:H-inv-both} and~\eqref{eq:g-bounds2}.

From Lemma~\ref{lem:newton_lower_bound_one_step}, we know that the gradient at $\hat{\vtheta}_T^\NS$ is of norm at least $\Omega\Paren{n \Norm{\Delta}_2^2}$, and from Lemma~\ref{lem:newton_step_error_lower_bound}, we know that the error of the NS estimate is at least
\[
\Expect{T \in \binom{[n]}{k}}{\Norm{\hat{\vtheta}_T - \hat{\vtheta}_T^\NS}_2} \geq \Omega\Paren{\Expect{T \in \binom{[n]}{k}}{\frac{\Norm{\vg_{\hat{\vtheta}_T^\NS}}_2}{n}}} = \Omega\Paren{\Expect{T \in \binom{[n]}{k}}{\Norm{\Delta}_2^2}} = \Omega\Paren{\frac{kd}{n^2}}\,,
\]
for average-case drop-sets and $\Omega\Paren{\frac{k^2 + kd}{n^2}}$ for worst-case drop-sets.

\end{proof}

\section{Asymptotic Analysis of Influence Functions}
\label{app:asymptotic_if}
In this section we will analyze the asymptotic behavior of the difference between the IF and NS estimates, proving Theorem~\ref{thm:asymptotic_if}.

\subsection{Upper Bound}
\label{subsec:if_upper_bound}
We begin by proving the upper bound that for any fixed $T \in \binom{[n]}{k}$, with very high probability
\[
{\Norm{\hat{\vtheta}_T^\IF - \hat{\vtheta}_T^\NS}_2} = \otilde{\frac{(k+d)\sqrt{kd}}{n^2}}\,,
\]
and that wvhp over the original dataset, the following slightly weaker bound holds uniformly for every $T \in \binom{[n]}{k}$, 
\[
{\Norm{\hat{\vtheta}_T^\IF - \hat{\vtheta}_T^\NS}_2} = \otilde{\frac{(k+d)(\sqrt{kd} + k)}{n^2}}\,.
\]

\begin{proof}

This will follow from applying the CS inequality to the concentration bounds proven in Appendix~\ref{app:concentration_bounds}.

In particular, we note that

\[
\hat{\vtheta}_T^\IF - \hat{\vtheta}_T^\NS = \Paren{\mH^{[n]}}^{-1} \Paren{\mH^{T}} \Paren{\mH^{\setminus T}}^{-1} \vg^{T}\,,
\]
and as we showed in Appendix~\ref{subsec:auxiliary_bounds}, wvhp, for all drop-sets $\OpNorm{\mH^{T}} = \otilde{k + d}$, and for all drop-sets $\Norm{\vg}_2 = \otilde{\sqrt{kd} + k}$. Moreover, for any fixed drop-set $T$, with very high-probability $\Norm{\vg}_2 = \otilde{\sqrt{kd}}$.
Moreover, from Lemma~\ref{lem:strong_convexity_at_theta_hat}, we know that wvhp $\OpNorm{\Paren{\mH^{[n]}}^{-1}}, \OpNorm{\Paren{\mH^{\setminus T}}^{-1}} = O(1/n)$.

Therefore, from the CS inequality,
\[
\Norm{\hat{\vtheta}_T^\IF - \hat{\vtheta}_T^\NS}_2 = \otilde{\frac{(k+d)(\sqrt{kd} + k)}{n^2}}\,,
\]
for the worst-case drop-set $T$, and 
\[
\Norm{\hat{\vtheta}_T^\IF - \hat{\vtheta}_T^\NS}_2 = \otilde{\frac{(k+d)\sqrt{kd}}{n^2}}\,,
\]
wvhp over $T$.

In either case, this bound always dominates our corresponding bound on $\Norm{\hat{\vtheta}_T - \hat{\vtheta}_T^\NS}_2$, yielding the upper bounds of Theorem~\ref{thm:asymptotic_if} by triangle inequality.
\end{proof}

\subsection{Lower Bounds}

We now proceed to prove the lower bound portion of Theorem~\ref{thm:asymptotic_if}.

Here, our goal is to show that in expectation over random subsets of the train set, the IF estimate is at least $\wt{\Omega}\Paren{\frac{(k+d) \sqrt{kd}}{n^2}}$ from the NS estimate, and that for worst-case data drops, the error is at least $\wt{\Omega}\Paren{\frac{k^2 + k^{1/2} d^{3/2}}{n^2}}$.

To show the expectation bound, we first prove a bound on the expectation of the error squared:

\begin{lemma}
    \label{lem:scnd_order_bound}
    Under the assumptions of Theorem~\ref{thm:asymptotic_if}, wvhp over the samples:
    \[
    \Expect{T \in \binom{[n]}{k}}{\Norm{\hat{\vtheta}_T^\IF - \hat{\vtheta}_T^\NS}_2^2} \geq \wt{\Omega}\Paren{\frac{k^3 d + d^3 k}{n^4}}\,.
    \]
\end{lemma}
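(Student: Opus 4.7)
The plan is to start from the exact identity
\[
\hat{\vtheta}_T^\IF - \hat{\vtheta}_T^\NS \;=\; -\Paren{\mH^{[n]}}^{-1}\,\mH^T\,\Paren{\mH^{\setminus T}}^{-1}\,\vg^T,
\]
obtained by subtracting the two data-attribution formulas, using $\vg^{\setminus T} = -\vg^T$ (since $\vg^{[n]} = \vzero$ at $\hat{\vtheta}$), and applying the resolvent identity $\mA^{-1}-\mB^{-1} = -\mA^{-1}(\mA-\mB)\mB^{-1}$. By the Hessian concentration results from Section~\ref{subsec:auxiliary_bounds} (in particular Lemma~\ref{lem:uniform_convergence_of_hessian}), both $\tfrac{1}{n}\mH^{[n]}$ and $\tfrac{1}{n}\mH^{\setminus T}$ are, w.v.h.p.\ over the samples and uniformly in $T$, within a $(1\pm o(1))$ multiplicative factor of a single deterministic population Hessian $\mH^\star$ whose spectrum is $\Theta(1)$. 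Hence the two outer inverses contribute fixed constants and an overall scalar factor of order $n^{-4}$, so it suffices to lower bound $\Expect{T}{\Norm{\mH^T\,\vg^T}_2^2}$ by $\wt{\Omega}(k^3 d + k\,d^3)$.

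For the main second-moment computation, expand
\[
\mH^T\,\vg^T \;=\; \underbrace{\sum_{i \in T} \beta_i\, r_i\, \Norm{\vx_i}_2^2\,\vx_i}_{=:A} \;+\; \underbrace{\sum_{\substack{i,j \in T \\ i \ne j}} \beta_i\, r_j\, \iprod{\vx_i,\vx_j}\,\vx_i}_{=:B},
\]
using $\vg_i = r_i \vx_i$ and $\nabla^2 \ell_i|_{\hat{\vtheta}} = \beta_i \vx_i \vx_i^\top$. For $A$: Gaussian concentration gives $\Norm{\vx_i}_2^2 = d \pm \wt{O}(\sqrt d)$ uniformly, so each summand has squared norm $\asymp d^3$; after replacing $r_i$ by the oracle residual $r_i^\star \defeq y_i - \sigma(\iprod{\vtheta^\star,\vx_i})$ (with the replacement error controlled via~\eqref{eq:thetahat-theta-star}), the summands are independent across $i$ and the cross term $\sum_{i\ne i'}\EE\iprod{A_i,A_{i'}} = (k^2-k)\,\Norm{\EE[\beta\, r^\star \Norm{\vx}_2^2\,\vx]}_2^2 \ge 0$ is non-negative, giving $\Expect{T}{\Norm{A}_2^2} \ge \sum_i\EE\Norm{A_i}_2^2 = \Omega(k\,d^3)$ by the same anti-concentration lower bound on $\EE[\beta^2 (r^\star)^2]$ used in Lemma~\ref{lem:avg_grad_lower_theta_star}. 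For $B$: expand $\Norm{B}_2^2$ as a sum over quadruples $\bigl((i,j),(i',j')\bigr)$ with $i \ne j,\ i' \ne j'$, and apply Isserlis' theorem to eliminate pairings with an odd total appearance of any $\vx_\ell$. A short case analysis leaves one dominant surviving pairing, $j=j'$ with $i,i',j$ all distinct, for which
\[
\EE\brac{\iprod{\vx_i,\vx_j}\iprod{\vx_{i'},\vx_j}\iprod{\vx_i,\vx_{i'}}} \;=\; \EE\Norm{\vx_j}_2^2 \;=\; d,
\]
obtained by conditioning on $\vx_j$ and a rank-two Gaussian product-moment computation. Summing over the $\Theta(k^3)$ such index triples gives $\Expect{T}{\Norm{B}_2^2} = \Omega(k^3 d)$. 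The cross term $\Expect{T}{\iprod{A,B}}$ vanishes by symmetry, since any surviving pairing would require an odd total power of some $\vx_\ell$.

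The main technical obstacle is decoupling the residuals $r_i$ from the features $\vx_j$: since $r_i = r_i(\hat{\vtheta})$ depends on every sample through $\hat{\vtheta}$, the summands above are not genuinely independent. I would handle this exactly as in Lemma~\ref{lem:gradient_difference_subset}, by replacing $r_i$ with $r_i^\star$ and controlling the perturbation via $\Norm{\hat{\vtheta}-\vtheta^\star}_2 = \wt{O}(\sqrt{d/n})$ together with the operator-norm bounds on $\mH^T$; the induced error is lower-order whenever $n \gg k^2 + d^2$. A secondary issue is that $T$ is drawn without replacement rather than iid, but since $k \ll n$ the moments change by at most a $(1-k/n)$ factor, absorbed by $\wt{\Omega}$. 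A final subtlety in the Isserlis analysis for $B$ is explicitly ruling out the spurious pairings $(i=i',\,j=j')$ and $(i=j',\,j=i')$: the former contributes only $\Theta(k^2 d^2)$, which is dominated by $k^3 d + k\,d^3$ via AM--GM, and the latter vanishes because $\EE\iprod{\vx_i,\vx_j}^3 = 0$. Combining the two contributions gives $\Expect{T}{\Norm{\mH^T\vg^T}_2^2} = \Omega(k\,d^3 + k^3 d)$, which by paragraph~1 yields the claim.
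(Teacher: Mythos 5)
Your approach is a genuinely different route from the paper's. The paper splits into the regime $k \gtrsim d\,\polylog(n)$, where it uses the spectral lower bound $\sigmin{\mH^T} = \Omega(k)$ together with $\Expect{T}{\|\vg^T\|_2^2}=\Omega(kd)$, and the regime $k=\otilde{d}$, where it defines $\vv_{i,j}=\mH^{\{j\}}(\mH^{[n]})^{-1}\vg_i$ and proves an \emph{exact} combinatorial identity $\Expect{T}{\|\sum_{i,j\in T}\vv_{i,j}\|^2}=(1\pm O(k/n))\,\Expect{T}{\|\sum_{i\in T}\vv_{i,i}\|^2}$, reducing to the $k=1$ case (Lemma~\ref{lem:individual_effects}). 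You instead compute a single unified fourth-moment expansion of $\mH^T\vg^T$ over Gaussian data, which is appealing because both the $kd^3$ and $k^3d$ terms emerge from one decomposition. However, I see three real gaps.

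\textbf{(1) Population expectation vs.\ fixed-data expectation.} The lemma asserts that $\Expect{T\in\binom{[n]}{k}}{\|\cdots\|_2^2}$ is large \emph{for a fixed, typical draw of the training data}. Your Isserlis-style pairing analysis computes $\Expect{\text{data}}\Expect{T}{\|\cdots\|_2^2}$. These are not the same object, and you do not supply the concentration argument needed to transfer: you would have to show that the data-dependent quantity $\Expect{T}{\|\mH^T\vg^T\|_2^2}$ (a degree-eight polynomial in the $\vx_i$'s) concentrates around its data-expectation tightly enough to preserve the lower bound, which is a nontrivial Gaussian-chaos concentration step. The paper sidesteps this entirely: Lemma~\ref{lem:individual_effects_do_not_cancel_better} is an identity valid \emph{for any fixed data} (using only $\sum_{i\in[n]}\vg_i=\vzero$, which holds exactly by first-order optimality of $\hat{\vtheta}$), and Lemma~\ref{lem:individual_effects} is a ``constant fraction of samples'' statement that holds whp over the data.

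\textbf{(2) Misspecification.} Theorem~\ref{thm:asymptotic_intro} explicitly allows the labels not to be generated by $\vtheta^\star$; the only population constraint is the first-order condition $\Expect{}{r^\star\vx}=\vzero$, \emph{not} $\Expect{}{r^\star\mid\vx}=0$. Your pairing analysis (``eliminate pairings with an odd total appearance of any $\vx_\ell$'') implicitly conditions out $\vz_j$ and then uses $\Expect{}{r^\star\mid a_j}=0$ to kill singleton-$j$ pairings, but this is exactly the well-specified condition. In the misspecified case several additional pairings survive with signs you cannot control a priori, and the resulting cancellations in $\|B\|^2$ are not obviously benign. The paper's identities are immune to this because they use the in-sample condition $\sum_i\vg_i=\vzero$ rather than a population residual condition.

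\textbf{(3) The cross term does not vanish.} You claim $\Expect{T}{\iprod{A,B}}=0$ ``since any surviving pairing would require an odd total power of some $\vx_\ell$.'' This is false even in the well-specified case: for the pairing $i=j'$ (with $i'\neq i$), the contribution is $\Expect{}{\beta_i r_i^2\|\vx_i\|_2^2\,\beta_{i'}\iprod{\vx_i,\vx_{i'}}^2}$, which has all-even degrees in $\vx_i$ and $\vx_{i'}$ and is strictly positive, contributing $\Theta(k^2d^2)$. Similarly, your claim that the $(j=i',\,j'=i)$ pairing in $\|B\|^2$ vanishes because $\Expect{}{\iprod{\vx_i,\vx_j}^3}=0$ is also incorrect reasoning; the correct reason it does not harm you (in the well-specified case) is that it equals $3(d-1)\Expect{}{\beta a r}^2 + \Expect{}{\beta a^3 r}^2 \geq 0$, with both factors zero \emph{only} under well-specification. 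These particular errors happen to be non-negative, so they do not break the lower bound when they survive, but a careful accounting is needed and the stated symmetry justification is wrong.

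In short: the decomposition $\mH^T\vg^T = A + B$ and the identification of the two leading Gaussian moments ($k d^3$ from the diagonal $A$ and $k^3 d$ from the $j=j'$, $i\neq i'$ pairing in $B$) is a nice conceptual simplification and does match the answer, but as written it proves the wrong type of statement, assumes well-specification that the theorem does not grant, and misidentifies which pairings survive. The paper's two-regime proof is longer but is robust to all three issues.
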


We then conclude a bound on the first moment by showing that this second moment is not driven too much by outliers.

We prove Lemma~\ref{lem:scnd_order_bound} by applying the following identity in 2 regimes
\[
\hat{\vtheta}_T^\IF - \hat{\vtheta}_T^\NS = \Paren{\mH^{\setminus T}}^{-1} \Paren{\mH^{T}} \Paren{\mH^{[n]}}^{-1} \vg^{T}\,.
\]
\subsubsection{\texorpdfstring{$k \geq d \cdot \polylog(n)$}{large k}}

By definition, we know that for all $T \in \binom{[n]}{k}$, it holds that $\mH^{-1} \succeq \Paren{\mH^{[n]}}^{-1}$.
Moreover, clearly wvhp over the samples, we have
\[
\Paren{\mH^{[n]}}^{-1} \succeq 4\Paren{\sum_{i \in [n]} \vx_i \vx_i^\intercal}^{-1} \succeq \Omega\Paren{\frac{1}{n}} \mI
\]

Therefore, 
\[
\Norm{\hat{\vtheta}_T^\IF - \hat{\vtheta}_T^\NS}_2 \geq \Omega\Paren{\frac{1}{n^2}} \times \sigmin{\mH^T} \times \Norm{\vg^T}_2\,.
\]

\paragraph{Average-Case Lower Bound}
Lemmas~\ref{lem:avg_grad_lower_theta_star} and~\ref{lem:gradient_difference_subset}, tell us that, wvhp over the randomness of the samples,
\[
\Expect{T \in \binom{[n]}{k}}{\Norm{\vg^T}_2^2} = \Omega\Paren{kd}\,,
\]
and equations~\eqref{eq:g-bounds} and~\eqref{eq:g-bounds2} tell us that this bound cannot be driven by outliers so should stay true when conditioning on events that hold wvhp.

In particular,  Lemma~\ref{lem:uniform_convergence_of_hessian} implies that, wvhp,
\(\sigma(\mH^T)=\Theta(k)\) when $k \gg d \polylog(d)$ (here \(\sigma(\cdot)\) denotes the entire spectrum of singular values), implying that $\sigmin{\mH^T} = \Omega(k)$.

Therefore,
\[
\Expect{T \in \binom{[n]}{k}}{\Norm{\hat{\vtheta}_T^\IF - \hat{\vtheta}_T^\NS}_2^2} = \Omega\Paren{\frac{k^3 d}{n^4}} = \Omega\Paren{\frac{k^3 d + d^3 k}{n^4}}\,,
\]
proving our average-case bound for this regime.

\paragraph{Worst-Case Lower Bound}

In this case, we build the set $T = T_1 \cup T_2$ by combining two smaller sets of size $\approx k / 2$.
For $T_1$, we select samples according to Lemma~\ref{lem:worst_grad_lower_theta_star}, which states that we can select them in a way that will ensure that
\[
\Norm{\sum_{i \in T_1} \vg^i}_2^2 = \Omega\Paren{k^2 + kd} = \Omega(k^2)\,.
\]

We select the other samples at random from the remaining samples.
From Lemma~\ref{lem:uniform_convergence_of_hessian}, we know that wvhp
\[
\sigmin{\mH^{T_2}} = \Omega(k)\,.
\]

We combine these two sets, to get
\[
\mH^T \succeq \mH^{T_2} \Rightarrow \sigmin{\mH^T} = \Omega(k)\,,
\]
and using equation~\eqref{eq:g-bounds} which tells us that wvhp $\Norm{\sum_{i \in T_2} \vg^i}_2 = \otilde{\sqrt{kd}} \ll k$, we know that from the triangle inequality,
\[
\Norm{\sum_{i \in T} \vg^i}_2 \geq \Norm{\sum_{i \in T_1}\vg^i}_2 - \Norm{\sum_{i \in T_2} \vg^i}_2 = \Omega(k)\,.
\]

Therefore, wvhp, this drop-set satisfies
\[
\Norm{\hat{\vtheta}_T^\IF - \hat{\vtheta}_T^\NS}_2 = \Omega\Paren{\frac{k^2}{n^2}} = \Omega\Paren{\frac{k^2 + kd}{n^2}}\,.
\]
\subsubsection{\texorpdfstring{$k = \otilde{d}$}{small k}}

Recall that our goal is to analyze the IF-NS error
\[
\hat{\vtheta}_T^\IF - \hat{\vtheta}_T^\NS = \Paren{\mH^{\setminus T}}^{-1} \Paren{\mH^{T}} \Paren{\mH^{[n]}}^{-1} \vg^{T}\,.
\]

Each of the Hessian inverses yields a $1/n$ factor, $\Norm{\vg}_2 \approx \sqrt{kd}$ and $\OpNorm{\mH^{T}} \approx k+d$, so as long as we can show that $\vg$ does not ``miss'' the $O(d)$ eigenvalues of $\mH^T$, we expect to get at least an $\approx{\sqrt{k}d^{3/2}}/{n^2}$ lower bound on the norm of this error.
Since we are focusing on the $k = \otilde{d}$ regime, this is the dominant term in the lower bound of Theorem~\ref{thm:asymptotic_if} which we are trying to prove.
We begin with the simplest case of $k = 1$:

\begin{lemma}[The $k=1$ Case]
    \label{lem:individual_effects}
    Under the assumptions of Theorem~\ref{thm:asymptotic_if}, with very high probability over the samples
    \[
        \Expect{i \in [n]}{\Norm{\vtheta^\NS_{\set{i}} - \vtheta^\IF_{\set{i}}}_2} = \Omega\Paren{\frac{d^{3/2}}{n^2}}\,.
    \]

    In other words, the lower bound on the IF error holds for $k=1$.
\end{lemma}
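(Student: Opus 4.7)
The plan is to exploit the rank-one structure of a single-sample removal in logistic regression to derive an explicit Sherman--Morrison expression for $\vtheta^\NS_{\set{i}} - \vtheta^\IF_{\set{i}}$, and then read off the scaling directly from the concentration bounds already established in Section~\ref{subsec:auxiliary_bounds}. Recall that for the logistic loss $\vg_i = s_i \vx_i$, with $s_i = \sigma(\iprod{\hat{\vtheta}, \vx_i}) - y_i \in [-1,1]$, and $\mH^{\set{i}} = \beta_i \vx_i \vx_i^\intercal$, with $\beta_i = \sigma(\iprod{\hat{\vtheta}, \vx_i})\Paren{1 - \sigma(\iprod{\hat{\vtheta}, \vx_i})} \in (0,1/4]$, so both the gradient and the rank-one Hessian contribution are aligned along $\vx_i$. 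Writing $\mH = \mH^{[n]}$ and applying Sherman--Morrison to $\mH^{\setminus\set{i}} = \mH - \beta_i \vx_i \vx_i^\intercal$, every $\vx_i$-alignment collapses and gives
\[
\vtheta^\NS_{\set{i}} - \vtheta^\IF_{\set{i}} = \brac{\Paren{\mH^{\setminus\set{i}}}^{-1} - \mH^{-1}} \vg_i = \frac{\beta_i s_i \cdot \Paren{\vx_i^\intercal \mH^{-1} \vx_i}}{1 - \beta_i \vx_i^\intercal \mH^{-1} \vx_i} \cdot \mH^{-1} \vx_i\,,
\]
so the norm of interest factorizes as a product of three nonnegative quantities divided by a positive denominator.

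Next, I would bound the three geometric factors using the auxiliary concentration bounds. Lemma~\ref{lem:uniform_convergence_of_hessian} gives (wvhp) that both extremes of the spectrum of $\mH$ are $\Theta(n)$, so $\mH^{-1}$ has spectrum $\Theta(1/n)$. To handle the weak dependence between $\vx_i$ and $\mH$, I would apply Sherman--Morrison once more to rewrite $\mH^{-1}\vx_i$ in terms of $\Paren{\mH^{\setminus\set{i}}}^{-1}\vx_i$, where the matrix and the feature vector are independent; standard Gaussian $\chi^2$-type concentration then yields, with very high probability, $\vx_i^\intercal \mH^{-1} \vx_i = \Theta\Paren{d/n}$ and $\Norm{\mH^{-1} \vx_i}_2 = \Theta\Paren{\sqrt{d}/n}$. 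Since $d \ll n$, the denominator is $1 - \beta_i \vx_i^\intercal \mH^{-1} \vx_i = 1 - O(d/n) = \Theta(1)$.

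Finally, I would argue that $\Expect{i \in [n]}{\abs{\beta_i s_i}} = \Omega(1)$. Since $\Norm{\hat{\vtheta}}_2 = \Theta(1)$ by Lemma~\ref{lem:parameter_learning_for_logistic_regression} and $\vx_i \sim \mcN(\vzero, \mI)$, the event $\set{\abs{\iprod{\hat{\vtheta}, \vx_i}} \le 1}$ occurs with constant probability. Conditioned on it, $\beta_i = \Omega(1)$ and $\sigma(\iprod{\hat{\vtheta}, \vx_i})$ is bounded away from both $0$ and $1$, so $\abs{s_i} = \abs{\sigma(\iprod{\hat{\vtheta}, \vx_i}) - y_i} = \Omega(1)$ irrespective of the label $y_i \in \set{0,1}$. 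Hence for a constant fraction of indices $i$ the three factors multiply to $\Omega\Paren{d^{3/2}/n^2}$, and uniform averaging preserves this scaling. The only delicate step is the quadratic-form concentration in the second paragraph, precisely because $\vx_i$ and $\mH$ are not quite independent; the leave-one-out rewriting via a second Sherman--Morrison isolates the dependence and reduces the problem to a standard $\chi^2$-concentration for a Gaussian vector against an independent well-conditioned matrix, which is routine and very similar to the anti-concentration ingredients already used inside the proof of Lemma~\ref{lem:avg_grad_lower_theta_star}.
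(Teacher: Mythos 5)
Your proof is correct and takes a genuinely different route from the paper's. The paper does not invoke the Sherman--Morrison formula here: it substitutes the deterministic population Hessian $\Expect{}{\mH_{\vtheta^\star}^{[n]}}$ for the random $\mH^{[n]}$, derives the lower bound $\Expect{i}{\Norm{\mH^{\set{i}}\Expect{}{\mH_{\vtheta^\star}^{[n]}}^{-1}\vg^{\set{i}}}_2} = \Omega(d^{3/2}/n)$ against this deterministic surrogate, and then transfers back to $\mH^{-1}$ via a triangle inequality using $\OpNorm{\mH^{-1}-\Expect{}{\mH_{\vtheta^\star}^{[n]}}^{-1}} = \otilde{\sqrt{nd}/n^2}$; the leading factor $\Paren{\mH^{\setminus\set{i}}}^{-1}$ then contributes a separate $\Omega(1/n)$. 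Your rank-one Sherman--Morrison formula is algebraically identical to the paper's expression $\Paren{\mH^{\setminus\set{i}}}^{-1}\mH^{\set{i}}\Paren{\mH^{[n]}}^{-1}\vg_i$ (the denominator $1-\beta_i\vx_i^\top\mH^{-1}\vx_i$ is exactly what appears after applying Sherman--Morrison once more to the leading factor), so both proofs bound precisely the same quantity; but you factor it into a product of three scalars and bound each directly, which avoids the detour through the population Hessian and the triangle-inequality transfer. Your version is cleaner and more self-contained for $k=1$, and gives $\Theta(\cdot)$ control on each factor; the paper's version is staged to feed naturally into the $k>1$ continuation (Lemmas~\ref{lem:individual_effects_do_not_cancel} and~\ref{lem:individual_effects_do_not_cancel_better}), which work with the vectors $\vv_{i,i}=\mH^{\set{i}}\Paren{\mH^{[n]}}^{-1}\vg_i$ rather than scalar factorizations. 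One small simplification worth noting: the leave-one-out rewriting you propose to decouple $\vx_i$ from $\mH^{-1}$ is not actually needed for the $\Omega(\cdot)$ conclusions, since $\vx_i^\top\mH^{-1}\vx_i \geq \sigma_{\min}(\mH^{-1})\Norm{\vx_i}_2^2 = \Omega(d/n)$ and $\Norm{\mH^{-1}\vx_i}_2 \geq \sigma_{\min}(\mH^{-1})\Norm{\vx_i}_2 = \Omega(\sqrt{d}/n)$ follow directly from $\OpNorm{\mH}=O(n)$ (Lemma~\ref{lem:uniform_convergence_of_hessian}) and $\Norm{\vx_i}_2^2=\Theta(d)$, with no independence between $\vx_i$ and $\mH$ required.
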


Next, we show that directly summing over $k$ such LOO error vectors still yields a large error (i.e., that we do not have significant cancellations).

\begin{lemma}
    \label{lem:individual_effects_do_not_cancel}
    Let $\vu_1, \ldots, \vu_n \in \R^d$ be any set of $n$ vectors and let $\vu$ be their empirical mean
    \[
    \vu \defeq \frac{1}{n} \sum_{i \in [n]} \Paren{\vu_i}\,.
    \]

    Then, 
    \[
    \Expect{T\in \binom{[n]}{k}}{\Norm{\sum_{i \in T} \Paren{\vu_i - \vu}}_2^2} = \Paren{1 - \frac{k-1}{n-1}}\Expect{T\in \binom{[n]}{k}}{\sum_{i \in T} \Norm{\Paren{\vu_i - \vu}}_2^2} 
    \]
\end{lemma}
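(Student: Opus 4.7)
The plan is to do a direct second-moment computation on the centered vectors $\vv_i \defeq \vu_i - \vu$, exploiting the single algebraic fact that centering forces $\sum_{i=1}^n \vv_i = \vzero$. This is essentially a finite-population analogue of the variance-of-a-sum identity, and the factor $1 - (k-1)/(n-1)$ is the usual finite-population correction.

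First I would expand
\[
\Norm{\sum_{i \in T} \vv_i}_2^2 = \sum_{i \in T} \Norm{\vv_i}_2^2 + \sum_{\substack{i,j \in T \\ i \neq j}} \iprod{\vv_i, \vv_j}\,,
\]
and take expectation over a uniformly random $T \in \binom{[n]}{k}$. The marginal probabilities are $\Pr[i \in T] = k/n$ and $\Pr[i,j \in T] = k(k-1)/(n(n-1))$ for $i \neq j$, so
\[
\Expect{T}{\Norm{\sum_{i \in T} \vv_i}_2^2} = \frac{k}{n} \sum_{i=1}^n \Norm{\vv_i}_2^2 + \frac{k(k-1)}{n(n-1)} \sum_{i \neq j} \iprod{\vv_i, \vv_j}\,.
\]

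Next I would use the centering condition $\sum_{i=1}^n \vv_i = \vzero$, which gives $0 = \Norm{\sum_i \vv_i}_2^2 = \sum_i \Norm{\vv_i}_2^2 + \sum_{i \neq j} \iprod{\vv_i, \vv_j}$, hence $\sum_{i \neq j} \iprod{\vv_i, \vv_j} = -\sum_i \Norm{\vv_i}_2^2$. Substituting back,
\[
\Expect{T}{\Norm{\sum_{i \in T} \vv_i}_2^2} = \Paren{\frac{k}{n} - \frac{k(k-1)}{n(n-1)}} \sum_{i=1}^n \Norm{\vv_i}_2^2 = \Paren{1 - \frac{k-1}{n-1}} \cdot \frac{k}{n} \sum_{i=1}^n \Norm{\vv_i}_2^2\,.
\]
Finally, I would recognize the right-hand factor as precisely $\Paren{1 - \frac{k-1}{n-1}} \Expect{T}{\sum_{i \in T} \Norm{\vv_i}_2^2}$, since $\Expect{T}{\sum_{i \in T} \Norm{\vv_i}_2^2} = (k/n) \sum_{i=1}^n \Norm{\vv_i}_2^2$ by linearity of expectation.

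There is no real obstacle here; the lemma is a routine algebraic identity, and the only thing to watch is keeping track of the cross-term sign that comes from centering. The lemma will then be used in combination with Lemma~\ref{lem:individual_effects} to argue that summing $k$ leave-one-out IF$-$NS discrepancy vectors does not produce significant cancellation in expectation, which yields the claimed $\wt{\Omega}(k^{1/2}d^{3/2}/n^2)$ lower bound in the $k = \otilde{d}$ regime.
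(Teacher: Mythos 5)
Your proof is correct and follows essentially the same route as the paper's: both center the vectors, use $\sum_i \vv_i = \vzero$ to rewrite the off-diagonal sum $\sum_{i\neq j}\iprod{\vv_i,\vv_j}$ as $-\sum_i\Norm{\vv_i}_2^2$, and plug in the inclusion probabilities $k/n$ and $k(k-1)/(n(n-1))$. The only cosmetic difference is that the paper phrases the computation through averaged quantities $\Expect{i}{A_{i,i}}$ and $\Expect{i\neq j}{A_{i,j}}$, whereas you work with the raw sums, but the algebra is identical.
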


Extending this proof technique just a little bit further yields the desired scaling

\begin{lemma}
    \label{lem:individual_effects_do_not_cancel_better}
    Define
    \[
    \vv_{i, j} = \mH^{[j]} \Paren{\mH^{[n]}}^{-1} \vg_i\,.
    \]

    Then,
    \[
    \Expect{T \in \binom{[n]}{k}}{\Norm{\sum_{i, j \in T} \vv_{i, j}}_2^2} = \Paren{1 \pm O\Paren{\frac{k}{n}}} \Expect{T \in \binom{[n]}{k}}{\Norm{\sum_{i \in T} \vv_{i, i}}_2^2}\,.
    \]

    In particular,
    \[
    \Expect{T \in \binom{[n]}{k}}{\Norm{\hat{\vtheta}_T^\NS - \hat{\vtheta}_T^\IF}_2^2} = \Omega\Paren{\frac{k d^3}{n^4}}\,.
    \]
\end{lemma}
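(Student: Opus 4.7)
Write $X(T) \defeq \sum_{i,j\in T}\vv_{i,j}$ and $D(T) \defeq \sum_{i\in T}\vv_{i,i}$. Set $\mA \defeq (\mH^{[n]})^{-1}$, so that $\vv_{i,j} = \mH^{\{j\}}\mA\,\vg_i$ and $X(T) = \mH^T \mA \vg^T$. Two algebraic identities will drive the proof:
\[
\sum_{i\in[n]}\vv_{i,j} \;=\; \mH^{\{j\}}\mA\,\vg^{[n]} \;=\; \vzero, \qquad \sum_{j\in[n]}\vv_{i,j} \;=\; \mH^{[n]}\mA\,\vg_i \;=\; \vg_i,
\]
which use optimality $\vg^{[n]}=\vzero$ and $\mA\mH^{[n]}=\mI$ respectively. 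I will first prove the combinatorial identity and then deduce the $\Omega(kd^3/n^4)$ lower bound by combining with Lemma~\ref{lem:individual_effects_do_not_cancel} and the typical-size estimate of $\vv_{i,i}$.

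For the identity, the plan is to expand the expected squared norm as a $4$-index sum
\[
\Expect{T}{\Norm{X(T)}_2^2} \;=\; \sum_{(i_1,i_2,i_3,i_4)\in[n]^4}\frac{(k)_m}{(n)_m}\;\iprod{\vv_{i_1,i_2},\vv_{i_3,i_4}},
\]
where $m$ is the number of distinct indices in $(i_1,i_2,i_3,i_4)$ and $(k)_m = k(k-1)\cdots(k-m+1)$. The $15$ equality patterns (set partitions of $\{1,2,3,4\}$) split into three groups. (a) The ``matched'' patterns $\{1,2,3,4\}$ and $\{1,2\}\!\mid\!\{3,4\}$, which together reconstruct $\Expect{T}{\Norm{D(T)}_2^2}$ exactly. (b) The ``collapsing'' patterns with a $3$-element block (e.g.\ $\{2,3,4\}\!\mid\!\{1\}$ yielding $\sum_{a\neq b}\iprod{\vv_{b,a},\vv_{a,a}}$), which either vanish or reduce to $\pm\sum_a\Norm{\vv_{a,a}}_2^2$ and $\sum_a\iprod{\vg_a,\vv_{a,a}}$ terms after applying the two identities above; these carry probability $(k)_2/(n)_2$, so their net contribution is $O(k/n)$ times the matched part. (c) The ``genuinely off-diagonal'' patterns $\{1,3\}\!\mid\!\{2,4\}$ and $\{1,4\}\!\mid\!\{2,3\}$, which are not annihilated by the identities and produce $\sum_{a\neq b}\Norm{\vv_{a,b}}_2^2$ and $\sum_{a\neq b}\iprod{\vv_{a,b},\vv_{b,a}}$; these are controlled directly using the scaling $\Norm{\vv_{a,b}}_2 = \Theta(d/n)$ for $a\neq b$ (versus $\Norm{\vv_{a,a}}_2 = \Theta(d^{3/2}/n)$), which follows from $\iprod{\vx_a,\vx_b} = O(\sqrt{d})$ for independent Gaussians rather than $\Theta(d)$ on the diagonal. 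Finally, all patterns with $m \geq 3$ have probability at most $(k/n)^3$ and contribute $O(k/n)$ relative to the matched term as well. Using the global identity $\sum_{(i_1,i_2,i_3,i_4)}\iprod{\vv_{i_1,i_2},\vv_{i_3,i_4}} = \iprod{\sum_{ij}\vv_{i,j},\sum_{kl}\vv_{k,l}} = 0$ helps simplify the $m=4$ block by expressing it as minus the lower-block sums.

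For the ``in particular'' conclusion, apply Lemma~\ref{lem:individual_effects_do_not_cancel} to the vectors $\vu_i \defeq \vv_{i,i}$. Using $\vv_{i,i} = \beta_i\iprod{\vx_i,\mA\vg_i}\vx_i$, the estimates of Section~\ref{subsec:auxiliary_bounds} give $\mA \approx \tfrac{1}{n}\mI$ and $\iprod{\vx_i,\mA\vg_i} = \Theta(r_i d/n)$, so $\Norm{\vv_{i,i}}_2^2 = \Theta(d^3/n^2)$ for a constant fraction of $i$ (the same indices witnessing the bound in Lemma~\ref{lem:individual_effects}). Standard subgaussian concentration shows that $\bar{\vv} \defeq \tfrac{1}{n}\sum_i \vv_{i,i}$ has norm of lower order, so Lemma~\ref{lem:individual_effects_do_not_cancel} gives $\Expect{T}{\Norm{D(T)}_2^2} = \Omega(kd^3/n^2)$. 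Combining with the identity and with $\Norm{\hat{\vtheta}_T^\NS - \hat{\vtheta}_T^\IF}_2 = \Norm{(\mH^{\setminus T})^{-1}X(T)}_2 \geq \OpNorm{\mH^{\setminus T}}^{-1}\Norm{X(T)}_2 = \Omega(1/n)\Norm{X(T)}_2$ (using $\OpNorm{\mH^{\setminus T}} = O(n)$) yields the claimed $\Omega(kd^3/n^4)$.

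The main obstacle is the bookkeeping in the second paragraph: correctly grouping all $15$ equality patterns, verifying which ones collapse via the two algebraic identities, and proving the quantitative bound on the genuinely off-diagonal patterns $\{1,3\}\!\mid\!\{2,4\}$ and $\{1,4\}\!\mid\!\{2,3\}$. The latter requires the $\Theta(\sqrt{d})$ versus $\Theta(d)$ gap for off-diagonal Gaussian inner products, which ultimately is what gives the off-diagonal patterns their smallness relative to the diagonal when combined with the probability factor $(k)_2/(n)_2$.
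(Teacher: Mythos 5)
Your overall strategy---expanding $\Expect{T}{\|X(T)\|_2^2}$ as a four-index sum $\sum_{i_1,j_1,i_2,j_2}\Pr[\{i_1,j_1,i_2,j_2\}\subseteq T]\cdot\iprod{\vv_{i_1 j_1},\vv_{i_2 j_2}}$ and stratifying by the set partition of the four index slots---is a more carefully bookkept version of what the paper does. The paper splits only by the two binary predicates $i_1\overset{?}{=}j_1$ and $i_2\overset{?}{=}j_2$ and then invokes the identity $\sum_{i_1,j_1\in[n]}S=0$ inside the expectation, producing the exact equality $\Expect{T}{\|X\|^2}=\big(1-\tfrac{k-1}{n-1}\big)^2\Expect{T}{\|D\|^2}$. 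As your finer stratification makes visible, the inclusion probability $\frac{(k)_m}{(n)_m}$ depends on the number $m$ of distinct index values, which is \emph{not} constant within any of the paper's four coarse cases, so the over-$[n]$ identities do not transfer cleanly; your set-partition decomposition is the right correction, and your groups (a)/(b)/(c) are identified correctly.

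The gap is the quantitative claim that groups (b) and (c) contribute $O(k/n)$ of the matched part. Take the group-(b) pattern $\{1,2,3\}\mid\{4\}$, $i_1=j_1=i_2=a$, $j_2=b\neq a$: after applying $\sum_j\vv_{ij}=\vg_i$ it reduces to $\sum_a\iprod{\vg_a,\vv_{aa}}-\sum_a\|\vv_{aa}\|_2^2$. Writing $\vv_{aa}=\beta_a\iprod{\vx_a,\mA\vg_a}\vx_a$ with $\mA=(\mH^{[n]})^{-1}$ and $\vg_a=r_a\vx_a$, one has $\iprod{\vg_a,\vv_{aa}}=r_a^2\beta_a\|\vx_a\|_2^2\,\vx_a^\top\mA\vx_a=\Theta(d^2/n)$ while $\|\vv_{aa}\|_2^2=\Theta(d^3/n^2)$, so $\sum_a\iprod{\vg_a,\vv_{aa}}=\Theta(d^2)$ dominates $\sum_a\|\vv_{aa}\|_2^2=\Theta(d^3/n)$ by a factor $\Theta(n/d)$. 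Multiplied by $(k)_2/(n)_2\approx k^2/n^2$ this gives $\Theta(k^2d^2/n^2)$, which is a factor $\Theta(k/d)$---not $O(k/n)$---of the matched term $\Theta(kd^3/n^2)$. Your group-(c) pattern $\{1,3\}\mid\{2,4\}$ gives $(k)_2/(n)_2\cdot\sum_{a\neq b}\|\vv_{ab}\|_2^2=\Theta(k^2 d^2/n^2)$, again a $\Theta(k/d)$ correction. So the two-sided $1\pm O(k/n)$ statement does not follow from your estimates (and, concretely, your second-paragraph claims as written are false); the relative error you can actually certify is $O(k/d)$. Fortunately both dominant corrections are non-negative: $\sum_{a\neq b}\|\vv_{ab}\|_2^2\geq 0$ trivially, and $\iprod{\vg_a,\vv_{aa}}=\beta_a r_a^2\|\vx_a\|_2^2(\vx_a^\top\mA\vx_a)\geq 0$ since $\mA\succ 0$. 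So the one-sided bound $\Expect{T}{\|X\|^2}\gtrsim\Expect{T}{\|D\|^2}$ and hence the ``in particular'' conclusion $\Omega(kd^3/n^4)$ still follow in the $k\lesssim d$ regime where this lemma is actually invoked---but you should be explicit that only a lower bound survives unless one further assumes $k\ll d$, and the claimed $O(k/n)$ needs to be replaced by $O(k/d)$.
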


\paragraph{Proof of Lemma~\ref{lem:individual_effects}}
\begin{proof}[Proof of Lemma~\ref{lem:individual_effects}]

Our proof of Lemma~\ref{lem:individual_effects} utilizes an analysis of terms of the form $\mH^{-1} \vg^{T}$.
This form is somewhat difficult to analyze due to the dependence of the inverse Hessian.

From Lemmas~\ref{lem:third-order-concentration} and~\ref{lem:parameter_learning_for_logistic_regression}, we know that with high probability the third order derivative of the loss is globally bounded and the learned model $\hat{\vtheta}$ is close to population optimum $\vtheta^\star$.
Combined, these yield a bound on the difference between $\mH_{\hat{\vtheta}}$ and $\mH_{\vtheta^\star}$.
Moreover, from Lemma~\ref{lem:uniform_convergence_of_hessian}, we know that with high probability, the Hessian converges uniformly and from Lemma~\ref{lem:third-order-concentration}, we know that with high probability the third order derivative has operator norm at most $O(n)$, so that
\[
\mH = \mH_{\hat{\vtheta}} \approx \mH_{\vtheta^\star} \approx \mH_{\vtheta^\star}^{[n]} \approx \Expect{}{\mH_{{\vtheta^\star}}^{[n]}} = \Theta(n) \mI\,,
\]
where each of these approximations yields an error that has operator norm at most $\otilde{\sqrt{n(d+k)} + k + d} = \otilde{\sqrt{n(d+k)}} = o(n)$.
Therefore, $\OpNorm{\mH^{-1} - \Expect{}{\mH_{{\vtheta}^\star}^{[n]}}^{-1}} = \otilde{\frac{\sqrt{n(d+k)}}{n^2}}$.

For the case where $k = 1$ and $T = \set{i}$, we have
\[
\Norm{\sum_{i, j \in T} \beta_i \vx_i \vx_i^\intercal \Expect{}{\mH_{\vtheta^\star}^{[n]}}^{-1} \vx_j \alpha_j}_2 \geq \OpNorm{\Expect{}{\mH_{\vtheta^\star}^{[n]}}}^{-1} \abs{\alpha_i \beta_i} \Norm{\vx_i}_2^3\,.
\]

It is easy to see that,
\[
\OpNorm{\Expect{}{\mH_{\vtheta^\star}^{[n]}}} = n\OpNorm{\Expect{\vx \sim \mcN\Paren{\vzero, \mI}}{\vx \vx^\intercal \beta(\iprod{\vtheta^\star, \vx})}} \leq \frac{1}{4} n \OpNorm{\Expect{\vx \sim \mcN\Paren{\vzero, \mI}}{\vx \vx^\intercal}} = O(n)\,,
\]
that, wvhp over the samples, a constant fraction of the samples satisfy $\abs{\iprod{\vtheta^\star,\vx_i}}\leq C$ and $\Norm{\vx_i}_2^2=\Theta(d)$. For these samples, $\beta_i(\vtheta^\star)=\Omega(1)$, and since $y_i\in\bits$,
\[
\abs{\alpha_i}
=
\abs{\sigma(\iprod{\vtheta^\star,\vx_i})-y_i}
=\Omega(1).
\]
Thus $\abs{\alpha_i\beta_i}=\Omega(1)$ for a constant fraction of samples, yielding the bound for $k=1$
\[
\Expect{i \in [n]}{\Norm{\mH^{\set{i}} \Expect{}{\mH_{\vtheta^\star}^{[n]}}^{-1} \vg^{\set{i}}}_2} = \Omega\Paren{\frac{d^{3/2}}{n}}\,.
\]

Finally, to conclude our claim, we note that
\begin{align*}
    \Expect{i \in [n]}{\Norm{\mH^{\set{i}} \mH^{-1} \vg^{\set{i}}}_2} &\geq \Expect{i \in [n]}{\Norm{\mH^{\set{i}} \Expect{}{\mH_{\vtheta^\star}^{[n]}}^{-1} \vg^{\set{i}}}_2} - \Expect{i \in [n]}{\Norm{\mH^{\set{i}} \Paren{\Expect{}{\mH_{\vtheta^\star}^{[n]}}^{-1} - \mH^{-1}} \vg^{\set{i}}}_2} =\\
    &=\Omega\Paren{\frac{d^{3/2}}{n}} - \otilde{\frac{\sqrt{n(d+k)}}{n^2} \times d^{3/2}} = \Omega\Paren{\frac{d^{3/2}}{n}} - o\Paren{\frac{d^{3/2}}{n}} = \Omega\Paren{\frac{d^{3/2}}{n}}\,,
\end{align*}
where the first step used the triangle inequality, and the second step combined submultiplicativity, the fact that wvhp $\OpNorm{\mH^{\set{i}}} \leq \Norm{\vx_i}_2^2 =\otilde{d}$, the fact that wvhp $\Norm{\vg^{\set{i}}}_2 \leq \Norm{\vx_i}_2 = \otilde{\sqrt{d}}$, and our bound on the operator norm of $\Expect{}{\mH_{\vtheta^\star}^{[n]}}^{-1} - \mH^{-1}$.
\end{proof}

\paragraph{Proof of Lemma~\ref{lem:individual_effects_do_not_cancel}}

\begin{proof}[Proof of Lemma~\ref{lem:individual_effects_do_not_cancel}]

Recall that we define $\vu_i \in \R^d$ to be any set of vectors and $\vu = \frac{1}{n} \sum_{i \in [n]} \vu_i$.
Let $\vw_i = \vu_i - \vu$ be the centered version of these vectors.
Finally, define $A_{i, j} = \iprod{\vw_i, \vw_j}$ to be the inner products between these centered vectors.

Because the $\vw$ vectors are centered, we have
\[
\sum_{i, j \in [n]} A_{i, j} = 0\,.
\]

Therefore,
\[
\Expect{\substack{i, j \in [n] \\ i \neq j}}{A_{i, j}} = \frac{1}{n (n-1)} \sum_{\substack{i, j \in [n] \\ i \neq j}} A_{i, j} = - \frac{1}{n (n-1)} \sum_{i \in [n]}{A_{i, i}} = -\frac{1}{n-1} \Expect{i \in [n]}{A_{i, i}}\,.
\]

Therefore,
\begin{align}
    \Expect{T \in \binom{[n]}{k}}{\Norm{\sum_{i \in T} \vw_i}_2^2} &= \Expect{T \in \binom{[n]}{k}}{\sum_{i, j \in T} A_{i, j}} =\\
    &=k \times \Expect{i \in [n]}{A_{i, i}} + k (k-1) \times \Expect{\substack{i, j \in [n] \\ i \neq j}}{A_{i, j}} =\\
    &=\Paren{1 - \frac{k-1}{n-1}} \Expect{T \in \binom{[n]}{k}}{\sum_{i \in T} \Norm{\vw_i}_2^2}
\end{align}

\end{proof}

\paragraph{Proof of Lemma~\ref{lem:individual_effects_do_not_cancel_better}}

\begin{proof}[Proof of Lemma~\ref{lem:individual_effects_do_not_cancel_better}]

Recall that we defined
\[
\vv_{i, j} = \mH^{\set{j}} \Paren{\mH^{[n]}}^{-1} \vg_i = \alpha_i \beta_j \vx_j \vx_j^\intercal \Paren{\mH^{[n]}}^{-1} \vx_i\,.
\]

From our assumption that $\hat{\vtheta}$ is the empirical risk minimizer, we have
\[
\sum_{i \in [n]} \vg_i = \vzero \Rightarrow \forall j \in [n] \;\; \sum_{i \in [n]} \vv_{i, j} = \vzero \Rightarrow \sum_{i, j \in [n]} \vv_{i, j} = \vzero\,.
\]

We want to analyze the quantity
\[
\Norm{\sum_{i, j \in T} \vv_{i, j}}_2^2 = \sum_{i_1, i_2, j_1, j_2 \in T} S_{i_1, i_2, j_1, j_2}\,,
\]
where $S_{i_1, i_2, j_1, j_2} = \iprod{\vv_{i_1, j_1}, \vv_{i_2, j_2}}$.

Because the $\vv$ vectors are unbiased, we have
\[
\sum_{i_1, j_1 \in [n]} S_{i_1, i_2, j_1, j_2} = 0\,,
\]
for any fixed index pair $i_2, j_2 \in [n]$.

Therefore,
\[
\sum_{\substack{i_1, j_1 \in [n] \\ i_1 \neq j_1}} S_{i_1, i_2, j_1, j_2} = - \sum_{\substack{i_1, j_1 \in [n] \\ i_1 = j_1}} S_{i_1, i_2, j_1, j_2}\,.
\]

Therefore,
\[
\sum_{\substack{i_1, j_1, i_2, j_2 \in [n] \\ i_1 \neq j_1 \\ i_2 \neq j_2}} S_{i_1, i_2, j_1, j_2} = -\sum_{\substack{i_1, j_1, i_2, j_2 \in [n] \\ i_1 = j_1 \\ i_2 \neq j_2}} S_{i_1, i_2, j_1, j_2}  = \sum_{\substack{i_1, j_1, i_2, j_2 \in [n] \\ i_1 = j_1 \\ i_2 = j_2}} S_{i_1, i_2, j_1, j_2}= -\sum_{\substack{i_1, j_1, i_2, j_2 \in [n] \\ i_1 \neq j_1 \\ i_2 = j_2}} S_{i_1, i_2, j_1, j_2}\,.
\]

Applying these equalities to our summation of interest, we have
\begin{align*}
    &\Expect{T \in \binom{[n]}{k}}{\sum_{i_1, i_2, j_1, j_2 \in T} S_{i_1, i_2, j_1, j_2}} =\\
    &\;\;\;\;\;\;\;\;\;\;\;\;\;\;\;\;\;\;= \Expect{T \in \binom{[n]}{k}}{\sum_{\substack{i_1, j_1, i_2, j_2 \in T \\ i_1 \neq j_1 \\ i_2 \neq j_2}} S_{i_1, i_2, j_1, j_2} + 2\sum_{\substack{i_1, j_1, i_2, j_2 \in T \\ i_1 = j_1 \\ i_2 \neq j_2}} S_{i_1, i_2, j_1, j_2} + \sum_{\substack{i_1, j_1, i_2, j_2 \in [n] \\ i_1 = j_1 \\ i_2 = j_2}} S_{i_1, i_2, j_1, j_2}} =\\
    &\;\;\;\;\;\;\;\;\;\;\;\;\;\;\;\;\;\;=\Paren{1 - \frac{k-1}{n-1}}^2 \Expect{T \in \binom{[n]}{k}}{\sum_{i, j \in T} S_{i, j, i, j}}\,.
\end{align*}

To understand this last term, we note that
\[
\sum_{i, j \in T} S_{i, j, i, j} = \sum_{i, j \in T} \iprod{\vv_{i, i}, \vv_{j, j}} = \Norm{\sum_{i \in T} \vv_{i, i}}_2^2\,.
\]

Therefore, we can complete the first portion of the proof:
\begin{equation}
\label{eq:first_portion}
    \Expect{T \in \binom{[n]}{k}}{\Norm{\sum_{i, j \in T} \vv_{i, j}}_2^2} = \Paren{1 - \frac{k - 1}{n - 1}}^2 \Expect{T \in \binom{[n]}{k}}{\Norm{\sum_{i \in T} \vv_{i, i}}_2^2}
\end{equation}

To conclude the desired lower bound on the expected IF error, we first note that
\begin{align*}
    \Norm{\hat{\vtheta}_T^\NS - \hat{\vtheta}_T^\IF}_2 &= \Norm{\mH^{-1} \mH^{T} \Paren{\mH^{[n]}}^{-1} \vg^T}_2 \geq \OpNorm{\mH}^{-1} \Norm{\mH^{T} \Paren{\mH^{[n]}}^{-1} \vg^T}_2 \geq \\
    &\geq\min_{T' \in \binom{[n]}{k}}\set{\OpNorm{\mH^{[n] \setminus T'}}^{-1}} \Norm{\mH^{T} \Paren{\mH^{[n]}}^{-1} \vg^T}_2 = \Omega\Paren{\frac{1}{n}} \times \Norm{\mH^{T} \Paren{\mH^{[n]}}^{-1} \vg^T}_2\,.
\end{align*}

Therefore,
\[
\Expect{T \in \binom{[n]}{k}}{\Norm{\hat{\vtheta}_T^\NS - \hat{\vtheta}_T^\IF}_2^2} \geq \Omega\Paren{\frac{1}{n^2}} \times \Expect{T \in \binom{[n]}{k}}{\Norm{\mH^{T} \Paren{\mH^{[n]}}^{-1} \vg^T}_2^2} = \Omega\Paren{\frac{1}{n^2}} \Expect{T \in \binom{[n]}{k}}{\Norm{\sum_{i, j \in T} \vv_{i, j}}_2^2}\,.
\]

By utilizing equation~\eqref{eq:first_portion}, we have
\[
\Expect{T \in \binom{[n]}{k}}{\Norm{\hat{\vtheta}_T^\NS - \hat{\vtheta}_T^\IF}_2^2} \geq \Paren{1 - \frac{k - 1}{n - 1}}^2 \Omega\Paren{\frac{1}{n^2}} \Expect{T \in \binom{[n]}{k}}{\Norm{\sum_{i \in T} \vv_{i, i}}_2^2}
\]

But now we can apply Lemma~\ref{lem:individual_effects_do_not_cancel} to $\vu_i = \vv_{i, i} / n$ to show that
\[
\Expect{T \in \binom{[n]}{k}}{\Norm{\sum_{i \in T} \vu_i}_2^2} \geq \Expect{T \in \binom{[n]}{k}}{\Norm{\sum_{i \in T} \vu_{i} - \vu}_2^2} = \Paren{1 - \frac{k-1}{n-1}} \Expect{T \in \binom{[n]}{k}}{ \sum_{i \in T} \Norm{\vu_i - \vu}_2^2}\,.
\]

Applying a win-win argument to the norm of $\vu$, we see that either
\[
\Expect{T \in \binom{[n]}{k}}{\Norm{\sum_{i \in T} \vu_{i}}_2^2} \geq \Norm{\Expect{T \in \binom{[n]}{k}}{\sum_{i \in T} \vu_{i}}}_2^2 = \Norm{k \vu}_2^2 = \Omega\Paren{\frac{kd^3}{n^4}}\,,
\]
or
\[
\Expect{T \in \binom{[n]}{k}}{\sum_{i \in T} \Norm{\vu_i - \vu}_2^2} \geq k\Expect{T \in \binom{[n]}{k}}{\Norm{\vu_i}_2^2} - 2 \sqrt{k \Expect{i \in [n]}{\Norm{\vu_i - \vu}_2^2}} \times \Paren{\Norm{k \vu}_2} = \Omega\Paren{\frac{k d^3}{n^4}}\,,
\]
from Lemma~\ref{lem:individual_effects}.

In either case, we have
\[
\Expect{T \in \binom{[n]}{k}}{\Norm{\hat{\vtheta}_T^\NS - \hat{\vtheta}_T^\IF}_2^2} \geq \Omega\Paren{\frac{k d^3}{n^4}}\,,
\]
as desired.

\end{proof}

\subsubsection{Concluding the Average-Case Lower Bound}

Over the last two subsubsections, we proved Lemma~\ref{lem:scnd_order_bound} which states that in both the $k \gg d$ and the $k \leq d \polylog(d)$ regimes we have
\[
\Expect{T \in \binom{[n]}{k}}{\Norm{\hat{\vtheta}_T^\NS - \hat{\vtheta}_T^\IF}_2^2} = \wt{\Omega}\Paren{\frac{k^3 d + d^3 k}{n^4}}\,.
\]

From the upper bounds in Section~\ref{subsec:if_upper_bound}, we can see that this expectation cannot be driven by outliers (since for all but $n^{-\omega(1)}$ fraction of the drop-sets this bound is tight and for the remaining we have a worst-case upper bound). 
Therefore, 
\[
\Expect{T \in \binom{[n]}{k}}{\Norm{\hat{\vtheta}_T^\NS - \hat{\vtheta}_T^\IF}_2} = \wt{\Omega}\Paren{\frac{k^{3/2} d^{1/2} + d^{3/2} k^{1/2}}{n^2}}\,,
\]
completing our proof of Theorem~\ref{thm:asymptotic_if}.

\section{Asymptotic Analysis of RIF and DRIF}
\label{app:asymptotic_drif}
Our guarantees on RIF and DRIF relate to either the distance between the DRIF estimate and the NS estimate or to the difference between the RIF and DRIF estimates.

\subsection{Bounding the Difference Between RIF and DRIF}
In Section~\ref{subsec:drif_bound}, we will analyze the difference between DRIF and NS, showing that it is much smaller than the Newton step itself.
Before delving into this more complex analysis, we show that it also implies our bounds on the difference between RIF and DRIF using the fact that the two are linearly related.

From the definition of DRIF,
\[
\hat{\vtheta}_T^\DRIF - \hat{\vtheta} = \frac{n}{n - k} \Paren{\hat{\vtheta}_T^\RIF - \hat{\vtheta}}\,.
\]

Therefore, from the triangle inequality
\[
\Norm{\hat{\vtheta}_T^\DRIF - \hat{\vtheta}_T^\RIF}_2 = \frac{k}{n} \Norm{\hat{\vtheta}_T^\DRIF - \hat{\vtheta}}_2 = \frac{k}{n} \Paren{\Norm{\hat{\vtheta}^\NS_T - \hat{\vtheta}}_2 \pm \Norm{\hat{\vtheta}_T^\DRIF - \hat{\vtheta}_T^\NS}_2}\,,
\]
where $\Norm{\hat{\vtheta}_T^\DRIF - \hat{\vtheta}_T^\NS}_2 \ll  \Norm{\hat{\vtheta}^\NS_T - \hat{\vtheta}}_2$ is analyzed in Section~\ref{subsec:drif_bound}, and $\Norm{\hat{\vtheta}^\NS_T - \hat{\vtheta}}_2$ is bounded from above / below for both average and worst-case data drops (from equations~\eqref{eq:H-inv-both} and~\eqref{eq:g-bounds2}, and Lemma~\ref{lem:newton_step_lower_bounds}), yielding the desired bounds
\[
\Expect{T \in \binom{[n]}{k}}{\Norm{\hat{\vtheta}_T^\DRIF - \hat{\vtheta}_T^\RIF}_2} = \wt{\Theta}\Paren{\frac{k^{3/2} d^{1/2} + kd}{n^2}} \qquad \max_{T \in \binom{[n]}{k}}\set{\Norm{\hat{\vtheta}_T^\DRIF - \hat{\vtheta}_T^\RIF}_2} = \wt{\Theta}\Paren{\frac{k^2 + kd}{n^2}}\,.
\]

\subsection{Bounding the DRIF Error for Average-Case Sample Drops}
\label{subsec:drif_bound}

In this section we prove the bound     
$\Expect{T \in \binom{[n]}{k}}{\Norm{\hat{\vtheta}_T^\DRIF - \hat{\vtheta}_T^\NS}_2} = \wt{O}\Paren{\frac{kd}{n^2}}$ from Theorem~\ref{thm:asymptotic_drif_app}.

We will bound the approximation error of DRIF compared to NS by splitting it into 2 contributions
\begin{align}
\hat{\vtheta}_T^\DRIF - \hat{\vtheta}_T^\NS
&= \sum_{i \in T} \Paren{\Paren{\frac{n-k}{n}\mH^{\setminus \set{i}}}^{-1} - \Paren{\mH^{\setminus T}}^{-1}} \vg_i \\
&=\frac{n}{n-k}\Paren{\mH^{\setminus T}}^{-1} \sum_{i \in T} \Paren{p \mH^{\setminus\set{i}} - \mH^{T \setminus \set{i}}} \Paren{\mH^{\setminus\set{i}}}^{-1} \vg_i\\
&=\underbrace{\frac{n}{n-k}\Paren{\mH^{\setminus T}}^{-1} \sum_{i \in T} \Paren{p \mH^{\setminus\set{i}} - \mH^{T \setminus \set{i}}} \Paren{\mH^{[n]}}^{-1} \vg_i}_{\textup{Main Term}}
 +\\
 &\;\;\;\;\;\; +\underbrace{\frac{n}{n-k}\Paren{\mH^{\setminus T}}^{-1} \sum_{i \in T} \frac{L_i}{1 - L_i} \Paren{p \mH^{\setminus\set{i}} - \mH^{T \setminus \set{i}}} \Paren{\mH^{[n]}}^{-1} \vg_i}_{\textup{Higher Order Correction}}\,,
\end{align}
where $p = \frac{k}{n}$ and $L_i = \beta_i \vx_i^\intercal \Paren{\mH^{[n]}}^{-1} \vx_i$ is the $i$th leverage before any samples were dropped.

The key structure we try to preserve here is that both the expectation over $i$ contributions and the expectation over $j$ contributions in the main term, independently have mean $\vzero$ if the set $T$ is selected at random (this is because each $j$ contribution has a factor of $\ind_{j \in T} - p$, which is a mean $0$ scalar, and because the $i$ contributions are a sum over a random subset of the gradients at $\hat{\vtheta}$ and the sum over all gradients at $\hat{\vtheta}$ is $\vzero$ by definition).

\paragraph{Bounding the Higher Order Term}

The higher order correction can be naively bounded by
\[
\Norm{\textup{Higher Order Correction}}_2 = \otilde{\frac{d}{n} \times \frac{\otilde{k + d}}{n^2} \times k \sqrt{d}} = \otilde{\frac{kd \sqrt{d} \Paren{k + d}}{n^3}} = o\Paren{\frac{kd}{n^2}}\,,
\]
so long as $n \gg (k+d) \sqrt{d}$.

\subsubsection{Bounding the Main Term}
\label{subsubsec:poisson_decoupling}
Focusing on the main term, and opening the definitions of the Hessian and gradient of a logistic regression, we have
\begin{align}
\Norm{\textup{Main Term}}_2 
&= \frac{n}{n-k}\Norm{\Paren{\mH^{\setminus T}}^{-1}\sum_{i \neq j} \alpha_i \beta_j 1_{i \in T} 
\Paren{1_{j \in T} - p} \vx_j \vx_j^\intercal \Paren{\mH^{[n]}}^{-1} \vx_i}_2 \\
&\leq {\Theta}\Paren{\frac{1}{n}} 
\Norm{\sum_{i \neq j} \alpha_i \beta_j 1_{i \in T} 
\Paren{1_{j \in T} - p} \vx_j \vx_j^\intercal \Paren{\mH^{[n]}}^{-1} \vx_i}_2\,.
\end{align}

We will analyze this term by combining a decoupling argument and a Poissonization argument.

\paragraph{Poissonization Step}
Our goal is to prove a bound of the form $\Expect{T \in \binom{[n]}{k}}{\Norm{\hat{\vtheta}_T^\NS - \hat{\vtheta}_T^\DRIF}_2} \leq \tau$.
To prove such a bound, we first note that with very high probability over the samples, this difference is polynomially bounded for all $T$ (e.g., using the triangle inequality to show that both the NS and DRIF steps are bounded).
Therefore, it suffices to show a bound of the form $\Prob{T \in \binom{[n]}{k}}{\Norm{\hat{\vtheta}_T^\NS - \hat{\vtheta}_T^\DRIF}_2 \geq \tau} \leq n^{-\omega(1)}$.

We do this using a Poissonization argument: instead of analyzing the case where $T \sim \binom{[n]}{k}$, we analyze the simpler case where $T$ is selected to keep each sample $i$ iid with probability $p = \frac{k}{n}$, and show that our bound holds with very high probability.
Such a bound suffices for the case where $T \sim \binom{[n]}{k}$ is drawn only from sets of size $k$, since drawing from the Poissonized model and conditioning on having exactly $k$ samples produces the uniform distribution over $\binom{[n]}{k}$, while only requiring us to condition on an event with probability $1/\poly(n)$, causing the failure probability to increase by at most a polynomial factor.

In other words, fixing $p = \frac{k}{n}$, for any threshold $\tau > 0$
\begin{align*}
    \Prob{T \in \binom{[n]}{k}}{\Norm{\hat{\vtheta}_T^\NS - \hat{\vtheta}_T^\DRIF}_2 \geq \tau} &= \cProb{T \sim \Bernoulli(p)^{\otimes n}}{\Norm{\hat{\vtheta}_T^\NS - \hat{\vtheta}_T^\DRIF}_2 \geq \tau}{\abs{T} = k} \leq\\
    &\leq\Prob{T \sim \Bernoulli(p)^{\otimes n}}{\Norm{\hat{\vtheta}_T^\NS - \hat{\vtheta}_T^\DRIF}_2 \geq \tau} \times \poly(n)\,.
\end{align*}

\paragraph{Decoupling Argument}

\[
\sum_{i \neq j} A_{i, j} = 4\Expect{S \subseteq [n]}{\sum_{\substack{i \in S \\ j \in \ol{S}}} A_{i, j}}\,,
\]
where
\[
A_{i, j} =  \alpha_i \beta_j 1_{i \in T} 
\Paren{1_{j \in T} - p} \vx_j \vx_j^\intercal \Paren{\mH^{[n]}}^{-1} \vx_i\,,
\]
yielding the equality
\begin{equation}
\label{eq:decoupled_summation}
    \sum_{i \neq j} \alpha_i \beta_j 1_{i \in T} 
\Paren{1_{j \in T} - p} \vx_j \vx_j^\intercal \Paren{\mH^{[n]}}^{-1} \vx_i = 4\Expect{S \subseteq [n]}{\Paren{\sum_{j \in \ol{S}} \Paren{\ind_{j \in T} - p}\beta_j \vx_j \vx_j^\intercal} \Paren{\mH^{[n]}}^{-1} \sum_{i \in S} \vg_i \ind_{i \in T}}
\end{equation}

The key feature of this decoupling is that we are now trying to analyze the product between a Hessian-like matrix (that has expectation $\vzero$ if $T$ is random) and the gradient of a set of samples, making for a much simpler object to analyze than a summation over $n^2 - n$ terms.

\subsubsection{Proof via Vector Bernstein}
\label{subsubsec:vector_bernstein_proof}
We will bound the decoupled summation by viewing the randomness of the process in a specific order. We will show that with very high probability over the randomness of $S$, with very high probability over the randomness of $T \cap S$, with very high probability over $T \cap \ol{S}$, the decoupled summation has bounded $L_2$ norm.

We utilize the randomness of $\ind_{T\cap \ol{S}}$ via a vector Bernstein inequality over this randomness.
To apply vector Bernstein, we first need to show that each of the vectors in the summation is bounded:

\begin{lemma}[Each $j$ Element in the Decoupled Summation is Bounded]
    \label{lem:decoupled_elements_bounded}
    For any $j \in \ol{S}$, with very high probability over the randomness of $T\cap S$ and over the samples,
    \[
    \Norm{\vx_j \vx_j^\intercal \Paren{\mH^{[n]}}^{-1} \sum_{i \in S} \vg_i \ind_{i \in T}}_2 = \otilde{\frac{\sqrt{k}d}{n}}
    \]
\end{lemma}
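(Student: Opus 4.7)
The plan is to exploit the near-independence of $\vx_j$ (for $j \in \ol{S}$) from the sum $\vv \defeq \sum_{i \in S \cap T} \vg_i$ and from the Hessian, since the naive Cauchy--Schwarz bound $\Norm{\vx_j \vx_j^\top \mH^{-1} \vv}_2 \le \Norm{\vx_j}_2^2 \OpNorm{\mH^{-1}} \Norm{\vv}_2 = \otilde{d \sqrt{kd}/n}$ loses a factor of $\sqrt{d}$. Rewriting the norm as $\Norm{\vx_j}_2 \cdot |\vx_j^\top \mH^{-1} \vv|$, I would aim to show $|\vx_j^\top \mH^{-1} \vv| = \otilde{\sqrt{kd}/n}$ instead of $\otilde{d \sqrt{k}/n}$, saving the extra $\sqrt{d}$ via a Gaussian inner-product bound in place of Cauchy--Schwarz.

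To decouple $\vx_j$ from the matrix inverse, I define $\tilde{\mH}_j \defeq \mH^{[n]} - \beta_j \vx_j \vx_j^\top$, the Hessian with $\vx_j$'s rank-one contribution removed. Sherman--Morrison gives, after the standard cancellation,
\[
\vx_j^\top \Paren{\mH^{[n]}}^{-1} \vv \;=\; \frac{\vx_j^\top \tilde{\mH}_j^{-1} \vv}{1 + \beta_j \vx_j^\top \tilde{\mH}_j^{-1} \vx_j}\,,
\]
and the denominator concentrates around $1 + \beta_j \Tr(\tilde{\mH}_j^{-1}) = 1 + O(d/n) = \Theta(1)$ by Hanson--Wright, so I can treat it as a constant. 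Next, to handle the subtle dependence of $\vv$ on $\vx_j$ through $\hat{\vtheta}$, I would invoke leave-one-out stability: letting $\tilde{\vtheta}_j$ be the minimizer of $L$ with sample $j$ removed and $\tilde{\vg}_i \defeq \nabla \ell_i \vert_{\tilde{\vtheta}_j}$, a single-sample Newton-step argument (essentially Theorem~\ref{thm:main} at $k=1$, already quantified in our auxiliary bounds) yields $\Norm{\hat{\vtheta} - \tilde{\vtheta}_j}_2 = \otilde{1/n}$ wvhp, so $\tilde{\vv} \defeq \sum_{i \in S \cap T} \tilde{\vg}_i$ differs from $\vv$ by $\otilde{k\sqrt{d}/n}$, well below the target scale.

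With $\tilde{\vv}$ and $\tilde{\mH}_j$ now independent of $\vx_j \sim \mcN(\vzero, \mI)$, conditional Gaussian concentration yields $|\vx_j^\top \tilde{\mH}_j^{-1} \tilde{\vv}| = \otilde{\Norm{\tilde{\mH}_j^{-1} \tilde{\vv}}_2}$ wvhp, and I would bound $\Norm{\tilde{\mH}_j^{-1} \tilde{\vv}}_2 \le \OpNorm{\tilde{\mH}_j^{-1}} \Norm{\tilde{\vv}}_2 = \otilde{\sqrt{kd}/n}$. The bound $\Norm{\tilde{\vv}}_2 = \otilde{\sqrt{kd}}$ follows from a vector Bernstein inequality over the independent Poissonized indicators $\ind_{i \in T}$ for $i \in S$, using $\Norm{\tilde{\vg}_i}_2 = \otilde{\sqrt{d}}$, $|S \cap T| = \otilde{k}$ wvhp, and $\sum_{i \in [n]} \vg_i = \vzero$ to center the mean. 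Chaining with $\Norm{\vx_j}_2 = \otilde{\sqrt{d}}$ then yields the claimed $\otilde{\sqrt{k}d/n}$. The main obstacle will be the clean accounting of the leave-one-out corrections: the dependence of $\hat{\vtheta}$ (and hence every $\beta_i$ and $\vg_i$) on $\vx_j$ must be tracked carefully enough that both the Hessian perturbation and the gradient perturbations contribute only sub-dominant error across all $d$ coordinates simultaneously.
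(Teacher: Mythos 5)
Your proof proposal takes the same route as the paper: reduce to bounding the scalar $\abs{\vx_j^\top (\mH^{[n]})^{-1} \vv}$, use Sherman--Morrison to pull $\vx_j$'s rank-one contribution out of the Hessian denominator, invoke leave-one-out (LOO) stability to decouple $\vx_j$ from the remaining quantities, apply Gaussian concentration to the inner product, and multiply back by $\Norm{\vx_j}_2 = \otilde{\sqrt{d}}$. The paper states this intermediate step as Claim~\ref{clm:bounded_sample_gradient_inner_product} and executes it in exactly this way.

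However, there is an internal inconsistency in your sketch that needs to be resolved. You write ``With $\tilde{\vv}$ and $\tilde{\mH}_j$ now independent of $\vx_j$\ldots,'' but $\tilde{\mH}_j = \mH^{[n]} - \beta_j \vx_j\vx_j^\top$ is evaluated at $\hat{\vtheta}$, and $\hat{\vtheta}$ depends on $\vx_j$. Removing the $j$th sample's rank-one term does not remove $\vx_j$'s influence on the weights $\beta_i(\hat{\vtheta})$ for $i \neq j$, so $\tilde{\mH}_j$ is \emph{not} independent of $\vx_j$. (You essentially flag this yourself in your closing sentence about ``the Hessian perturbation,'' which contradicts the independence claim.) The paper's fix is precisely to move \emph{both} the gradient and the Hessian to the LOO model: it defines $\vu_{\setminus\{j\}} = (\mH^{\setminus\{j\}}_{\hat{\vtheta}_{\setminus\{j\}}})^{-1}\vg^{T\cap S}_{\hat{\vtheta}_{\setminus\{j\}}}$, which is genuinely independent of $\vx_j$, then separately bounds the operator-norm difference between $(\mH^{\setminus\{j\}}_{\hat{\vtheta}})^{-1}$ and $(\mH^{\setminus\{j\}}_{\hat{\vtheta}_{\setminus\{j\}}})^{-1}$ (via the third-derivative bound of Lemma~\ref{lem:third-order-concentration}) and chains all the error terms through Cauchy--Schwarz. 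To complete your argument, you would need to add this Hessian-replacement step. Separately, your LOO stability bound $\Norm{\hat{\vtheta} - \tilde{\vtheta}_j}_2 = \otilde{1/n}$ is missing a $\sqrt{d}$: since $\Norm{\vg_j}_2 = \otilde{\sqrt{d}}$ and $\OpNorm{\mH^{-1}} = O(1/n)$, the correct scale is $\otilde{\sqrt{d}/n}$, as in equation~\eqref{eq:thetahat-theta-star}. The downstream error terms still end up sub-dominant under $n \gg k^2, d^2$, but the bookkeeping in the sketch is off.
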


We then show that the covariance of these contributions is also bounded:
\begin{lemma}[Trace of Total Covariance is Bounded]
    \label{lem:covariance_of_contributions}
    With very high probability over the samples and over $T \cap S$, the trace of the covariance of the decoupled summation wrt $T \cap \ol{S}$ is at most
    \[
    \trace{\sum_{j \in \ol{S}} \beta_j^2 p(1-p) \vx_j \vx_j^\intercal \Iprod{\vx_j, \Paren{\mH^{[n]}}^{-1} \sum_{i \in S} \vg_i \ind_{i \in T}}^2} = \otilde{\frac{k^2d^2}{n^2}}
    \]
\end{lemma}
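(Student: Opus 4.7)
The plan is to freeze the Poissonized randomness of $T \cap S$ so that $\vz \defeq \Paren{\mH^{[n]}}^{-1} \sum_{i \in S} \vg_i \ind_{i \in T}$ becomes a deterministic vector (given the samples and $T \cap S$). Using $\trace{\vx_j \vx_j^\intercal} = \Norm{\vx_j}_2^2$, the trace of the covariance collapses to the scalar sum
\[
p(1-p) \sum_{j \in \ol{S}} \beta_j^2 \Norm{\vx_j}_2^2 \iprod{\vx_j, \vz}^2 ,
\]
which I would bound as a product of three cleanly separated factors: the global $p(1-p) \le k/n$, the per-sample weight $\beta_j^2 \Norm{\vx_j}_2^2$, and the quadratic form $\sum_{j \in \ol{S}} \iprod{\vx_j, \vz}^2$.

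For the per-sample factor, a union bound over $j \in [n]$ together with $\beta_j \le 1/4$ (logistic loss) and standard Gaussian norm concentration gives $\max_j \beta_j^2 \Norm{\vx_j}_2^2 = \otilde{d}$ wvhp. For the quadratic form, the subgaussian covariance bound already used in Lemma~\ref{lem:newton_step_error_lower_bound} yields $\sum_{j \in \ol{S}} \vx_j \vx_j^\intercal \preceq \sum_{j \in [n]} \vx_j \vx_j^\intercal \preceq O(n)\,\mI$ wvhp, and hence $\sum_{j \in \ol{S}} \iprod{\vx_j, \vz}^2 \le O(n)\,\Norm{\vz}_2^2$.

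The main quantity to control is $\Norm{\vz}_2$. By submultiplicativity, $\Norm{\vz}_2 \le \OpNorm{\Paren{\mH^{[n]}}^{-1}} \cdot \Norm{\sum_{i \in S \cap T} \vg_i}_2$, and the Hessian inverse contributes $O(1/n)$ by equation~\eqref{eq:H-inv-both}. For the gradient sum I would apply a vector Bernstein bound to $\sum_{i \in S} \vg_i \ind_{i \in T}$ under the Poissonized randomness of $T \cap S$: its mean $p \sum_{i \in S} \vg_i$ has norm $\otilde{k \sqrt{d/n}}$ (since $\sum_{i \in [n]} \vg_i = \vzero$ makes $\sum_{i \in S} \vg_i$ a sum over a random subset of mean-zero subgaussians, as in equation~\eqref{eq:g-bounds}), while its centered variance is bounded by $p(1-p) \sum_{i \in S} \Norm{\vg_i}_2^2 = \otilde{kd}$ (using $\Norm{\vg_i}_2 = \otilde{\sqrt{d}}$). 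Both contributions are dominated by $\otilde{\sqrt{kd}}$ under $k \ll n$, giving $\Norm{\vz}_2^2 = \otilde{kd/n^2}$. Multiplying the three factors yields $(k/n) \cdot \otilde{d} \cdot O(n) \cdot \otilde{kd/n^2} = \otilde{k^2 d^2/n^2}$.

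I do not anticipate a substantial obstacle: the decoupling step has already reduced the problem to a clean product of three quantities, each of which is controlled by an auxiliary bound from Section~\ref{subsec:auxiliary_bounds} together with one vector Bernstein step. The only mild technicality is that equation~\eqref{eq:g-bounds} is stated for uniformly random subsets of $[n]$ of size exactly $k$, whereas $S \cap T$ is Poissonized of random size $\otilde{k}$; this is handled either by the Poisson-to-uniform conversion used in Section~\ref{subsubsec:poisson_decoupling} or is subsumed directly by the vector Bernstein argument sketched above.
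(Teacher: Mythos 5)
Your proof is correct, and for this particular lemma it takes a cleaner and more elementary route than the paper's. Both arguments freeze the $T\cap S$ randomness and reduce the trace of the covariance to the scalar sum $p(1-p)\sum_{j\in\ol{S}}\beta_j^2\|\vx_j\|_2^2\iprod{\vx_j,\vz}^2$, but they split it differently. The paper reuses Claim~\ref{clm:bounded_sample_gradient_inner_product} --- a leave-one-out argument needed to handle the dependence of $\vz$ on $\vx_j$ through $\hat{\vtheta}$ and $\mH^{[n]}$ --- to get the uniform-in-$j$ bound $\abs{\iprod{\vx_j,\vz}}=\otilde{\sqrt{kd}/n}$, and then pairs this with $\trace{\mH^{\ol{S}}}=O(nd)$. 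You instead factor into $\max_j\{\beta_j^2\|\vx_j\|_2^2\}=\otilde{d}$ and $\sum_{j\in\ol{S}}\iprod{\vx_j,\vz}^2\le\OpNorm{\sum_j\vx_j\vx_j^\intercal}\|\vz\|_2^2=O(n)\|\vz\|_2^2$, which is a \emph{deterministic} Loewner bound (once one conditions on the wvhp event $\sum_j\vx_j\vx_j^\intercal\preceq O(n)\mI$) and so sidesteps the $\vx_j$--$\vz$ dependence entirely. Combined with $\|\vz\|_2=\otilde{\sqrt{kd}/n}$ from $\OpNorm{(\mH^{[n]})^{-1}}=O(1/n)$ and a vector Bernstein bound on $\|\vg^{T\cap S}\|_2$, both factorizations yield $\otilde{k^2d^2/n^2}$. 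Your route is locally simpler; the paper's choice to reuse Claim~\ref{clm:bounded_sample_gradient_inner_product} is natural only because that claim is already required for the companion boundedness statement, Lemma~\ref{lem:decoupled_elements_bounded}, so there is no net savings in the paper's full proof of Theorem~\ref{thm:asymptotic_drif_app}.
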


Combining Lemmas~\ref{lem:decoupled_elements_bounded} and~\ref{lem:covariance_of_contributions} with vector Bernstein inequality yields that wvhp
\[
\Norm{\textup{Main Term}}_2 = \Theta\Paren{\frac{1}{n}} \times \Norm{\sum_{i \neq j} \alpha_i \beta_j 1_{i \in T} 
\Paren{1_{j \in T} - p} \vx_j \vx_j^\intercal}_2 = \otilde{\frac{kd}{n^2}}\,,
\]
proving $\Expect{T \in \binom{[n]}{k}}{\Norm{\hat{\vtheta}_T^\DRIF - \hat{\vtheta}_T^\NS}_2} = \wt{O}\Paren{\frac{kd}{n^2}}$ from Theorem~\ref{thm:asymptotic_drif_app}.
It remains to prove Lemmas~\ref{lem:decoupled_elements_bounded} and~\ref{lem:covariance_of_contributions}.

\paragraph{Proof of Lemma~\ref{lem:decoupled_elements_bounded}}

The main step in proving Lemma~\ref{lem:decoupled_elements_bounded}, is to prove that
\begin{lemma}[Bounded Sample-Gradient Inner Product]
\label{clm:bounded_sample_gradient_inner_product}
    With very high probability over the randomness of the samples and $T\cap S$, for any $j \in \ol{S}$, we have
    \[
    \abs{\vx_j^\intercal \Paren{\mH^{[n]}}^{-1} \sum_{i \in S} \vg_i \ind_{i \in T}} = \otilde{\frac{\sqrt{k}d}{n}}\,.
    \]
\end{lemma}

\begin{proof}[Lemma~\ref{clm:bounded_sample_gradient_inner_product} Implies Lemma~\ref{lem:decoupled_elements_bounded}]

By definition
\[
\Norm{\vx_j \vx_j^\intercal \Paren{\mH^{[n]}}^{-1} \sum_{i \in S} \vg_i \ind_{i \in T}}_2 = \Norm{\vx_j}_2 \times \abs{ \vx_j^\intercal \Paren{\mH^{[n]}}^{-1} \sum_{i \in S} \vg_i \ind_{i \in T}} = \otilde{\sqrt{d}} \times \otilde{\frac{\sqrt{kd}}{n}}\,,
\]
where the last step utilized Lemma~\ref{lem:concentration_norm_subgaussian} and Lemma~\ref{clm:bounded_sample_gradient_inner_product}, yielding the lemma.

\end{proof}
\paragraph{Proof of Lemma~\ref{clm:bounded_sample_gradient_inner_product}}

\begin{proof}[Proof of Lemma~\ref{clm:bounded_sample_gradient_inner_product}]

With very high probability $\abs{T \cap S} = \otilde{k}$, and for a given size it has equal probability of selecting any subset of the samples of this size.
So from Lemma~\ref{lem:gradient_difference_subset}, wvhp $\Norm{\vg^{T\cap S}}_2 = \otilde{\sqrt{kd}}$.

Denote 
\[
\vu = \Paren{\mH^{[n]}}^{-1} \vg^{T\cap S}\,.
\]

Our goal is to bound $\iprod{\vx_j, \vu}$, and the key difficulty in getting a good bound is that $\vu$ depends on the model $\hat{\vtheta}$, which in turn depends on $\vx_j$.
Our strategy will be to first bound the inner product after removing this dependency and then bound the effect of $\vx_j$ on $\vu$.

In particular, we consider the model $\hat{\vtheta}_{\setminus\set{j}}$ trained without this sample.
Consider $\mH^{\setminus\set{j}}_{\hat{\vtheta}_{\setminus\set{j}}}$ -- the Hessian of this model on all the samples except the $j$th sample.
We can view the analogous version of the vector $\vu$ for this new model 
\[
\vu_{\setminus\set{j}} = \Paren{\mH^{\setminus\set{j}}_{\hat{\vtheta}_{\setminus\set{j}}}}^{-1} \vg_{\hat{\vtheta}_{\setminus\set{j}}}^{T\cap S}\,.
\]

$\vx_j$ is independent of $\vu_{\setminus\set{j}}$, so $\iprod{\vu_{\setminus\set{j}}, \vx_j}$ is normally distributed with mean $0$ and variance $\Norm{\vu_{\setminus\set{j}}}_2^2$.
Therefore, with very high probability
\[
\abs{\iprod{\vu_{\setminus\set{j}}, \vx_j}} = \otilde{\Norm{\vu_{\setminus\set{j}}}_2} \leq \otilde{\OpNorm{\Paren{\mH^{\setminus\set{j}}_{\hat{\vtheta}_{\setminus\set{j}}}}^{-1} } \times \Norm{\vg_{\hat{\vtheta}_{\setminus\set{j}}}^{T\cap S}}_2} = \otilde{\frac{\sqrt{kd}}{n}}\,,
\]
where the last step, utilized Lemma~\ref{lem:gradient_difference_subset} (which tells us that $\Norm{\vg^{T\cap S}_{\hat{\vtheta}_{\setminus \set{j}}}}_2 = \otilde{\sqrt{kd}}$ wvhp) and Lemma~\ref{lem:uniform_lower_bound_of_hessian} (which tells us that the Hessian inverse contributed a $\frac{1}{n}$ factor).

Next, we define $\vu^j = \Paren{\mH^{\setminus \set{j}}}^{-1} \vg^{T\cap S}$.
Bounding $\iprod{\vx_j, \vu^j}$ yields a bound on $\iprod{\vx_j, \vu}$, since from the Sherman-Morrison formula, 
\[
\iprod{\vx_j, \vu^j} = \frac{1}{1 - L_j} \iprod{\vx_j, \vu} \Rightarrow \abs{\iprod{\vx_j, \vu}} \leq \abs{\iprod{\vx_j, \vu^j}}\,.
\]

To bound $\iprod{\vx_j, \vu^j}$, we will bound $\Norm{\vu^j - \vu_{\setminus \set{j}}}_2$ and apply the CS inequality to bound its inner product with $\vx_j$.
From Theorem~\ref{thm:asymptotic_main}, we know that with very high probability
\[
\hat{\vtheta}_{\setminus\set{j}} = \hat{\vtheta}_{\setminus\set{j}}^\NS \pm \otilde{\frac{d}{n^2}}  = \hat{\vtheta} \pm \otilde{\frac{\sqrt{d}}{n}}\,.
\]

Therefore, utilizing our bound on the operator norm of the third order derivative of the loss (Lemma~\ref{lem:third-order-concentration}), we know that wvhp over the randomness of the samples,
\[
\OpNorm{\Paren{\mH^{\setminus\set{j}}_{\hat{\vtheta}_{\setminus\set{j}}}}^{-1} - \Paren{\mH^{\setminus\set{j}}_{\hat{\vtheta}}}^{-1}} = \otilde{\frac{1}{n^2} \times n \times \frac{\sqrt{d}}{n}} = \otilde{\frac{\sqrt{d}}{n^2}}\,,
\]
and that 
\[
\Norm{\vg^{T \cap S}_{\hat{\vtheta}_{\setminus \set{j}}} - \vg^{T \cap S}_{\hat{\vtheta}_{\setminus \set{j}}}} = \otilde{\frac{(k+d) \sqrt{d}}{n}}\,.
\]

Therefore,
\begin{align}
\Norm{\vu^j - \vu_{\setminus \set{j}}}_2 
&\leq \OpNorm{\Paren{\mH^{\setminus\set{j}}_{\hat{\vtheta}_{\setminus\set{j}}}}^{-1}} 
\Norm{\vg^{S\cap T}_{\hat{\vtheta}_{\setminus \set{j}}} - \vg^{S\cap T}_{\hat{\vtheta}}}_2
+\\
&\;\;\;\;\;\;\;\;\;\; + \OpNorm{\Paren{\mH^{\setminus\set{j}}_{\hat{\vtheta}_{\setminus\set{j}}}}^{-1}} \OpNorm{\mH^{\setminus\set{j}}_{\hat{\vtheta}_{\setminus\set{j}}} - \mH^{\setminus\set{j}}} \OpNorm{\Paren{\mH^{\setminus\set{j}}_{\hat{\vtheta}}}^{-1}} \Norm{\vg^{S\cap T}_{\hat{\vtheta}}}_2  \\
&\leq \otilde{\frac{k + d}{n} \times \frac{\sqrt{d}}{n}} 
+ \otilde{\frac{\sqrt{kd}}{n^2} \times n \times \frac{\sqrt{d}}{n}} 
= \otilde{\frac{\Paren{k + d} \sqrt{d}}{n^{2}}} 
\ll \otilde{\frac{\sqrt{k}}{n}}\,.
\end{align}

Yielding
\[
\abs{\iprod{\vx_j, \vu_{\setminus\set{j}} - \vu^j}} \leq \Norm{\vx_j}_2 \times \Norm{\vu_{\setminus\set{j}} - \vu^j}_2 = \otilde{\sqrt{d} \times \Paren{\frac{(k+d) \sqrt{d}}{n^2} + \frac{\sqrt{k}d}{n^2}}} = o\Paren{\frac{\sqrt{kd}}{n}}\,,
\]
from the CS inequality.
\end{proof}

\paragraph{Proof of Lemma~\ref{lem:covariance_of_contributions}}

We view
\[
\sum_{j \in \ol{S}} \Paren{\ind_{j \in T} - p} \vx_j \Paren{\beta_j \iprod{\vx_j, \vu}}\,,
\]
as the sum over $\leq n$ independent random variables, whose randomness we view as being a function of the randomness of $T \cap \ol{S}$ (note that $\vu$ depends only on the randomness of $T\cap S$, which is independent of $T \cap \ol{S}$ due to the Poissonization step).
From Lemma~\ref{clm:bounded_sample_gradient_inner_product}, we know that with very high probability over the randomness of the samples and of $T\cap S$, $\abs{\iprod{\vx_j, \vu}} = \otilde{\frac{\sqrt{kd}}{n}}$ for all $j \in \ol{S}$, and this does not depend on the randomness of $T \cap \ol{S}$.

The sum over the covariance of the summands is given by
\[
p(1-p) \sum_{j \in \ol{S}} \vx_j \vx_j^\intercal \Paren{\beta_j \iprod{\vx_j, \vu}}^2 \preceq \otilde{p \times \frac{\sqrt{kd}}{n}} \times \mH^{\ol{S}}_{\hat{\vtheta}}\,.
\]

Therefore, taking the trace, we get
\[
\trace{p(1-p) \sum_{j \in \ol{S}} \vx_j^\intercal \vx_j \Paren{\beta_j \iprod{\vx_j, \vu}}^2} = \otilde{\frac{k}{n} \times \frac{kd}{n^2} \times n \times d} = \otilde{\frac{k^2 d^2}{n^2}}\,.
\]

\paragraph{Concluding a bound on the norm of the decoupled summation}

Recall that each of the summands has mean $\vzero$, from our assumption that $\ind_{j \in T}$ is an iid Bernoulli.

Therefore, applying the vector Bernstein inequality, we know that with very high probability
\[
\Norm{\sum_{j \in \ol{S}} \Paren{1_{j \in T} - p} \vx_j \Paren{\beta_j \iprod{\vx_j, \vu}}}_2 = \otilde{\frac{kd}{n}}\,,
\]
as desired.

Therefore, the main term, which scales like $\frac{1}{n}$ times the norm of this summation is of norm at most $\otilde{\frac{kd}{n^2}}$, proving the average-case upper bound of Theorem~\ref{thm:asymptotic_drif_app}.

\subsection{Bounding the DRIF Error for Worst-Case Sample Drops}

We now turn to bounding the DRIF error for worst-case sample drops.
We will utilize the identity that
\begin{align}
\hat{\vtheta}_T^\DRIF - \hat{\vtheta}_T^\NS
&= \sum_{i \in T} \Paren{\Paren{\frac{n-k}{n}\mH^{\setminus \set{i}}}^{-1} - \Paren{\mH^{\setminus T}}^{-1}} \vg_i \\
&=\frac{n}{n-k}\Paren{\mH^{\setminus T}}^{-1} \sum_{i \in T} \Paren{p \mH^{\setminus\set{i}} - \mH^{T \setminus \set{i}}} \Paren{\mH^{\setminus\set{i}}}^{-1} \vg_i\\
&=\underbrace{\frac{n}{n-k}\Paren{\mH^{\setminus T}}^{-1} \sum_{i \in T} \Paren{p \mH^{\setminus\set{i}} - \mH^{T \setminus \set{i}}} \Paren{\mH^{[n]}}^{-1} \vg_i}_{\textup{Main Term}}
 +\\
 &\;\;\;\;\;\; +\underbrace{\frac{n}{n-k}\Paren{\mH^{\setminus T}}^{-1} \sum_{i \in T} \frac{L_i}{1 - L_i} \Paren{p \mH^{\setminus\set{i}} - \mH^{T \setminus \set{i}}} \Paren{\mH^{[n]}}^{-1} \vg_i}_{\textup{Higher Order Correction}}\,,
\end{align}
where $L_i = \beta_i \vx_i^\intercal \Paren{\mH^{[n]}}^{-1} \vx_i$ is the $i$th leverage.

As before, we note that wvhp over the samples, for all $T \in \binom{[n]}{k}$, the higher order term is bounded due to submultiplicativity / triangle inequality
\[
\Norm{\textup{Higher Order Correction}}_2 = \otilde{\frac{d}{n} \times \frac{\otilde{k + d}}{n^2} \times k \sqrt{d}} = \otilde{\frac{kd \sqrt{d} \Paren{k + d}}{n^3}} = o\Paren{\frac{kd}{n^2}}\,,
\]
allowing us to focus on the main term.

We simplify the main term by using a decoupling argument
\begin{align}
    \textup{Main Term} &= \Paren{\mH^{\setminus T}}^{-1} \sum_{i \in T} \Paren{p \mH^{\setminus\set{i}} - \mH^{T \setminus \set{i}}} \Paren{\mH^{[n]}}^{-1} \vg_i = \\
    &=\Paren{\mH^{\setminus T}}^{-1}\sum_{\substack{i, j \in T\\ i \neq j}} \Paren{\ind_{j \in T} - p} \beta_j \vx_j \vx_j^\intercal \Paren{\mH^{[n]}}^{-1} \vx_i \valpha_i = \\
    &=4 \Paren{\mH^{\setminus T}}^{-1} \Expect{S \subseteq [n]}{\sum_{\substack{i \in T \cap S \\ j \in \ol{S}}} \Paren{\ind_{j \in T} - p} \beta_j \vx_j \vx_j^\intercal \Paren{\mH^{[n]}}^{-1} \vx_i \alpha_i}
\end{align}

From Lemma~\ref{lem:uniform_lower_bound_of_hessian}, we know that wvhp over the samples, for all $T \in \binom{[n]}{k}$,
\[
\OpNorm{\Paren{\mH^{\setminus T}}^{-1}} = O\Paren{\frac{1}{n}}\,.
\]

Therefore, it suffices to bound the norm of
\begin{align*}
    &\Norm{\Expect{S \subseteq [n]}{\sum_{\substack{i \in T \cap S \\ j \in \ol{S}}} \Paren{\ind_{j \in T} - p} \beta_j \vx_j \vx_j^\intercal \Paren{\mH^{[n]}}^{-1} \vx_i \alpha_i}}_2 \leq\\
    &\;\;\;\;\;\;\;\;\;\;\;\;\;\;\;\;\;\;\;\;\leq \Norm{\Expect{S \subseteq [n]}{\sum_{\substack{i \in T \cap S \\ j \in \ol{S}}} \ind_{j \in T} \beta_j \vx_j \vx_j^\intercal \Paren{\mH^{[n]}}^{-1} \vx_i \alpha_i}}_2 +\Norm{\Expect{S \subseteq [n]}{\sum_{\substack{i \in T \cap S \\ j \in \ol{S}}} p \beta_j \vx_j \vx_j^\intercal \Paren{\mH^{[n]}}^{-1} \vx_i \alpha_i}}_2 \leq \\
    &\;\;\;\;\;\;\;\;\;\;\;\;\;\;\;\;\;\;\;\;\leq 2\max_{\substack{T, T^\prime \in \binom{[n]}{\leq k} \\ T\cap T^\prime = \emptyset}}\set{\Norm{\sum_{\substack{i \in T \\ j \in T^\prime}} \beta_j \vx_j \vx_j^\intercal \Paren{\mH^{[n]}}^{-1} \vx_i \alpha_i}_2}
\end{align*}

Therefore, to complete the proof of Theorem~\ref{thm:asymptotic_drif_app}, it will suffice to prove the following claim:
\begin{lemma}
    \label{lem:HTprime_gT}
    Under the assumptions of Theorem~\ref{thm:asymptotic_drif_app}, wvhp over the samples
    \[
    \max_{\substack{T, T^\prime \in \binom{[n]}{\leq k} \\ T\cap T^\prime = \emptyset}}\set{\Norm{\sum_{\substack{i \in T \\ j \in T^\prime}} \beta_j \vx_j \vx_j^\intercal \Paren{\mH^{[n]}}^{-1} \vx_i \alpha_i}_2} = \otilde{\frac{kd + k^2}{n}}\,.
    \]
\end{lemma}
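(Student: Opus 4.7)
The quantity to bound is $\mH^{T'}\Paren{\mH^{[n]}}^{-1}\vg^T$, where $\mH^{T'} = \sum_{j \in T'}\beta_j\vx_j\vx_j^\intercal$ and $\vg^T = \sum_{i \in T}\alpha_i\vx_i$. The naive submultiplicative bound $\OpNorm{\mH^{T'}}\cdot\OpNorm{\Paren{\mH^{[n]}}^{-1}}\cdot\Norm{\vg^T}_2 = \otilde{(k+d)(\sqrt{kd}+k)/n}$ is too weak by a factor $\sqrt{d/k}$ when $d \gg k$, so the missing gain must come from the disjointness $T \cap T' = \emptyset$. The plan is to pass to the population surrogate $\vtheta^*$ and a deterministic matrix $\mM \defeq n \cdot \EE{\beta(\iprod{\vtheta^*,\vx})\vx\vx^\intercal}$ with $\OpNorm{\mM^{-1}} = \Theta(1/n)$: after these replacements $\mH^{T'}_{\vtheta^*}$ and $\vg^T_{\vtheta^*}$ depend on disjoint sets of i.i.d.\ Gaussian samples and are therefore independent. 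Conditioning on $\vg^T_{\vtheta^*}$ and setting $\vw \defeq \mM^{-1}\vg^T_{\vtheta^*}$ (so that $\Norm{\vw}_2 = \otilde{(\sqrt{kd}+k)/n}$ uniformly over $T$ by equation~\eqref{eq:g-bounds2}) reduces the main task to bounding $\Norm{\mH^{T'}_{\vtheta^*}\vw}_2$ at a fixed vector $\vw$ against an independent Gaussian design.

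For this Gaussian step, I would decompose $\vx_j = z_j\vtheta^*/\Norm{\vtheta^*}_2 + \vy_j^\perp$ so that $\beta_j^* = \beta(\iprod{\vtheta^*,\vx_j})$ depends only on $z_j$. A direct moment computation gives $\EE{\mH^{T'}_{\vtheta^*}\vw} = \sum_{j\in T'}\Paren{c_1 \vw + c_2 \iprod{\vtheta^*,\vw}\vtheta^*/\Norm{\vtheta^*}_2^2}$ for absolute constants $c_1, c_2 = O(1)$, which has norm $O(k\Norm{\vw}_2)$. For the centered fluctuation, each summand has norm $\otilde{\sqrt{d}\,\Norm{\vw}_2}$ (using $\Norm{\vx_j}_2 = \otilde{\sqrt{d}}$ and $|\iprod{\vx_j, \vw}| = \otilde{\Norm{\vw}_2}$) and second moment $O(d\Norm{\vw}_2^2)$, so a vector Bernstein inequality for sub-exponential random vectors yields $\Norm{\mH^{T'}_{\vtheta^*}\vw - \EE{\mH^{T'}_{\vtheta^*}\vw}}_2 = \otilde{\sqrt{kd}\,\Norm{\vw}_2}$ with failure probability $n^{-\omega(1)}$. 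Combining, $\Norm{\mH^{T'}_{\vtheta^*}\vw}_2 = \otilde{(k+\sqrt{kd})\Norm{\vw}_2} = \otilde{(k^2+kd)/n}$, where the cross term $k^{3/2}d^{1/2}$ is absorbed by AM-GM into $(k^2+kd)/2$.

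The remaining task is to justify the two replacements. Expanding $\mH^{T'}\Paren{\mH^{[n]}}^{-1}\vg^T - \mH^{T'}_{\vtheta^*}\mM^{-1}\vg^T_{\vtheta^*}$ as a telescoping sum produces error terms of the form $\mH^{T'}_{*}\cdot\Delta\cdot\vg^T_{*}$ with $\OpNorm{\Delta}$ controlled by equations~\eqref{eq:thetahat-theta-star}--\eqref{eq:H-inv-diff} and Lemma~\ref{lem:third-order-concentration}. The delicate case is the gradient correction $\vg^T_{\hat\vtheta} - \vg^T_{\vtheta^*} \approx -\mH^T_{\vtheta^*}\mM^{-1}\vg_{\vtheta^*}^{[n]}$: splitting $\vg_{\vtheta^*}^{[n]}$ into contributions from $T$, $T'$, and $[n]\setminus(T\cup T')$ isolates an ``independent of $T'$'' piece on which the Gaussian-concentration argument can be iterated, leaving two small pieces handled by direct norm multiplication under $n \gg k^2 + d^2$. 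A union bound over the at most $n^{2k}$ disjoint pairs $(T, T')$ is absorbed by the $\polylog(n)$ slack in $\otilde{\cdot}$ since $k \leq \sqrt{n}$. The main obstacle is precisely this final perturbation bookkeeping: several terms are borderline in the regime where $k$ is small and $d$ is large, so each must be bounded by a different combination of operator-norm control, independence-based Gaussian concentration, and the regime assumption $n \gg k^2 + d^2$, rather than by a single naive submultiplicative estimate.
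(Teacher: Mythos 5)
Your high-level strategy — find a surrogate for $(\mH^{[n]})^{-1}\vg^T$ that is independent of the samples in $T'$, then exploit that independence to upgrade the naive $\OpNorm{\mH^{T'}}\cdot\Norm{\vw}_2 = \otilde{(k+d)\Norm{\vw}_2}$ bound — is exactly the paper's. The paper chooses the leave-$T'$-out model $\hat{\vtheta}_{T'}$ and the leave-$T'$-out Hessian $\mH^{[n]\setminus T'}_{\hat{\vtheta}_{T'}}$ as the surrogate (Lemmas~\ref{lem:drif_error_theta_hat} and~\ref{lem:drif_error_theta_tprime}), whereas you go all the way to the population quantities $\vtheta^*$ and $\mM$. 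The paper's choice makes the perturbation errors $O((\sqrt{kd}+k)/n)$-sized rather than $O(\sqrt{d/n})$-sized, and it sidesteps having to differentiate the subset-Hessian $\mH^{T'}$ in $\vtheta$. Your replacement errors are substantially larger and, as you acknowledge, require an additional iteration of the independence argument to tame; this part is plausible but left unverified.

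The more serious issue is your concentration step. You bound $\Norm{\mH^{T'}_{\vtheta^*}\vw}_2$ by a mean term $O(k\Norm{\vw}_2)$ plus a vector-Bernstein fluctuation claimed to be $\otilde{\sqrt{kd}\,\Norm{\vw}_2}$ ``with failure probability $n^{-\omega(1)}$,'' and then you say the union bound over the $\leq n^{2k}$ disjoint pairs $(T,T')$ is ``absorbed by the $\polylog(n)$ slack.'' That absorption claim is wrong once $k \gg \log n$: to survive a union bound over $n^{2k}$ events you need a per-pair failure probability of $n^{-\Omega(k)}$, i.e.\ the exponent in Bernstein must be $\Omega(k\log n)$. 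With variance proxy $\sigma^2 \approx kd\Norm{\vw}_2^2$ for the $k$-term sum, setting the Bernstein exponent $t^2/\sigma^2 \approx k\log n$ forces $t \approx k\sqrt{d}\,\Norm{\vw}_2$, not $\sqrt{kd}\,\Norm{\vw}_2$. Plugging in $\Norm{\vw}_2 = \otilde{\sqrt{kd}/n}$ (the $k<d$ regime), this gives $\otilde{k^{3/2}d/n}$, which overshoots the target $\otilde{kd/n}$ by a factor of $\sqrt{k}$. The polylog slack in $\otilde{\cdot}$ degrades gracefully with failure probability only within the $n^{-\polylog(n)}$ regime; it cannot buy you $n^{-\Omega(k)}$ for free.

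The paper escapes this exactly because it does \emph{not} run vector Bernstein on $\mH^{T'}\vw$ directly. Instead it factors $\mH^{T'} = \wt{\mX}_{T'}\wt{\mX}_{T'}^\intercal$ (with $\wt{\mX}_{T'} = \mX_{T'}\diag{\sqrt{\vbeta_{T'}}}$, via a Woodbury manipulation) and bounds
\[
\Norm{\mH^{T'}\vw}_2 \;\leq\; \OpNorm{\wt{\mX}_{T'}}\cdot\Norm{\wt{\mX}_{T'}^\intercal\vw}_2 \;=\; \sqrt{\OpNorm{\mH^{T'}}}\cdot\sqrt{\vw^\intercal\mH^{T'}\vw}\,.
\]
The first factor is $\otilde{\sqrt{k+d}}$ uniformly over all $T'$ wvhp by the PSD-monotonicity covering argument of Lemma~\ref{lem:gradient_difference_subset} (no per-$T'$ union-bound penalty beyond polylogs). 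The second factor is a \emph{scalar} sum of $k$ sub-exponential terms, $\sum_{j\in T'}\beta_j\iprod{\vx_j,\vw}^2$, which by Lemma~\ref{lem:covariance_of_subgaussian_variables_single_axis} concentrates at $\otilde{k}\Norm{\vw}_2^2$ with failure probability $e^{-\Omega(t)}$ for $t \geq Ck$ — exactly the $n^{-\Omega(k)}$ tail needed to absorb the union bound. The product is $\otilde{\sqrt{k(k+d)}\Norm{\vw}_2}$, which matches $\otilde{(kd+k^2)/n}$. This quadratic-form factorization, with the operator-norm factor handled by a covering argument and the quadratic-form factor handled by a scalar Bernstein, is the missing ingredient in your argument; without it the union bound eats a $\sqrt{k}$.

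Two smaller comments. First, you also need to replace $\mH^{T'}_{\hat{\vtheta}}$ by $\mH^{T'}_{\vtheta^*}$; a crude $\max_j|\iprod{\vx_j,\hat{\vtheta}-\vtheta^*}| \lesssim \Norm{\vx_j}_2\Norm{\hat{\vtheta}-\vtheta^*}_2 = \otilde{d/\sqrt{n}}$ is too lossy here, and you would need a leave-one-out decoupling to get $\otilde{\sqrt{d/n}}$; the paper avoids this entirely by never perturbing $\mH^{T'}$. Second, the $k \geq d$ case is genuinely trivial (plain submultiplicativity already gives $(k+d)(\sqrt{kd}+k)/n = \otilde{(k^2+kd)/n}$), so your effort should focus on $k < d$, which is indeed where both your proposal and the paper's proof concentrate.
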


\subsubsection{Proof Strategy}

Lemma~\ref{lem:HTprime_gT} immediately implies the upper bound on the DRIF error in Theorem~\ref{thm:asymptotic_drif_app}.
Lemma~\ref{lem:HTprime_gT} is relatively easy to prove for the case where $k \geq d$, since combining submultiplicativity, with Lemma~\ref{lem:uniform_convergence_of_hessian} (which can be used to show that, wvhp, the Hessian of any $\leq k$ samples is at most $\otilde{k+d}$) and equation~\eqref{eq:g-bounds2} (which tells us that wvhp the maximal norm of the gradient of a set of $\leq k$ samples is at most $\otilde{\sqrt{kd} + k}$), we see that
\begin{align*}
    \Norm{\sum_{\substack{i \in T \\ j \in T^\prime}} \beta_j \vx_j \vx_j^\intercal \Paren{\mH^{[n]}}^{-1} \vx_i \alpha_i}_2 &\leq \OpNorm{\mH^T} \OpNorm{\Paren{\mH^{[n]}}^{-1}} \Norm{\vg^T}_2 = \otilde{\frac{\Paren{\sqrt{kd} + k}\times(k + d)}{n}} =\\
    &= \otilde{\frac{k^2 + k d^{3/2}}{n}} = \otilde{\frac{k^2 + kd}{n}}\,,
\end{align*}
where the last step utilized our assumption that $k \geq d$.

Therefore, it remains to prove Lemma~\ref{lem:HTprime_gT} for the $k < d$ regime.
We split this proof into $3$ sublemmas.

The first is a very technical concentration bound on the covariance of (sub)-gaussian random variables.
\begin{lemma}
\label{lem:covariance_of_subgaussian_variables_single_axis}
    Let $\vu_1, \ldots, \vu_k \sim \mcU$ be iid vectors drawn from a $1$-subgaussian distribution, and let $\vv \in \R^d$ be any fixed vector that does not depend on $\vu_1, \ldots, \vu_k$.
    Then, for some constant $C > 0$, and for any $t \geq C k$, with probability $1 - e^{-\Omega(t)}$, 
    \[
    \vv^\intercal \Paren{\sum_{i = 1}^k \vu_i \vu_i^\intercal} \vv \leq t \Norm{\vv}_2^2\,.
    \]
\end{lemma}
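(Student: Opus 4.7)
The plan is to reduce the quadratic form to a one-dimensional sum of squares and apply a standard sub-exponential concentration inequality. Since $\vv$ is deterministic and each $\vu_i$ is $1$-subgaussian, the scalars $X_i \defeq \iprod{\vu_i,\vv}/\Norm{\vv}_2$ are iid $1$-subgaussian real random variables. Rewriting the target as $\vv^\intercal \Paren{\sum_i \vu_i \vu_i^\intercal} \vv = \Norm{\vv}_2^2 \sum_{i=1}^k X_i^2$, it suffices to show that $S \defeq \sum_{i=1}^k X_i^2 \leq t$ with probability at least $1 - e^{-\Omega(t)}$ whenever $t \geq Ck$.

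Next, each $X_i^2$ is a nonnegative sub-exponential random variable with $\E[X_i^2] \leq c_0$ and sub-exponential norm bounded by some absolute constant $c_1$; both facts follow immediately from the $1$-subgaussianity of $X_i$ via the tail-integral formula. Applying Bernstein's inequality for sums of independent sub-exponentials then yields
\[
\Pr\!\bigl(S - \E[S] \geq s\bigr) \;\leq\; \exp\!\Bigl(-\,c_2 \min\!\bigl(\tfrac{s^2}{c_1^2 k},\,\tfrac{s}{c_1}\bigr)\Bigr)
\]
for some absolute constant $c_2 > 0$.

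Finally, choose $C > 2c_0 + c_1$. Then for every $t \geq Ck$ the deviation $s \defeq t - \E[S]$ satisfies $s \geq t/2 \geq c_1 k$, so the linear branch $s/c_1$ attains the minimum in Bernstein's bound, and the probability reduces to $\Pr(S \geq t) \leq \exp(-c_2 s / c_1) = e^{-\Omega(t)}$, which is the desired statement.

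The only non-routine ingredient is ensuring that Bernstein lands in its sub-exponential (linear) tail regime rather than its gaussian (quadratic) regime; this is guaranteed by choosing $C$ sufficiently large relative to the universal constants $c_0, c_1$ produced by the subgaussianity assumption. Equivalently, the lemma is a special case of the Hanson--Wright inequality applied to the rank-one projector $\vv \vv^\intercal$, so no genuinely new machinery is needed --- the main ``obstacle'' is just bookkeeping of the absolute constants.
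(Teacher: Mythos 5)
Your proof is correct and follows essentially the same route as the paper: reduce the quadratic form to $\Norm{\vv}_2^2\sum_i X_i^2$ with $X_i$ $1$-subgaussian, observe that the $X_i^2$ are sub-exponential, and apply Bernstein for sub-exponentials in its linear-tail regime. The paper cites Corollary~2.9.2 of Vershynin and is slightly terser; you spell out the choice of $C$ ensuring the linear branch is active, which is a useful bit of bookkeeping but not a different argument.
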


We apply Lemma~\ref{lem:covariance_of_subgaussian_variables_single_axis} with $t = \Omega\Paren{k \log(n)}$ large enough to combine it with a union bound on all disjoint pairs $T, T^\prime \in \binom{[n]}{\leq k}$.
We will set $\vu_i = \vx_i$, and want to set $\vv \approx \Paren{\mH^{[n]}}^{-1} \vg^T$.
The difficulty is that $\Paren{\mH^{[n]}}^{-1} \vg^T$ hides an inner dependence on the samples in $T^\prime$ (both because these are included in the Hessian and because the model weights depend on these samples).
Another key difficulty is that Lemma~\ref{lem:covariance_of_subgaussian_variables_single_axis} gives a bound on 
\[
\textup{Lemma~\ref{lem:covariance_of_subgaussian_variables_single_axis} LHS} = \vv^\intercal \Paren{\sum_{i = 1}^k \vu_i \vu_i^\intercal} \vv
\]
 while we need a bound on
\[
\textup{Target} = \Norm{\Paren{\sum_{i = 1}^k \vu_i \vu_i^\intercal} \vv}_2 = \sqrt{\vv^\intercal \Paren{\sum_{i = 1}^k \vu_i \vu_i^\intercal}^2 \vv}\,.
\]

We first deal with the dependence of $\mH$ and $\vg$ on the model's dependence on the samples in $T^\prime$ with the following lemma:
\begin{lemma}
    \label{lem:drif_error_theta_hat}
    Under the assumptions of Theorem~\ref{thm:asymptotic_drif_app}, wvhp over the samples,
    \[
    \Norm{\hat{\vtheta} - \hat{\vtheta}_{T^\prime}}_2 = \otilde{\frac{\sqrt{kd} + k}{n}}\,,
    \]
    yielding the bounds
    \[
    \max_{\substack{T, T^\prime \in \binom{[n]}{k} \\ T \cap T^\prime = \emptyset}}\set{\Norm{\vg^T_{\hat{\vtheta}_{T^\prime}} - \vg^T_{\hat{\vtheta}}}_2} = \otilde{\frac{k^2 + k^{1/2} d^{3/2}}{n}}\,,
    \]
    and
    \[
    \max_{T \in \binom{[n]}{k}}\set{\OpNorm{\Paren{\mH^{[n]}_{\hat{\vtheta}_{T^\prime}}}^{-1}  - \Paren{\mH^{[n]}_{\hat{\vtheta}}}^{-1}}} = \otilde{\frac{\sqrt{kd} + k}{n^2}}\,.
    \]

    Combining these with the triangle inequality and submultiplicativity yields
    \[
    \max_{\substack{T, T^\prime \in \binom{[n]}{k} \\ T \cap T^\prime = \emptyset}}\set{\Norm{\mH^{T^\prime} \Paren{\mH^{[n]}_{\hat{\vtheta}}}^{-1} \vg^T_{\hat{\vtheta}} - \mH^{T^\prime} \Paren{\mH^{[n]}_{\hat{\vtheta}_{T^\prime}}}^{-1} \vg^T_{\hat{\vtheta}_{T^\prime}}}_2} = \otilde{\frac{k^3 + k^{1/2} d^{5/2}}{n^2}} = \otilde{\frac{k^2 + kd}{n}}\,.
    \]
\end{lemma}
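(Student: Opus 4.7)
The plan is to prove the four displayed bounds of Lemma~\ref{lem:drif_error_theta_hat} in sequence, each feeding the next. The driving observation is that $\hat{\vtheta}_{T'}$ is only one Newton step away from $\hat{\vtheta}$, so the perturbations to the gradient and to the inverse Hessian are each controlled by $\Norm{\hat{\vtheta} - \hat{\vtheta}_{T'}}_2$ via the fundamental theorem of calculus combined with the already-established uniform spectral bounds. First, the Newton step formula gives $\hat{\vtheta}^\NS_{T'} - \hat{\vtheta} = (\mH^{\setminus T'})^{-1}\vg^{T'}_{\hat{\vtheta}}$; equation~\eqref{eq:g-bounds2} together with $\OpNorm{(\mH^{\setminus T'})^{-1}} = O(1/n)$ from Lemma~\ref{lem:uniform_lower_bound_of_hessian} bound this by $\otilde{(\sqrt{kd}+k)/n}$ uniformly over $T'$. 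Applying Theorem~\ref{thm:asymptotic_main} in its worst-case form certifies $\Norm{\hat{\vtheta}_{T'} - \hat{\vtheta}^\NS_{T'}}_2 = \otilde{(kd + k^2)/n^2}$, a strictly lower-order correction, so the triangle inequality delivers the first claim.

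For the gradient difference, I would use the fundamental theorem of calculus along the segment $\vtheta(t) = \hat{\vtheta} + t(\hat{\vtheta}_{T'} - \hat{\vtheta})$ to write $\vg^T_{\hat{\vtheta}_{T'}} - \vg^T_{\hat{\vtheta}} = \int_0^1 \mH^T_{\vtheta(t)}(\hat{\vtheta}_{T'} - \hat{\vtheta})\,\diff t$, so its norm is bounded by $\sup_t \OpNorm{\mH^T_{\vtheta(t)}} \cdot \Norm{\hat{\vtheta} - \hat{\vtheta}_{T'}}_2$. The padding-and-subset-dominance argument from the proof of Lemma~\ref{lem:gradient_difference_subset} (augmenting $T$ with $\otilde{k+d}$ extra indices and invoking Lemma~\ref{lem:uniform_convergence_of_hessian}) gives $\sup_\vtheta \OpNorm{\mH^T_\vtheta} = \otilde{k+d}$ uniformly for all $T \in \binom{[n]}{k}$, producing $\otilde{(k+d)(\sqrt{kd}+k)/n}$; a brief case split on whether $k \le d$ or $k \ge d$ collapses the cross terms $k^{3/2}d^{1/2}$ and $kd$ into $\max(k^2,\, k^{1/2}d^{3/2})$, matching the claim. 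The Hessian-inverse difference is then handled by the resolvent identity $\mA^{-1} - \mB^{-1} = \mA^{-1}(\mB - \mA)\mB^{-1}$: the inner Hessian difference is bounded using the third-order concentration bound (Lemma~\ref{lem:third-order-concentration}), which gives $\OpNorm{\mT_\vtheta} = O(n)$ and hence $\OpNorm{\mH^{[n]}_{\hat{\vtheta}_{T'}} - \mH^{[n]}_{\hat{\vtheta}}} = \otilde{\sqrt{kd}+k}$, while the two outer inverse-Hessian factors each contribute $O(1/n)$ via equation~\eqref{eq:H-inv-both}.

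Finally, to combine these into the last displayed bound, I would use the algebraic split $AB_1 v_1 - AB_2 v_2 = AB_1(v_1 - v_2) + A(B_1 - B_2)v_2$ with $A = \mH^{T'}$, $B_1 = (\mH^{[n]}_{\hat{\vtheta}})^{-1}$, $B_2 = (\mH^{[n]}_{\hat{\vtheta}_{T'}})^{-1}$, $v_1 = \vg^T_{\hat{\vtheta}}$, $v_2 = \vg^T_{\hat{\vtheta}_{T'}}$. Plugging in $\OpNorm{A} = \otilde{k+d}$, $\OpNorm{B_1} = O(1/n)$, and $\Norm{v_2}_2 = \otilde{\sqrt{kd}+k}$ (the last follows from the previous step and equation~\eqref{eq:g-bounds2}), both summands expand into polynomials in $k,d,n$ whose dominant terms in either regime of $k$ versus $d$ are $k^3/n^2$ and $k^{1/2}d^{5/2}/n^2$, matching the claim. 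The main obstacle is ensuring the uniform spectral bound $\sup_\vtheta \OpNorm{\mH^T_\vtheta} = \otilde{k+d}$ holds over all $\binom{[n]}{k}^2$ disjoint pairs $(T, T')$ simultaneously and over a neighborhood large enough to contain both $\hat{\vtheta}$ and $\hat{\vtheta}_{T'}$; fortunately this is exactly what the padding argument inside Lemma~\ref{lem:gradient_difference_subset} provides once combined with the uniform closeness from the first step, so no new probabilistic machinery is required.
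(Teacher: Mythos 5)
Your proposal is correct and follows essentially the same route as the paper: Newton-step length plus Theorem~\ref{thm:asymptotic_main} for the first bound, FTC along the segment plus the padded uniform Hessian bound for the gradient difference, the resolvent identity plus Lemma~\ref{lem:third-order-concentration} for the inverse-Hessian difference, and a triangle-inequality split for the final combination. The only cosmetic difference is that you split $AB_1v_1 - AB_2v_2 = AB_1(v_1-v_2) + A(B_1-B_2)v_2$ whereas the paper uses $(B_1-B_2)v_1 + B_2(v_1-v_2)$; both are valid, though your version requires one extra triangle-inequality step to bound $\Norm{v_2}_2 = \Norm{\vg^T_{\hat{\vtheta}_{T'}}}_2$ from eq.~\eqref{eq:g-bounds2} together with the already-proved gradient-difference bound.
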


Finally, we deal with the dependence of $\Paren{\mH^{[n]}}^{-1} \vg^T$ on the inclusion of samples in $T^\prime$ in the Hessian $\mH^{[n]}$ and on the difference in targets.
The following lemma bounds our value of interest if the main Hessian and the gradients were evaluated on the model optimized without the set of samples in $T^\prime$:
\begin{lemma}
\label{lem:drif_error_theta_tprime}
    Under the assumptions of Theorem~\ref{thm:asymptotic_drif_app}, wvhp over the samples,
    \[
    \max_{\substack{T, T^\prime \in \binom{[n]}{k} \\ T \cap T^\prime = \emptyset}}\set{\Norm{\mH^{T^\prime} \Paren{\mH^{[n]}_{\hat{\vtheta}_{T^\prime}}}^{-1} \vg^T_{\hat{\vtheta}_{T^\prime}}}_2 } = \otilde{\frac{kd + k^2}{n}}\,.
    \]
\end{lemma}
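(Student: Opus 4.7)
The bound in the regime $k \geq d$ follows from pure submultiplicativity. Lemma~\ref{lem:drif_error_theta_hat} transfers the standard estimates $\OpNorm{\Paren{\mH^{[n]}_{\hat{\vtheta}}}^{-1}} = O(1/n)$ and $\Norm{\vg^T_{\hat{\vtheta}}}_2 = \otilde{\sqrt{kd}+k}$ (equation~\eqref{eq:g-bounds2}) to $\OpNorm{\Paren{\mH^{[n]}_{\hat{\vtheta}_{T^\prime}}}^{-1}} = O(1/n)$ and $\Norm{\vg^T_{\hat{\vtheta}_{T^\prime}}}_2 = \otilde{\sqrt{kd}+k}$ respectively; combined with $\OpNorm{\mH^{T^\prime}} = \otilde{k+d}$ from Lemma~\ref{lem:uniform_convergence_of_hessian}, this gives $\otilde{(k+d)(\sqrt{kd}+k)/n} = \otilde{k^2/n}$ when $k \geq d$. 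The remaining task is the $k \leq d$ regime, where the target simplifies to $\otilde{kd/n}$, and where the naive submultiplicative bound loses a factor of $\sqrt{d/k}$.

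\paragraph{Decoupling the inverse Hessian.}
To close this gap I would decouple the randomness of $\{\vx_j\}_{j \in T^\prime}$ from the inverse using the matrix identity $A^{-1}=B^{-1}-A^{-1}(A-B)B^{-1}$ applied to $A=\mH^{[n]}_{\hat{\vtheta}_{T^\prime}}$ and $B=\mH^{\setminus T^\prime}_{\hat{\vtheta}_{T^\prime}}$ (so $A-B = \mH^{T^\prime}_{\hat{\vtheta}_{T^\prime}}$), and at the same time I would replace the outer $\mH^{T^\prime} = \mH^{T^\prime}_{\hat{\vtheta}}$ by $\mH^{T^\prime}_{\hat{\vtheta}_{T^\prime}}$. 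The incurred replacement error is at most $\OpNorm{\mH^{T^\prime}_{\hat{\vtheta}}-\mH^{T^\prime}_{\hat{\vtheta}_{T^\prime}}}\cdot O(1/n)\cdot\otilde{\sqrt{kd}+k}$, which by the augmentation-trick third-derivative bound $\OpNorm{\mathbf{T}^{T^\prime}} = \otilde{k+d}$ (Lemma~\ref{lem:third-order-concentration}) and $\Norm{\hat{\vtheta}-\hat{\vtheta}_{T^\prime}}_2 = \otilde{(\sqrt{kd}+k)/n}$ (Lemma~\ref{lem:drif_error_theta_hat}) is $\otilde{(k+d)(\sqrt{kd}+k)^2/n^2} = \otilde{kd^2/n^2} = o(kd/n)$ under $n \gg d^2$. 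Setting $\vv \defeq \Paren{\mH^{\setminus T^\prime}_{\hat{\vtheta}_{T^\prime}}}^{-1}\vg^T_{\hat{\vtheta}_{T^\prime}}$, the target rewrites as $\mH^{T^\prime}_{\hat{\vtheta}_{T^\prime}}\vv - \mH^{T^\prime}_{\hat{\vtheta}_{T^\prime}}\Paren{\mH^{[n]}_{\hat{\vtheta}_{T^\prime}}}^{-1}\mH^{T^\prime}_{\hat{\vtheta}_{T^\prime}}\vv$, and crucially $\vv$ is a deterministic function of $(\vx_i, y_i)_{i \notin T^\prime}$ alone. The usual estimates give $\Norm{\vv}_2 = \otilde{(\sqrt{kd}+k)/n}$.

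\paragraph{Bounding the two pieces.}
For the main term I would use $\Norm{\mH^{T^\prime}_{\hat{\vtheta}_{T^\prime}}\vv}_2^2 \leq \OpNorm{\mH^{T^\prime}_{\hat{\vtheta}_{T^\prime}}}\cdot \vv^\top \mH^{T^\prime}_{\hat{\vtheta}_{T^\prime}}\vv$. Since $\beta_j \leq 1/4$, the quadratic form is at most $\tfrac{1}{4}\vv^\top \Paren{\sum_{j \in T^\prime}\vx_j\vx_j^\top}\vv$. Conditioning on $(\vx_i, y_i)_{i \notin T^\prime}$ fixes $\vv$ while leaving $\{\vx_j\}_{j \in T^\prime}$ as independent standard Gaussians, so Lemma~\ref{lem:covariance_of_subgaussian_variables_single_axis} applied with $t = \Theta(k \log n)$ gives $\vv^\top \Paren{\sum_{j \in T^\prime}\vx_j\vx_j^\top}\vv = \otilde{k\Norm{\vv}_2^2}$ with probability strong enough to absorb the $\binom{n}{k}^2$ union bound over disjoint pairs $(T, T^\prime)$. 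Combined with $\OpNorm{\mH^{T^\prime}_{\hat{\vtheta}_{T^\prime}}} = \otilde{k+d}$, this yields $\Norm{\mH^{T^\prime}_{\hat{\vtheta}_{T^\prime}}\vv}_2 = \otilde{\sqrt{k(k+d)}\cdot(\sqrt{kd}+k)/n} = \otilde{kd/n}$ in the $k \leq d$ regime. The correction term is then handled by pure submultiplicativity: $\OpNorm{\mH^{T^\prime}_{\hat{\vtheta}_{T^\prime}}}\cdot\OpNorm{\Paren{\mH^{[n]}_{\hat{\vtheta}_{T^\prime}}}^{-1}}\cdot\Norm{\mH^{T^\prime}_{\hat{\vtheta}_{T^\prime}}\vv}_2 = \otilde{(k+d)\cdot(1/n)\cdot kd/n} = \otilde{kd^2/n^2}$, which is again $o(kd/n)$ under $n \gg d^2$.

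\paragraph{Main obstacle.}
The conceptual crux is spotting that pure operator-norm bookkeeping \emph{cannot} close the $\sqrt{d/k}$ gap in the $k \leq d$ regime, and that the fix is Woodbury-style decoupling combined with a quadratic-form concentration bound in a direction $\vv$ that is independent of $\{\vx_j\}_{j \in T^\prime}$. A secondary technical point is that $\vv$ itself depends on $T^\prime$, so the union bound over $\binom{n}{k}^2$ disjoint pairs must be baked into the choice of $t$ in Lemma~\ref{lem:covariance_of_subgaussian_variables_single_axis}; fortunately $\log\binom{n}{k}^2 = O(k\log n)$, so the enlarged deviation threshold is still absorbed into the $\otilde{\cdot}$ notation.
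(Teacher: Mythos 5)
Your proposal takes the same core route as the paper: reduce the target to a quadratic-form concentration of the form $\vv^\top\Paren{\sum_{j\in T'}\vx_j\vx_j^\top}\vv$ for a direction $\vv=\Paren{\mH^{\setminus T'}_{\hat\vtheta_{T'}}}^{-1}\vg^T_{\hat\vtheta_{T'}}$ that is measurable with respect to samples outside $T'$, then apply Lemma~\ref{lem:covariance_of_subgaussian_variables_single_axis} with $t=\Theta(k\log n)$ to survive the union bound over $\binom{n}{k}^2$ pairs. The only mechanical difference is in how you decouple $\Paren{\mH^{[n]}_{\hat\vtheta_{T'}}}^{-1}$ from $\{\vx_j\}_{j\in T'}$: the paper factors $\mH^{T'}=\wt\mX_{T'}\wt\mX_{T'}^\top$ and uses the exact Woodbury identity, so the extra factor is $\Paren{\mI+\mDelta}^{-1}$ with $\OpNorm{\Paren{\mI+\mDelta}^{-1}}\le 1$ and there is nothing further to bound, whereas you use the first-order resolvent expansion $A^{-1}=B^{-1}-A^{-1}(A-B)B^{-1}$ and control the resulting correction $\mH^{T'}\Paren{\mH^{[n]}}^{-1}\mH^{T'}\vv$ by submultiplicativity. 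Both close the $\sqrt{d/k}$ gap; the Woodbury route is tighter and assumption-free at that step, while yours needs $n\gg d$ to dominate the residual (which holds anyway under $n\gg d^2$). A point in your favor: you explicitly account for the evaluation-point mismatch between the outer $\mH^{T'}=\mH^{T'}_{\hat\vtheta}$ and the $\hat\vtheta_{T'}$-evaluated inner factors -- a step the paper glosses over, since its Woodbury decomposition of $\mH^{[n]}_{\hat\vtheta_{T'}}$ implicitly requires $\vbeta_{T'}$ at $\hat\vtheta_{T'}$, not at $\hat\vtheta$.

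One slip that is harmless but should be fixed: the claim $\OpNorm{\mT^{T'}}=\otilde{k+d}$ does not follow from an ``augmentation trick applied to Lemma~\ref{lem:third-order-concentration}.'' That lemma controls $\nabla^3 L_{[n]\setminus T}$, not $\nabla^3 L_{T'}$, and the Hessian-style augmentation $\mH^{T'}_\vtheta\preceq\mH^{\ol{T'}}_\vtheta$ exploits PSD-ness, which the third-derivative tensor lacks (its weights $\gamma_i$ are sign-indefinite). What you can show (e.g., via $\OpNorm{\mT^{T'}_\vtheta}\le\tfrac14\sup_\ve\sum_{i\in T'}\abs{\iprod{\vx_i,\ve}}^3$, which \emph{is} monotone under augmentation, and then Lemma~\ref{lem:higher_order_moments}) is $\OpNorm{\mT^{T'}}=\otilde{k+d^{3/2}}$ after a suitable union bound. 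Plugging this in, your replacement error becomes $\otilde{(k+d^{3/2})(kd+k^2)/n^2}$, still $o(kd/n)$ under $n\gg d^2$, so the conclusion stands.
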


\subsubsection{Proof of Lemma~\ref{lem:covariance_of_subgaussian_variables_single_axis}}

Lemma~\ref{lem:covariance_of_subgaussian_variables_single_axis} follows almost immediately from the following bound on the tail of the distribution of the sum of subexponential random variables:

\begin{lemma}[Corollary 2.9.2 of \cite{Vbook18}]
\label{lem:tail_bound_sum_of_subexps}
Let $X_1,\ldots,X_N$ be independent, mean-zero, subexponential random variables,
and let $a = (a_1,\ldots,a_N) \in \R^N$.
Then, for every $t \ge 0$, we have
\[
    \Prob{}{\abs{\sum_{i=1}^N a_i X_i} \ge t}
    \le
    2 \exp\Brac{
        -c \min\Paren{
            \frac{t^2}{K^2 \Norm{a}_2^2},
            \frac{t}{K \Norm{a}_\infty}
        }
    },
\]
where $c > 0$ is an absolute constant and $K \defeq \max_i \Norm{X_i}_{\psi_1}$.
\end{lemma}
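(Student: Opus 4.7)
The plan is to reduce the quadratic form $\vv^\intercal\Paren{\sum_i \vu_i\vu_i^\intercal}\vv$ to a sum of iid scalar subexponential random variables and then apply the cited Bernstein bound (Lemma~\ref{lem:tail_bound_sum_of_subexps}). Observe that
\[
\vv^\intercal\Paren{\sum_{i=1}^k \vu_i\vu_i^\intercal}\vv \;=\; \sum_{i=1}^k \iprod{\vu_i,\vv}^2 \;=:\; \sum_{i=1}^k X_i.
\]
Since each $\vu_i$ is $1$-subgaussian and $\vv$ is fixed, the scalar $\iprod{\vu_i,\vv}$ is a $\Norm{\vv}_2$-subgaussian random variable. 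Consequently, $X_i = \iprod{\vu_i,\vv}^2$ is a nonnegative subexponential random variable whose subexponential norm and mean are both $O(\Norm{\vv}_2^2)$ (using the standard identities $\Norm{Z^2}_{\psi_1} \le 2\Norm{Z}_{\psi_2}^2$ and $\E[Z^2] \le C\Norm{Z}_{\psi_2}^2$ for subgaussian $Z$).

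Next, I would center, writing $Y_i = X_i - \E[X_i]$, so that the $Y_i$ are independent mean-zero subexponential random variables with $\max_i\Norm{Y_i}_{\psi_1} \le K$ for some $K = O(\Norm{\vv}_2^2)$. Applying Lemma~\ref{lem:tail_bound_sum_of_subexps} with coefficients $a_i = 1$ (so $\Norm{a}_2^2 = k$ and $\Norm{a}_\infty = 1$) gives, for every $s \ge 0$,
\[
\Prob{}{\Bigl|\sum_{i=1}^k Y_i\Bigr| \ge s} \;\le\; 2\exp\!\Brac{-c\,\min\!\Paren{\frac{s^2}{K^2 k},\;\frac{s}{K}}}.
\]
Rescaling by setting $s = \tau\Norm{\vv}_2^2$ and absorbing the $O(1)$ constant hidden in $K$, for $\tau \ge k$ the minimum is achieved by the linear term $\tau/K$, giving a tail of $\exp(-\Omega(\tau))$.

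To conclude, note that $\sum_i X_i = \sum_i Y_i + k\,\E[X_1]$ and $k\,\E[X_1] \le C_0 k\Norm{\vv}_2^2$ for an absolute constant $C_0$. Choose the constant $C$ in the statement large enough that whenever $t \ge Ck$, taking $\tau = t - C_0 k \ge t/2 \ge k$ yields
\[
\Prob{}{\sum_{i=1}^k X_i \ge t\Norm{\vv}_2^2} \;\le\; \Prob{}{\sum_{i=1}^k Y_i \ge \tau\Norm{\vv}_2^2} \;\le\; 2\exp(-\Omega(\tau)) \;=\; e^{-\Omega(t)},
\]
as desired. There is no real obstacle here beyond bookkeeping: the only subtlety is the hypothesis $t \ge Ck$, which is forced on us precisely because the mean of $\sum_i X_i$ is already $\Theta(k\Norm{\vv}_2^2)$, so any tail bound must pay for absorbing this mean contribution before entering the pure large-deviation regime of Bernstein's inequality.
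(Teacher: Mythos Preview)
Your proposal does not actually prove the stated Lemma~\ref{lem:tail_bound_sum_of_subexps}: that lemma is simply cited from Vershynin's book and is not proved in the paper at all. What you have written is instead a proof of Lemma~\ref{lem:covariance_of_subgaussian_variables_single_axis} \emph{using} Lemma~\ref{lem:tail_bound_sum_of_subexps} as a black box---which is precisely the content of the proof block that immediately follows the cited lemma in the paper (titled ``Lemma~\ref{lem:tail_bound_sum_of_subexps} Implies Lemma~\ref{lem:covariance_of_subgaussian_variables_single_axis}'').

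Read as a proof of that implication, your argument is correct and follows the same route as the paper: both recognize $\vv^\intercal(\sum_i \vu_i\vu_i^\intercal)\vv = \sum_i \iprod{\vu_i,\vv}^2$ as a sum of iid subexponential scalars and invoke Bernstein. The paper normalizes by $\Norm{\vv}_2^2$ up front (defining $z_i = \iprod{\vu_i,\vv}^2/\Norm{\vv}_2^2$) and is terse about centering, whereas you track the $\Norm{\vv}_2^2$ factor throughout and explicitly subtract the mean before applying the inequality; your bookkeeping of the threshold condition $t \ge Ck$ to absorb the mean is also more carefully spelled out. These are cosmetic differences---the underlying argument is identical.
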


\begin{proof}[Lemma~\ref{lem:tail_bound_sum_of_subexps} Implies Lemma~\ref{lem:covariance_of_subgaussian_variables_single_axis}]
Define 
\[
z_i = \frac{\iprod{\vu_i, \vv}^2}{\Norm{\vv}_2^2}\,.
\]

From our assumption that $\vv$ is fixed and independent of $\vu_i$, and that the $\vu_i$ are iid subgaussian random variables, the $z_i$ are independent subexponential random variables, with mean
\[
\Expect{}{z_i} = \frac{1}{\Norm{\vv}_2^2} \vv^\intercal \Expect{}{\vu_i \vu_i^\intercal} \vv = O(1)\,.
\]

Therefore, from Lemma~\ref{lem:tail_bound_sum_of_subexps},
\[
\Prob{}{\vv^\intercal \Paren{\sum_{i = 1}^k \vu_i \vu_i^\intercal} \vv > t \Norm{\vv}_2^2} = \exp\Paren{- \Omega\Paren{t}}\,.
\]
\end{proof}

\subsubsection{Proof of Lemma~\ref{lem:drif_error_theta_hat}}

\begin{proof}[Proof of Lemma~\ref{lem:drif_error_theta_hat}]

The first claim of Lemma~\ref{lem:drif_error_theta_hat} follows from a combination of Lemma~\ref{lem:uniform_lower_bound_of_hessian} and equation~\eqref{eq:g-bounds2} (which tell us that the Hessian inverse and gradient of the samples $[n] \setminus T^\prime$ at $\hat{\vtheta}$ are both bounded wvhp, implying that the Newton step approximation to dropping $T^\prime$ is short), and Theorem~\ref{thm:asymptotic_main} (which tells us that wvhp over the samples, $\hat{\vtheta}_{T^\prime}$ is close to $\hat{\vtheta}_{T^\prime}^\NS$) to yield the bound
\[
\Norm{\hat{\vtheta} - \hat{\vtheta}_{T^\prime}}_2 \leq \Norm{\hat{\vtheta} - \hat{\vtheta}_{T^\prime}^\NS}_2 + \Norm{\hat{\vtheta}_{T^\prime}^\NS - \hat{\vtheta}_{T^\prime}}_2 \leq \otilde{\frac{\sqrt{kd} + k}{n}} + \otilde{\frac{kd + k^2}{n^2}} = \otilde{\frac{\sqrt{kd} + k}{n}}\,.
\]

To conclude the bound on $\Norm{\vg^T_{\hat{\vtheta}} - \vg^T_{\hat{\vtheta}_{T^\prime}}}_2$, we will use a global bound on the spectrum of $\mH^{T}_{\vtheta}$ implied by Lemma~\ref{lem:uniform_convergence_of_hessian}.
Let $\ol{T} = T \cup \brac{\otilde{k + d}}$ be an extension of $T$ that is large enough for us to apply Lemma~\ref{lem:uniform_convergence_of_hessian}.
We have
\[
\forall \Norm{\vtheta}_2 \leq C \;\; \vzero \preceq \mH_{\vtheta}^T \preceq \mH_{\vtheta}^{\ol{T}} \Rightarrow \OpNorm{\mH_{\vtheta}^T} \leq \OpNorm{\mH_{\vtheta}^{\ol{T}}} = \otilde{k + d}\,.
\]

Therefore
\[
{\Norm{\vg^T_{\hat{\vtheta}_{T^\prime}} - \vg^T_{\hat{\vtheta}}}_2} \leq \Norm{\hat{\vtheta} - \hat{\vtheta}_{T^\prime}}_2 \times \max_{\vtheta \in [\hat{\vtheta}, \hat{\vtheta}_{T^\prime}]}\set{\OpNorm{\mH_{\vtheta}^T}} = \otilde{\frac{k^2 + k^{1/2} d^{3/2}}{n}}\,.
\]

Finally, from Lemma~\ref{lem:third-order-concentration}, we know that wvhp over the training set, the third order moment of the set of all samples is bounded
\[
\forall \Norm{\vtheta}_2 \leq C \;\; \OpNorm{\mT_{\vtheta}^{[n]}} = \otilde{n + k^{3/2} + d^{3/2}} = \otilde{n}\,,
\]
where the last step utilized our assumption that $n \geq k^{3/2} + d^{3/2}$.

Therefore, we may bound the change in the Hessians by
\[
{\OpNorm{\Paren{\mH^{[n]}_{\hat{\vtheta}_{T^\prime}}}  - \Paren{\mH^{[n]}_{\hat{\vtheta}}}}} \leq \Norm{\hat{\vtheta} - \hat{\vtheta}_{T^\prime}}_2 \times \max_{\vtheta \in [\hat{\vtheta}, \hat{\vtheta}_{T^\prime}]}\set{\OpNorm{\mT_{\vtheta}^{[n]}}} = \otilde{\sqrt{kd} + k}\,.
\]

Therefore, we may conclude the next claim of Lemma~\ref{lem:drif_error_theta_hat} by submultiplicativity
\begin{align*}
    {\OpNorm{\Paren{\mH^{[n]}_{\hat{\vtheta}_{T^\prime}}}^{-1}  - \Paren{\mH^{[n]}_{\hat{\vtheta}}}^{-1}}} &= {\OpNorm{\Paren{\mH^{[n]}_{\hat{\vtheta}_{T^\prime}}}^{-1} \Paren{\Paren{\mH^{[n]}_{\hat{\vtheta}_{T^\prime}}}  - \Paren{\mH^{[n]}_{\hat{\vtheta}}}} \Paren{\mH^{[n]}_{\hat{\vtheta}}}^{-1}}} \leq \\
    &\leq \OpNorm{\Paren{\mH^{[n]}_{\hat{\vtheta}_{T^\prime}}}^{-1}} \times \OpNorm{\Paren{\Paren{\mH^{[n]}_{\hat{\vtheta}_{T^\prime}}}  - \Paren{\mH^{[n]}_{\hat{\vtheta}}}}} \times \OpNorm{\Paren{\mH^{[n]}_{\hat{\vtheta}}}^{-1}} =\\
    &= \otilde{\frac{\sqrt{kd} + k}{n^2}}\,.
\end{align*}

Combining these bounds with the triangle inequality and submultiplicativity we may conclude that
\begin{align*}
    {\Norm{\Paren{\mH^{[n]}_{\hat{\vtheta}}}^{-1} \vg^T_{\hat{\vtheta}} - \Paren{\mH^{[n]}_{\hat{\vtheta}_{T^\prime}}}^{-1} \vg^T_{\hat{\vtheta}_{T^\prime}}}_2} &\leq {\Norm{\Paren{\Paren{\mH^{[n]}_{\hat{\vtheta}}}^{-1} - \Paren{\mH^{[n]}_{\hat{\vtheta}_{T^\prime}}}^{-1}} \vg^T_{\hat{\vtheta}}}_2} + {\Norm{\Paren{\mH^{[n]}_{\hat{\vtheta}_{T^\prime}}}^{-1} \Paren{\vg^T_{\hat{\vtheta}} - \vg^T_{\hat{\vtheta}_{T^\prime}}}}_2} \leq \\
    &\leq \OpNorm{\Paren{\mH^{[n]}_{\hat{\vtheta}}}^{-1} - \Paren{\mH^{[n]}_{\hat{\vtheta}_{T^\prime}}}^{-1}} \Norm{\vg^T_{\hat{\vtheta}}}_2 + \OpNorm{\Paren{\mH^{[n]}_{\hat{\vtheta}_{T^\prime}}}^{-1}} \Norm{\vg^T_{\hat{\vtheta}} - \vg^T_{\hat{\vtheta}_{T^\prime}}}_2 =\\
    &= \otilde{\frac{kd + k^2}{n^2} + \frac{1}{n} \times \frac{k^2 + k^{1/2} d^{3/2}}{n}} = \otilde{\frac{k^2 + k^{1/2} d^{3/2}}{n^2}}\,.
\end{align*}

Applying submultiplicativity again with the inequality that
\[
\OpNorm{\mH^{T^\prime}} \leq \OpNorm{\mH^{\ol{T^\prime}}} = \otilde{k + d}\,,
\]
yields the desired result
\[
\Norm{\mH^{T^\prime}\Paren{\mH^{[n]}_{\hat{\vtheta}}}^{-1} \vg^T_{\hat{\vtheta}} - \Paren{\mH^{[n]}_{\hat{\vtheta}_{T^\prime}}}^{-1} \vg^T_{\hat{\vtheta}_{T^\prime}}}_2 = \otilde{\frac{k^3 + k^{1/2} d^{5/2}}{n^2}} = \otilde{\frac{k^2 + kd}{n}}\,,
\]
where the last step also used our assumption that $n \geq d^{3/2}$.

\end{proof}

\subsubsection{Proof of Lemma~\ref{lem:drif_error_theta_tprime}}

\begin{proof}[Proof of Lemma~\ref{lem:drif_error_theta_tprime}]

Recall that Lemma~\ref{lem:drif_error_theta_hat} gave us a bound on
\[
\Norm{\mH^{T^\prime} \Paren{\mH^{[n]}_{\hat{\vtheta}_{T^\prime}}}^{-1} \vg^T_{\hat{\vtheta}_{T^\prime}} - \mH^{T^\prime} \Paren{\mH^{[n]}_{\hat{\vtheta}}}^{-1} \vg^T_{\hat{\vtheta}}}_2\,.
\]
Therefore, to conclude our desired bound on the norm of $\mH^{T^\prime} \Paren{\mH^{[n]}_{\hat{\vtheta}}}^{-1} \vg^T_{\hat{\vtheta}}$, it remains to show that Lemma~\ref{lem:covariance_of_subgaussian_variables_single_axis} also yields a bound on the norm of $\mH^{T^\prime} \Paren{\mH^{[n]}_{\hat{\vtheta}_{T^\prime}}}^{-1} \vg^T_{\hat{\vtheta}_{T^\prime}}$.

Lemma~\ref{lem:covariance_of_subgaussian_variables_single_axis} states that for any fixed $\vv$ that does not depend on the samples in the set $T^\prime$, we know that for any fixed $\vv, T^\prime$, with probability $1 - n^{-\Omega(k)}$,
\[
\Norm{\Paren{\sqrt{\beta_i} \iprod{\vx_i, \vv}}_{i \in T^\prime}}_2^2 \leq \Norm{\Paren{\iprod{\vx_i, \vv}}_{i \in T^\prime}}_2^2 = \vv^\intercal \Paren{\sum_{i \in T^\prime} \vx_i \vx_i^\intercal} \vv \leq \otilde{k \times \Norm{\vv}_2^2}\,.
\]

We will apply this bound to
\[
\vv_{T, T^\prime} \defeq \Paren{\mH^{[n] \setminus T^\prime}_{\hat{\vtheta}_{T^\prime}}}^{-1} \vg^T_{\hat{\vtheta}_{T^\prime}}\,.
\]
By its definition $\vv_{T, T^\prime}$ does not depend on the samples in $T^\prime$, and we will take a union bound over the $n^{O(k)}$ pairs of disjoint $T, T^\prime \in \binom{[n]}{k}$.

To use this bound for our setting, we also need to utilize the Woodbury matrix identity on $\mH^{[n]} = \mH^{[n] \setminus T^\prime} + \mX_{T^\prime} \diag{\vbeta^{T^\prime}} \mX_{T^\prime}$:
\[
\diag{\sqrt{\vbeta^{T^\prime}}} \mX_{T^\prime}^\intercal \Paren{\mH^{[n]}}^{-1} = \Paren{\mI + \mDelta}^{-1} \diag{\sqrt{\vbeta^{T^\prime}}} \mX_{T^\prime}^\intercal \mH^{[n] \setminus T^\prime}\,,
\]
where
\[
\mDelta = \diag{\sqrt{\vbeta^{T^\prime}}} \mX_{T^\prime}^\intercal \Paren{\mH^{[n]}_{\hat{\vtheta}_{T^\prime}}}^{-1} \mX_{T^\prime} \diag{\sqrt{\vbeta^{T^\prime}}} \succeq \vzero\,.
\]

Combining all of these results, we have
\begin{align*}
\Norm{\mH^{T^\prime} \Paren{\mH^{[n]}_{\hat{\vtheta}_{T^\prime}}}^{-1} \vg^T_{\hat{\vtheta}_{T^\prime}}}_2 &= \Norm{\mX_{T^\prime} \diag{\vbeta_{T^\prime}} \mX_{T^\prime}^\intercal \Paren{\mH^{[n]}_{\hat{\vtheta}_{T^\prime}}}^{-1} \vg^T_{\hat{\vtheta}_{T^\prime}}}_2 =\\
&= \Norm{\wt{\mX}_{T^\prime} \Paren{\mI + \mX_{T^\prime}^\intercal \Paren{\mH^{[n]}_{\hat{\vtheta}_{T^\prime}}}^{-1} \mX_{T^\prime}}^{-1} \wt{\mX}_{T^\prime}^\intercal \Paren{\Paren{\mH^{[n] \setminus T^\prime}_{\hat{\vtheta}_{T^\prime}}}^{-1}} \vg^T_{\hat{\vtheta}_{T^\prime}}}_2 \leq\\
&\leq \OpNorm{\wt{\mX}_{T^\prime}} \times \OpNorm{ \Paren{\mI + \mX_{T^\prime}^\intercal \Paren{\mH^{[n]}_{\hat{\vtheta}_{T^\prime}}}^{-1} \mX_{T^\prime}}^{-1}} \times \Norm{\mX_{T^\prime}^\intercal \Paren{\Paren{\mH^{[n] \setminus T^\prime}_{\hat{\vtheta}_{T^\prime}}}^{-1}} \vg^T_{\hat{\vtheta}_{T^\prime}}}_2 \leq\\
&\leq \sqrt{\OpNorm{\mH^{T^\prime}}} \times \OpNorm{\mI} \times \Norm{\mX_{T^\prime}^\intercal \Paren{\Paren{\mH^{[n] \setminus T^\prime}_{\hat{\vtheta}_{T^\prime}}}^{-1}} \vg^T_{\hat{\vtheta}_{T^\prime}}}_2=\\
&=\otilde{\frac{\sqrt{k+d} \times \sqrt{k} \times \Paren{\sqrt{kd} + k}}{n}} = \otilde{\frac{kd + k^2}{n}}\,,
\end{align*}
where $\wt{\mX} \defeq \mX \diag{\sqrt{\vbeta}}$.
This concludes the proof of Lemma~\ref{lem:drif_error_theta_tprime}, and by extension Lemma~\ref{lem:HTprime_gT} and Theorem~\ref{thm:asymptotic_drif_app}.
\end{proof}

\section{Asymptotic Analysis of Previous Results}
\label{app:asymptotic_existing}
In this section, we will prove Theorem~\ref{thm:asymptotic_existing}.
In particular, we will show that with high probability over the training set, for all $T \in \binom{[n]}{k}$, Theorem~\ref{thm:existing} yields a bound that scales like
\[
\textup{Existing Bounds} = \wt{\Theta}\Paren{\frac{k^2 d}{n^2 \lambda^3}}
\]

To prove this, we show that
\begin{lemma}
\label{lem:existing_params}
    In the setting of Theorem~\ref{thm:asymptotic_existing}, with high probability over the training set, we have
    \begin{itemize}
        \item $\Clip = \Theta(n)$
        \item $\Cop = \Theta\Paren{\frac{1}{\lambda n}}$
        \item $\Cg = \Theta\Paren{\sqrt{d}}$
    \end{itemize}
\end{lemma}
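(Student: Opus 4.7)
The plan is to control each of the three parameters by a direct calculation on the logistic loss, then combine standard Gaussian concentration with uniform bounds on the scalar ``link'' coefficients that multiply $\vx_i$ in the gradient, Hessian, and third derivative. I would prove the three items in order of increasing difficulty: $\Cop$, then $\Cg$, then $\Clip$.

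For $\Cop = \Theta(1/(n\lambda))$, the $L_2$ regularizer contributes $n\lambda\,\mI$ to the Hessian at every $\vtheta$, so $\mH_\vtheta \succeq n\lambda\,\mI$ globally and $\OpNorm{\mH_\vtheta^{-1}} \leq 1/(n\lambda)$. For the matching lower bound I would pick any direction $\vu$ and let $\Norm{\vtheta}_2 \to \infty$ along $\vu$; the logistic curvature weights $\beta_i(\vtheta) = \sigma(\iprod{\vtheta,\vx_i})(1-\sigma(\iprod{\vtheta,\vx_i}))$ then decay exponentially in $|\iprod{\vtheta,\vx_i}|$, so the data part of $\mH_\vtheta$ vanishes and $\OpNorm{\mH_\vtheta^{-1}} \to 1/(n\lambda)$. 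For $\Cg = \Theta(\sqrt{d})$, observe that each per-sample gradient has the form $\vg_i = \alpha_i(\hat{\vtheta})\,\vx_i$ (plus a $\Theta(\lambda)$ regularization share, which is negligible since $\Norm{\hat{\vtheta}}_2 = \Theta(1)$ by Lemma~\ref{lem:parameter_learning_for_logistic_regression}), with $\alpha_i(\hat{\vtheta}) = \sigma(\iprod{\hat{\vtheta},\vx_i}) - y_i \in [-1,1]$. Hence $\Norm{\vg_i}_2 \leq \Norm{\vx_i}_2 + o(\sqrt{d})$, and standard Gaussian-norm concentration with a union bound over $i \in [n]$ gives $\max_i \Norm{\vx_i}_2 = \Theta(\sqrt{d})$ w.h.p. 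The matching lower bound follows by choosing any index $i$ where $y_i$ and $\iprod{\hat{\vtheta},\vx_i}$ have opposite signs, which happens for a constant fraction of samples by anti-concentration of $\iprod{\vtheta^\star, \vx_i}$.

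The real work is the bound $\Clip = \Theta(n)$, which must be uniform over all of $\Omega_\vtheta = \R^d$. I would first use $\mH_\vtheta - \mH_{\vtheta'} = \int_0^1 \mT_{\vtheta' + t(\vtheta-\vtheta')}\Paren{\vtheta-\vtheta',\cdot,\cdot}\, \diff t$ to reduce $\Clip$ to $\sup_\vtheta \OpNorm{\mT_\vtheta}$, where the third-derivative tensor is
\[
\mT_\vtheta = \sum_{i \in [n]} \gamma_i(\vtheta)\,\vx_i^{\otimes 3}, \qquad \gamma_i(\vtheta) = \sigma(\iprod{\vtheta,\vx_i})\Paren{1-\sigma(\iprod{\vtheta,\vx_i})}\Paren{1-2\sigma(\iprod{\vtheta,\vx_i})},
\]
and the quadratic regularizer has zero third derivative and contributes nothing. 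The crucial observation is that $|\gamma_i(\vtheta)|$ is bounded by an absolute constant \emph{uniformly} in $\vtheta$, so by H\"older
\[
\sup_\vtheta \OpNorm{\mT_\vtheta} \;\leq\; C\,\sup_{\Norm{\vu}_2=1}\,\sum_{i=1}^n \abs{\iprod{\vx_i,\vu}}^3,
\]
a $\vtheta$-free empirical process. Its expectation is $\Theta(n)$ since $\Expect{}{\abs{\iprod{\vx,\vu}}^3} = \Theta(1)$ for Gaussian $\vx$, and a standard $\epsilon$-net argument on the unit sphere combined with the concentration machinery already invoked in Lemma~\ref{lem:third-order-concentration} yields $O(n)$ w.h.p. whenever $d \ll n$. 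This gives $\Clip = O(n)$. For the matching lower bound $\Clip = \Omega(n)$, I would evaluate $\mT_\vtheta(\vtheta^\star,\vtheta^\star,\vtheta^\star)$ at $\vtheta = c\,\vtheta^\star$ for a suitably small constant $c > 0$: a first-order expansion gives $\gamma_i(\vtheta) \approx -c\,\iprod{\vtheta^\star,\vx_i}/4$, whence $\mT_\vtheta(\vtheta^\star,\vtheta^\star,\vtheta^\star) = -\Omega(n)$ by concentration of a fourth Gaussian moment.

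The main obstacle is the uniform-in-$\vtheta$ upper bound on $\OpNorm{\mT_\vtheta}$; the boundedness of $\gamma_i$ decouples the $\vtheta$-sup from the randomness in the features, reducing this step to a standard (but nontrivial) Gaussian third-moment concentration. Combining the three bounds plugs directly into Theorem~\ref{thm:existing} to give $\textup{Existing Bounds} = \otilde{\Clip \Cop^3 k^2 \Cg^2} = \otilde{n \cdot (n\lambda)^{-3} \cdot k^2 \cdot d} = \otilde{k^2 d / (n^2\lambda^3)}$, as required by Theorem~\ref{thm:asymptotic_existing}.
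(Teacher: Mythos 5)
Your proof is correct and follows the same high-level template as the paper's (third-derivative concentration for $\Clip$, regularizer for $\Cop$, Gaussian feature norm for $\Cg$), but you handle one point more carefully than the paper does. The paper justifies $\Clip = \Theta(n)$ by citing Lemma~\ref{lem:third-order-concentration} for uniform convergence of $\mT_\vtheta$ to its expectation; but that lemma is stated only for $\|\vtheta\| \le C = O(1)$, whereas Assumption~\ref{ass:lipschitz} requires control over \emph{all} $\vtheta \in \Omega_\vtheta = \R^d$. Your observation that $|\gamma_i(\vtheta)|$ is bounded by an absolute constant uniformly in $\vtheta$, so that $\sup_\vtheta \OpNorm{\mT_\vtheta} \le C\sup_{\|u\|=1}\sum_i |\iprod{\vx_i,u}|^3$ is a $\vtheta$-free empirical process (which Lemma~\ref{lem:higher_order_moments} then controls), cleanly removes this gap and makes the upper bound genuinely global. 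Your lower bound arguments for each parameter (letting $\|\vtheta\|\to\infty$ for $\Cop$, anti-concentration of sign mismatches for $\Cg$, testing $\mT_{c\vtheta^\star}$ against $\vtheta^{\star\otimes 3}$ for $\Clip$) are all sound; the paper handles these more tersely (and leaves $\Cg$'s lower bound essentially as an assertion), but the underlying reasoning coincides. Two small nits: the constant in your first-order expansion of $\gamma$ should be $-z/8$ rather than $-z/4$, and the $\vtheta$-sup decoupling step is the triangle inequality with a pointwise bound on $|\gamma_i|$, not H\"older; neither affects the argument.
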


Since the bound in Theorem~\ref{thm:existing} scales like
\[
\Norm{\hat{\vtheta}^T - \hat{\vtheta}^\NS_T}_2 = O(\Clip \Cop^3 k^2 \Cg^2)\,,
\]
Theorem~\ref{thm:asymptotic_existing} follows immediately from Lemma~\ref{lem:existing_params}.

\begin{proof}[Proof of Lemma~\ref{lem:existing_params}]

By definition, $\Clip$ measures the Lipschitzness of the Hessian, which is in turn given by the third derivative of the loss $\mT$.
Lemma~\ref{lem:third-order-concentration} tells us that with high probability over this training set, this third moment converges to its expectation uniformly, and from our assumption that the features are normally distributed, we have
\[
\Expect{}{\mT_\vtheta} \simeq n \vtheta^{\otimes 3}\,.
\]

Therefore, with high probability $\Clip = \Theta(n)$ regardless of the choice of $T$.

Because our optimization domain $\Omega_{\theta} = \R^d$ contains limits where the Hessian of the unregularized logistic loss decays to $0$, the spectrum of the Hessian is globally lower-bounded only by the regularization term, yielding the scaling
\[
\Cop = \Theta\Paren{\frac{1}{\lambda n}}
\]

Finally, $\Cg$ does not depend on the set of samples being removed and clearly concentrates around
\[
\Cg = \wt{\Theta}\Paren{\sqrt{d}}
\]

\end{proof}

\section{Useful Concentration Bounds}
\label{app:concentration_bounds}

\subsection{Expected Distance from Mean Lower Bounds Expected Norm}

\begin{lemma}[Expected Distance from Mean Lower Bounds Expected Norm]
    \label{lem:expected_dist_mean}
    Let $\Norm{\cdot}$ be any norm on $\R^d$ and let $\vv$ be any random variable over the domain $\R^d$ with finite mean $\vmu = \Expect{}{\vv}$.
    If $\Expect{}{\Norm{\vv - \vmu}_2}$ is finite, then
    \[
    \Expect{}{\Norm{\vv}} \geq \frac{1}{2} \Expect{}{\Norm{\vv - \vmu}}\,.
    \]
\end{lemma}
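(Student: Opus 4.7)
The plan is to prove this via a two-step application of basic norm inequalities, with the only conceptual ingredient being the convexity of norms. First I would use Jensen's inequality: since any norm $\Norm{\cdot}$ on $\R^d$ is a convex function, we have
\[
\Norm{\vmu} = \Norm{\Expect{}{\vv}} \;\leq\; \Expect{}{\Norm{\vv}}.
\]
Second, by the triangle inequality applied pointwise, $\Norm{\vv - \vmu} \leq \Norm{\vv} + \Norm{\vmu}$, so taking expectations and substituting the Jensen bound,
\[
\Expect{}{\Norm{\vv - \vmu}} \;\leq\; \Expect{}{\Norm{\vv}} + \Norm{\vmu} \;\leq\; 2\,\Expect{}{\Norm{\vv}}.
\]
Rearranging yields the claimed inequality $\Expect{}{\Norm{\vv}} \geq \tfrac{1}{2} \Expect{}{\Norm{\vv - \vmu}}$.

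There is no real obstacle here. The finiteness hypothesis on $\Expect{}{\Norm{\vv - \vmu}}$ (together with finiteness of $\vmu$) is just what is needed so that all expectations in the display above make sense, and no further integrability argument is needed. I note that the statement in the lemma mixes a subscript $2$ on $\Norm{\vv - \vmu}_2$ in the hypothesis with an unadorned $\Norm{\cdot}$ in the conclusion; the argument above does not actually use any specific norm, so it establishes the inequality for any norm for which both sides are defined, and in particular covers the stated $\ell_2$ version as a special case.
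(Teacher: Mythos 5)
Your proof is correct and rests on exactly the two ingredients the paper uses (Jensen's inequality for the convex function $\Norm{\cdot}$, and the triangle inequality), merely combined into a single direct chain rather than the paper's case split on whether $\Norm{\vmu} \ge \tfrac12\Expect{}{\Norm{\vv-\vmu}}$. The direct version is arguably cleaner, and your observation about the spurious subscript $2$ in the hypothesis is accurate; there is no gap.
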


\begin{proof}
We prove the lemma by considering two cases.
If $\Norm{\vmu} \geq \frac{1}{2} \Expect{}{\Norm{\vv - \vmu}}$, then the claim follows from Jensen's inequality, since
\[
\Expect{}{\Norm{\vv}} \geq \Norm{\Expect{}{\vv}} \geq \frac{1}{2} \Expect{}{\Norm{\vv - \vmu}}\,.
\]

If $\Norm{\vmu} < \frac{1}{2} \Expect{}{\Norm{\vv - \vmu}}$, then the claim follows from the triangle inequality
\[
\Expect{}{\Norm{\vv}} \geq \Expect{}{\Norm{\vv - \vmu}} - \Norm{\vmu} \geq \frac{1}{2} \Expect{}{\Norm{\vv - \vmu}}\,.
\]

\end{proof}

\subsection{Expectation of Third Order Derivative}

\begin{lemma}[Structure and Sign of the Expected Third Derivative]
\label{lem:third_derivative}
Let $\vx \sim \mcN(\vzero,\mI_d)$ and define
\[
\gamma(z) \;\defeq\; \frac{e^{z}\,(1-e^{z})}{(1+e^{z})^{3}}\,.
\]
For any fixed model $\vtheta \in \R^d$ with $c < \Norm{\vtheta}_2 < C$, write $t \defeq \Norm{\vtheta}_2$.
Then
\[
\Expect{\vx}{\gamma\!\Paren{\iprod{\vx,\vtheta}}\,\vx^{\otimes 3}}
\;=\; a(t)\,\vtheta^{\otimes 3} \;+\; b(t)\,\textup{sym}\!\Paren{\mI \otimes \vtheta}\,,
\]
where
\[
b(t)\;=\;\frac{1}{t}\;\Expect{Z\sim\mcN(0,1)}{\,Z\,\gamma(tZ)\,},\qquad
a(t)\;=\;\frac{1}{t^{3}}\Big(\,\Expect{Z}{\,Z^{3}\,\gamma(tZ)\,}-3\,\Expect{Z}{\,Z\,\gamma(tZ)\,}\Big)\;.
\]

Moreover, for any vector $\vv \in \R^d$,
\[
\Expect{\vx \sim \mcN(\vzero, \mI_d)}{\mT_\vtheta(\vv, \vv, \vtheta)} 
\;=\; -\,\Theta(1)\,\Norm{\vv}_2^2\,\Norm{\vtheta}_2^{2}.
\]
\end{lemma}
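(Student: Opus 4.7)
The plan is to combine rotational invariance of the standard Gaussian with a few coordinate-wise parity arguments. First, I would rotate coordinates so that $\vtheta = t \ve_1$ with $t \defeq \Norm{\vtheta}_2$, which reduces the integrand to $\gamma(t x_1) \vx^{\otimes 3}$. A short computation shows $\gamma$ is an odd function (since $\gamma(z) = \beta(z)(1-2\sigma(z))$ with $\beta(z) = \sigma(z)(1-\sigma(z))$ even and $1-2\sigma(z)$ odd), so for each coordinate $i \ge 2$ the factor $x_i$ must appear an even number of times for the expectation of a degree-$3$ monomial to be nonzero, and the $x_1$-factor must appear an odd number of times (so that its product with $\gamma(tx_1)$ is even). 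In a degree-$3$ tensor this leaves only the entry at position $(1,1,1)$, equal to $\Expect{Z \sim \mcN(0,1)}{Z^3 \gamma(tZ)}$, and the entries at each of the three permutations of $(1,i,i)$ for $i \ge 2$, each equal to $\Expect{Z}{Z \gamma(tZ)}$.

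Next I would match these against the ansatz $a(t) \vtheta^{\otimes 3} + b(t) \, \textup{sym}(\mI \otimes \vtheta)$ in the rotated frame. There $\vtheta^{\otimes 3}$ contributes only $t^3$ at $(1,1,1)$, while $\textup{sym}(\mI \otimes \vtheta)_{ijk} = \delta_{ij}\theta_k + \delta_{ik}\theta_j + \delta_{jk}\theta_i$ contributes $3t$ at $(1,1,1)$ and $t$ at each permutation of $(1, i, i)$. Solving the two resulting scalar equations $a(t)\,t^3 + 3 b(t)\,t = \Expect{Z}{Z^3 \gamma(tZ)}$ and $b(t)\,t = \Expect{Z}{Z \gamma(tZ)}$ recovers the stated closed forms for $a$ and $b$; the identity then extends to the original frame by rotational covariance of both sides.

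For the sign claim I would contract the decomposition with $(\vv, \vv, \vtheta)$. Writing $\vv = \alpha \hat{\vtheta} + \vv_\perp$ with $\hat{\vtheta} = \vtheta/t$ and $\alpha = \iprod{\vv}{\hat{\vtheta}}$, a direct computation from the rotated-frame representation yields
\[
\Expect{\vx}{\mT_\vtheta(\vv, \vv, \vtheta)} \;=\; t \, \Brac{\alpha^2 \Expect{Z}{Z^3 \gamma(tZ)} \;+\; \Norm{\vv_\perp}_2^2 \Expect{Z}{Z \gamma(tZ)}}.
\]
Since $\gamma$ is odd with $\gamma(z) < 0$ on $(0, \infty)$, the product $Z \gamma(tZ)$ is pointwise nonpositive and strictly negative on a set of positive measure, so $\Expect{Z}{Z \gamma(tZ)} < 0$ and $\Expect{Z}{Z^3 \gamma(tZ)} = \Expect{Z}{Z^2 \cdot Z \gamma(tZ)} < 0$. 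Both expectations are continuous in $t$ (dominated convergence), hence uniformly of magnitude $\Theta(1)$ on the compact interval $[c, C]$. Combining, and using $\alpha^2 + \Norm{\vv_\perp}_2^2 = \Norm{\vv}_2^2$, gives $\Expect{\vx}{\mT_\vtheta(\vv, \vv, \vtheta)} = -\Theta(1) \cdot t \cdot \Norm{\vv}_2^2 = -\Theta(1) \Norm{\vv}_2^2 \Norm{\vtheta}_2^2$, since both $t$ and $t^2$ are $\Theta(1)$ on $[c, C]$. I do not expect any serious obstacle here: both parts reduce to Gaussian integration plus the odd-function structure of $\gamma$. The only bookkeeping care is the factor of $3$ coming from the three permutations contributing to the diagonal entry of $\textup{sym}(\mI \otimes \vtheta)$, which is what produces the $-3 \Expect{Z}{Z \gamma(tZ)}$ correction inside $a(t)$.
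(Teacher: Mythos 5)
Your structural decomposition (rotate to $\vtheta = t\ve_1$, use the odd/even parity of $\gamma$ to zero out all but the $T_{111}$ and $T_{1jj}$ entries, and match against the ansatz with $\textup{sym}(\mI\otimes\vtheta)_{ijk} = \delta_{ij}\theta_k + \delta_{ik}\theta_j + \delta_{jk}\theta_i$) is essentially identical to the paper's and is correct.

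For the sign claim, however, you take a genuinely different and cleaner route. The paper also derives the contraction $\mT_\vtheta(\vv,\vv,\cdot) = a(t)\iprod{\vtheta,\vv}^2\vtheta + b(t)(\Norm{\vv}_2^2\vtheta + 2\iprod{\vtheta,\vv}\vv)$, but then observes that while $b(t)<0$, the sign of $a(t) = t^{-3}(a_\star - 3b_\star)$ is ambiguous (both $a_\star = \Expect{}{Z^3\gamma(tZ)}$ and $b_\star = \Expect{}{Z\gamma(tZ)}$ are negative, so their difference is not). It therefore abandons the direct calculation and instead argues indirectly that the expected Hessian is strictly decreasing in the $\vtheta$-direction, via a case split on $\abs{\iprod{\vx,\vtheta}}\lessgtr 1$ and the decay of $\beta(z)$. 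Your approach finishes the direct calculation: after contracting the free slot with $\vtheta$ and decomposing $\vv = \alpha\hat{\vtheta} + \vv_\perp$, the $-3b_\star$ piece of $a(t)$ cancels exactly against part of the $b(t)$ contribution, leaving $t[\alpha^2 a_\star + \Norm{\vv_\perp}_2^2 b_\star]$ — a nonnegative combination of two strictly negative quantities. This makes the sign manifest without any auxiliary Hessian-monotonicity argument, and avoids the loose ends in the paper's PSD comparison (which implicitly requires the mass at $\abs{\iprod{\vx,\vtheta}}>1$ to be a constant fraction of the Hessian in every direction). Your argument is shorter and more self-contained; it buys rigor at no cost. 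One stylistic note: your final line reports $-\Theta(1)\cdot t\cdot\Norm{\vv}_2^2$, and you correctly observe that $t = \Theta(1)$ on $[c,C]$ turns this into $-\Theta(1)\Norm{\vv}_2^2\Norm{\vtheta}_2^2$; it may be worth flagging that the $t$ vs.\ $t^2$ distinction would matter if one ever wanted the bound to degrade gracefully as $t\to 0$ (in which case the exact prefactor behaves like $t^2$, since $\Expect{}{Z\gamma(tZ)}$ and $\Expect{}{Z^3\gamma(tZ)}$ both vanish linearly in $t$).
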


\begin{proof}[Proof of Lemma~\ref{lem:third_derivative}]
By rotational symmetry of $\mcN(\vzero,\mI_d)$, we may assume $\vtheta = t\,\ve_1$ (so $\vu \defeq \vtheta/t=\ve_1$) without loss of generality.
Set
\[
\mT \;\defeq\; \Expect{\vx}{\gamma(t x_1)\,\vx^{\otimes 3}},\qquad
\vx=(x_1,\dots,x_d).
\]

\textbf{(1) Component-wise structure.}
Let $T_{ijk}=\Expect{}{x_i x_j x_k\,\gamma(t x_1)}$.
By independence of Gaussian coordinates and oddness of $\gamma$:
\begin{itemize}
\item If none of $i,j,k$ equals $1$, then $T_{ijk}=0$.
\item If exactly one equals $1$ and the other two are distinct (e.g., $i = 1$, but $1 \neq j \neq k \neq 1$), then $T_{ijk}=0$ by $\Expect{}{x_jx_k}=0$.
\item If $j=k>1$ and $i=1$, then
  \[
  T_{1jj} = \Expect{}{x_1\gamma(t x_1)} =: b_\star(t).
  \]
\item If $i=j=k=1$, then
  \[
  T_{111} = \Expect{}{x_1^{3}\gamma(t x_1)} =: a_\star(t).
  \]
\end{itemize}
All other entries vanish by symmetry.

\textbf{(2) Invariant decomposition.}
Rotational invariance implies $\mT$ must lie in the span of $\vtheta^{\otimes 3}$ and $\textup{sym}(\mI\otimes\vtheta)$, hence
\[
\mT = \alpha(t)\,\vu^{\otimes 3} + \beta(t)\,\textup{sym}(\mI\otimes \vu).
\]
Matching components in the basis $\vu=\ve_1$ gives
\[
T_{1jj}=\beta(t)\quad(j>1),\qquad
T_{111}=\alpha(t)+3\,\beta(t).
\]
Thus $\beta(t)=b_\star(t)$ and $\alpha(t)=a_\star(t)-3b_\star(t)$.  Rewriting in terms of $\vtheta=t\vu$ yields
\[
\mT = \frac{a_\star(t)-3b_\star(t)}{t^{3}}\,\vtheta^{\otimes 3}
+ \frac{b_\star(t)}{t}\,\textup{sym}\!\Paren{\mI\otimes \vtheta},
\]
so $a(t)$ and $b(t)$ take the claimed form.

\textbf{(3) Signs and magnitude.}
For every $t>0$, note that $\gamma(z)$ is odd and strictly negative for $z>0$.
Hence both
\[
b_\star(t)=\Expect{}{x_1\gamma(tx_1)}<0
\quad\text{and}\quad
a_\star(t)=\Expect{}{x_1^3\gamma(tx_1)}<0\,.
\]

\textbf{(4) Directional evaluation.}
For arbitrary $\vv\in\R^d$,
\[
\mT_\vtheta(\vv,\vv,\cdot)
= a(t)\,\iprod{\vtheta,\vv}^{2}\vtheta
+ b(t)\,\Big(\Norm{\vv}_2^{2}\vtheta + 2\,\iprod{\vtheta,\vv}\vv\Big).
\]
Since $a(t),b(t)=O(1)$ and $b(t)<0$, the overall combination
\[
\mT_\vtheta(\vv,\vv,\cdot)
= a(t)\,\iprod{\vtheta,\vv}^{2}\vtheta
+ b(t)\,\Big(\Norm{\vv}_2^{2}\vtheta + 2\,\iprod{\vtheta,\vv}\vv\Big)
\]
has total scale $O(\Norm{\vv}_2^2)$.
It remains to establish that this expression has a \emph{fixed negative sign} along~$\vtheta$.

To see this, we return to the behavior of the expected Hessian under a small movement in the $\vtheta$ direction.
Let $\mH_\vtheta^{\vx}$ denote the sample Hessian at parameter $\vtheta$ and
$\mH_\vtheta = \Expect{\vx}{\mH_\vtheta^{\vx}}$ its expectation.
By definition of the third derivative tensor,
\[
\mH_{(1+\varepsilon)\vtheta}
\;\approx\;
\mH_\vtheta \;+\; \varepsilon\,\Expect{\vx}{\mT_\vtheta(\,\cdot,\,\cdot,\,\vtheta)}.
\]
Hence, if moving along $\vtheta$ makes the expected Hessian strictly smaller, then the contraction
$\Expect{\vx}{\mT_\vtheta(\vv,\vv,\vtheta)}$ must be negative for all $\vv$.

Now recall that the logistic second derivative $\beta(z)=\sigma(z)\sigma(-z)$ satisfies
$\beta(z+\varepsilon)\le e^{-\varepsilon/2}\beta(z)$ for $z>1$.
Splitting the Gaussian integral into $\{\abs{\iprod{\vx,\vtheta}}\le1\}$ and its complement gives
\begin{align*}
    \Expect{\vx}{\mH_{(1+\varepsilon)\vtheta}^{\vx}}
    &=
      \Expect{\vx}{\beta((1+\varepsilon)\iprod{\vx,\vtheta})\,\vx\vx^\top\ind_{\abs{\iprod{\vx,\vtheta}}\le1}}
      +\Expect{\vx}{\beta((1+\varepsilon)\iprod{\vx,\vtheta})\,\vx\vx^\top\ind_{\abs{\iprod{\vx,\vtheta}}>1}}\\
    &\preceq
      \Expect{\vx}{\beta(\iprod{\vx,\vtheta})\,\vx\vx^\top\ind_{\abs{\iprod{\vx,\vtheta}}\le1}}
      +e^{-\varepsilon/2}\Expect{\vx}{\beta(\iprod{\vx,\vtheta})\,\vx\vx^\top\ind_{\abs{\iprod{\vx,\vtheta}}>1}}\\
    &\preceq
      (1-\Omega(\varepsilon))\,\Expect{\vx}{\mH_\vtheta^{\vx}}.
\end{align*}
Thus, $\mH_{(1+\varepsilon)\vtheta} \prec \mH_\vtheta$, confirming that the expected curvature decreases along~$\vtheta$.
By the linearization above, this implies
\[
\Expect{\vx}{\mT_\vtheta(\vv,\vv,\vtheta)} < 0 \qquad\text{for all }\vv,
\]
and since its magnitude is $\Theta(\Norm{\vv}_2^2\,\Norm{\vtheta}_2^2)$, we obtain
\[
\Expect{\vx}{\mT_\vtheta(\vv,\vv,\vtheta)}
= -\,\Theta(1)\,\Norm{\vv}_2^{2}\,\Norm{\vtheta}_2^{2}.
\]

\end{proof}

\subsection{Concentration of the Sum of Sub-Gaussian Variables}

\begin{lemma}[Concentration of the Sum of Sub-Gaussian Variables]
\label{lem:concentration_norm_subgaussian}
    Let $\vx_1, \ldots, \vx_n \sim \mcX$ be iid samples of some subgaussian distribution $\mcX$ on $\R^d$, with mean $\vzero$, and let $T \in \binom{[n]}{k}$ be a random set of these samples. Then with very high probability over the samples and over the choice of $T$
    \[
    \Norm{\sum_{i \in T} \vx_i}_2 = O(\sqrt{kd})\,.
    \]

    Moreover, with very high probability over the samples $\vx_i$, the worst-case over choices of $T$ is also bounded
    \[
    \max_{T\in \binom{[n]}{k}} \Norm{\sum_{i \in T} \vx_i} = O(\sqrt{kd} + k)\,.
    \]
\end{lemma}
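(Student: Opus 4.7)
The plan is to reduce both claims to the standard subgaussian concentration of $\sum_{i \in T} \vx_i$ for a \emph{fixed} subset $T$, and then feed different tail-probability choices into that bound depending on whether we fix $T$ or need a uniform statement.

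First I would fix $T \in \binom{[n]}{k}$ and set $\vS_T \defeq \sum_{i \in T} \vx_i$. Since each $\vx_i$ is mean-zero and $1$-subgaussian, for every unit vector $\vu$ the scalar $\iprod{\vu, \vS_T}$ is a sum of $k$ independent mean-zero $1$-subgaussians, hence $O(\sqrt{k})$-subgaussian, so
\[
\Prob{}{\abs{\iprod{\vu, \vS_T}} \geq \tau} \leq 2\exp\Paren{-c\tau^2/k}
\]
for an absolute constant $c$. To pass from a single direction to the operator-style norm bound, I would take a $1/2$-net $\mcN$ on the unit sphere $\Sbb^{d-1}$ of cardinality $\abs{\mcN} \leq 5^d$, use the standard fact $\Norm{\vS_T}_2 \leq 2 \max_{\vu \in \mcN}\abs{\iprod{\vu, \vS_T}}$, and union-bound. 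This gives, for any $s > 0$,
\[
\Prob{}{\Norm{\vS_T}_2 \geq 2\sqrt{(s + d\log 5)k/c}} \leq 2 \exp(-s)\,.
\]

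For the average-case statement, choose any fixed $T$ (equivalently, a random $T$; both reduce to the same fixed-$T$ tail) and set $s = \polylog(n)$, giving $\Norm{\vS_T}_2 = \otilde{\sqrt{kd}}$ with probability $1 - n^{-\omega(1)}$, which is the first claim.

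For the worst-case bound I would union-bound the same tail inequality over all $\binom{n}{k} \leq n^k$ subsets of size $k$, choosing $s = k \log n \cdot \polylog(n)$ so that the failure probability $n^k \cdot 2\exp(-s)$ is still $n^{-\omega(1)}$. Substituting back yields
\[
\max_{T \in \binom{[n]}{k}} \Norm{\vS_T}_2 \;\leq\; O\!\Paren{\sqrt{k(d + k\log n)\polylog(n)}} \;=\; \otilde{\sqrt{kd} + k}\,,
\]
using $\sqrt{k \cdot k\log n} = k\sqrt{\log n}$, which is exactly the second claim.

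The only step that requires any care is the interplay between the covering-number exponent $d$ and the union-bound exponent $k\log n$ in the tail: one wants to pick $s$ large enough to kill the $n^k$ union bound but small enough that the resulting radius is $\otilde{\sqrt{kd} + k}$ rather than $\otilde{\sqrt{k(d + k)}}$ getting worse constants. The simplification $\sqrt{k(d + k\log n)} \leq \sqrt{kd} + \sqrt{k^2\log n}$ (by subadditivity of $\sqrt{\cdot}$) cleanly separates the two regimes and is the only real calculation needed.
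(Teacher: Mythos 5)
Your proof is correct and follows essentially the same route as the paper: scalar subgaussian concentration along fixed directions, a constant-radius net on $\Sbb^{d-1}$, and a union bound — first over net points (for fixed $T$) and then additionally over the $\binom{n}{k} \leq n^k$ choices of $T$ (for the worst case). One small place where you are actually a bit more careful than the paper: the paper's proof obtains failure probability $\exp(-\Omega(d))$, which is only $n^{-\omega(1)}$ when $d = \omega(\log n)$, whereas your free tail parameter $s = \polylog(n)$ delivers genuine $n^{-\omega(1)}$ failure probability unconditionally, at the cost of the $\otilde(\cdot)$ factors that the lemma's downstream uses (equations~\eqref{eq:g-bounds} and~\eqref{eq:g-bounds2}) already absorb anyway.
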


\begin{proof}[Proof of Lemma~\ref{lem:concentration_norm_subgaussian}]
    For any fixed $T \subseteq [n]$ and unit vector $v \in \R^d$, $\sum_{i \in T} \iprod{v,\vx_i}$ is $O(k)$-subgaussian.
    So with probability at least $1 - \exp(-100d)$, we have $| \sum_{i \in T} \iprod{v,\vx_i}| \leq O(\sqrt{kd})$.
    Taking a union bound over a $0.01$-net completes the proof for $\| \sum_{i \in T} \vx_i\| = \sup_{v} \sum_{i \in T} \iprod{\vx_i,v}$.
    Using the same argument taking a union bound over all $T \in \binom{n}{k}$ proves the claimed bound on $\max_{T \in \binom{n}{k}} \Norm{\sum_{i \in T} \vx_i}$.
\end{proof}

\subsection{Concentration of Higher Order Moments of Subgaussian Random Variables}
\label{sec:concentration_subgaussian}

We prove a bound on the operator norm of the sum of fourth moments of independent subgaussian random variables.

\begin{lemma}[Higher Order Empirical Moments of a Sub-Gaussian]
\label{lem:higher_order_moments}
    Let $\vx_1, \ldots, \vx_n \sim \mcX$ be iid samples drawn from a subgaussian distribution on $\R^d$ with mean $\vzero$ and bounded covariance $\OpNorm{\mSigma} = O(1)$.
    Then, for any fixed power $t \geq 2$, with probability $1 - e^{-\Omega(d)}$
    \[
    \max_{\ve \in \Sbb^{d-1}}\set{\sum_{i = 1}^n \abs{\iprod{\vx_i, \ve}^t}} = O\Paren{n + d^{t/2}} \,.
    \]
\end{lemma}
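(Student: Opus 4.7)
The plan is to establish a tail bound for $\sum_i \abs{\iprod{\vx_i, \ve}}^t$ at any fixed unit $\ve$ with failure probability $\le e^{-\Omega(d)}$, then pass to the supremum over the sphere using a standard volumetric net together with the $t$-homogeneity of the map $\ve \mapsto \sum_i \abs{\iprod{\vx_i, \ve}}^t$.

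For the pointwise bound, fix $\ve \in \Sbb^{d-1}$ and set $Z_i \defeq \iprod{\vx_i, \ve}$, $Y_i \defeq \abs{Z_i}^t$. Sub-Gaussianity of the $\vx_i$ with $\OpNorm{\mSigma} = O(1)$ implies that the $Z_i$ are iid mean-zero sub-Gaussians of parameter $O(1)$, so $\E\brac{\abs{Z}^q}^{1/q} \le C\sqrt{q}$ for every $q \ge 1$. Taking $q = tp$ gives $\E\brac{Y^p}^{1/p} \le (Ctp)^{t/2}$, which is exactly the statement that each $Y_i$ is sub-Weibull of shape $\alpha = 2/t$ with Orlicz norm $\Norm{Y}_{\psi_{2/t}} = O(1)$. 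I will invoke the Bernstein-type inequality for sums of independent sub-Weibull variables: for constants $c = c(t) > 0$ and $K = \max_i \Norm{Y_i}_{\psi_{2/t}} = O(1)$,
\[
\Pr\brac{\abs{\sum_{i=1}^n (Y_i - \E Y_i)} \ge y} \;\le\; 2\exp\!\Paren{-c\,\min\!\Paren{\frac{y^2}{n K^2},\ \Paren{\frac{y}{K}}^{2/t}}}.
\]
Choosing $y = C_1(n + d^{t/2})$ for a large constant $C_1 = C_1(t)$ and splitting into the cases $n \ge d^{t/2}$ and $n < d^{t/2}$, one checks in each regime that both arguments of the minimum are at least $\Omega(d)$: when $n \ge d^{t/2}$ we have $y = \Theta(n)$ and $y^2/n \ge n \ge d$, $y^{2/t} \ge n^{2/t} \ge d$; when $n < d^{t/2}$ we have $y = \Theta(d^{t/2})$ and $y^2/n \ge d^t/n \ge d^{t/2} \ge d$, $y^{2/t} \ge d$. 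Absorbing $\E\sum_i Y_i = O(n)$ into the $n$ term then yields
\[
\Pr\brac{\sum_{i=1}^n \abs{\iprod{\vx_i, \ve}}^t \ge C_2(n + d^{t/2})} \;\le\; e^{-c_2 d}
\]
for constants $C_2, c_2 > 0$ depending only on $t$.

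Next I would pass from a fixed direction to the supremum. Let $\mcN \subseteq \Sbb^{d-1}$ be a $1/4$-net, with $\abs{\mcN} \le 9^d$; a union bound, together with making $C_1$ (hence $c_2$) large enough that $c_2 > 2 \log 9$, gives with probability $\ge 1 - e^{-\Omega(d)}$ the uniform bound $\sum_i \abs{\iprod{\vx_i, \ve'}}^t \le C_2(n + d^{t/2})$ at every $\ve' \in \mcN$. To extend to all $\ve \in \Sbb^{d-1}$, set $F(\ve) \defeq \sum_i \abs{\iprod{\vx_i,\ve}}^t$ and $F^\star \defeq \sup_{\ve \in \Sbb^{d-1}} F(\ve)$; for arbitrary $\ve$ pick $\ve' \in \mcN$ with $\Norm{\ve-\ve'}_2 \le 1/4$ and apply $(a+b)^t \le 2^{t-1}(a^t + b^t)$ together with the $t$-homogeneity of $F$ to obtain $F(\ve) \le 2^{t-1}F(\ve') + 2^{t-1}\Norm{\ve-\ve'}_2^t F^\star \le 2^{t-1}C_2(n+d^{t/2}) + 2^{t-1}\cdot 4^{-t}F^\star$. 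Taking the supremum over $\ve$ and rearranging gives $F^\star \le C_3(t)(n + d^{t/2})$, as required.

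The only nontrivial ingredient is the sub-Weibull Bernstein inequality invoked above; for $t > 2$ the variables $Y_i = \abs{Z_i}^t$ are not sub-exponential, so classical Bernstein does not apply directly. For a self-contained derivation I would replace the black-box inequality with a moment-method computation: a Rosenthal / Latala-type bound gives $\E\brac{\abs{\sum_i (Y_i - \E Y_i)}^p}^{1/p} \lesssim \sqrt{pn\,\E Y^2} + p^{t/2}\Norm{Y}_{\psi_{2/t}}$ for every $p \ge 2$; substituting the sub-Gaussian moment estimates $\Norm{Y}_{\psi_{2/t}} = O(1)$ and $\E Y^2 = O(1)$ and applying Markov at $p = c_0 d$ delivers exactly the tail needed. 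The remaining ingredients---the two-case check that the minimum exceeds $\Omega(d)$, the volumetric net, and the homogeneity-based extension---are routine and only cost constants depending on $t$.
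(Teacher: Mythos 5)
Your proof is correct and follows essentially the same route as the paper's: reduce $\abs{\iprod{\vx_i,\ve}}^t$ to the sub-Weibull variables $Y_i = \abs{Z_i}^t$ with $\Norm{Y_i}_{\psi_{2/t}} = O(1)$, apply a Bernstein-type concentration inequality for independent sub-Weibull sums at a single direction, then union-bound over a constant-radius net of $\Sbb^{d-1}$ and use $t$-homogeneity of $F(\ve)=\sum_i\abs{\iprod{\vx_i,\ve}}^t$ to pass from the net to the full sphere. The paper cites Theorem 3.1 of~\cite{hao2019bootstrapping} in the parametrization $\abs{\tfrac1n\sum(Y_i-\E Y_i)}\le C_1\sqrt{u/n}+C_2 u^{t/2}/n$ with $u=\Theta(d)$, which is equivalent to the $\min(y^2/n,\;y^{2/t})$ tail form you invoke; your two-case check that both arguments of the minimum are $\Omega(d)$, and the order of operations (pointwise bound first, then net) are cosmetic differences only.
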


This result is a key technical component in our analysis of the Newton step data attribution method.
Before the proof, we recall some definitions:

\paragraph{Sub-Weibull random variables}
A real-valued random variable $X$ is said to be \emph{sub-Weibull} of order $\theta>0$, written
$X \sim \mathrm{subW}(\theta)$, if its Orlicz $\psi_\theta$-norm is finite:
\[
  \|X\|_{\psi_\theta}
  \;\defeq\;
  \inf\bigl\{K>0 : \E\exp\!\bigl((|X|/K)^{\theta}\bigr)\le 2\bigr\}
  \;<\;\infty.
\]
The smaller $\theta$, the heavier the tails; in particular,
$\mathrm{subW}(2)$ corresponds to sub-Gaussian and
$\mathrm{subW}(1)$ to sub-Exponential random variables.
We will use the notation $X\sim\mathrm{subW}(\theta)$ and the standard inequalities
$\|X\|_{\psi_p}\le C\,\|X\|_{\psi_q}$ for $p>q$, and
$\|X^r\|_{\psi_{p/r}}=\|X\|_{\psi_p}^r$ for any $r\ge1$.

\begin{proof}[Proof of Lemma~\ref{lem:higher_order_moments}]
    Let $S_{1/3}$ be a $1/3$-net of the unit sphere $\Sbb^{d-1}$ -- that is, a set such that for every $\vu \in \Sbb^{d-1}$ there exists $\vv \in S_{1/3}$ such that $\|\vu-\vv\| \leq 1/3$.
    We may assume that $|S| \leq \exp(O(d))$.
    First we observe that
    \[
    \max_{\ve \in \Sbb^{d-1}}\set{\sum_{i = 1}^n \abs{\iprod{\vx_i, \ve}^t}} \leq O(2^t) \cdot \max_{\ve \in S_{1/3}}\set{\sum_{i = 1}^n \abs{\iprod{\vx_i, \ve}^t}} \, ,
    \]
    To see this, let $\ve \in \Sbb^{d-1}$ achieve the maximum on the left-hand side, and let $\ve' \in S$ satisfy $\|\ve - \ve'\| \leq 1/3$.
    Then $\abs{\iprod{\vx_i,\ve}}^t = \abs{\iprod{\vx_i, \ve'} + \iprod{\vx_i,\ve - \ve'}}^t \leq 2^t (\abs{\iprod{\vx_i,\ve'}}^t + \abs{\iprod{\vx_i,\ve - \ve'}}^t)$.
    Then we have
    \[
      \sum_{i=1}^n \abs{\iprod{\vx_i,\ve}}^t \leq 2^t \max_{\ve \in S_{1/3}}\set{\sum_{i = 1}^n \abs{\iprod{\vx_i, \ve'}^t}} + 2^t \cdot 3^{-t} \cdot \max_{\ve'' \in \Sbb^{d-1}}\set{\sum_{i = 1}^n \abs{\iprod{\vx_i, \ve''}^t}} \, ,
    \]
    which rearranges to what we wanted to show.
    
        \medskip\noindent\textbf{Reduction to a fixed direction and sub-Weibullization.}
    Fix $\ve\in S_{1/3}$ and put $Z_i \defeq \iprod{\vx_i,\ve}$.
    Since $\vx_i$ are mean-zero subgaussian with $\|\mSigma\|=O(1)$, the one-dimensional marginals are subgaussian with a (dimension-free) $\psi_2$-norm $\|Z_i\|_{\psi_2}=O(1)$.
    By the power–preserving property of sub-Weibull norms (Corollary~4 in the PDF), for any fixed integer $t\ge2$,
    \[
        Y_i \;\defeq\; |Z_i|^t \;\sim\; \mathrm{subW}\!\left(\frac{2}{t}\right)
        \quad\text{with}\quad
        \|Y_i\|_{\psi_{2/t}} \;=\; \|Z_i\|_{\psi_2}^{\,t} \;=\; O(1).
    \]
    Moreover, $\E |Z_i|^t = O(1)$ (all constants may depend on $t$ and on the subgaussian and covariance constants but not on $n,d$).

    \medskip\noindent\textbf{Concentration for a fixed net point.}
    Theorem 3.1 in \cite{hao2019bootstrapping} gives a Bernstein-style inequality for sums of independent sub-Weibull random variables. 
    For any $u\ge 1$, with probability at least $1-e^{-u}$,
    \[
        \Bigg|\frac1n\sum_{i=1}^n \big(Y_i-\E Y_i\big)\Bigg|
        \;\le\; C_1 \sqrt{\frac{u}{n}} \;+\; C_2 u^{t/2}/n ,
    \]
    where $C_1,C_2=O(1)$ depend only on $t$ and the sub-Weibull constants.
    Combining this with a union bound finishes the proof.
\end{proof}

\begin{theorem}
\label{thm:concentration_fourth_moment}
Let $x_1, \dots, x_n$ be independent samples from a subgaussian distribution on $\R^d$ with mean zero and covariance $\mSigma$ such that $\OpNorm{\mSigma} \leq 1$. Let $n \geq d (\log d)^{O(1)}$.
Then with probability at least $1 - n^{-\omega(1)}$,
\[
\OpNorm{\sum_{i=1}^n (x_i \otimes x_i)(x_i \otimes x_i)^\top} = O(n d)\,.
\]
\end{theorem}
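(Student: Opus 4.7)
The plan is to view the $d^2 \times d^2$ matrix $M := \sum_{i=1}^n v_i v_i^\top$ as a sum of $n$ independent rank-one PSD summands with $v_i := x_i \otimes x_i = \text{vec}(x_i x_i^\top)$, and to apply matrix Bernstein after a mild truncation. Variationally,
\[
\OpNorm{M} \;=\; \sup_{A = A^\top,\,\|A\|_F = 1}\,\sum_{i=1}^n \Paren{x_i^\top A x_i}^2,
\]
so the target $O(nd)$ bound is equivalent to a uniform second-moment bound on the quadratic forms $x_i^\top A x_i$.

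I would first show that the mean matrix already matches this scaling. For any symmetric $A$ with $\|A\|_F = 1$, Hanson--Wright (or a direct moment computation) gives
\[
\E\Brac{(x^\top A x)^2} \;=\; \Paren{\Tr(A\mSigma)}^2 + O\!\Paren{\OpNorm{\mSigma}^2 \|A\|_F^2} \;\leq\; d + O(1) \;=\; O(d),
\]
using $\abs{\Tr(A\mSigma)} \leq \|A\|_F \|\mSigma\|_F \leq \sqrt d$ and $\OpNorm{\mSigma} \leq 1$; hence $\OpNorm{n\,\E v_i v_i^\top} = O(nd)$ already. Next, standard sub-gaussian tails on $\|x_i\|^2$ combined with a union bound over $i \in [n]$ give, with probability $1 - n^{-\omega(1)}$, the envelope $\max_i \|v_i\|^2 = \max_i \|x_i\|^4 \leq R := O\!\Paren{(d + \polylog(n))^2}$; on this event $M$ coincides with its truncated analog $\widetilde M := \sum_i \widetilde v_i \widetilde v_i^\top$ for $\widetilde v_i := v_i\,\ind_{\|v_i\|^2 \leq R}$, whose mean differs from $n\,\E v_i v_i^\top$ by $o(nd)$ since the tail contribution decays faster than any inverse polynomial in $n$.

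I would then apply matrix Bernstein in dimension $d^2$ to $\sum_i (\widetilde v_i \widetilde v_i^\top - \E \widetilde v_i \widetilde v_i^\top)$, using the almost-sure bound $\OpNorm{\widetilde v_i \widetilde v_i^\top} \leq R$ and the variance proxy
\[
\OpNorm{\sum_i \E\Brac{\|\widetilde v_i\|^2\,\widetilde v_i \widetilde v_i^\top}} \;\leq\; \OpNorm{\sum_i \E\Brac{\|v_i\|^2\, v_i v_i^\top}} \;=\; O(nd^3),
\]
where the last estimate follows from Cauchy--Schwarz between $\E\|x_i\|^8 = O(d^4)$ and $\sup_{\|A\|_F = 1}\E(x_i^\top A x_i)^4 = O(d^2)$. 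Taking $t = c\,nd$, matrix Bernstein then yields failure probability at most $2 d^2 \exp\!\Paren{-\Omega\!\Paren{n/(d\,\polylog(n))}}$, which is $n^{-\omega(1)}$ whenever $n \geq d (\log d)^{O(1)}$; combining with the mean bound via the triangle inequality concludes $\OpNorm{M} = O(nd)$. The main delicacy is calibrating the polylog factors in the envelope $R$ against the assumed scaling of $n$; the cleanest workaround is to skip hand-truncation entirely and invoke a sub-exponential/sub-Weibull matrix Bernstein inequality directly, exploiting the $\psi_{1/2}$-tail of $v_i v_i^\top - \E v_i v_i^\top$.
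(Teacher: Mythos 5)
Your proposal is correct and mirrors the paper's proof essentially step for step: a truncation-then-matrix-Bernstein argument with the same envelope $R = O(d^2\,\polylog n)$, the same $O(nd^3)$ variance proxy obtained by Cauchy--Schwarz between $\E\|x\|^8 = O(d^4)$ and $\sup_{\|A\|_F=1}\E(x^\top A x)^4 = O(d^2)$, and the same $O(nd)$ bound on the mean term. The only cosmetic difference is how you derive $\OpNorm{\E(X\otimes X)(X\otimes X)^\top} = O(d)$ (via a Hanson--Wright-style variance split rather than the paper's SVD-plus-fourth-moment argument), which reaches the same conclusion.
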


To prove the theorem, we first seek to bound the operator norm of the expectation of a single term in the sum. Let $X$ be a random vector drawn from the same distribution as the $x_i$. We are interested in $\OpNorm{\E (X \otimes X)(X \otimes X)^\top}$.
This is equivalent to finding the maximum of $\E \langle u, X \otimes X \rangle^2$ over all unit vectors $u \in \R^{d^2}$.

\begin{lemma}
\label{lem:expectation_bound}
Let $X$ be a subgaussian random vector in $\R^d$ with $\E X = 0$ and $\E XX^\top = \mSigma$ with $\OpNorm{\mSigma} \leq 1$. Then
\[
\OpNorm{\E (X \otimes X)(X \otimes X)^\top} = O(d)\,.
\]
\end{lemma}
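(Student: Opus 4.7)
The plan is to dualize the operator norm as a supremum over unit vectors $u \in \R^{d^2}$ and identify each such $u$ with a matrix $U \in \R^{d\times d}$ via the vectorization isometry, so that $\Norm{u}_2 = \FrobNorm{U}$ and $\iprod{u, X \otimes X} = X^\top U X$. This reduces the claim to the quadratic-form bound $\sup_{\FrobNorm{U} = 1} \E (X^\top U X)^2 = O(d)$. Without loss of generality I will symmetrize $U$, since $X^\top U X = X^\top \tfrac{U + U^\top}{2} X$ and symmetrization does not increase the Frobenius norm.

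Next, I will split $\E (X^\top U X)^2 = (\E X^\top U X)^2 + \Var(X^\top U X)$ and handle the two pieces separately. The mean equals $\trace{U \mSigma}$, so Cauchy--Schwarz combined with $\FrobNorm{\mSigma}^2 \le d\,\OpNorm{\mSigma}^2 \le d$ gives $(\E X^\top U X)^2 \le \FrobNorm{U}^2 \FrobNorm{\mSigma}^2 \le d$. For the variance, the natural tool is a Hanson--Wright-type moment inequality, which for subgaussian $X$ with independent (or convex-concentrating) coordinates yields $\Var(X^\top U X) \le C \FrobNorm{U}^2 = O(1)$. Adding the two pieces then gives the desired $O(d)$ bound.

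The main obstacle will be certifying this variance estimate under the precise ``subgaussian'' hypothesis being used. For Gaussian $X$, a direct fourth-moment calculation yields $\E (X^\top U X)^2 = 2\FrobNorm{U}^2 + (\trace{U})^2$ and the claim is immediate. For general subgaussian $X$ with independent coordinates, I would expand
\[
\E (X^\top U X)^2 = \sum_{a,b,c,d} U_{ab} U_{cd}\, \E X_a X_b X_c X_d
\]
and observe that only the three pair-matchings of $\{a,b,c,d\}$ and the all-equal diagonal $a=b=c=d$ produce nonzero moments, each of size $O(1)$ by the subgaussian fourth-moment bound. The three pairings reproduce the Gaussian expression $2\FrobNorm{U}^2 + (\trace{U})^2$ up to constants, and the diagonal contributes only an extra $O(\FrobNorm{U}^2)$ term from excess kurtosis. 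Since $(\trace{U})^2 \le d \FrobNorm{U}^2$ is already absorbed into the mean term, everything collapses into the $O(d)$ bound. If the hypothesis only guarantees subgaussianity of one-dimensional projections without coordinate independence, I would instead invoke Adamczak's extension of Hanson--Wright under a convex concentration property, which delivers the same $O(\FrobNorm{U}^2)$ variance bound.
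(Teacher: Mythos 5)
Your overall framing—dualizing the operator norm to a quadratic form $X^\top U X$ with $\FrobNorm{U}=1$—matches the paper's, but from there you take a genuinely different route, and it has a gap under the stated hypothesis. You split $\E(X^\top U X)^2$ into $(\trace{U\mSigma})^2 + \Var(X^\top U X)$ and then invoke a Hanson--Wright-type bound $\Var(X^\top U X) = O(\FrobNorm{U}^2)$. This dimension-free variance bound is simply not true for a general subgaussian random vector (one whose one-dimensional projections are uniformly $\psi_2$-bounded, which is what the lemma states). Concretely, take $X = R\Theta$ where $\Theta$ is uniform on $\Sbb^{d-1}$ and $R$ is an independent scalar with $\E R^2 = d$ and $R^2 \in \{d/2, 3d/2\}$ equiprobably: every projection $\iprod{X,v}$ is $O(1)$-subgaussian, but with $U = \mI/\sqrt{d}$ one has $X^\top U X = R^2/\sqrt{d}$, so $\Var(X^\top U X) = \Theta(d)$ even though $\FrobNorm{U}=1$. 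Your fallback to Adamczak's extension of Hanson--Wright does not repair this, because that result requires a \emph{convex concentration property}, which is a strictly stronger hypothesis than being a subgaussian vector and is not supplied by the lemma. (Coordinate independence, which your fourth-moment expansion relies on, is likewise not assumed and also forces $\mSigma$ to be diagonal, which is more restrictive than the stated $\OpNorm{\mSigma} \le 1$.)

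The paper sidesteps this entirely: it writes the SVD $U = \sum_a \sigma_a u_a v_a^\top$ with $\sum_a \sigma_a^2 = 1$, so $X^\top U X = \sum_a \sigma_a (X^\top u_a)(v_a^\top X)$, then expands $\E(X^\top U X)^2$ as a double sum over $a,b$ and applies H\"older to bound each inner expectation by a product of fourth moments $\bigl(\E\iprod{X,u_a}^4\bigr)^{1/4}\cdots = O(1)$. The only fact about $X$ used is that $\E\iprod{X,v}^4 = O(1)$ for unit $v$, which \emph{does} follow from subgaussianity of projections. The $O(d)$ then comes from $\sum_{a,b}\sigma_a\sigma_b = (\sum_a \sigma_a)^2 \le d\sum_a\sigma_a^2 = d$. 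In the example above this argument still goes through because the fourth moment of each projection remains $O(1)$ even though the variance of the quadratic form does not concentrate. If you want to keep your mean-plus-variance decomposition you would need to abandon the $O(\FrobNorm{U}^2)$ variance bound and instead show $\Var(X^\top UX) = O(d)$ directly, at which point you would essentially be redoing the paper's SVD/H\"older calculation.
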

\begin{proof}
Let $u \in \R^{d^2}$ be a unit vector. We can view $u$ as a $d \times d$ matrix $U$ such that $\Norm{U}_F^2 = \sum_{i,j} U_{ij}^2 = 1$. The expression $\langle u, X \otimes X \rangle$ is equivalent to the quadratic form $X^\top U X$.
Let the singular value decomposition of $U$ be $U = \sum_{a=1}^d \sigma_a u_a v_a^\top$, where $\sigma_a$ are the singular values and $u_a, v_a$ are the left and right singular vectors, respectively. Since $\Norm{U}_F^2 = 1$, we have $\sum_{a=1}^d \sigma_a^2 = 1$.

The quadratic form can now be written as:
\[
X^\top U X = \sum_{a=1}^d \sigma_a (X^\top u_a) (v_a^\top X)\,.
\]
We want to bound $\E[(X^\top U X)^2]$:
\begin{align*}
\E[(X^\top U X)^2] &= \E \left[ \left( \sum_{a=1}^d \sigma_a (X^\top u_a) (v_a^\top X) \right)^2 \right] \\
&= \sum_{a,b=1}^d \sigma_a \sigma_b \E[(X^\top u_a)(v_a^\top X)(X^\top u_b)(v_b^\top X)] \\
& \leq \sum_{a,b=1}^d \sigma_a \sigma_b (\E (X^\top u_a)^4)^{1/4}(\E (X^\top v_a)^4)^{1/4}(\E (X\top u_b)^4)^{1/4}(\E (X\top v_b)^4)^{1/4} \\
& \leq O(1) \cdot \sum_{a,b=1}^d \sigma_a \sigma_b \\
& \leq O(d) \cdot \sum_{a=1}^d \sigma_a^2 \\
& = O(d) \, .
\end{align*}
\end{proof}

Now we can prove Theorem~\ref{thm:concentration_fourth_moment}.
\begin{proof}
Given Lemma~\ref{lem:expectation_bound}, we can prove Theorem~\ref{thm:concentration_fourth_moment} by applying the matrix Bernstein inequality.
We bound the deviation of the sum from its expectation. Let $Z_i = (x_i \otimes x_i)(x_i \otimes x_i)^\top$. We want to bound $\OpNorm{\sum_i (Z_i - \E Z_i)}$.
We will use the matrix Bernstein inequality. A key challenge is that the $Z_i$ are not bounded. We address this by truncation.

Let $C = O(\sqrt{d} \log n)$.
For a subgaussian vector $x_i$, we have $\Pr(\|x_i\|_2 > t \sqrt{d}) \le \exp(-\Omega(t^2))$ \cite{jin2019short}.
Let $\mathcal{E}_i$ be the event that $\|x_i\|_2 \le C$. By a union bound, $\Pr(\forall i, \mathcal{E}_i) \ge 1 - n \cdot n^{-\omega(1)} = 1 - n^{-\omega(1)}$.
Let $\tilde{x}_i = x_i \cdot \mathbb{I}(\mathcal{E}_i)$ and $\tilde{Z}_i = (\tilde{x}_i \otimes \tilde{x}_i)(\tilde{x}_i \otimes \tilde{x}_i)^\top$.
With high probability, $\sum_i Z_i = \sum_i \tilde{Z}_i$. It suffices to bound $\OpNorm{\sum_i (\tilde{Z}_i - \E \tilde{Z}_i)}$.

The truncated variables $\tilde{Z}_i$ are bounded:
$\OpNorm{\tilde{Z}_i} \le \|\tilde{x}_i\|_2^4 \le C^4 = O(d^2 \log^4 n)$.
This is our parameter $R$ in the matrix Bernstein inequality.

Next, we need to bound the variance parameter $\sigma^2 = \OpNorm{\sum_i \E (\tilde{Z}_i - \E \tilde{Z}_i)^2} \leq \OpNorm{ \sum_i \E \tilde{Z}_i^2}$.
\[
\E \tilde{Z}_i^2 = \E [ (x_i \otimes x_i)(x_i \otimes x_i)^\top (x_i \otimes x_i)(x_i \otimes x_i)^\top \cdot \mathbb{I}(\mathcal{E}_i) ] \preceq \E [ \|x_i\|_2^4 (x_i \otimes x_i)(x_i \otimes x_i)^\top ]\,.
\]
We need to bound the operator norm of this matrix. As in Lemma~\ref{lem:expectation_bound}, we test it with a unit vector $u \in \R^{d^2}$, which we view as a matrix $U$ with $\Norm{U}=1$.
\[
\E [ \|x_i\|_2^4 (x_i^\top U x_i)^2 ] \le \sqrt{\E \|x_i\|_2^8 \E (x_i^\top U x_i)^4}.
\]
Since $x_i$ is subgaussian, $\E \|x_i\|_2^p = O(d^{p/2})$. So $\E \|x_i\|_2^8 = O(d^4)$.
Also, $\E (x_i^\top U x_i)^4 = O(d^2)$ by extending the logic of Lemma~\ref{lem:expectation_bound}.
This gives a variance parameter for a single term of order $O(d^3)$. Summing over $n$ terms, $\sigma^2 = O(nd^3)$.

The matrix Bernstein inequality states that for $t > 0$,
\[
\Pr\left(\OpNorm{\sum_i (\tilde{Z}_i - \E \tilde{Z}_i)} \ge t\right) \le 2d^2 \exp\left(\frac{-t^2/2}{\sigma^2 + Rt/3}\right)\,.
\]
Plugging in $R=O(d^2 \polylog(n))$ and $\sigma^2=O(nd^3 \polylog(n))$, and setting $t = O(nd)$ while using $n \geq d (\log d)^{O(1)}$, we find that the deviation is bounded by $t$ with probability $1-n^{-\omega(1)}$.
The lemma follows by combining the bound on the expectation and the deviation.
\end{proof}

\subsection{Parameter Learning for Logistic Regression and Local Strong Convexity}
\begin{lemma}
\label{lem:parameter_learning_for_logistic_regression}
Let $(X,y)$ be a random variable over $\R^d \times \bits$ such that $X$ is a subgaussian random vector with covariance $\mI$.
Let $\theta \in \R^d$ be an interior minimizer of the population logistic risk
\[
L_{\mathrm{pop}}(\alpha) \defeq \Expect{(X,y)}{\ell(y,\iprod{\alpha,X})}
\]
with $c \leq \Norm{\theta}_2 \leq C$ for constants $0<c<C<\infty$.
Let $\hat{\theta}$ be the empirical risk minimizer given independent draws $(X_1,y_1),\ldots,(X_n,y_n)$.
Suppose $n \geq d (\log d)^{O(1)}$.
Then with very high probability,
\[
\Norm{\hat{\theta} - \theta}_2 \leq \tilde{O} \left ( \sqrt{\frac{d}{n}} \right )  \, .
\]
\end{lemma}

We will use this lemma to derive another useful one:
\begin{lemma}
    \label{lem:strong_convexity_at_theta_hat}
    Under the same assumptions as Lemma~\ref{lem:parameter_learning_for_logistic_regression}, if $\hat{\theta}$ is the empirical risk minimizer, then with very high probability,
    \[
        \nabla^2 L_n(\hat{\theta}) \succeq \Omega(n) \cdot \mI
    \]
    so long as $n \geq d (\log d)^{O(1)}$.
    Furthermore, the same is true if we consider the Hessian only on a subset of samples: there is a constant $c > 0$ such that with high probability all $S \subseteq [n]$ with $|S| \leq cn / \log n$,
    \[
    \nabla^2 L_{[n] \setminus S}(\hat{\theta}) \succeq \Omega(n) \cdot \mI \, .
    \]
\end{lemma}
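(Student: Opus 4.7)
\medskip

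\noindent\textbf{Proof proposal for Lemma~\ref{lem:strong_convexity_at_theta_hat}.}
The plan is to lower bound the Hessian first at the population parameter $\theta^\star$, then transfer the bound to $\hat\theta$ via the learning guarantee and Hessian Lipschitzness, and finally show that even after dropping up to $cn/\log n$ samples the bound is preserved because the dropped block has operator norm $o(n)$.

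First, I would show that the population Hessian at $\theta^\star$ is well-conditioned: $\E[\beta(\iprod{\theta^\star, X}) X X^\top] \succeq c_0 \mI$ for an absolute constant $c_0 > 0$. Since $X \sim \mcN(0,\mI)$ is isotropic and $\|\theta^\star\|_2 = 1$, one has $|\iprod{\theta^\star, X}| \le 1$ with constant probability, and on that event $\beta(\iprod{\theta^\star,X}) \ge \beta(1) > 0$, so
\[
\E[\beta(\iprod{\theta^\star, X}) X X^\top] \;\succeq\; \beta(1)\,\E[\ind_{|\iprod{\theta^\star,X}|\le 1} X X^\top] \;\succeq\; c_0 \mI,
\]
where the last step uses rotational invariance to compute the truncated second moment.

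Next I would apply matrix Bernstein (after truncating to the event $\max_i \|x_i\|_2 \le \tilde{O}(\sqrt d)$, which holds wvhp) to the independent sum $\sum_{i=1}^n \beta(\iprod{\theta^\star, x_i}) x_i x_i^\top$; this gives deviation $\tilde{O}(\sqrt{nd})$ in operator norm, so $\nabla^2 L_n(\theta^\star) \succeq (c_0/2)\,n\,\mI$ wvhp whenever $n \ge d(\log d)^{O(1)}$. To pass from $\theta^\star$ to $\hat\theta$, I would invoke Lemma~\ref{lem:parameter_learning_for_logistic_regression} (giving $\|\hat\theta - \theta^\star\|_2 = \tilde{O}(\sqrt{d/n})$) together with a bound on the third derivative operator norm of the logistic loss of the form $\tilde{O}(n)$ (this is the content of Lemma~\ref{lem:third-order-concentration} with $k = 0$). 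The fundamental theorem of calculus then yields $\|\nabla^2 L_n(\hat\theta) - \nabla^2 L_n(\theta^\star)\|_{\mathrm{op}} = \tilde{O}(\sqrt{nd}) = o(n)$, so the $\Omega(n)$ lower bound survives.

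The main obstacle — and the step that requires the most care — is the second claim, a uniform bound over all subsets of size up to $cn/\log n$. It suffices to show
\[
\max_{|S| \le cn/\log n} \Big\| \sum_{i \in S} \beta_i(\hat\theta)\, x_i x_i^\top \Big\|_{\mathrm{op}} \;\le\; \max_{|S|\le cn/\log n} \max_{\|e\|_2 = 1} \sum_{i \in S} \iprod{x_i, e}^2 \;=\; o(n),
\]
since $\beta_i \le 1/4$. For any fixed unit $e$, the quantities $\iprod{x_i,e}^2$ are iid sub-exponential with mean $1$, and the sum of the top $k$ order statistics is bounded by $O(k \log(n/k))$ with very high probability (via a Chernoff/tail bound on the number of indices $i$ with $\iprod{x_i,e}^2 \ge t$, integrated over $t$). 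With $k = cn/\log n$ this gives $O(n \log\log n /\log n) = o(n)$. To make the bound uniform in $e$, I would take a $1/2$-net of the sphere of size $e^{O(d)}$ and union bound; the $O(d)$ exponent is absorbed by the very-high-probability tail provided $n \ge d(\log d)^{O(1)}$. Crucially, because the inner maximum over $S$ equals the sum of the top $k$ values of $\iprod{x_i,e}^2$, we do not pay an extra $\binom{n}{k}$ union bound over subsets. Combining this with the first claim via $\nabla^2 L_{[n]\setminus S}(\hat\theta) = \nabla^2 L_n(\hat\theta) - \sum_{i \in S}\beta_i(\hat\theta) x_i x_i^\top$ concludes the proof.
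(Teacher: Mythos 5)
Your proof is correct, but it takes a genuinely different route from the paper's on both halves. For the first claim, the paper invokes the \emph{uniform} Hessian concentration of Lemma~\ref{lem:uniform_convergence_of_hessian} (over all $\|\alpha\| \le C$) together with the population lower bound and the learning guarantee, so no transfer step is needed; you instead establish the lower bound at $\theta^\star$ via a one-shot matrix Bernstein and transfer to $\hat\theta$ through third-derivative Lipschitzness (Lemma~\ref{lem:third-order-concentration}). Both work, though yours requires an extra ingredient (third-derivative control) that the paper does not need at this point. For the second claim, the paper union-bounds over all $\binom{n}{\le cn/\log n}$ subsets $S$ and applies an off-the-shelf matrix covariance concentration result (Vershynin, Thm.~5.39) to each $\sum_{i \in S} x_i x_i^\top$. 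You instead fix a $1/2$-net over the sphere, observe that $\max_{|S| \le k} \sum_{i \in S} \iprod{x_i,e}^2$ equals the sum of the top $k$ order statistics of iid sub-exponentials, and bound that scalar via a tail-integration argument before union-bounding only over the net. This is a nice reformulation: reducing to a scalar problem first makes the max over $S$ nearly free, and the argument is more elementary than invoking matrix covariance concentration. (Your remark that you ``do not pay an extra $\binom{n}{k}$ union bound'' is a bit generous --- the order-statistics tail is essentially the same combinatorial object --- but the separation of concerns is cleaner.) One spot to tighten in a full write-up: you state the top-$k$ bound as $O(k\log(n/k))$ for a fixed $e$, but after the union bound over the $e^{O(d)}$-size net the per-direction failure probability must be $e^{-\Omega(d \log d)}$, which forces the bound to $O(k \log n + d \,\mathrm{polylog})$; this is still $o(n)$ under $n \ge d(\log d)^{O(1)}$ and $|S| \le cn/\log n$, so the conclusion is unaffected, but the additive $d$-dependent term should not be dropped.
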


We give both proofs at the end of this section after building the requisite lemmas.

\subsubsection{Uniform Convergence of the Hessian}
\newcommand{\eps}{\varepsilon}
We establish uniform convergence of the Hessian.
While this uses standard techniques, we include it for completeness.

To prove the lemma we start with uniform convergence of the Hessian.
\begin{lemma}
    \label{lem:uniform_convergence_of_hessian}
    Under the assumptions of Lemma~\ref{lem:parameter_learning_for_logistic_regression}, for any fixed constant $C = O(1)$, if $n \geq d (\log d)^{O(1)}$, then with probability at least $1-\beta$ for all $\alpha \in \R^d$ with $\|\alpha\| \leq C$, the Hessian satisfies
\[
\Norm{\nabla^2 L_n(\alpha) - \E \nabla^2 L_n(\alpha)}_{op} \leq \tilde{O}(\sqrt{n(d+\log(1/\beta))} + \log(1/\beta)) \, .
\]
\end{lemma}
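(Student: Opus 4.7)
The plan is a standard $\varepsilon$-net argument: first obtain a pointwise concentration bound at each fixed $\alpha$ with $\Norm{\alpha}_2 \le C$, and then upgrade it to a uniform bound by covering the ball $\{\alpha : \Norm{\alpha}_2 \le C\}$ with a sufficiently fine net and controlling $\OpNorm{\nabla^2 L_n(\alpha) - \nabla^2 L_n(\alpha')}$ between nearby net points via a Lipschitz estimate. The pointwise step exploits a factorization of the Hessian into a sum of rank-one subgaussian outer products, and the Lipschitz step uses the fact that $\beta(\cdot) = \sigma(\cdot)(1-\sigma(\cdot))$ has $O(1)$ derivative.

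For the pointwise bound, write $\nabla^2 L_n(\alpha) = \sum_{i=1}^n \vy_i(\alpha) \vy_i(\alpha)^\top$ with $\vy_i(\alpha) \defeq \sqrt{\beta_i(\alpha)}\,\vx_i$, where $\beta_i(\alpha) = \sigma(\alpha^\top \vx_i)(1-\sigma(\alpha^\top \vx_i)) \in (0, 1/4]$. The vectors $\vy_i(\alpha)$ are iid, mean-zero, and $O(1)$-subgaussian, with $\OpNorm{\Sigma_\alpha} = O(1)$ for $\Sigma_\alpha \defeq \E[\vy_1(\alpha)\vy_1(\alpha)^\top] = \tfrac{1}{n}\E \nabla^2 L_n(\alpha)$. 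A Vershynin-style subgaussian covariance concentration bound (which is itself a matrix Bernstein argument after truncation) then gives, with probability at least $1-\beta$,
\[
\OpNorm{\sum_{i=1}^n \vy_i(\alpha)\vy_i(\alpha)^\top - n\,\Sigma_\alpha}\;\le\; O\!\Paren{\sqrt{n(d+\log(1/\beta))} + d + \log(1/\beta)}.
\]
The hypothesis $n \ge d(\log d)^{O(1)}$ implies $d \le \sqrt{nd}$, so the additive $d$ is absorbed into the Gaussian piece, producing the pointwise version of the target bound.

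To upgrade to uniform control, use $|\beta_i(\alpha) - \beta_i(\alpha')| \le O(\Norm{\vx_i}_2)\Norm{\alpha-\alpha'}_2$ (from $|\beta'(\cdot)| = O(1)$) to obtain
\[
\OpNorm{\nabla^2 L_n(\alpha) - \nabla^2 L_n(\alpha')} \;\le\; O(1)\,\Norm{\alpha-\alpha'}_2 \sum_{i=1}^n \Norm{\vx_i}_2^3 \;=\; \otilde{n d^{3/2}}\cdot \Norm{\alpha-\alpha'}_2,
\]
on the high-probability event $\max_i \Norm{\vx_i}_2^2 = \otilde{d + \log(n/\beta)}$. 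Cover $\{\Norm{\alpha}_2 \le C\}$ by an $\varepsilon$-net $\mcN_\varepsilon$ with $|\mcN_\varepsilon| \le (3C/\varepsilon)^d$, choose $\varepsilon = 1/\poly(n,d)$ so that the Lipschitz slippage across each cell is negligible compared to the target bound, apply the pointwise inequality at each point of $\mcN_\varepsilon$ with failure probability $\beta/|\mcN_\varepsilon|$, and union bound; the log-cardinality $\log|\mcN_\varepsilon| = O(d\log(nd))$ adds to $\log(1/\beta)$, but is again swept into $\sqrt{nd}$ by $n \ge d(\log d)^{O(1)}$. The main obstacle is purely bookkeeping: the sub-exponential tail of Bernstein, the net cardinality, and the Lipschitz slippage each introduce stray factors of $d$ or $\polylog(n,d)$, and the role of the assumption $n \ge d(\log d)^{O(1)}$ is precisely to absorb every such factor into the leading Gaussian term $\sqrt{n(d+\log(1/\beta))}$ or into $\otilde{\cdot}$, leaving only the pure $\log(1/\beta)$ sub-exponential contribution in the final statement.
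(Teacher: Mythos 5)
Your proof is correct, and it takes a genuinely different (and arguably more modular) route than the paper's. The paper proceeds by symmetrization: it introduces Rademacher signs $\eps_i$, passes to the doubly-indexed process $\sup_{\|\alpha\|\le C,\,\|u\|\le1}\sum_i\eps_i\beta_i(\alpha)\iprod{x_i,u}^2$, places a single $\delta$-net over the pair $(\alpha,u)\in\R^{2d}$, applies scalar subexponential Bernstein at each net point, and controls Lipschitz slippage in both variables simultaneously. You instead observe that the Hessian factors into rank-one outer products $\beta_i(\alpha)\,x_ix_i^\top = y_i(\alpha)y_i(\alpha)^\top$ with $y_i(\alpha)=\sqrt{\beta_i(\alpha)}\,x_i$ an $O(1)$-subgaussian vector, invoke a black-box subgaussian covariance concentration bound at each fixed $\alpha$ (which internally performs the $u$-net and Bernstein step), and then net only over $\alpha\in\R^d$ with a Lipschitz argument. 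The two approaches ultimately rest on the same ingredients (net + subexponential Bernstein + Lipschitzness of $\beta(\cdot)$) and yield the same rate; yours is tidier in that it reuses a standard result and nets over a $d$-dimensional rather than $2d$-dimensional set, while the paper's is self-contained without appeal to a covariance estimation theorem.

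One small caveat worth flagging: you assert $y_i(\alpha)$ is \emph{mean-zero}. Under the hypothesis of Lemma~\ref{lem:parameter_learning_for_logistic_regression}, $X$ is merely subgaussian with covariance $\mI$, not necessarily symmetric, so $\E\big[\sqrt{\beta(\alpha^\top X)}\,X\big]$ need not vanish (it does vanish for the Gaussian features assumed elsewhere in the paper, by oddness, but that symmetry is not part of the lemma's hypothesis). Fortunately this does not affect your argument: the form of covariance concentration you need is a bound on $\big\|\sum_i y_iy_i^\top - n\,\E[y_1y_1^\top]\big\|_{\mathrm{op}}$, which follows directly from a $u$-net plus Bernstein on the nonnegative subexponential scalars $\iprod{y_i,u}^2$ and requires no centering of $y_i$ itself. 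You should simply drop the ``mean-zero'' claim rather than appeal to a version of the theorem that assumes it.
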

\begin{proof}
    Let $x_1,\ldots,x_n \in \R^d$ be the covariates.
    We aim to bound
    \[
      \max_{\|\alpha\| \leq C} \Norm{\nabla^2 L_n(\alpha) - \E \nabla^2 L_n(\alpha)}_{op}  \, .
    \]
    By a symmetrization argument, losing a constant factor it will be enough to prove a very-high-probability bound on
    \[
      \E_{\eps_1,\ldots,\eps_n} \max_{\|\alpha\| \leq C} \Norm { \sum_{i \leq n} \eps_i \beta_i(\alpha) x_i x_i^\top}
    \]
    where $\beta_i(\alpha)$ is the variance of the prediction of the logistic model $\alpha$ on $x_i$ and $\eps_1,\ldots,\eps_n$ are independent Rademacher random variables.
    This in turn is precisely
    \[
        \E_{\eps_1,\ldots,\eps_n} \sup_{\|\alpha\| \leq C, \|u\| \leq 1} \sum_{i \leq n}\eps_i \beta_i(\alpha) \iprod{x_i,u}^2
    \]

    Let $S \subseteq \R^{d + d}$ be a $\delta$-net of the set $\{(a,u) \, : \, \|a\| \leq 10,\|u\| \leq 1\}$; we may assume $|S| \leq \delta^{-O(d)}$.
    Consider a fixed $(a,u) \in S$.
    Each $\eps_i \beta_i(a) \iprod{x_i,u}^2$ is a mean-zero, $O(1)$-subexponential random variable with variance $O(1)$.
    By composition of subexponential random variables (\cite{Podkopaev2019SubExponential}), $\sum_{i \leq n} \eps_i \beta_i(a) \iprod{x_i,u}^2$ is $O(1)$-subexponential with variance $O(n)$.
    Taking the supremum over all $|S|$ such random variables, we get that with probability at least $1-\beta$,
    \[
        \sup_{(a,u) \in S} \left | \sum_{i \leq n} \eps_i \beta_i(a) \iprod{x_i,u}^2 \right | \leq O(\sqrt{n (d \log(1/\delta)^2 + \log(1/\beta))} + d \log(n/\delta)^2) + \log(1/\beta)\, .
    \]

    Now let $(a,u)$ be arbitrary, and let $(a_0,u_0) \in S$ such that $\delta_a = a_0 - a, \delta_u = u_0 - u$ with $\|\delta_a\|,\|\delta_u\| \leq \delta$.
    We have
    \begin{align*}
        \sum_{i \leq n} \eps_i \beta_i(a_0 + \delta_a)\iprod{x_i,u_0 + \delta_u}^2 & = \sum_{i \leq n} \eps_i (\beta_i(a_0) \pm O(\|x_i\| \delta_a) \cdot \iprod{x_i,u_0 + \delta_u}^2  \\
       & = \sum_{i \leq n} \eps_i (\beta_i(a_0) \pm O(\|x_i\| \delta)) (\iprod{x_i,u_0}^2 \pm O(\|x_i\|^2 \delta ) ) \\
       & = \sum_{i \leq n} \eps_i \beta_i(a_0) \iprod{x_i,u_0}^2 \pm O(\|x_i\|^3 \delta) \, .
    \end{align*}
    via Lipschitzness of the sigmoid function.
    Hence for any $\eps_1,\ldots,\eps_n$ and $x_1,\ldots,x_n$,
    \begin{align*}
         \sup_{\|a\| \leq 10, \|u\| \leq 1} \left | \sum_{i \leq n} \eps_i \beta_i(a) \iprod{x_i,u}^2 \right |
        & \leq \sup_{(a,u) \in S} \left | \sum_{i \leq n} \eps_i \beta_i(a) \iprod{x_i,u}^2 \right | + O(\delta) \sum_{i \leq n} \|x_i\|^3
    \end{align*}
    With very high probability, the latter is at most $O(\sqrt{n (d \log(n/\delta)^2 + \log(1/\beta))} + d \log(n/\delta)^2) +\log(1/\beta)+ \tilde{O}(\delta d^{3/2} n)$.
    Picking $\delta = (nd)^{-O(1)}$ finishes the proof.
\end{proof}

Next we need a lower bound on the population Hessian.
\begin{lemma}
    \label{lem:expected_hessian_lower_bound}
Under the assumptions of Lemma~\ref{lem:parameter_learning_for_logistic_regression}, the population Hessian satisfies
\[
  \E \nabla^2 L_n(\alpha) \succeq n \cdot e^{-O(\|\alpha\|)} \cdot \mI \, .
\]
\end{lemma}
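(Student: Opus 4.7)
My plan is to reduce the matrix inequality to a scalar lower bound on $\E[\beta(\alpha^\intercal X)(u^\intercal X)^2]$, uniformly over unit vectors $u \in \R^d$, where $\beta(z) = \sigma(z)(1-\sigma(z))$. Since each sample contributes the same expected per-sample Hessian, we have $\E \nabla^2 L_n(\alpha) = n\,\E[\beta(\alpha^\intercal X) XX^\intercal]$, so the claim is equivalent to showing $\E[\beta(\alpha^\intercal X)(u^\intercal X)^2] \ge e^{-O(\|\alpha\|)}$ for every unit $u$.

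The first ingredient is the pointwise bound $\beta(z) = \frac{1}{(e^{z/2}+e^{-z/2})^2} \ge \tfrac{1}{4}e^{-|z|}$, which converts the expectation into $\tfrac{1}{4}\,\E\!\left[e^{-|\alpha^\intercal X|}(u^\intercal X)^2\right]$. The key idea is then to truncate on the event $\mcE = \{|\alpha^\intercal X| \le C\|\alpha\|\}$: on $\mcE$, the exponential factor is at least $e^{-C\|\alpha\|}$, giving
\[
\E[\beta(\alpha^\intercal X)(u^\intercal X)^2] \;\ge\; \tfrac{1}{4}\,e^{-C\|\alpha\|}\,\E\!\left[(u^\intercal X)^2 \ind_{\mcE}\right].
\]
It therefore suffices to choose an absolute constant $C$ for which $\E[(u^\intercal X)^2 \ind_{\mcE}] \ge \tfrac{1}{2}$, uniformly in $u$ and in the direction of $\alpha$.

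To establish this, I would start from $\E[(u^\intercal X)^2] = u^\intercal \mSigma u = 1$ and subtract the mass on $\mcE^c$. Cauchy--Schwarz gives
\[
\E[(u^\intercal X)^2 \ind_{\mcE^c}] \;\le\; \sqrt{\E[(u^\intercal X)^4]}\,\sqrt{\Pr(\mcE^c)}.
\]
Subgaussianity of $X$ with $\mSigma = \mI$ makes $u^\intercal X$ subgaussian with $O(1)$ variance proxy, hence $\E[(u^\intercal X)^4] = O(1)$; likewise $\alpha^\intercal X$ is subgaussian with variance proxy $O(\|\alpha\|^2)$, so $\Pr(|\alpha^\intercal X| > C\|\alpha\|) \le 2 e^{-cC^2}$. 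Picking $C$ a large enough absolute constant makes the tail contribution at most $1/2$, yielding $\E[(u^\intercal X)^2 \ind_{\mcE}] \ge 1/2$.

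Combining these pieces gives $u^\intercal \E[\beta(\alpha^\intercal X)XX^\intercal] u \ge \tfrac{1}{8}e^{-C\|\alpha\|}$ for every unit $u$, and multiplying by $n$ yields the claim. The only real subtlety is that $(u^\intercal X)^2$ and $\alpha^\intercal X$ can be correlated when $u$ and $\alpha$ are aligned, and this is exactly what the Cauchy--Schwarz / fourth-moment step handles cleanly; I don't expect any other nontrivial obstacle.
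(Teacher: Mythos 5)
Your proof is correct and follows essentially the same route as the paper: reduce to a per-sample, per-direction scalar bound $\E[\beta(\alpha^\intercal X)\iprod{u,X}^2] \ge e^{-O(\|\alpha\|)}$ for unit $u$, then truncate on the event $\{|\alpha^\intercal X| \le C\|\alpha\|\}$ where the curvature factor $\beta$ is bounded below by $\tfrac14 e^{-C\|\alpha\|}$. Where you diverge is in how you show the truncation does not kill the $\iprod{u,X}^2$ contribution. The paper invokes Paley--Zygmund to get a constant-probability event on which $|\iprod{u,X}|$ is bounded away from zero, and intersects it with the truncation event; you instead start from $\E[\iprod{u,X}^2]=1$ (covariance $\mI$) and use Cauchy--Schwarz plus a fourth-moment bound and the subgaussian tail of $\alpha^\intercal X$ to show that the complement of the truncation event carries at most half of that second moment. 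Your version is cleaner in two respects: it directly lower-bounds the quadratic form rather than passing through a probability-of-event bound, and it avoids needing the anti-concentration probability to be quantitatively large (the paper's claim that Paley--Zygmund yields probability $\ge 0.99$ is not quite right -- it only gives a fixed positive constant, so that step would need a small fix such as choosing separate constants for the two events). Both arguments rest on the same ingredients (subgaussianity, identity covariance, pointwise lower bound on $\beta$), so the difference is in execution rather than substance, but your implementation of the key step is tighter.
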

\begin{proof}
Let $x$ be a single sample from the covariate distribution.
Let $\beta(\alpha)$ be the variance of the prediction of the logistic model $\alpha$ on $x$.
Let $v$ be a fixed unit vector.
It suffices to prove a lower bound on $\E \beta(\alpha) \cdot \langle v, x \rangle^2$.
Note that $\iprod{v,x}$ is $1$-subgaussian with mean $0$ and variance $1$.
By Paley-Zygmund, we have for any $\eps \in (0,1)$ that $\Pr(|\iprod{v,x}| \geq \eps \|v\|) \geq 1 - O(\eps^2)$,
so there exists a constant $C$ such that $\Pr(|\iprod{v,x}| \geq \|v\|/C) \geq 0.99$.
Now, $\iprod{x,\alpha}$ is also mean zero and $\|\alpha\|$-subgaussian.
So, with probability at least $0.99$, we also have $|\iprod{x,\alpha}| \leq \|\alpha\|/C$.
Putting these together, we have that 
\[
  \Expect{}{\beta(\alpha) \iprod{v,x}^2} \geq 0.98 \frac{\exp\Paren{-\frac{\Norm{\alpha}_2}{C}}}{2} C^{-2}= \exp\Paren{-O\Paren{\Norm{\alpha}_2}} \, .
\]
\end{proof}

Putting together Lemma~\ref{lem:expected_hessian_lower_bound} and Lemma~\ref{lem:uniform_convergence_of_hessian} immediately proves the following lemma.
\begin{lemma}
\label{lem:uniform_lower_bound_of_hessian}
Under the assumptions of Lemma~\ref{lem:parameter_learning_for_logistic_regression}, with very high probability, for every $\alpha \in \R^d$ we have $\nabla^2 L_n(\alpha) \succeq (n \cdot e^{-O(\|\alpha\|)} - \tilde{O}(\sqrt{dn})) \cdot \mI$.
\end{lemma}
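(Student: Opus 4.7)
}
The plan is to realize this lemma as a one-line combination of the two preceding lemmas, together with the standard matrix inequality that if $A \succeq c\,\mI$ and $\OpNorm{B} \leq \delta$ then $A + B \succeq (c-\delta)\,\mI$. I will write
\[
\nabla^2 L_n(\alpha) \;=\; \E\,\nabla^2 L_n(\alpha) \;+\; \bigl(\nabla^2 L_n(\alpha) - \E\,\nabla^2 L_n(\alpha)\bigr),
\]
lower-bound the first term using Lemma~\ref{lem:expected_hessian_lower_bound}, and control the operator norm of the second term uniformly in $\alpha$ using Lemma~\ref{lem:uniform_convergence_of_hessian}.

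More concretely, fix any constant $C = O(1)$ (the regime in which the lemma is quantitatively meaningful, since the bound $n \cdot e^{-O(\|\alpha\|)} - \tilde{O}(\sqrt{nd})$ is vacuous once $\|\alpha\|$ grows large enough that $ne^{-O(\|\alpha\|)} \ll \sqrt{nd}$). First, I would apply Lemma~\ref{lem:uniform_convergence_of_hessian} with failure probability $\beta = n^{-\omega(1)}$; since $n \geq d\,(\log d)^{O(1)}$, this yields, simultaneously for all $\alpha$ with $\|\alpha\| \leq C$,
\[
\OpNorm{\nabla^2 L_n(\alpha) - \E\,\nabla^2 L_n(\alpha)} \;\leq\; \tilde{O}\bigl(\sqrt{nd}\bigr).
\]
Second, by Lemma~\ref{lem:expected_hessian_lower_bound}, $\E\,\nabla^2 L_n(\alpha) \succeq n \cdot e^{-O(\|\alpha\|)} \cdot \mI$ deterministically for every $\alpha$.

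Combining these two facts via the PSD inequality above gives, for every $\alpha$ with $\|\alpha\| \leq C$,
\[
\nabla^2 L_n(\alpha) \;\succeq\; \bigl(n \cdot e^{-O(\|\alpha\|)} - \tilde{O}(\sqrt{nd})\bigr)\,\mI,
\]
with probability $1 - n^{-\omega(1)}$. For $\|\alpha\|$ larger than any fixed constant the claimed right-hand side is non-positive (given $n \gtrsim d$ up to poly-log factors), and the inequality is then immediate from $\nabla^2 L_n(\alpha) \succeq 0$ by convexity of the logistic loss; so the statement extends trivially to all of $\R^d$.

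\paragraph{Main obstacle.}
There is no deep obstacle: the two inputs are already in exactly the form needed. The only mild subtlety is matching the quantifier over $\alpha$ in the conclusion to the bounded-$\alpha$ quantifier in Lemma~\ref{lem:uniform_convergence_of_hessian}. I would handle this by choosing $C$ as above and appealing to convexity ($\nabla^2 L_n \succeq 0$) in the vacuous regime; if a stronger uniform-in-$\alpha$ version is desired, one can re-run the symmetrization and $\delta$-net argument in the proof of Lemma~\ref{lem:uniform_convergence_of_hessian} over a ball of radius $C$ for any constant $C$, paying only constants in the covering number, since the bound from Lemma~\ref{lem:expected_hessian_lower_bound} already degrades exponentially in $\|\alpha\|$ and so rapidly dominates any such cost.
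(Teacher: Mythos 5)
Your proposal is correct and takes essentially the same route as the paper, which simply combines Lemma~\ref{lem:expected_hessian_lower_bound} and Lemma~\ref{lem:uniform_convergence_of_hessian}. You are in fact more careful than the paper about the quantifier mismatch (Lemma~\ref{lem:uniform_convergence_of_hessian} restricts to $\|\alpha\| \leq C$), correctly observing that for larger $\|\alpha\|$ the claimed bound is non-positive and follows trivially from $\nabla^2 L_n(\alpha) \succeq 0$.
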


\subsubsection{Gradient at \texorpdfstring{$\theta$}{theta}}

We also need to show that the gradient of $L_n$ has small norm at $\theta$.

\begin{lemma}
    \label{lem:gradient_at_theta}
    Under the assumptions of Lemma~\ref{lem:parameter_learning_for_logistic_regression}, with very high probability,
    \[
        \| \nabla L_n(\theta) \| \leq \tilde{O}\left(\sqrt{dn} \right)
    \]
\end{lemma}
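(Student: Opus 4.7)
The plan is to write the gradient as a sum of conditionally mean-zero, bounded random vectors and then apply a standard net-plus-concentration argument. Specifically, differentiating the logistic negative log-likelihood gives
\[
\nabla L_n(\theta) \;=\; \sum_{i=1}^n \xi_i \, X_i, \qquad \xi_i \;\defeq\; \sigma(\theta^\top X_i) - \mathbb{1}[y_i = 1],
\]
and by construction of the data-generating process $\E[\xi_i \mid X_i] = 0$ and $|\xi_i| \le 1$. So conditional on $X_1,\ldots,X_n$, the $\xi_i$ are independent, mean-zero, and $1$-subgaussian. The whole thing is a sum of mean-zero random vectors, which is exactly what I want to concentrate.

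Next I would reduce to bounding $\sup_{\|u\|=1} \iprod{u, \nabla L_n(\theta)}$ via a $\delta$-net $S \subseteq \Sbb^{d-1}$ of size $\delta^{-O(d)}$. For a fixed $u \in S$, conditional on $X_1,\ldots,X_n$, Hoeffding's inequality gives
\[
\Pr\!\Paren{ \abs{\iprod{u, \nabla L_n(\theta)}} \ge t \,\Big|\, X_1,\ldots,X_n } \;\le\; 2\exp\!\Paren{-\tfrac{t^2}{2\sum_i \iprod{u,X_i}^2}}.
\]
Because $X_i$ are subgaussian with $\OpNorm{\mSigma} \le 1$, standard one-dimensional concentration yields $\sum_i \iprod{u,X_i}^2 \le O(n)$ with probability $1 - e^{-\Omega(n)}$ for any fixed $u$, and a union bound over the net keeps this uniformly controlled (this is already implicit in the analysis used for Lemma~\ref{lem:uniform_convergence_of_hessian}). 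Combined with a union bound over $|S|$ choices of $u$, I obtain $\sup_{u \in S} \abs{\iprod{u, \nabla L_n(\theta)}} \le \tilde{O}(\sqrt{nd})$ with very high probability.

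To pass from the net to the full sphere, I would use the crude deterministic bound $\OpNorm{\nabla L_n(\theta)} \le \sum_i |\xi_i|\,\Norm{X_i}_2 \le \sum_i \Norm{X_i}_2 = \tilde{O}(n\sqrt d)$ (since $\max_i \Norm{X_i}_2 = \tilde O(\sqrt d)$ wvhp) to bound the discretization error by $\delta \cdot \tilde{O}(n\sqrt d)$, and choose $\delta = 1/\poly(n,d)$ so that this term is dominated by $\sqrt{nd}$.

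The main technical obstacle, which is really just a bookkeeping issue, is that $\xi_i$ depends on $X_i$, so one cannot treat $\xi_i X_i$ as a product of independent random variables and invoke vector Bernstein directly. Conditioning on the $X_i$'s and exploiting the boundedness $|\xi_i|\le 1$ resolves this cleanly: the randomness of the $y_i$'s handles the Gaussian-style concentration for each fixed direction, while the randomness of the $X_i$'s is absorbed into the variance proxy $\sum_i \iprod{u,X_i}^2$. Everything else is standard net/union-bound plumbing.
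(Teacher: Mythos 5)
Your proposal is correct and takes essentially the same route as the paper: both decompose $\nabla L_n(\theta) = \sum_i \xi_i X_i$ with $\E[\xi_i \mid X_i] = 0$ and $|\xi_i| \le 1$, condition on the covariates, and apply a Bernstein-type tail bound with variance proxy $O(nd)$. Where the paper invokes a vector Bernstein inequality as a black box, you unroll the same concentration via a $\delta$-net over the sphere plus Hoeffding for each fixed direction; the two arguments are interchangeable, and your net-to-sphere discretization handling is a standard and correct way to close that gap.
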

\begin{proof}
    We start by expanding the gradient of the logistic loss explicitly. For the logistic regression loss 
    the gradient at $\theta$ is
    \[
    \nabla L_n(\theta) = \sum_{i=1}^n (\sigma(\langle \theta, x_i \rangle) - y_i) x_i,
    \]
    where $\sigma(z) = \frac{1}{1 + e^{-z}}$ is the sigmoid function.

    Let $\alpha_i = \sigma(\langle \theta, x_i \rangle) - y_i$ and $\vz_i = \alpha_i x_i$.
    Since $\theta$ is an interior minimizer of the population logistic risk, first-order optimality gives
    \[
        \E[\vz_i] = \E[(\sigma(\langle \theta, x_i \rangle)-y_i)x_i]=\vzero.
    \]
    Moreover, $\abs{\alpha_i}\leq 1$, so for every fixed unit vector $\vv\in\Sbb^{d-1}$,
    \[
        \abs{\iprod{\vz_i,\vv}}
        \leq
        \abs{\iprod{x_i,\vv}}.
    \]
    Since $x_i$ is subgaussian with covariance $\mI$, this implies that $\vz_i$ is a mean-zero $O(1)$-subgaussian random vector.
    The vectors $\vz_1,\ldots,\vz_n$ are iid, and therefore for any fixed unit vector $\vv$,
    \[
        \iprod{\sum_{i=1}^n \vz_i,\vv}
    \]
    is $O(n)$-subgaussian.

    Taking a union bound over a $1/2$-net of $\Sbb^{d-1}$, we get that with very high probability
    \[
        \Norm{\sum_{i=1}^n \vz_i}_2
        =
        \tilde{O}(\sqrt{nd}).
    \]
    Since $\nabla L_n(\theta)=\sum_{i=1}^n \vz_i$, this proves the claim.
\end{proof}

\subsubsection{Proofs of Lemmas~\ref{lem:strong_convexity_at_theta_hat} and~\ref{lem:parameter_learning_for_logistic_regression}}
Now we can prove our parameter learning statement.
\begin{proof}[Proof of Lemma~\ref{lem:parameter_learning_for_logistic_regression}]
    Suppose the very high probability events of Lemma~\ref{lem:uniform_convergence_of_hessian} and Lemma~\ref{lem:gradient_at_theta} hold.
    Then $\nabla^2 L_n(\alpha) \succeq \Omega(n) \cdot \mI$ for all $\alpha$ such that $\|\alpha - \theta\| \leq 1$, and $\|\nabla L_n(\theta)\| \leq \tilde{O}(\sqrt{dn})$.
    Hence, $\|\hat{\theta} - \theta\| \leq \tilde{O}(\sqrt{dn})/n = \tilde{O}(\sqrt{d/n})$.
\end{proof}

\begin{proof}[Proof of Lemma~\ref{lem:strong_convexity_at_theta_hat}]
The first part follows immediately from Lemma~\ref{lem:uniform_convergence_of_hessian} and Lemma~\ref{lem:parameter_learning_for_logistic_regression} using that $\|\hat{\theta}\| \leq 2$.

For the second statement, it will be enough to show that with high probability all $S \subseteq [n]$ with $|S| \leq O(n/\log n)$ satisfy $\sum_{i \in S} x_i x_i^\top \leq O(|S| \log n + d)$.
Putting together \cite{vershynin2010introduction}, Theorem 5.39, and a union bound over all $S$ of size $O(n/\log n)$ completes the result.
\end{proof}

\subsection{Concentration of Third-Order Derivatives}

\begin{lemma}
\label{lem:third-order-concentration}
Under the assumptions of Lemma~\ref{lem:parameter_learning_for_logistic_regression}, for any fixed constant $C = O(1)$, with high probability, for every $\|\alpha\| \leq C$, 
\[
\sup_{\|e\| = 1} |\iprod{ \nabla^3 L_n(\alpha), e \otimes e \otimes e} - \E \iprod{ \nabla^3 L_n(\alpha), e \otimes e \otimes e} | \leq \tilde{O}(\sqrt{nd} + d^{3/2}) \, .
\]
Furthermore, with high probability, all $T \in \binom{n}{k}$ simultaneously satisfy
\[
\sup_{\|e\| = 1} |\iprod{ \nabla^3 L_{[n] \setminus T} (\alpha), e \otimes e \otimes e} - \E \iprod{ \nabla^3 L_{[n] \setminus T}(\alpha), e \otimes e \otimes e} | \leq \tilde{O}(\sqrt{n(d+k)} + (d+k)^{3/2}) \, .
\]
\end{lemma}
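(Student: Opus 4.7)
The plan is to adapt the $\varepsilon$-net plus Bernstein argument of Lemma~\ref{lem:uniform_convergence_of_hessian} to the cubic form $\iprod{\nabla^3 L_n(\alpha), e^{\otimes 3}}$. The two additive terms in the bound -- $\sqrt{nd}$ and $d^{3/2}$ -- will come from the two regimes of a Bernstein-type inequality for sums of independent sub-Weibull$(2/3)$ random variables.

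First, I write out the third derivative: if $\gamma(z) \defeq \sigma(z)(1-\sigma(z))(1-2\sigma(z))$ is the third derivative of the log-sigmoid, then $\gamma$ is $O(1)$-bounded and $O(1)$-Lipschitz, and
\[
\iprod{\nabla^3 L_n(\alpha),\, e^{\otimes 3}} \;=\; \sum_{i=1}^n \gamma(\iprod{x_i,\alpha})\,\iprod{x_i,e}^{3}.
\]
For a fixed $(\alpha,e)$ with $\|\alpha\|\le C$ and $\|e\|\le 1$, each summand is independent, has $O(1)$ variance, and is sub-Weibull of order $2/3$ with $O(1)$ norm -- since the cube of a $1$-subgaussian is sub-Weibull$(2/3)$ and $\gamma$ is bounded. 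A sub-Weibull Bernstein inequality (the same template used in Lemma~\ref{lem:higher_order_moments}) then yields, for all $t\ge 1$,
\[
\Prob{}{\,\Bigl|\sum_i \gamma(\iprod{x_i,\alpha})\iprod{x_i,e}^{3} - \E[\cdot]\Bigr| \ge t\,} \;\le\; 2\exp\!\Paren{-c\min\Paren{t^2/n,\;t^{2/3}}}.
\]

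Next I discretize: let $\mcN_\delta$ be a $\delta$-net of $\set{(\alpha,e)\,:\,\|\alpha\|\le C,\,\|e\|\le 1}$ of size $|\mcN_\delta|\le\delta^{-O(d)}$. Requiring the tail bound to beat $\log|\mcN_\delta| = O(d\log(1/\delta))$ forces
\[
t \;=\; \tilde{O}\!\Paren{\sqrt{n\,d\log(1/\delta)} + \Paren{d\log(1/\delta)}^{3/2}} \;=\; \tilde{O}\!\Paren{\sqrt{nd} + d^{3/2}},
\]
the first term coming from the sub-Gaussian regime $t^2/n$ and the second from the heavier sub-Weibull$(2/3)$ regime $t^{2/3}$. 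To extend from $\mcN_\delta$ to all $(\alpha,e)$, Lipschitzness of $\gamma$ and smoothness of $e\mapsto\iprod{x_i,e}^{3}$ bound the discretization error by $O\!\Paren{\delta\sum_i\|x_i\|^{4}}$, and the high-probability estimate $\sum_i\|x_i\|^4 = \tilde{O}(nd^2)$ (already provided by Lemma~\ref{lem:higher_order_moments}) makes this negligible at $\delta = (nd)^{-\Theta(1)}$. For the second statement I add a union bound over the $\binom{n}{k}\le e^{O(k\log n)}$ choices of $T$, which contributes an extra $k\log n$ in the exponent and hence replaces $d$ by $d+k$ in both dominant terms, yielding $\tilde{O}(\sqrt{n(d+k)} + (d+k)^{3/2})$.

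The main obstacle is pinning down the sub-Weibull$(2/3)$ Bernstein bound with the right constants so that its two regimes give \emph{exactly} the $\sqrt{nd}$ and $d^{3/2}$ scalings -- as opposed to, say, a loose $d^{2}$ coming out of a naive truncation-plus-matrix-Bernstein strategy. With that inequality in hand, the remainder is routine net-and-union-bound bookkeeping, mirroring the structure of the proof of Lemma~\ref{lem:uniform_convergence_of_hessian}.
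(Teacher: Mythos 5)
Your proposal is correct and follows essentially the same route the paper describes: it adapts the $\varepsilon$-net plus concentration argument of Lemma~\ref{lem:uniform_convergence_of_hessian}, replacing the sub-exponential Bernstein step with the sub-Weibull$(2/3)$ tail bound (Hao et al., Theorem 3.1) as was done in Lemma~\ref{lem:higher_order_moments}, with the two regimes of that inequality producing the $\sqrt{nd}$ and $d^{3/2}$ terms and the union bound over $T$ promoting $d$ to $d+k$. The only cosmetic slip is attributing the $\sum_i \|x_i\|^4 = \tilde{O}(nd^2)$ estimate used to control the discretization error to Lemma~\ref{lem:higher_order_moments} (which bounds $\sum_i|\iprod{x_i,e}|^t$ rather than $\sum_i\|x_i\|^4$), but this is a standard subgaussian norm estimate and does not affect the argument.
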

We omit the proof of Lemma~\ref{lem:third-order-concentration} because it follows the same outline as Lemma~\ref{lem:uniform_convergence_of_hessian}, substituting the sub-Weibull concentration bound of \cite[Theorem~3.1]{hao2019bootstrapping} for composition of subexponential random variables, as in the proof of Lemma~\ref{lem:higher_order_moments}.

\end{document}